\title{Kalman Filtering with Adversarial Corruptions}
\author{Sitan Chen\thanks{Email: \texttt{sitanc@berkeley.edu} This work was supported in part by NSF Award 2103300, NSF CAREER Award CCF-1453261, NSF Large CCF-1565235 and Ankur Moitra's ONR Young Investigator Award. Part of this work was completed while visiting the Simons Institute for the Theory of Computing.} \\
UC Berkeley
 \and 
Frederic Koehler\thanks{Email: \texttt{fkoehler@berkeley.edu}. This work was supported in part by NSF CAREER Award CCF-1453261, NSF Large CCF-1565235, Ankur Moitra's ONR Young Investigator Award, and E. Mossel's Vannevar Bush Faculty Fellowship ONR-N00014-20-1-2826. Part of this work was completed while visiting the Simons Institute for the Theory of Computing.} \\
UC Berkeley
 \\\and
Ankur Moitra\thanks{Email: \texttt{moitra@mit.edu} This work was
supported in part by a Microsoft Trustworthy AI Grant, NSF CAREER Award CCF-1453261, NSF Large CCF1565235, a David and Lucile Packard Fellowship and an ONR Young Investigator
Award.}\\
MIT
\and 
Morris Yau\thanks{Email: \texttt{morrisy@mit.edu}}\\
MIT}
\date{\today}
\newcommand{\polylog}{\mathsf{polylog}}
\newcommand{\OPT}{\mathsf{OPT}}
\newcommand{\epsgeo}{\epsilon_{\mathsf{geo}}}
\newcommand{\pE}{\wt{\mathbb{E}}}
\renewcommand{\t}{t}
\renewcommand{\epsilon}{\varepsilon}
\newcommand{\avgT}{\frac{1}{T}\sum^{T-1}_{i=0}}
\begin{document}

\maketitle
\thispagestyle{empty}

\begin{abstract}
    \normalsize
    Here we revisit the classic problem of linear quadratic estimation, i.e. estimating the trajectory of a linear dynamical system from noisy measurements. The celebrated Kalman filter gives an optimal estimator when the measurement noise is Gaussian, but is widely known to break down when one deviates from this assumption, e.g. when the noise is heavy-tailed. Many ad hoc heuristics have been employed in practice for dealing with outliers. In a pioneering work, Schick and Mitter \cite{schick1989robust,schick1994robust} gave provable guarantees when the measurement noise is a known infinitesimal perturbation of a Gaussian and raised the important question of whether one can get similar guarantees for large and unknown perturbations.
    
    In this work we give a truly robust filter: we give the first strong provable guarantees for linear quadratic estimation when even a \emph{constant} fraction of measurements have been adversarially corrupted. This framework can model heavy-tailed and even non-stationary noise processes.  
    Our algorithm robustifies the Kalman filter in the sense that it competes with the optimal algorithm that knows the locations of the corruptions. Our work is in a challenging Bayesian setting where the number of measurements scales with the complexity of what we need to estimate. Moreover, in linear dynamical systems past information decays over time. We develop a suite of new techniques to robustly extract information across different time steps and over varying time scales.
\end{abstract}

\clearpage

\tableofcontents
\addtocontents{toc}{\protect\thispagestyle{empty}}
\pagenumbering{gobble}

\clearpage
\pagenumbering{arabic} 

\section{Introduction}

\subsection{Background}

In this work, we revisit the classic problem of linear quadratic estimation, i.e. estimating the trajectory of a linear dynamical system from noisy measurements. First we review the setup:

\begin{enumerate}
    \item[(1)] There are known matrices $A\in\R^{d\times d}$ and $B\in\R^{m\times d}$, and there is an unknown initial state $x^*_0\in\R^d$ drawn from $\mathcal{N}(0, R^2\, \Id)$. 
    \item[(2)] The trajectory $\brc{x^*_0,\ldots,x^*_{T-1}}$ and the observations $\brc{y^*_0,\ldots,y^*_{T-1}}$ are generated according to the following model:
    \begin{align*}
    	x^*_i &= Ax^*_{i-1} + w^*_i  \\
    	y^*_i &= Bx^*_i + v^*_i
    \end{align*}
\noindent where $\brc{w^*_i}$ and $\brc{v^*_i}$ are \emph{process noise} and \emph{observation noise} and are vectors in dimensions $d$ and $m$ respectively. Furthermore $\brc{w^*_i}$ and $\brc{v^*_i}$ are all independently drawn from $\mathcal{N}(0, \sigma^2\, \Id)$ and $\mathcal{N}(0, \tau^2 \,\Id)$ respectively.
    
\end{enumerate}

The goal is to estimate the trajectory from the observations in either an offline or online sense, and to minimize the sum of squares of the error. The celebrated Kalman smoother and Kalman filter solve these two problems optimally. The main idea is that when the initialization and noise distributions are all Gaussian, at any time the posterior distribution on the trajectory given the observations is a Gaussian process. It suffices to estimate the mean of the posterior distribution and this can be done by finding the least squares solution to a structured regression problem that depends on $A$ and $B$ and the observations. It turns out that there is an even more compact formulation in terms of equations that define the Kalman filter. 

The Kalman filter \cite{kalman1959general, kalman1961new} is one of the crowning achievements in control theory. It has wide-ranging applications in robotics, navigation, signal processing and econometrics. It is also a key building block in algorithms for estimating the model parameters of a linear dynamical system, as well as in change-point detection and building optimal controllers. Famously, the Kalman filter provided navigation estimates that helped guide the landing of the Apollo 11 lunar module in the Sea of Tranquility. Rudolf Kalman was awarded the National Medal of Science in 2008. 

{\em But how brittle is the Kalman filter to assumptions of Gaussianity?} This is by no means a new question. If we relax the distributional assumptions but instead restrict the disturbances $\brc{w^*_i}$ and $\brc{v^*_i}$ to have bounded norm, then the minimax optimal filter can be found by dynamic programming. The solution is called the $H_\infty$ filter and has wide-ranging applications in its own right \cite{kailath2000linear}. However in many settings the assumption that the disturbances are bounded in norm is not reasonable either. In such cases, it has often been repeated that the Kalman filter can fail catastrophically. This is an unfortunate state-of-affairs because it means even though we can find the optimal filter when the noise is nice and Gaussian, it can break down badly with even a single badly outlying observation. 

There are many natural heuristics that have been employed for dealing with outliers. However we are aware of only one work that gives rigorous guarantees in the presence of outliers. In 1994, Schick and Mitter \cite{schick1994robust} studied a model where the observation noise is drawn from a distribution
\begin{equation}
(1-\eta)\mathcal{N}(0, \sigma^2 \Id) + \eta \mathcal{H} \label{eq:sm}
\end{equation}
where $\mathcal{H}$ is a sufficiently regular distribution, but is allowed to be heavy-tailed. This is essentially the Huber contamination model. They derived provable guarantees but under a number of strong assumptions: First, they assumed that the distribution $\mathcal{H}$ is known to the filter designer. Thus the filter can use information about $\mathcal{H}$ to correct for the exact ways in which the noise is non-Gaussian. Second, their guarantees are asymptotic in nature in the sense that they only hold in the limit as $\eta \rightarrow 0$. As Schick and Mitter \cite{schick1994robust} discussed, as $T \rightarrow \infty$ for fixed $\eta > 0$, the number of outliers (i.e. timesteps where the observation noise is sampled from $\mathcal{H}$) goes to infinity. However their analysis relies on the exponential asymptotic stability of the Kalman filter, whereby outside of a window around the current timestep, the influence of older observations is significantly attenuated. Thus as $\eta \rightarrow 0$ the number of outliers in any window can be bounded, even if the total number of outliers cannot. In fact, while their estimator is nonlinear, as is necessary for handling heavy-tailed noise, it is constructed through a bank of Kalman filters. Each filter ignores one of the observations, assuming that it is the lone outlier. The filters are then combined in a natural way. 

\paragraph{In this work we seek a truly robust filter.} We want to build a filter without any knowledge of $\mathcal{H}$. Moreover we want the outliers to not merely be sampled from a heavy-tailed distribution, but allow for their values to be chosen, possibly adaptively, by an unbounded adversary. For instance, this captures situations where the process generating the outliers is non-stationary. Moreover we want to prove guarantees that hold for fixed noise rates, as opposed to guarantees that only hold in the $\eta\to 0$ limit. Finally, we will want our filter to be a robustification of the Kalman filter itself in the sense that when $\eta = 0$ we want to achieve the same exact guarantees as before. Thus our filter competes with the Kalman filter in a strong sense, but gracefully degrades in performance as we move away from the precise distributional assumptions underlying the classic theory.

\subsection{Our Results}

\paragraph{The Corruption Model.} Here we explain our corruption model  (see Section~\ref{sec:generative} for a formal description). Let $0 \le \eta < 1/2$ be the \emph{corruption fraction}. We will assume that for every timestep $i$, with probability $\eta$ the observation $y^*_i$ falls under the control of an all-powerful adversary. The adversary is allowed to replace all of the observations under his control with arbitrary values. Now let $\brc{y_0,\ldots,y_{T-1}}$ denote the sequence of observations that the learner ultimately receives. We emphasize that exactly which of these have been corrupted is unknown to the learner.

Note that the corrupted timesteps are \emph{randomly chosen}, just like in the Huber contamination model. As a special case this captures the setting \eqref{eq:sm} studied by Schick and Mitter. Moreover, because the adversary gets to coordinate his corruptions, our model also allows dependencies and captures situations where the observation noise is non-stationary over time. As we show in Appendix~\ref{subsec:srw}, in the stronger corruption model where the adversary gets to choose which timesteps to corrupt, there are strong impossibility results. Thus our corruption model seems to be one of the strongest where we can still hope for meaningful guarantees. 

\paragraph{The Objective.}
As we discussed above, the Kalman filter can be thought of as computing the mean of the posterior distribution on the trajectory given the observations so far. When the noise is non-Gaussian, it is no longer true that the posterior distribution is itself a Gaussian process. It can be much more complex. {\em So how can we even define an optimization problem that generalizes that solved by Kalman filtering, if the posterior is non-Gaussian and, even worse, depends on $\mathcal{H}$ which is unknown to the filter and possibly changing over time?} 

Our main idea is to compete with a strong oracle that knows which measurements are corrupted and which are not. Let $a^*_i\in\brc{0,1}$ denote the indicator variable for whether round $i$ is clean in the sense that its measurement error came from a Gaussian, rather than coming from $\mathcal{H}$ or being chosen by an adversary. When the $a^*_i$'s are known to the filter, the optimal estimator in a Bayesian sense is to estimate the mean of the posterior using only information from the uncorrupted observations. This leads us to the following objective:
\begin{equation} 
L(\wh{x}) = \frac{1}{T} \left(\sum_{i = 0}^{T-1} (a^*_i \|B \wh{x}_i - y_i\|^2/\tau^2 + \norm{\wh{w}_i}^2/\sigma^2) + \|\wh{x}_0\|^2/R^2\right)  \label{eqn:loss}
\end{equation}
where the steps $\wh{w}_i$ are defined in terms of the trajectory $\wh{x}_i$, i.e. $\wh{x}_i = A\wh{x}_{i-1} + \wh{w}_i$ for all $i > 0$. 
This is the {\em clean posterior negative log likelihood}.
The best possible error we can achieve is given by 
\begin{equation}
    \OPT = \min_{\brc{\wh{x}_i}} L(\wh{x}) 
    \label{eq:OPT_def_pre}
\end{equation}
which is attained by the maximum a posteriori (MAP) estimator. This is the same as the posterior mean and can be explicitly computed from a Kalman smoother that knows the observations that come from clean rounds, i.e. $y_i$ for the rounds $i$ for which $a^*_i = 1$. For general $\brc{\wh{x}_i}$, we refer to $L(\wh{x}) - \OPT$ as the \emph{excess risk}.

We take a moment to explain the differences in our approach compared to the usual approaches in algorithmic robust statistics, for example in robust mean estimation \cite{huber1964robust,diakonikolas2017being,lai2016agnostic}. Usually in a robust statistics problem, the objective of the estimator is to recover the \emph{ground truth}, e.g. the true mean in the example of mean estimation, and the goal in the robust setting is to recover the ground truth perfectly as $\eta \to 0$ while optimizing the dependence on the corruption fraction $\eta$. This makes sense for mean estimation because when $\eta = 0$, the mean can always be estimated consistently by taking more samples. It does not make as much sense in the case of Kalman filtering: even without corruptions, we only ever get one observation per timestep, so we cannot hope to recover the ground truth trajectory arbitrarily well. Uncertainty about the true trajectory is unavoidable because the complexity of the trajectory grows with the number of observations we get to make, and also because information about the past is washed away over time. For this reason, we need to pick our measure of success carefully.
From a Bayesian perspective, the clean posterior mean represents the \emph{best possible} estimate we can make of the ground truth given the clean observations {\em given additional information about which rounds have been corrupted}. Thus it gives a natural way to quantify the distance of our estimator from optimality.

\paragraph{Main Results.} We show how to design an estimator which is robust to corruptions and competes with the optimum in the clean posterior log likelihood. In the clean case $\eta = 0$, our estimator gets asymptotically optimal posterior log likelihood, including the correct constant factor, and so its guarantee essentially matches the posterior mean/Kalman smoother. In fact, with high probability our estimator will be exactly the same as the Kalman smoother (see Remark~\ref{rmk:uncorrupted-case}).
\begin{theorem}[Informal, see Theorem~\ref{thm:main_TloglogT}]\label{thm:sos_informal}
    For $\eta \le 0.49$, and for a uniformly stable and completely observable dynamical system,\footnote{Uniform stability and complete observability are standard assumptions from the control theory literature, which we introduce in Section~\ref{sec:ctrlbasics}.} there is a polynomial-time algorithm which takes as input the corrupted observations $\brc{y_i}$ and outputs a trajectory $\brc{\wh{x}_i}$ achieving excess risk \begin{equation}
        L(\wh{x}) - \mathsf{OPT} \le C_d \,\eta\log(1/\eta)\left(m + d(\sigma^2/\tau^2)\log\log T\right) + o(1), \label{eq:Lbound}
    \end{equation} with high probability, where the steps $\wh{w}_i$ are defined by $\wh{x}_i = A\wh{x}_{i-1} +\wh{w}_i$, where $C_d$ is a constant which is polynomially bounded in $\log d$ and the parameters of the system (see Section~\ref{sec:generative}), and where $o(1)$ is a quantity vanishing polynomially in $T$.
\end{theorem}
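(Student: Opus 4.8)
The plan is to recast clean-posterior likelihood minimization as a robust regression problem and solve it with a sum-of-squares relaxation whose analysis exploits the exponential stability of the underlying filter. Concretely, I would introduce SoS variables for a candidate trajectory $\brc{\wh{x}_i}$, its increments $\brc{\wh{w}_i}$ (with the linear identities $\wh{x}_i = A\wh{x}_{i-1} + \wh{w}_i$ imposed as equality constraints), and Boolean ``trust'' indicators $\brc{a_i}$ with $a_i^2 = a_i$ and $\sum_i a_i \ge (1-\eta)T$; the pseudo-distribution is then chosen to (approximately) minimize the proxy objective $\wt{L}(\wh{x},a) = \frac{1}{T}\bigl(\sum_i (a_i\norm{B\wh{x}_i - y_i}^2/\tau^2 + \norm{\wh{w}_i}^2/\sigma^2) + \norm{\wh{x}_0}^2/R^2\bigr)$, together with additional valid constraints bounding the $a_i$-weighted residuals. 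Feasibility of a mild relaxation of the true clean configuration $(x^{\OPT}, a^*)$ gives $\pE[\wt{L}] \le \OPT + o(1)$ immediately; since $L$ is convex in $\wh{x}$, any bound on $\pE[L(\wh{x})]$ transfers to $L(\pE[\wh{x}])$, so the entire problem reduces to the identifiability inequality $\pE[L(\wh{x})] \le \pE[\wt{L}(\wh{x},a)] + (\text{the claimed bound})$, which I would establish by a degree-$O(1)$ SoS proof.

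For the identifiability step I would first record the deterministic regularity events that hold with high probability over the Gaussian process and observation noise: sub-Gaussian tail control, uniform over all time intervals $I$, on the partial sums $\sum_{i\in I} v^*_i$ and on the analogous weighted partial sums of the process noise; a restricted-eigenvalue / local-observability lower bound stating that over any window of length $\Theta(\log T)$ the map from local state to clean observations is well-conditioned; and concentration of the empirical loss of $x^{\OPT}$ around $\OPT$. Conditioned on these, the SoS argument compares, window by window, the trusted residuals of $\wh{x}$ with those of $x^{\OPT}$: on the $\ge (1-2\eta)T$ rounds where $a_i = a^*_i = 1$ the corresponding terms of $L$ and $\wt{L}$ agree, and on the $\le 2\eta T$ mismatched rounds the excess is charged to a local quantity --- the squared deviation of $\wh{x}$ from $x^{\OPT}$ on that round --- which local observability lets one pay for using nearby trusted residuals, at an amortized cost of $O(\eta\log(1/\eta)(m + d\sigma^2/\tau^2))$ per unit time. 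Exponential stability of the filter is what makes this argument local: the influence of a single corrupted round on the MAP trajectory decays geometrically, so deviations introduced in one window are essentially confined to it, and one need only carry the bookkeeping over windows of length $O(\log T)$; organizing these windows dyadically is where the extra $\log\log T$ factor on the process-noise term enters.

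The remaining step is rounding and wrap-up: output $\wh{x}_i = \pE[\wh{x}_i]$, invoke convexity to obtain $L(\pE[\wh{x}]) \le \pE[L(\wh{x})] \le \OPT + C_d\,\eta\log(1/\eta)(m + d(\sigma^2/\tau^2)\log\log T) + o(1)$, and verify that when $\eta = 0$ the program's unique optimum is precisely the clean Kalman smoother, matching Remark~\ref{rmk:uncorrupted-case}. Tracking the system parameters ($\norm{A}$, the observability and stability constants, and $R,\sigma,\tau$) through the regularity events and the SoS inequalities produces the constant $C_d$, polynomial in $\log d$ and these parameters.

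I expect the principal obstacle to be the identifiability proof, specifically turning ``information in a linear dynamical system decays over time'' into a low-degree SoS certificate. Exponential decay is not manifestly a sum of squares, so instead of reasoning about the true posterior one must identify the right \emph{local} potential --- presumably a windowed, appropriately reweighted version of the residual energy --- whose telescoping across dyadic windows is SoS-provable while remaining sharp enough to lose only $\log\log T$ rather than $\log T$ in the process-noise term. Making the interaction between the adversarial mismatch set, the local-observability constants, and the dyadic bookkeeping close with the stated constants, uniformly over all $T$, is the delicate part; by comparison the robust-statistics ingredients (Boolean weights, the $\eta\log(1/\eta)$ rate) are fairly standard once the temporal localization is in place.
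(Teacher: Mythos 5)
Your high-level architecture (SoS relaxation with trajectory variables, Boolean trust indicators, feasibility of the clean solution, rounding via SoS Cauchy--Schwarz) matches the paper, but the core of your identifiability argument rests on two assumptions that the paper explicitly shows are false, and it is precisely working around these that constitutes the main technical content of the paper.

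First, you posit a ``local-observability lower bound stating that over any window of length $\Theta(\log T)$ the map from local state to clean observations is well-conditioned,'' and propose to use it to charge the excess on mismatched rounds to nearby trusted residuals. But after the corruptions randomly delete an $\eta$-fraction of rows, the surviving part of the observability Gram matrix concentrates around $(1-\eta)\calO_t$ only up to additive fluctuations of order $\sqrt{t}$, which can exceed $\lambda_{\min}(\calO_t)$. So over a single window the clean observations only pin down the component of the state along the \emph{large} singular directions of $\calO_t$ (the ``observable subspace''); the orthogonal ``unobservable subspace'' genuinely cannot be estimated locally, and the true state can have unbounded component there (Appendix~\ref{subsec:unobservableunbounded}). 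A uniform local restricted-eigenvalue bound of the kind you invoke does not hold and cannot be certified in SoS. Second, you lean on ``exponential stability of the filter'' to conclude that the influence of a corrupted round on the MAP trajectory decays geometrically and hence the bookkeeping is local. That stability statement is a property of the \emph{oracle} Kalman filter (it is used in the paper only for the online result, Section~\ref{sec:online}); it says nothing about your SoS candidate $\wh{x}$, whose error decay has to be established from scratch. The paper's replacement is the decomposition $x_{i+t}-x^*_{i+t}$ into new noise plus $A^t\Pi(x_i-x^*_i)$ plus $A^t\Pi^{\perp}(x_i-x^*_i)$, together with the two nontrivial lemmas that (i) $\|A^t\Pi^{\perp}z\| \le c\|\Pi^{\perp}z\|$ for small $c$ (Lemma~\ref{lem:unobservable-decay}), and (ii) the observable error is itself controlled by the unobservable error plus noise (Lemmas~\ref{lem:pi-piperp-ineq}, \ref{lem:observable-case-analysis}); combining these gives a certified geometric contraction for the SoS solution (Lemma~\ref{lem:error_decay}). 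You gesture at needing some ``local potential whose telescoping is SoS-provable,'' but you have not identified this mechanism, and nothing in your sketch produces it.

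Relatedly, your account of where the $\log\log T$ comes from (``organizing these windows dyadically'') does not match what the paper does and would not give the stated bound: the paper's mechanism is a two-stage algorithm in which a first SoS program yields a $\polylog(T)$ pointwise confidence band, and a second program uses $\Theta(\log\log T)$-length windows with extra Boolean indicators $b_\ell$ marking windows where matrix concentration holds; the rare failed windows are charged to the confidence band, giving the $O(1)$ extra contribution (Section~\ref{sec:TloglogT}).
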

\noindent 
To compare, we remind the reader of the performance of the obvious baselines: the Kalman filter can have unbounded error if there is even a single corruption, and oblivious outlier removal makes error $\Theta(T)$, see Appendix~\ref{subsec:srw}.
Recall that $\sigma^2$ is the variance of the process noise and $\tau^2$ is the variance of the observation noise. The dependence on the ratio $\sigma^2/\tau^2$ is unavoidable (Appendix~\ref{sec:sigbytau}), and the dependence on the dimensions $d$ and $m$ in \eqref{eq:Lbound} of Theorem~\ref{thm:sos_informal} is also unavoidable (Appendix~\ref{app:ddepend}). 


Notably, we can obtain strong provable guarantees for any $\eta < 1/2$ (we wrote $\eta < 0.49$ above only to simplify the statement). Thus our estimator has an information-theoretically optimal breakdown point. Also, our result can handle the case where the eigenvalues of $A$ are on or near the unit circle, e.g. $A = \Id$, a situation where the system is marginally stable but not strictly stable (see e.g. \cite{simchowitz2018learning} for discussion of this terminology). This is an important distinction, because when the eigenvalues of $A$ are all small, a relatively simple method based on truncating the Kalman filter can work (we prove this in Appendix~\ref{apdx:wiener}), but the performance of such a heuristic will degrade badly as the eigenvalues approach the unit circle, whereas the algorithm of Theorem~\ref{thm:sos_informal} will still work (see Section~\ref{sec:overview} and Appendix~\ref{apdx:examples} for more discussion)

The above result works in an offline setting. But what happens when our measurements come in an online fashion and we need to estimate the position at time $i$ from only the observations up to that point? In this sense we want the filter to be {\em causal}. 
Fortunately, at a small loss in our overall guarantees, we are able to make our approach online too. We need to change the definition of the objective slightly to handle the fact that predictions are made online: for the online case, we look at the suboptimality in predicting the next state compared to the oracle Kalman filter.
\begin{theorem}[Informal version of Theorem~\ref{thm:online}]\label{thm:online_informal}
There exists a polynomial time and causal estimator $\hat{\hat{x}}_{i + 1 | i}$ which satisfies the following guarantee with probability at least $1 - \delta$. With $\hat{x}_{i + 1 | x}$ denoting the output of the oracle Kalman filter, we have
\begin{equation*} 
\frac{1}{T}\sum_{i = 1}^T \|\hat{\hat{x}}_{i + 1 | i} - x^*_{i + 1}\|^2
-  \frac{1}{T} \sum_{i = 1}^T \|\hat{x}_{i + 1 | i} - x^*_{i + 1}\|^2 \le \frac{r''}{1 - \delta} (\eta + O(\sqrt{\eta \log(2/\delta)/T} + \log(2/\delta)/T))
\end{equation*}
where $r'' = C_d\, \eta \left(m + d(\sigma^2/\tau^2)\log T\right) + o(1)$ and $C_d$ is a constant which is polylogarithmic in $d,m$ and allowed to depend on other system parameters. 
\end{theorem}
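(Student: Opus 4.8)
The plan is to bootstrap the offline algorithm of Theorem~\ref{thm:main_TloglogT} into a causal estimator by running it on prefixes. For each $i$, feed the observations $y_0,\dots,y_i$ to the offline algorithm, let $\brc{\wh x^{(i)}_j}_{j \le i}$ be the trajectory it returns, and output the one-step prediction $\hat{\hat{x}}_{i+1|i} := A\,\wh x^{(i)}_i$ (using a plain Kalman filter on a burn-in of length $\polylog T$ whose total contribution to the averaged loss is $o(1)$). The structural fact that makes this the right move is that on a prefix of length $i+1$ the terminal coordinate of the MAP/posterior-mean minimizer of \eqref{eqn:loss} is exactly the oracle \emph{filtered} estimate $\wh x_{i|i}$ (filtering and smoothing coincide at the last time), so $A$ applied to it is precisely the oracle prediction $\wh x_{i+1|i}$ we are competing against. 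Applying Theorem~\ref{thm:main_TloglogT} with failure probability $O(\delta/T)$ and union bounding over the $T$ prefixes, with probability $\ge 1-\delta/2$ the excess risk of $\wh x^{(i)}$ on every prefix is at most the quantity $r''$ --- the replacement of $\log\log T$ by $\log T$ in $r''$ is exactly the price of this union bound (through the high-probability parameter of the offline guarantee) together with controlling the single terminal coordinate rather than an average.

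Now expand: writing $\Delta_i := \hat{\hat{x}}_{i+1|i} - \wh x_{i+1|i}$ and $e_{i+1} := x^*_{i+1} - \wh x_{i+1|i}$,
\[
\frac 1T\sum_{i=1}^T \norm{\hat{\hat{x}}_{i+1|i} - x^*_{i+1}}^2 \;-\; \frac 1T\sum_{i=1}^T \norm{\wh x_{i+1|i} - x^*_{i+1}}^2 \;=\; \frac 1T\sum_{i=1}^T \norm{\Delta_i}^2 \;-\; \frac 2T\sum_{i=1}^T \langle \Delta_i, e_{i+1}\rangle.
\]
The first sum is the main term. The plan is to establish a per-corruption bound of the form $\norm{\Delta_i}^2 \lesssim \sum_{j \le i,\, a^*_j = 0} r''\,\rho^{2(i-j)}$ for some fixed $\rho < 1$: that is, the offline algorithm run on $y_0,\dots,y_i$ produces a terminal estimate within the radius dictated by its excess-risk guarantee of the Kalman filter on the rounds it identifies as clean, and the influence on this terminal estimate of each round the algorithm is fooled on decays geometrically by uniform stability of the Kalman recursion. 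Summing, $\sum_i \norm{\Delta_i}^2 \lesssim r'' \cdot |\brc{i : a^*_i = 0}| + o(T)$, and since $|\brc{i : a^*_i = 0}| \sim \mathrm{Bin}(T,\eta)$ is at most $\eta T + O(\sqrt{\eta T\log(2/\delta)} + \log(2/\delta))$ with probability $\ge 1-\delta/4$ by a Chernoff/Bernstein bound, we get $\tfrac 1T\sum_i \norm{\Delta_i}^2 \lesssim r''\big(\eta + \sqrt{\eta\log(2/\delta)/T} + \log(2/\delta)/T\big) + o(1)$.

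For the cross term I would use the orthogonality principle. Off the $o(T)$-fraction of ``poisoned'' rounds accounted for above, $\hat{\hat{x}}_{i+1|i}$ is (up to an error whose squared norm is already summable within the bound on $\sum_i\norm{\Delta_i}^2$) a measurable function of the clean observations $\brc{y_j : a^*_j = 1,\ j \le i}$, so by the defining property of the Kalman filter $\langle \Delta_i, e_{i+1}\rangle$ is --- up to that negligible error --- a martingale-difference sequence in the filtration of the clean observations. A vector Azuma/Freedman inequality then bounds $\big|\tfrac 2T\sum_i \langle\Delta_i, e_{i+1}\rangle\big|$ by $O\big(\tfrac 1T\sqrt{(\sum_i\norm{\Delta_i}^2)(\max_i\norm{e_{i+1}}^2)\log(2/\delta)}\big)$, which using $\max_i \norm{e_{i+1}}^2 = O(\polylog T)$ (sub-Gaussian oracle innovations) and the bound on $\sum_i\norm{\Delta_i}^2$ is of the stated lower order in $T$ and $\delta$. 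Taking a union bound over all the events and folding the conditioning into a factor $(1-\delta)^{-1}$ yields Theorem~\ref{thm:online_informal} (equivalently Theorem~\ref{thm:online}).

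\textbf{Main obstacle.} The crux is the per-corruption, geometrically-decaying bound on $\norm{\Delta_i}^2$: this cannot be read off from the excess-risk guarantee of Theorem~\ref{thm:main_TloglogT} as a black box --- the naive route (excess risk times prefix length) loses a factor of $T$ --- and instead requires opening up the offline algorithm to show (i) that its terminal estimate on a prefix stays within an $O(\sqrt{r''})$-ball of the oracle filter even when the algorithm is fooled, and (ii) that the influence of a single fooled round on that terminal estimate attenuates exponentially, which is exactly where uniform stability and complete observability enter via exponential forgetting of the Kalman filter. Pinning down the sharp shape $r''(\eta + \cdots)$ also requires quantifying how much ``effective'' corruption survives the offline algorithm's consistency tests. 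A secondary subtlety is that the online estimator literally reads the corrupted observations, so the martingale step needs the argument that, away from the poisoned rounds, the estimator's output is in fact determined by the clean data, with the residual discrepancy charged to the term already controlled.
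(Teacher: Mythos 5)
Your proposal takes a genuinely different route from the paper's, and the route has a real gap in its central step.

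The paper does not re-run the offline solver on prefixes. It runs the offline solver once, extracts the pointwise confidence band $\norm{x'_i - x^*_i} \le r$ from Corollary~\ref{cor:warmstart} (note: this only needs the simpler Program~\ref{program:sos}, not Theorem~\ref{thm:main_TloglogT}), and uses that band purely as a causal outlier-removal rule: at time $t$, replace $\tilde y_t$ by $y'_t$ if $|\tilde y_t - y'_t| > r$, then feed the cleaned sequence into a plain Kalman filter. The per-corruption geometric decay that you correctly identify as ``the crux'' is then not something the SoS analysis has to deliver at all; it comes for free from the classical exponential asymptotic stability of the Kalman filter (Theorem~\ref{thm:kalman-stability}, Corollary~\ref{corr:stability}), which gives $\|\hat{\hat x}_{t|t} - \hat x_{t|t}\| \le \lambda \sum_{s} \delta^{t-s}(1-a^*_s)\|K_s\|\,r$ directly. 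This is why the paper never needs to ``open up the offline algorithm.''

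Your argument instead hinges entirely on the claim $\|\Delta_i\|^2 \lesssim \sum_{j\le i,\ a^*_j=0} r''\,\rho^{2(i-j)}$, where $\Delta_i$ is the gap between the SoS solver's \emph{terminal coordinate} on prefix $i$ and the oracle Kalman filter. You correctly observe that filtering and smoothing coincide at the endpoint, and you correctly observe that this claim cannot be extracted from the excess-risk guarantee as a black box. But then points (i) and (ii) of your ``Main obstacle'' paragraph are the lemma you would need, and you do not prove them --- you only assert they would require opening up the algorithm. The difficulty is concrete: the excess-risk bound is an average over the prefix and so controls only a $1/T$-weighted sum, while the pointwise band of Corollary~\ref{cor:warmstart} controls $\|x'_i - x^*_i\|$, not $\|x'_i - \hat x_{i|i}\|$; the oracle filter itself sits at a $\Theta(1)$ distance from $x^*_i$ (its posterior standard deviation is not vanishing), so the band does not directly bound $\Delta_i$. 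Your martingale step for the cross term is downstream of this and inherits the same gap. So while the high-level shape $r''\cdot(\eta + \sqrt{\eta\log(2/\delta)/T} + \log(2/\delta)/T)$ and the use of the smoother-equals-filter-at-the-endpoint fact are on target, the proposal as written is missing the key lemma, and the paper's clean-then-refilter decomposition is precisely the maneuver that avoids having to prove it.
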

Our analysis combines our convex programming approach with standard properties of the Kalman filter, such as its exponential stability, building on an idea of Schick and Mitter \cite{schick1994robust}.

\subsection{Further Related Work}
\label{subsec:related}
\noindent\textbf{Robust Statistics and Sum-of-Squares.} Our main algorithm for robust filtering is based on the Sum of Squares hierarchy \cite{parrilo2000structured}, which has broad applications to both control theory (see e.g. \cite{prajna2005sostools,henrion2005positive}) and to algorithmic robust statistics (see e.g. \cite{klivans2018efficient, hopkins2018mixture, kothari2018robust,bakshi2020outlier,diakonikolas2020robustly,cherapanamjeri2020optimal}). It also builds upon a line of recent work in algorithmic robust statistics using both SoS and non-SoS methods (see e.g. \cite{diakonikolas2017being,diakonikolas2018list,lai2016agnostic,charikar2017learning,chinot2020erm,dalalyan2019outlier}). One of the techniques we use, introducing a positive semi-definiteness constraint utilizing matrix concentration bounds (see Technical Overview), is conceptually related to the main technical ingredient in \cite{chen2020online} for robust linear regression. The recent work \cite{banks2021local} also considers robust statistics in a Bayesian setting, namely community detection in the stochastic block model: Their method recovers the optimal detection threshold as the corruption level goes to zero, but not the optimal performance of the Bayes estimator (they only establish results for detection and not recovery).

\vspace{0.5em}
\noindent\textbf{Practical Approaches.} Algorithms based on minimizing loss functions that are less sensitive to outliers, e.g. the Huber loss, have been widely applied in practical works on robust filtering (see e.g. \cite{karlgaard2015nonlinear,mattingley2010real}) but thus far lack strong theoretical guarantees. 
This is because, unlike in other contexts like robust/heavy-tailed mean estimation and regression \cite{huber1964robust,huber1973robust,chinot2020erm,mendelson2018learning}, the observations come from a non-stationary generative model and the length of the trajectory and number of observations are linked, so the existing proof techniques do not apply. 
There are a number of similar ad hoc methods which have been applied in practice to handle outliers, especially by downweighting the Kalman filter updates when the innovation exceeds some threshold. 

\vspace{0.5em}
\noindent\textbf{Other Notions of Robustness for Kalman Filtering} A popular variant of the standard Kalman filtering setup is to allow the variables $y_i$ to be dropped independently with some probability $p$ (analogous to our $\eta$); see e.g. \cite{sinopoli2004kalman,mo2008characterization,mo2011kalman}. This is like having an adversarial corruption model where the location of corruptions are known (and hence can be ignored); in this version of the problem, the Kalman filter remains optimal and the focus has been on understanding aspects of its behavior, e.g. in unstable systems. 

We mention some other works in the direction of making the Kalman filter more robust. Some take the approach of assuming a particular parametric model for the noise that is non-Gaussian, such as a Student's $t$-distribution \cite{roth2013student} or L\'{e}vy distribution \cite{sornette2001kalman}, with the corresponding caveat that their results are limited to the model they assume. A major line of work in the control theory and filtering literature deals with robustness to uncertainty in the parameters of the system (e.g. $A$ and $B$), which is quite different from the problem of handling outliers. For example \cite{poor1980robust}, studies a version of this question in the context of the Wiener filter: the goal there is to choose the best linear filter given the uncertainty set, and so it cannot handle outlying observations where a nonlinear filter is required. Similarly, in recent works such as \cite{tsiamis2020online,simchowitz2020improper} the authors study online methods for control theory problems where the goal is to compete with the best policy in a certain class (e.g. compete with the Kalman filter when the system is unknown, or compete with the best of the $H_{\infty}$ controller and the $H_2$ controller without knowing the disturbance model), but in our setting the only filters which perform well are nonlinear and so competing with a class of linear filters like the Kalman filter is not sufficient. 

\paragraph{Roadmap} 

In Section~\ref{sec:overview} we provide an overview of the key ingredients in our proof of Theorems~\ref{thm:sos_informal} and \ref{thm:online_informal}. In Section~\ref{sec:prelims} we give technical preliminaries including a formal description of the generative model we consider as well as sum-of-squares basics. In Section~\ref{sec:logT} we give an algorithm that recovers a confidence band around the true trajectory, and in Section~\ref{sec:TloglogT} we use this to obtain a refined estimate achieving the excess risk bound of Theorem~\ref{thm:sos_informal}. In Section~\ref{sec:online} we prove the online guarantee of Theorem~\ref{thm:online_informal}. In Appendix~\ref{apdx:wiener}, we show that a simple variant of the Kalman filter works when $A$ has eigenvalues in the interior of the unit disk. In Appendix~\ref{apdx:examples} we present a number of examples elucidating why various components of our final bound and our analysis are necessary.


\section{Technical Overview}\label{sec:overview}

In this section, we first review basic concepts from control theory like observability and stability that will play an important role in the subsequent discussion. We then give an overview of the main challenges of handling adversarial corruptions when estimating a linear dynamical system and explain the techniques we develop to obtain our main results.

\subsection{Control Theory Basics}
\label{sec:ctrlbasics}

\paragraph{Observability.} Before we describe how to estimate a linear trajectory from corrupted observations, we first review how to do so in the absence of corruptions. Note that without additional assumptions on $A$ and $B$, this may not be possible \emph{a priori}. For instance, $A$ might only act nontrivially on some subspace of $\R^d$, and the rows of $B$ might simply be completely orthogonal to this subspace, in which case we can't hope to recover the trajectory. 

In control theory, the standard way to ensure that the linear dynamical system at hand is not degenerate in this fashion is to assume it is \emph{observable}, as originally defined by Kalman in \cite{kalman1959general}. Formally, given a parameter $s\in\mathbb{N}$, define the \emph{observability matrix}
\begin{equation}
    \calO_s \triangleq \sum^{s-1}_{i=0} (A^i)^{\top}B^{\top}B A^i.
\end{equation}
To motivate this object, suppose momentarily that there were no observation or process noise, so that the trajectory is given by $x^*_i = Ax^*_{i-1}$ and the observations are given by $y^*_i = Bx^*_i$. Then our observations up to the $s$-th timestep are given by $y^*_i = BA^i x^*_0$ for $0\le i < s$. Now note that
\begin{equation}
    {x^*_0}^{\top}\calO_s x^*_0 = \sum^{s-1}_{i=0} \norm{BA^i x^*_0}^2 = \sum^s_{i=0} \norm{y^*_i}^2.
\end{equation}
In particular, if $\calO_s$ had nonzero kernel and $x^*_0$ were an element of this kernel, then all of the observations up to time $s$ would be zero, and we would get no information about $\brc{x^*_0,\ldots,x^*_{s-1}}$. Conversely, if $\calO_s$ were full rank, then one can recover $\brc{x^*_0,\ldots,x^*_{s-1}}$ given $\brc{y^*_0,\ldots,y^*_{s-1}}$ by solving the appropriate linear system. In other words, non-degeneracy of the observability matrix $\calO_s$ is a \emph{necessary and sufficient condition} for being able to recover the trajectory up to time $s$ from observations regardless of where the trajectory started.

More generally when there is observation and process noise, the natural quantitative analogue of non-degeneracy of $\calO_s$ is an upper bound on its condition number (see e.g. \cite{muller1972analysis,fetzer1975observability}):

\begin{assumption}[Complete observability\--- informal, see Assumption~\ref{assume:obs}] \label{assume:obs-no-noise}
	For some $s\in\mathbb{N}$, $\calO_s \triangleq \sum^{s-1}_{i=0} (A^i)^{\top}B^{\top}B A^i$ is well-conditioned.
\end{assumption}

\paragraph{Stability}
We will focus on models which satisfy the following weak stability assumption, often made in the control theory literature (including the work of Schick and Mitter \cite{schick1994robust}):
\begin{assumption}[Uniform stability\--- informal, see Assumption~\ref{assume:uniform}]\label{assume:uniform_overview}
    There is a constant $\rho \ge 1$ such that for any $i\in\mathbb{N}$, $\norm{A^i} \le \rho$ (here $\norm{\cdot}$ denotes the operator norm).
\end{assumption}
Intuitively, uniform stability ensures that the system initialized at any point will not eventually blow up at some time in the future. In contrast, if $A$ has an eigenvalue larger than one, the system is called \emph{explosive} or \emph{unstable} and the state will blow up at an exponential rate. Although Kalman filtering has also been studied in the case where $A$ is unstable, we know from the work on \emph{intermittent Kalman filtering} (see Section~\ref{subsec:related}) that even the oracle Kalman filter, which knows the location of the corruptions, will diverge if the corruption level in the unstable case is above some critical value \cite{mo2008characterization}. Since the setting we consider is strictly more difficult, there is no hope of closely tracking the trajectory in our setting.

\subsection{Our Techniques}
\label{sec:techniques}


\paragraph{Corruptions Degrade Observability.}

The first complication that arises in our setting is that corruptions can degrade the observability of the system. To see this, again consider the setup where there is no process or observation noise, but now some of the observations have been corrupted. We're essentially given a linear system $\brc{y_i = BA^i x^*_0}_{0<i<s}$ where some unknown subset $S_{\mathsf{bad}}\subseteq\brc{0,\ldots,s-1}$ of equations have been altered adversarially. If we knew $S_{\mathsf{bad}}$, then we could remove the corresponding equations and try solving for $x^*_0$ with the rest. Then the matrix that we need to be non-degenerate is no longer $\calO_s$ but rather 
\begin{equation}
		\calO'_s\triangleq \sum_{i\not\in S_{\mathsf{bad}}} (A^i)^{\top}B^{\top}B A^i. \label{eq:subO}
\end{equation}
Of course, $\calO_s$ being non-degenerate does not guarantee $\calO'_s$ is as well (see Appendix~\ref{subsec:switching}). One might wonder then whether Assumption~\ref{assume:obs-no-noise} must be significantly strengthened to ensure that the trajectory can be recovered from corrupted observations. As we illustrate in Appendix~\ref{subsec:switching}, this is indeed the case if the corruptions arrived at arbitrary timesteps. But if the corruptions arrive in a random fashion, we will demonstrate that no additional assumptions need to be made.

\paragraph{Corruptions Subsample the Observability Matrix.}

To get a sense for how this could be possible, note that if $S_{\mathsf{bad}}$ comprises a random $\eta$ fraction of the indices up to time $s$, then the expectation of $\calO'_s$ is exactly $(1 - \eta)\cdot \calO_s$. If we could argue that $\calO'_s$ also concentrates around its expectation, then because $\calO'_s$ is spectrally close to $(1 - \eta)\cdot \calO_s$ and $\calO_s$ is non-degenerate/well-conditioned by assumption, the corruptions don't actually impact the observability of the dynamical system in the presence of Huber contamination.

As the summands $(A^i)^{\top} B^{\top} B A^i$ are bounded in norm by Assumption~\ref{assume:uniform_overview}, we can carry through this matrix concentration argument as long as $s$ is sufficiently large. In Assumption~\ref{assume:obs-no-noise} however, we make no assumptions about how large $s$ is. Instead, we note that regardless of how large $s$ in Assumption~\ref{assume:obs-no-noise} is, by observing the system $t$ steps at a time rather than $s$ steps at a time for a large multiple $t$ of $s$, we find that Assumption~\ref{assume:obs-no-noise} also holds for $t$ in place of $s$.

More precisely, if we consider the observability matrix $\calO_t \triangleq \sum^{t-1}_{i=0} (A^i)^{\top}B^{\top} B A^i$ for some moderately large $t$, then 
$\calO_t\succeq \calO_s$, and one can also easily check that $\norm{\calO_t} \le (t\rho^2/s)\norm{\calO_s}$. In other words, the condition number of $\calO_t$ is at most $t\rho^2/s$ worse than that of $\calO_s$. So even if $s$ from Assumption~\ref{assume:obs-no-noise} is too small for $\calO'_s$ to concentrate sufficiently around its expectation, it would appear that we can simply take $t$ large enough that $\calO'_t$ concentrates sufficiently around its expectation.


\paragraph{Key Complication: Observable/Unobservable Subspaces.}

There is one essential wrinkle in the above argument: the fluctuations for matrix concentration for $\calO'_t$ are of order $\sqrt{t}$, so at some point they may exceed the smallest singular value of $\calO_t$. In particular, it would appear that because of Huber contamination, we lose all control over how well we can estimate the component of the trajectory that lives in the span of the small singular vectors of $\calO_t$, and so no matter what value of $t$ we choose, matrix concentration alone fails to show we can estimate the state successfully.


We now sketch a way to get around this problem. 
First note that in spite of the issue posed by the small singular vectors of $\calO_t$, the preceding discussion on matrix concentration does ensure that the projection of $\calO'_t$ to the \emph{large} singular vectors of $\calO_t$ is sufficiently nondegenerate with high probability (see Lemma~\ref{lem:anticonc}). One might therefore hope to be able to estimate the trajectory within this subspace. For this reason, we will refer to the span of the large singular vectors of $\calO_t$ as the \emph{observable subspace} and its orthogonal complement as the \emph{unobservable subspace}; denote projectors to these subspaces by $\Pi$ and $\Pi^{\perp}$ respectively. (The unobservable subspace should \emph{not} be thought of as some kind of ``invisible'' subspace which doesn't affect the observations; based on the discussion above, it represents directions which are difficult to recover locally based on partially corrupted observations.)

So what do we do about the unobservable subspace? Here is the key idea: while we do lose control of the trajectory's component inside the unobservable subspace \emph{within any fixed window of $t$ steps}, we can consolidate information \emph{across windows} to learn what goes on in the unobservable subspace. Before we can discuss how to implement this, we need to introduce our estimator.

\paragraph{An Inefficient Estimator.} 
To motivate the design of our estimator, we will construct a system of constraints capturing salient features of the model. We begin by introducing vector-valued variables $\brc{x_0,\ldots,x_{T-1}}$ corresponding to our estimate for the trajectory, as well as variables $\brc{w_1,\ldots,w_{T-1}}$ and $\brc{v_0,\ldots,v_{T-1}}$ corresponding to our estimates for the process and observation noise. We also introduce Boolean variables $a_0,...,a_{T-1} \in \{0,1\}$, where $a_i$ corresponds to our guess for whether the $i$-th observation was uncorrupted. Of course, the true values of the quantities that these variables represent are unknown to us, but there are a number of basic constraints they must satisfy. Firstly, because each observation is independently corrupted with probability $\eta$, we know by Chernoff that
\begin{equation}
    \frac{1}{T}\sum^{T-1}_{i=0} a_i \geq (1 - \eta) - o(1).
\end{equation}
Secondly, we know that the trajectory is given by a linear dynamical system, and in any uncorrupted timestep $i$, the observation $y_i$ is a noisy linear measurement of the trajectory at that time, so
\begin{equation}
    x_i = A x_{i - 1} + w_i \label{eq:follow_dynamics}
\end{equation}
\begin{equation}
  a_i (y_i - B x_{i - 1} - v_i) = 0. \label{eq:fit_clean_ys}  
\end{equation}
Additionally, we know the dynamics and observation noise is bounded with high probability:
\begin{equation}
    \|w_i\|^2 = O(\sigma^2 d), \quad \|v_i\| = O(\tau^2 m). \label{eq:noise_bound_informal}
\end{equation}
Thus far the constraints have been fairly straightforward. We now describe a key constraint capturing the preceding discussion on matrix concentration.
Recall that because the corruptions arrive in a random fashion, by matrix concentration the uncorrupted timesteps will ``subsample'' the observability matrix $\calO_t$ in each window. In other words, in every window $\brc{\ell t,\ldots,(\ell + 1)t - 1}$ of $t$ timesteps, the following spectral lower bound holds with high probability
\[\sum^{t - 1}_{j=0} (1 - a_{\ell t + j}) (A^j)^{\top} B^{\top} B A^j \preceq \eta\cdot \calO_{\t} + O(\sqrt{t}) \cdot \Id\]

Finally, with all of our constraints in place, it is straightforward to define an objective to optimize, in light of \eqref{eqn:loss}. We want to minimize the ``clean'' negative log-likelihood achieved by $\brc{x_i}$, where ``clean'' is defined with respect to the variables $\brc{a_i}$ instead of the true indicators $\brc{a^*_i}$:
\begin{equation} 
\min \frac{1}{T} \left(\sum_{i = 1}^T (a_i \|Bx_i - y_i\|^2/\tau^2 + \norm{w_i}^2/\sigma^2) + \|x_0\|^2/R^2\right).  \label{eqn:program_loss}
\end{equation}
Based on the discussion above, we might guess that $\|\Pi(x_i - x_i^*)\|^2$, the error from the observable subspace, can be bounded in the above program based on some kind of matrix concentration argument. It turns out this question itself is subtle, because we would need to rule out cancellations between the observable and unobservable subspace. However, even if we did argue that by itself, this is definitely not enough to make our ultimate objective \eqref{eqn:loss}, the clean negative posterior likelihood, small: for some of the clean observations ($i$ with $a^*_i = 1$), they will be dependent on the information in the unobservable subspace, and the true state $x_i^*$ can be very large in the unobservable subspace (see Appendix~\ref{sec:3dexample} for an illustrative example). The key difficulty to making the objective \eqref{eqn:loss} small will be to argue that $\|x_i - x^*_i\|$, i.e. the error in the \emph{whole} space, can be bounded. 

\paragraph{Decomposing the Error.} 
Roughly speaking, we will prove this by showing that the effect of errors in the past decays exponentially fast for our estimator. Intuitively, this argument will show that we do not make large errors in the unobservable subspace because the estimator will successfully propagate information from the past. Formally, we prove the following key inequality:
\begin{equation}\label{eq:error-contraction}
    \| x_{i+t} - x_{i+t}^*\|^2 \leq \frac{1}{2}\| x_{i} - x_{i}^*\|^2 + [\text{noise}],
\end{equation}
where here and throughout the rest of this overview, we use $[\text{noise}]$ to denote a small quantity that is polynomially bounded in $t$ and in the variance of the observation and process noise (see Lemma~\ref{lem:error_decay}).

The proof of \eqref{eq:error-contraction} starts with the following decomposition of the error:
\begin{equation} 
x_{i+t} - x_{i+t}^* = \left[(x_{i + t} - x_{i + t}^*) - A^t(x_i - x_i^*)\right] +  A^t \Pi(x_{i} - x_i^*) + A^t \Pi_{V_\perp}(x_{i} - x_i^*) \label{eqn:error-decomposition}
\end{equation}
The first term is the amount of new noise introduced between steps $i$ and $i + t$: it will be small because we are only taking $t$ steps, the process noise $\brc{w^*_i}$ is small, and the corresponding program variables $\brc{w_i}$ are also constrained to be small. 

The remaining two terms in \eqref{eqn:error-decomposition} account for the error propagated from the past: the second term $A^t \Pi(x_{i} - x_i^*)$ represents the error propagated from the observable error in the past, and the third term $A^t \Pi^{\perp}(x_{i} - x_i^*)$ represents the error propagated from the unobservable error in the past. We now show how both of those terms can be bounded, starting with the last of these terms (unobservable error).

\paragraph{Unobservable Error from the Past.}
We identify a simple but critical fact about observable linear dynamical systems: any vector in the unobservable subspace decreases in norm when it evolves forward by $t$ timesteps. More formally, we can show that for any vector $x\in\R^d$, 
\begin{equation}
    \norm{A^t\Pi^{\perp}x}^2 \le c\cdot \norm{\Pi^{\perp}x}^2, \label{eq:unobservable_decay}
\end{equation}
for some small constant $0 < c < 1$ provided $t$ is sufficiently large relative to $s$ (see Lemma~\ref{lem:unobservable-decay}).

In particular, by applying this to the vector given by the unobservable error in the past, we conclude that the error $\norm{A^t\Pi(x_i - x^*_i)}^2$ from propagating the past unobservable error $\norm{\Pi(x_i - x^*_i)}^2$ can be upper bounded by a small fraction of $\norm{\Pi(x_i - x^*_i)}^2$.


\begin{remark}
    We must caution the reader that \eqref{eq:unobservable_decay} does \emph{not} mean that the observable component any vector $x$ decays after $x$ evolves over $t$ timesteps, which would instead correspond to a bound of the form
    \begin{equation}
    		\norm{\Pi_{V_\perp} A^{t} x}^2 \le C\cdot \norm{\Pi_{V_\perp} x}^2. \label{eq:point_decay_informal}
    \end{equation} for some $C < 1$. 
    Of course, if \eqref{eq:point_decay_informal} were true, it would make life much easier: by taking $x$ to be any iterate in the trajectory, we would include that over time, the trajectory barely lives in the unobservable subspace at all! Unfortunately, this is not the case, as it is easy to construct linear dynamical systems 
    with a significant portion of the state in the unobservable subspace (see Appendix~\ref{subsec:unobservableunbounded}).
\end{remark}

\paragraph{Observable Error from the Past.}
We now discuss how to handle the error $\norm{A^t\Pi(x_i - x^*_i)}^2$ propagated from the observable error in the past. It is tempting to try the same approach as for the unobservable error here, namely to argue that $\norm{A^t\Pi(x_i - x^*_i)}^2$ is a small fraction of the observable error in the past. But this is too much to hope for: together with \eqref{eq:unobservable_decay} this would imply that $\norm{A^t} < 1$, whereas we only assume that $\norm{A^t} \le \rho$ for some $\rho \ge 1$ (as we show in Appendix~\ref{apdx:wiener}, it is quite easy to handle linear dynamical systems for which the former holds). 

In the absence of an analogue of \eqref{eq:unobservable_decay}, our key insight is that we can instead relate $\norm{A^t\Pi(x_i - x^*_i)}^2$ to the \emph{unobservable} error in the past! Informally, any large errors in estimating the observable part of the state $x^*_i$ must be explained by a large amount of interference from the unobservable part of the state. 
Essentially, in Lemma~\ref{lem:observable-case-analysis} we show that for some small constant $0 < c' < 1$,
\begin{equation}\label{eq:overview-observable}
\|\Pi(x_i - x_i^*)\|^2 \leq c'\cdot \| \Pi^{\perp}(x_i - x_i^*) \|^2 + [\text{noise}].
\end{equation}

The proof is rather involved but can be distilled into two main ingredients. Firstly, as alluded to earlier, the fact that the random corruptions subsample the observability matrix allows us to relate the error in estimating the state to the error in fitting the observations. Secondly, we can bound the latter using the following inequality (see Lemma~\ref{lem:pi-piperp-ineq}):
\begin{equation}\label{eqn:pi-piperp-intro}
a^*_i a_i\| BA^i \Pi (x_{0} - x^*_{0})\|^2 \leq 4a^*_i a_i\| BA^i \Pi^\perp (x_0 - x^*_0)\|^2 + [\text{noise}].
\end{equation}
Inequality \eqref{eqn:pi-piperp-intro} formalizes the following idea: because our estimator must match the observation at time $i$, any errors coming from the past are constrained to cancel between the observable and unobservable parts of the space, so if one is large the other is large as well. 

Unfortunately, combining the two ingredients above naively would establish \eqref{eq:overview-observable} with too large of a constant on the right hand side. 
It turns out however that we can greatly improve the constant by appealing to the aforementioned decay property of the unobservable subspace from \eqref{eq:unobservable_decay}. 
The details here are a bit subtle, and we refer to the proof of Lemma~\ref{lem:error_decay} for the formal argument.


\paragraph{Contraction of Error over Time.}
From \eqref{eq:unobservable_decay} and \eqref{eq:overview-observable}, it is straightforward to deduce \eqref{eq:error-contraction} and conclude that the error incurred by the filter is decaying exponentially quickly. Recalling that $\norm{(x_{i + t} - x_{i + t}^*) - A^t(x_i - x_i^*)}^2$ will be some small noise term, we have by triangle inequality that
\begin{align*} 
\| x_{i+t} - x_{i+t}^*\|^2 
&\le [\text{noise}] + 3\| A^t \Pi(x_{i} - x_i^*)\|^2 + 3\|A^t \Pi^{\perp}(x_{i} - x_i^*)\|^2 \\
&\le [\text{noise}] + 3c'\|\Pi^{\perp}(x_{i} - x_i^*)\|^2 + 3c\| \Pi^{\perp}(x_{i} - x_i^*)\|^2 < [\text{noise}] + \frac{1}{2}\|x_{i} - x_i^*\|^2,
\end{align*}
which proves \eqref{eq:error-contraction}. So we have established that over the course of $t$ timesteps, the error of our estimate essentially decays geometrically. We remark that this doesn't give us any control over our error on the iterates \emph{within} these timesteps, and in particular it remains to be shown how to use all of the tools we've introduced to compete with the Bayes-optimal predictor. It turns out however that by virtue of the program constraints ensuring that our estimate of the trajectory follows the linear dynamics \eqref{eq:follow_dynamics} and tries to fit the clean observations \eqref{eq:fit_clean_ys} while minimizing the objective \eqref{eqn:program_loss}, the excess risk introduced \emph{within} any window of $t$ is polynomially bounded by $t$ and the variance of the noise (see Lemma~\ref{lem:outer}). In this overview, we do not delve into the details of this as this particular argument is reminiscent of existing analyses in the robust statistics literature.



\paragraph{Confidence Bands and Achieving $\log\log T$ Excess Risk.}

There is an important catch in the above discussion which we now need to address: We assumed that every noise vector $u^*_i$ and $w^*_i$ is bounded, and that subsampling holds in every window of size $t$. 
The trouble is in order to ensure that these events hold across the entire trajectory of length $T$, we would need to take $t$ to be logarithmic in $T$. This would translate into incurring a logarithmic overhead in the excess risk bound. {\em We are able to avoid paying the full price for this union bound.} 

First, it turns out that our analysis only requires that the observation and process noise are bounded \emph{on average} over the trajectory in an appropriate sense (see Constraints~\ref{constraint2:totalnoise}-\ref{constraint2:measurements_noise_corrupted}). The more serious issue is: How can we avoid assuming that subsampling holds in each window? This is a key component to being able to integrate information across different time windows. As a starting point, we observe that our initial estimator actually meets a stronger guarantee: It actually outputs a trajectory which is \emph{pointwise} close to the true trajectory by a distance that scales polynomially in variance of the noise and polylogarithmically in $T$ (see Corollary~\ref{cor:warmstart}). In other words, it allows us to form a \emph{confidence band} around the estimator of radius, and this logarithmic scaling is essentially optimal for any guarantee of this form.



We show how to exploit this confidence band. In particular, we engineer a second system of constraints which incorporates the output of our initial estimator and refines it to achieve our final $\log\log T$ excess risk bound in Theorem~\ref{thm:sos_informal}. The main idea is because the noise that accumulates over a window scales polynomially in the window size, we can consider windows over \emph{shorter timescales} scaling doubly logarithmically rather than logarithmically in $T$. The goal is to achieve higher accuracy on shorter windows whenever subsampling holds. This might seem counterproductive: By shortening the windows, we are only making it more likely that subsampling will fail in any given window! Indeed, if we are now taking windows of doubly logarithmic length, then in roughly a $1/\polylog(T)$ fraction of the windows, the random corruptions will fail to properly subsample the observability matrix. But this is where the confidence band comes in: Over these bad windows, we already know how to estimate those iterates pointwise to error $\polylog(T)$, so the total contribution of the bad windows to the excess risk scales as $(1/\polylog(T))\cdot \polylog(T) = O(1)$ !

The key complication is that the algorithm designer doesn't actually know which windows subsampling failed in. Instead, we will set up a system of constraints similar to the one for our earlier estimator but with additional Boolean variables, one per window, corresponding to our guess for whether the random corruptions in that window successfully subsampled the observability matrix (see Program~\ref{program:sosv2}). We show how to integrate information \emph{across the windows on which we correctly guessed that subsampling succeeded} to achieve our final guarantee. As the argument here is rather involved and we defer the details to Section~\ref{sec:decay} and Lemma~\ref{lem:error-first-iterate-sos2}.

\paragraph{Efficient Algorithm via Sum-of-Squares.} 
While the estimators we have described appear to be inefficient as they require solving certain systems of polynomial constraints, our proofs that the solutions to these systems satisfy the guarantees of Theorem~\ref{thm:sos_informal} are simple in the sense that they can be implemented in the degree four sum-of-squares proof system \cite{Shor87,parrilo2000structured}. So instead of solving these polynomial systems which would \emph{a priori} incur an exponential runtime, it suffices to output a \emph{pseudo-distribution} over solutions and round it to an integral solution in a straightforward way. These tools have become a mainstay in algorithm design in robust statistics more broadly \cite{hopkins2018mixture, kothari2018robust, bakshi2020outlier, diakonikolas2018robustly, bakshi2021robust, liu2021settling, bakshi2020robustly}. We will explain the basics of the sum-of-squares proof system in Section~\ref{sec:sos_basics}.

\paragraph{Two-stage filter.} Thus far we have been discussing the offline problem, where the estimator at some time instant is allowed to depend on the entire observation sequence $y_1,\ldots,y_T$ \--- in other words, we designed a robust Kalman {\em smoother}. Our techniques can be used to solve the online filtering problem, where the prediction for $x_t$ is only allowed to depend on $y_1,\ldots,y_{t - 1}$. It turns out that the transformation to online guarantees is simple: We use the offline smoother developed above, in particular the confidence band it outputs, as an outlier removal algorithm and combine it with the standard Kalman filter run on the observations $\tilde{y}_1,\ldots,\tilde{y}_T$ which are not deleted by outlier removal. After the outlier removal, only small corruptions remain, so following an idea of Schick and Mitter \cite{schick1989robust,schick1994robust}, we can then use stability estimates for the Kalman filter to establish accuracy guarantees on its prediction for the next state.  

\section{Preliminaries}
\label{sec:prelims}

Given matrix $M$, let $\lambda_{\min}(M)$ and $\lambda_{\max}(M)$ denote its bottom and top singular values, and let $\kappa(M) \triangleq \frac{\lambda_{\min}(M)}{\lambda_{\max}(M)}$. We will sometimes also denote $\lambda_{\max}(M)$ by $\norm{M}$.

\subsection{Generative Model}
\label{sec:generative}

For known matrices $A\in\R^{d\times d}$, $B\in\R^{m\times d}$, the underlying trajectory $\brc{x^*_i}$ and uncorrupted observations $\brc{y^*_i}$ are given by $x^*_0 \sim\calN(0,R^2\cdot\Id_d)$ and
\begin{align}
	x^*_i &= Ax^*_{i-1} + w^*_i \ \text{for all} \ i>0 \label{eq:dynamics} \\
	y^*_i &= Bx^*_i + v^*_i \ \text{for all} \ i\ge 0\label{eq:measure}
\end{align}
where the \emph{dynamics noise} $w^*_i$ is independently sampled from $\calN(0,\sigma^2 \cdot \Id_d)$ and and the \emph{observation noise} $v^*_i$ is independently sampled from $\calN(0,\tau^2 \cdot \Id_m)$, i.e. both types of noise in the system are isotropic Gaussian up to scaling. The assumption that the noise is isotropic simplifies notation greatly, and is largely without loss of generality in the following sense: if the noise covariance matrices are full rank, a change of basis will make the noise isotropic. Our results can also be extended to handle the case where the noise covariance is rank degenerate.

We now describe how the corrupted observations are formed. After the trajectory $\brc{x^*_i}$ and observations $\brc{y^*_i}$ have generated, an independent $\mathrm{Ber}(1 - \eta)$ coin is flipped for every $0\le i < T$; let $a^*_i\in\brc{0,1}$ denote the outcome at time $i$. For all $i$ for which $a^*_i = 1$, define $y_i = y^*_i$. For all $i$ for which $a^*_i = 0$, a computationally unbounded adversary is allowed to set $y_i$ arbitrarily. We can assume this adversary has full knowledge of the system, e.g. the full trajectory $\brc{x^*_0,\ldots,x^*_{T_1}}$, the full sequence of true observations $\brc{y^*_0,\ldots,y^*_{T-1}}$, etc.


The baseline estimation error which we will try to achieve approximately is
\begin{equation}
    \OPT = \min_{\brc{\wh{x}_i}} \frac{1}{T} \left(\sum_{i = 0}^{T-1} (a^*_i \|B \wh{x}_i - y_i\|^2/\tau^2 + \norm{\wh{w}_i}^2/\sigma^2) + \|\wh{x}_0\|^2/R^2\right) \label{eq:OPT_def}
\end{equation}
where the minimum ranges over all trajectories $\wh{x}_i$ with steps $\wh{w}_i$ satisfying $\wh{x}_i = A\wh{x}_{i-1} + \wh{w}_i$ for all $i > 0$.
Note that the objective function here corresponds to the negative log-density of the (Gaussian) posterior of the trajectory given the clean observations, up to additive constants and multiplicative factors.
In particular the minimum in \eqref{eq:OPT_def} (i.e. the posterior MAP as well as the posterior mean, since the posterior is Gaussian) is achieved by iterates $\wh{x}_i$ and steps $\wh{w}_i$ given by running the offline Kalman filter (a.k.a. \emph{Kalman smoother}) on the part of the trajectory indexed by $i$'s for which $a^*_i = 1$. Since the algorithm does not know which times are uncorrupted (have $a^*_i = 1$), we cannot hope to exactly match the performance of $\OPT$. However, we use it as a benchmark and bound the amount of excess error our algorithms make compared to this oracle. 

We make the following assumptions which are standard in the filtering/control literature (see e.g. \cite{anderson2007optimal,anderson2012optimal,schick1994robust}):
\begin{assumption}[Complete Observability] \label{assume:obs}
	There exist constants $\alpha,\kappa > 0$ and $s\in\mathbb{N}$ for which \begin{equation}
		\calO_s \triangleq \sum^{s-1}_{i=0} (A^i)^{\top}B^{\top}B A^i
	\end{equation} satisfies $\lambda_{\min}(\calO_s) \ge \kappa s$ and $\lambda_{\max}(\calO_s) \le \alpha s$.
\end{assumption}

\begin{assumption}[Uniform stability] \label{assume:uniform}
	There is a constant $\rho \ge 1$ for which $\norm{A^j} \le \rho$ for all $j\in\mathbb{N}$.
\end{assumption}




The following elementary manipulations will be useful:

\begin{fact}\label{fact:epochs}
	For $t\in\mathbb{N}$ divisible by $s$, $\calO_t = \sum^{t/s - 1}_{r=0} (A^{rs})^{\top}\calO_s A^{rs}$. In particular, \begin{equation}
		\norm{\calO_t} \le t\cdot \alpha \cdot \rho^2. \label{eq:opnormOt}
	\end{equation}
\end{fact}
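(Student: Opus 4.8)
\textbf{Proof proposal for Fact~\ref{fact:epochs}.}
The plan is to prove the identity by a change of summation index that groups the $t$ summands defining $\calO_t$ into $t/s$ consecutive blocks of length $s$, and then to derive the operator norm bound as an immediate consequence together with Assumptions~\ref{assume:obs} and~\ref{assume:uniform}. Concretely, I would write each index $i \in \{0,\ldots,t-1\}$ uniquely as $i = rs + j$ with $0 \le r < t/s$ and $0 \le j < s$, and use that powers of a single matrix commute, so $A^{rs+j} = A^j A^{rs}$. This gives $(A^i)^\top B^\top B A^i = (A^{rs})^\top (A^j)^\top B^\top B A^j A^{rs}$. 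Summing over $j$ for fixed $r$ and pulling the constant factors $A^{rs}$ and $(A^{rs})^\top$ outside the inner sum recognizes $\sum_{j=0}^{s-1}(A^j)^\top B^\top B A^j = \calO_s$, yielding $\sum_{j=0}^{s-1}(A^{rs+j})^\top B^\top B A^{rs+j} = (A^{rs})^\top \calO_s A^{rs}$. Summing this over $r \in \{0,\ldots,t/s-1\}$ reconstructs $\calO_t$ and proves the claimed identity.

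For the norm bound, I would observe that each summand $(A^{rs})^\top \calO_s A^{rs}$ is positive semidefinite, so the operator norm of the sum is at most the sum of the operator norms (triangle inequality). For each term, submultiplicativity gives $\norm{(A^{rs})^\top \calO_s A^{rs}} \le \norm{A^{rs}}^2 \norm{\calO_s}$, which is at most $\rho^2 \cdot \alpha s$ by uniform stability (Assumption~\ref{assume:uniform}) and the upper eigenvalue bound $\lambda_{\max}(\calO_s) \le \alpha s$ (Assumption~\ref{assume:obs}). Since there are $t/s$ terms, we get $\norm{\calO_t} \le (t/s)\cdot \rho^2 \alpha s = t\alpha\rho^2$, as desired.

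This statement is elementary and I do not anticipate any real obstacle; the only points requiring minor care are (i) noting that all powers of $A$ commute so the factorization $A^{rs+j}=A^jA^{rs}$ is valid, and (ii) invoking positive semidefiniteness (or just the triangle inequality for the spectral norm) to pass from the block decomposition to the additive norm bound. One could also note in passing the companion lower bound $\calO_t \succeq \calO_s$, since dropping all blocks with $r \ge 1$ from the PSD sum and using $\calO_{(r=0)} = \calO_s$ leaves a PSD remainder; this is not needed for the stated conclusion but is used elsewhere in the paper.
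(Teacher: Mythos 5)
Your proof is correct and complete. The paper states this as a Fact without giving a proof, and your argument — re-indexing $i = rs + j$ to group the sum into $t/s$ blocks, recognizing each block as $(A^{rs})^\top \calO_s A^{rs}$, then bounding the operator norm of the sum term by term using PSD-ness, submultiplicativity, uniform stability, and $\lambda_{\max}(\calO_s) \le \alpha s$ — is exactly the intended elementary justification.
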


\begin{fact}\label{fact:unroll}
	$x^*_t - A^t x^*_0 = \sum^t_{j=1} A^{t-j} w^*_j$
\end{fact}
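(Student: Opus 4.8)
The plan is to prove Fact~\ref{fact:unroll} by a direct telescoping argument (equivalently, a one-line induction on $t$), using nothing more than the defining recursion $x^*_i = Ax^*_{i-1} + w^*_i$ from \eqref{eq:dynamics}. No probabilistic input or control-theoretic assumption is needed; this is a purely algebraic identity.

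First I would rewrite the left-hand side as a telescoping sum along the trajectory. For each $i \in \brc{1,\ldots,t}$, consider the term $A^{t-i} x^*_i - A^{t-i+1} x^*_{i-1}$, which represents the contribution of step $i$ after being propagated forward by $t-i$ applications of $A$. Summing these over $i$ from $1$ to $t$, the intermediate terms cancel in pairs and we are left with exactly $A^0 x^*_t - A^t x^*_0 = x^*_t - A^t x^*_0$, the left-hand side of the claimed identity. Next I would simplify each summand: since $A^{t-i+1} = A^{t-i}\cdot A$, we have $A^{t-i} x^*_i - A^{t-i+1} x^*_{i-1} = A^{t-i}(x^*_i - Ax^*_{i-1}) = A^{t-i} w^*_i$, where the last step is precisely the dynamics equation \eqref{eq:dynamics}. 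Substituting back yields $x^*_t - A^t x^*_0 = \sum^t_{j=1} A^{t-j} w^*_j$, as desired. As an alternative route, the same identity follows by induction on $t$: the base case $t=1$ is the recursion itself, and the inductive step simply applies $A$ to the identity for $t-1$ and adds $w^*_t$, shifting all exponents up by one and appending the new term $A^0 w^*_t$.

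There is essentially no obstacle here\--- the statement is a formal unrolling of a linear recurrence. The only thing to watch is the bookkeeping of the exponents of $A$ and the summation endpoints, which the telescoping formulation makes transparent. (The closely related Fact~\ref{fact:epochs} is proved the same way, by iterating $\calO_t$'s definition in blocks of size $s$ and then bounding $\norm{A^{rs}} \le \rho$ via Assumption~\ref{assume:uniform} together with $\norm{\calO_s}\le \alpha s$ from Assumption~\ref{assume:obs}.)
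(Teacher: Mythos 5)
Your telescoping argument is correct and is exactly the standard unrolling of the recursion $x^*_i = Ax^*_{i-1} + w^*_i$; the paper states Fact~\ref{fact:unroll} without proof precisely because it is this routine. Both your telescoping sum and your inductive alternative are valid, and the exponent bookkeeping checks out.
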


\subsection{Sum-of-Squares Basics}
\label{sec:sos_basics}

Here we give a quick review of the sum-of-squares algorithm; for a more thorough treatment, we refer the reader to \cite{o2013approximability,barak2014sum,hopkins2018statistical}.

\paragraph{Pseudoexpectations.} The sum-of-squares SDP hierarchy is a series of increasingly tight SDP relaxations for solving polynomial systems $\calP \triangleq \{p_i(x) \geq 0\}_{i=1}^N$.  Although it is in general NP-hard to solve polynomial systems, the level-$\ell$ SoS SDP attempts to approximately solve $\calP$ with increasing accuracy as $\ell$ increases by adding more constraints to the SDP.  This improvement in approximation naturally comes at the expense of increasing runtime and space.  

In particular, one can think of the SoS SDP as outputting a "distribution" $\mu$ over solutions to $\calP$.  However, there are two important caveats.  Firstly, one can only access the degree-$\ell$ moments of the "distribution" and secondly there may be no true distribution with the corresponding degree $\ell$ moments.  Thus we refer to $\mu$ as a \emph{pseudodistribution}.

\begin{definition}
A degree $\ell$ pseudoexpectation $\wt{\mathbb{E}} : \R[x]_{\leq \ell} \to \R$ {\it satisfying $\calP$} is a linear functional over polynomials of degree at most $\ell$ satisfying 
\begin{enumerate}
\item(Normalization) $\psE*{1} = 1$, 
\item(Constraints of $\calP$) $\psE*{p(x) a^2(x)} \geq 0$ for all $p \in \calP$ and polynomials $a$ with $\deg(a^2 \cdot p) \le \ell$,
\item(Non-negativity on square polynomials) $\psE*{q(x)^2} \ge 0$ whenever $\deg(q^2) \le \ell$.
\end{enumerate}
\end{definition}

For any fixed $\ell \in \mathbb{N}$, given a polynomial system, one can efficiently compute a degree $\ell$ pseudo-expectation in polynomial time. 
\begin{fact} (\cite{Nesterov00,parrilo2000structured,Lasserre01,Shor87}). For any $n$, $\ell \in \Z^+$, let $\pE_{\zeta}$ be degree $\ell$ pseudoexpectation satisfying a polynomial system $\calP$. Then the following set has a $n^{O(\ell)}$-time weak
separation oracle (in the sense of \cite{GLS1981}):
 \begin{align*}
 & \brc*{\pE_\zeta\left[(1, x_1, x_2, . . . , x_n)^{\otimes \ell}\right]: \text{ degree } \ell \text{ pseudoexpectations } \pE_{\zeta} \text{ satisfying }\calP}
\end{align*}

Using this separation oracle, the ellipsoid algorithm finds a degree $\ell$ pseudoexpectation in time $n^{O(\ell)}$. We call this algorithm the \emph{degree $\ell$ sum-of-squares algorithm}. 
\end{fact}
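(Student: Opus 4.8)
The plan is to exhibit the set of degree-$\ell$ pseudoexpectation vectors as a spectrahedron and then invoke the standard Gr\"otschel--Lov\'asz--Schrijver machinery. First I would note that a degree-$\ell$ linear functional $\pE$ is determined by its values on monomials of degree at most $\ell$, i.e. by the vector $y = \pE[(1,x_1,\dots,x_n)^{\otimes\ell}]$, which has $\binom{n+\ell}{\ell} = n^{O(\ell)}$ coordinates. The normalization axiom is the single affine constraint $y_\emptyset = 1$. The square-positivity axiom is equivalent to $M(y) \succeq 0$, where $M(y)$ is the moment matrix indexed by monomials of degree $\le \ell/2$ with entries $M(y)_{\alpha\beta} = \pE[x^{\alpha+\beta}]$; and each constraint $p_i \in \calP$ contributes the localizing-matrix condition $M_{p_i}(y) \succeq 0$, where $M_{p_i}(y)_{\alpha\beta} = \pE[p_i(x)\,x^{\alpha+\beta}]$ and $\alpha,\beta$ range over monomials with $\deg \le (\ell - \deg p_i)/2$. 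Every entry of $M(y)$ and of each $M_{p_i}(y)$ is a fixed linear function of $y$, so the feasible set is the intersection of an affine subspace with finitely many preimages of the PSD cone under linear maps, i.e. a spectrahedron living in $\R^{n^{O(\ell)}}$ whose defining matrices have size $n^{O(\ell)}$.

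Next I would construct the promised weak separation oracle. On input a candidate $y$, assemble $M(y)$ and each $M_{p_i}(y)$ in time $n^{O(\ell)}$ and compute approximate eigendecompositions, again in time $n^{O(\ell)}$. If the affine constraint holds and all of these matrices are (approximately) PSD, declare membership; otherwise a unit eigenvector $v$ witnessing $v^\top M_{p_i}(y) v < 0$ yields a separating hyperplane, since $v^\top M_{p_i}(\cdot)v$ is linear in $y$ and nonnegative at every genuine pseudoexpectation vector, and violations of the affine constraint are handled directly. This is exactly a weak separation oracle in the sense of \cite{GLS1981}, running in time $n^{O(\ell)}$. Feeding it to the ellipsoid method gives, by the standard GLS reduction, a weak membership/optimization algorithm whose running time is polynomial in the oracle cost, the ambient dimension, $\log(1/\epsilon)$, and $\log R$, where $R$ bounds the radius of a ball containing the feasible spectrahedron and $\epsilon$ is the target accuracy; taking $\epsilon$ inverse-polynomially small keeps the overall running time $n^{O(\ell)}$.

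The only points needing care \--- and the closest thing to an obstacle here \--- are feeding the GLS reduction a nonempty spectrahedron and an explicit radius bound $R$. Nonemptiness holds on the high-probability event that the ground-truth trajectory and indicators satisfy all constraints of $\calP$, since then the Dirac pseudoexpectation at that point is feasible; and $R$ is explicit because the polynomial systems we actually use contain norm bounds on all program variables (e.g. the constraints bounding $\norm{w_i}^2$, $\norm{v_i}^2$, $\norm{x_i}^2$), which force $|y_\alpha| \le R^{|\alpha|}$ for an explicit $R$, or one may simply append such a bounding constraint. Finally, the $\epsilon$-slack introduced by the ellipsoid method (and by approximate eigendecomposition) only weakens each SoS constraint by an inverse-polynomial amount, which is harmless since every downstream argument tolerates additive slack that is already absorbed into the $o(1)$ terms of Theorem~\ref{thm:sos_informal}.
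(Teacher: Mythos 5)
Your argument is the standard proof of this classical fact, which the paper itself does not prove but instead cites to \cite{Nesterov00,parrilo2000structured,Lasserre01,Shor87}; the decomposition into a moment matrix and localizing matrices, the $\binom{n+\ell}{\ell}=n^{O(\ell)}$ dimension count, the eigendecomposition-based separation oracle, and the GLS/ellipsoid reduction are all correct and match how the cited works establish the result. You are also right to surface the two prerequisites that terse statements of this fact often elide: GLS needs an explicit outer radius for the feasible spectrahedron (supplied here by the norm-bound constraints such as Constraints~\ref{item:measurements_noise}, \ref{item:dynamics_noise}, \ref{constraint:bound}), and the ellipsoid method returns only an approximately feasible pseudoexpectation, which is harmless because every downstream SoS inequality in this paper tolerates inverse-polynomial additive slack. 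No gaps.
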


To reason about the properties of pseudo-expectations, we turn to the dual object of sum-of-squares proofs.

\paragraph{Sum-of-Squares Proofs}
For any nonnegative polynomial $p(x): \R^d \rightarrow \R$, one could hope to prove its nonnegativity by writing $p(x)$ as a sum of squares of polynomials $p(x) = \sum_{i=1}^m q_i(x)^2$ for a collection of polynomials $\{q_i(x)\}_{i=1}^m$.  Unfortunately, there exist nonnegative polynomials with no sum-of-squares proof even for $d = 2$.  Nevertheless, there is a generous class of nonnegative polynomials that admit a proof of positivity via a proof in the form of a sum of squares.  The key insight of the sum-of-squares algorithm is that these sum-of-squares proofs of nonnegativity can be found efficiently provided the degree of the proof is not too large. 

\begin{definition} (Sum-of-Squares Proof)
Let $\mathcal{A}$ be a collection of polynomial inequalities $\{p_i(x) \geq 0\}_{i=1}^m$.  A sum-of-squares proof that a polynomial $q(x) \geq 0$ for any $x$ satisfying the inequalities in $\mathcal{A}$ takes on the form 

\[
    \left(1+ \sum_{k \in [m']} b_k^2(x)\right) \cdot q(x) = \sum_{j\in [m'']} s_j^2(x) + \sum_{i \in [m]} a_i^2(x) \cdot p_i(x) 
\]
where $\{s_j(x)\}_{j \in [m'']},\{a_i(x)\}_{i \in [m]}, \{b_k(x)\}_{i \in [m']}$ are real polynomials.  If such an expression were true, then $q(x) \geq 0$ for any $x$ satisfying $\mathcal{A}$.  We call these identities sum-of-squares proofs, and the degree of the proof is the largest degree of the involved polynomials $\max \{\deg(s_j^2), \deg(a_i^2 p_i)\}_{i,j}$.  Naturally, one can capture polynomial equalities in $\mathcal{A}$ with pairs of inequalities.   We denote a degree $\ell$ sum-of-squares proof of the positivity of $q(x)$ from $\calA$ as $\calA \sststile{\ell}{x} \{q(x) \geq 0\}$ where the superscript over the turnstile denote the formal variable over which the proof is conducted.  This is often unambiguous and we drop the superscript unless otherwise specified.    
\end{definition}

A number of basic inequalities like Cauchy-Schwarz and H\"{o}lder's admit sum-of-squares proofs (see e.g. Appendix A of \cite{hopkins2018mixture}).

Sum-of-squares proofs can also be strung together and composed according to the following convenient rules.  
\begin{fact}
For polynomial systems $\calA$ and $\calB$, if $\calA \sststile{d}{x} \{p(x) \geq 0\}$ and $\calB \sststile{d'}{x} \{q(x)\geq 0\}$ then $\calA \cup \calB \sststile{\max(d,d')}{x}\{p(x) + q(x) \geq 0\}$.  Also $\calA \cup \calB \sststile{dd'}{x} \{p(x)q(x) \geq 0\}$ 
\end{fact}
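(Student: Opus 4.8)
The plan is to prove both rules by directly manipulating the given sum-of-squares certificates; there is no semantic content beyond the obvious fact that $p,q\ge 0$ on the common solution set, so the work is entirely syntactic. Write the hypotheses as identities
\[
\Bigl(1 + \textstyle\sum_k b_k^2\Bigr)\, p = \textstyle\sum_j s_j^2 + \sum_i a_i^2\, p_i, \qquad \Bigl(1 + \textstyle\sum_\ell c_\ell^2\Bigr)\, q = \textstyle\sum_j t_j^2 + \sum_i e_i^2\, q_i,
\]
where each $p_i$ is an inequality of $\calA$ and each $q_i$ an inequality of $\calB$ (hence both are inequalities of $\calA\cup\calB$), and where $\deg(s_j^2),\deg(a_i^2 p_i)\le d$ and $\deg(t_j^2),\deg(e_i^2 q_i)\le d'$. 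The only thing to check in each case is that, after combining these identities, we again land on an expression of the form $[(1+\mathrm{SoS})\cdot(\text{target})] = [\mathrm{SoS} + \sum(\text{square})\cdot(\text{constraint of }\calA\cup\calB)]$ of the claimed degree.

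For the sum, in the denominator-free formulation one simply adds the two identities: $p+q = \sum_j s_j^2 + \sum_j t_j^2 + \sum_i a_i^2 p_i + \sum_i e_i^2 q_i$, every term of degree at most $\max(d,d')$, which is $\calA\cup\calB\sststile{\max(d,d')}{x}\{p+q\ge 0\}$. With the denominators present one instead cross-multiplies, multiplying the first identity by $(1+\sum_\ell c_\ell^2)$ and the second by $(1+\sum_k b_k^2)$ before adding; the new left-hand multiplier $(1+\sum_k b_k^2)(1+\sum_\ell c_\ell^2)$ is again of the form $1+(\mathrm{SoS})$, and each new right-hand term — $(c_\ell s_j)^2$, $(b_k t_j)^2$, $(c_\ell a_i)^2 p_i$, $(b_k e_i)^2 q_i$, together with the original terms — is a square or a square times a constraint of $\calA\cup\calB$. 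I would then carry out the routine degree accounting to confirm the stated bound.

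For the product, multiply the two identities:
\[
\Bigl(1 + \textstyle\sum_k b_k^2\Bigr)\Bigl(1 + \textstyle\sum_\ell c_\ell^2\Bigr)\, pq = \Bigl(\textstyle\sum_j s_j^2 + \sum_i a_i^2 p_i\Bigr)\Bigl(\textstyle\sum_{j'} t_{j'}^2 + \sum_{i'} e_{i'}^2 q_{i'}\Bigr).
\]
Expanding the right side produces four families of terms: $(s_j t_{j'})^2$ (a square); $(s_j e_{i'})^2 q_{i'}$ and $(a_i t_{j'})^2 p_i$ (squares times a single constraint of $\calA\cup\calB$); and $(a_i e_{i'})^2\, p_i q_{i'}$ (a square times a product of two constraints). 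The first three are already of the required form; for the fourth one invokes the standard convention that an SoS proof may multiply a square by a product of hypotheses — equivalently, that $\{p_i\ge 0,\,q_{i'}\ge 0\}$ trivially derives $p_i q_{i'}\ge 0$ — so that $(a_i e_{i'})^2 p_i q_{i'}$ is a legitimate term. Since the expansion multiplies a degree-$\le d$ object by a degree-$\le d'$ object, the total degree is at most $dd'$ (in fact $d+d'$, but $dd'$ suffices), and the left-hand multiplier is again $1+(\mathrm{SoS})$, giving $\calA\cup\calB\sststile{dd'}{x}\{pq\ge 0\}$.

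I expect the only friction here to be bookkeeping rather than mathematics: reconciling the degree bounds with the presence of the denominator polynomials $\{b_k\},\{c_\ell\}$ in the definition of an SoS proof (the clean $\max(d,d')$ bound for the sum is transparent in the denominator-free convention and requires the cross-multiplication trick otherwise), and handling the product-of-two-constraints term $p_i q_{i'}$ in the product rule. Both points are entirely standard, and since this Fact is used only as a black box, I would keep the write-up to essentially the few lines above.
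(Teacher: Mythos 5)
The paper states this Fact as standard SoS background and gives no proof, so there is no in-paper argument to compare against. Your proposal is the standard argument — add certificates for the sum, multiply them for the product — and it is essentially correct.

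Two points deserve a sharper flag than "bookkeeping," both of which are really artifacts of the paper's own informal definition rather than errors in your plan. First, for the sum rule with denominators, the cross-multiplied terms such as $(c_\ell s_j)^2$ have degree $\deg(c_\ell^2) + \deg(s_j^2)$, and the paper's definition places no bound on $\deg(c_\ell)$; so the claimed degree bound $\max(d,d')$ only follows in the denominator-free (Putinar-style) formulation, which is what the algorithmic SoS literature implicitly uses and what the paper actually needs. Second, the cross-term $(a_i e_{i'})^2 p_i q_{i'}$ in the product rule does not literally match the template "square times a single axiom"; as you note, one either broadens the definition to allow squares times products of axioms, or closes $\calA\cup\calB$ under pairwise products, both standard. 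It is also worth noting that the natural degree you obtain for the product is $d+d'$; the paper's stated bound $dd'$ only dominates $d+d'$ when $d,d'\ge 2$, which holds for every instance used downstream, so this is a harmless overestimate on the paper's part rather than an obstacle for your proof. Since the Fact is used purely as a black box, your short write-up with these two caveats made explicit is the right level of detail.
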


Sum of squares proofs yield a framework to reason about the properties of pseudo-expectations, that are returned by the SoS SDP hierarchy.  
%
\begin{fact} (Informal Soundness)
If $\calA \sststile{r}{x} \{q(x) \ge 0\}$ and $\psE{\cdot}$ is a degree-$\ell$ pseudoexpectation operator for the polynomial system defined by $\calA$, then $\psE{q(x)}  \ge 0$.
\end{fact}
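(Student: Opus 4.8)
The plan is to unwind the definition of a sum-of-squares proof into a polynomial identity, apply the pseudoexpectation functional to both sides, and then verify term by term that the resulting right-hand side is nonnegative using only the two positivity axioms in the definition of a pseudoexpectation. Throughout I assume, as is implicit in the statement, that the degree $\ell$ of the pseudoexpectation is at least the degree $r$ of the SoS proof, and (as in the standard formulation) that every polynomial appearing in the SoS identity has degree at most $r$, so that all the quantities below lie in the domain $\R[x]_{\le \ell}$ on which $\pE[\cdot]$ is defined and on which the constraint and square-positivity axioms apply.

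First I would write out the SoS-proof identity. By definition of $\calA \sststile{r}{x} \{q(x) \ge 0\}$, there are polynomials $\{s_j\}$, $\{a_i\}$, $\{b_k\}$ with $\max\{\deg(s_j^2), \deg(a_i^2 p_i), \deg(b_k^2 q)\} \le r$ and
\[
\Bigl(1 + \sum_k b_k^2(x)\Bigr) q(x) \;=\; \sum_j s_j^2(x) \;+\; \sum_i a_i^2(x)\, p_i(x).
\]
Since both sides are polynomials of degree at most $r \le \ell$, linearity of $\pE[\cdot]$ on $\R[x]_{\le \ell}$ yields
\[
\pE[q] + \sum_k \pE[b_k^2\, q] \;=\; \sum_j \pE[s_j^2] + \sum_i \pE[a_i^2\, p_i].
\]
Now the right-hand side is nonnegative: each $\pE[s_j^2] \ge 0$ by the square-positivity axiom, and each $\pE[a_i^2 p_i] \ge 0$ because $p_i$ is one of the constraints of $\calA$ and $\pE[\cdot]$ satisfies $\calA$ (here using $\deg(a_i^2 p_i) \le \ell$).

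It remains to deal with the denominator contribution $\sum_k \pE[b_k^2 q]$, since the displayed equation \emph{a priori} only gives $\pE[q] \ge -\sum_k \pE[b_k^2 q]$. For the denominator-free form of an SoS proof, where there are no $b_k$ terms, we conclude $\pE[q] \ge 0$ immediately; this is in fact the form in which all of the SoS proofs invoked in this paper are produced, so this already suffices for our applications. To recover the statement with denominators in full generality, the standard device is to pass to the reweighted functional $\pE'[\,\cdot\,] \triangleq \pE[(1 + \sum_k b_k^2)\,\cdot\,]/\pE[1 + \sum_k b_k^2]$: after checking the degree bookkeeping (one needs $\ell$ large enough relative to $r$ plus the degrees of the $b_k$), $\pE'[\cdot]$ is again a valid pseudoexpectation satisfying $\calA$, the normalizing constant $\pE[1 + \sum_k b_k^2] = 1 + \sum_k \pE[b_k^2]$ is positive, and applying the first part of the argument to $\pE'$ gives $\pE'[q] \ge 0$, hence $\pE[(1 + \sum_k b_k^2) q] \ge 0$. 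I expect this bookkeeping around the denominator and the even-degree conventions to be the only genuine subtlety; the heart of the proof is the one-line combination of linearity with the two positivity axioms.
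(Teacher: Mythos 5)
The paper states this Fact without proof, so there is nothing to compare against directly. Your argument for the denominator-free case (no $b_k$'s, so the multiplier on $q$ is just $1$) is the standard one and is correct: apply $\pE[\cdot]$ to the identity $q=\sum_j s_j^2+\sum_i a_i^2 p_i$, use linearity, and invoke the square-positivity and constraint axioms term by term; the condition $r\le \ell$ ensures all terms lie in the domain of the pseudoexpectation. You are also right that this is the only form of SoS proof actually used anywhere in this paper (e.g., Fact~\ref{fact:sos_simple} and the ``SoS Cauchy-Schwarz''/triangle inequalities), so this case is what the ``Informal'' Fact is really asserting.

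Your treatment of the general case with denominators, however, has a genuine gap, not merely bookkeeping. The reweighted functional $\pE'[\cdot]=\pE\bigl[(1+\sum_k b_k^2)\cdot\bigr]/\pE\bigl[1+\sum_k b_k^2\bigr]$ does not get you where you want to go. You conclude $\pE'[q]\ge 0$, but since the normalizer is positive, $\pE'[q]\ge 0$ is \emph{by definition} exactly the statement $\pE\bigl[(1+\sum_k b_k^2)q\bigr]\ge 0$ — which you already obtained by applying $\pE[\cdot]$ directly to the SoS identity, and which is strictly weaker than the target $\pE[q]\ge 0$. Nothing in the argument upgrades one to the other: for a pseudoexpectation (unlike an honest expectation over a feasible distribution), you cannot divide pointwise by $1+\sum_k b_k^2(x)$. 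In fact, soundness against pseudoexpectations fails in general for Positivstellensatz-style certificates with nontrivial denominators; this is exactly why the SoS/robust-statistics literature formulates soundness only for denominator-free certificates. The right move is to read the Fact as restricted to that case (which your first paragraph proves cleanly) and drop the reweighting ``fix,'' noting instead that the paper never produces a denominated SoS proof.
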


The following standard fact which follows by ``SoS Cauchy-Schwarz'' (see e.g. Lemma A.5 of \cite{barak2014rounding}) will allow us to convert from pseudodistributions over solutions to a polynomial systems to integral solutions.

\begin{lemma}\label{lem:psE-CS}
	For any vector $w^*$ and degree-2 pseudoexpectation $\psE{\cdot}$ over vector-valued variable $w$, we have that \begin{equation}
		\norm{\psE{w} - w^*}^2 \le \psE{\norm{w - w^*}^2}.\label{eq:soscs}
	\end{equation}
\end{lemma}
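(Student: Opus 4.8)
The plan is to reduce the claimed inequality to the statement that a degree-$2$ pseudoexpectation assigns nonnegative ``pseudo-variance'' to each coordinate of $w$, which is immediate from the defining axioms of a pseudoexpectation. Write $\mu \triangleq \psE{w}$, and note that each entry $\mu_j = \psE{w_j}$ is an honest real number (not a formal variable), so in particular $\norm{w - \mu}^2 = \sum_j (w_j - \mu_j)^2$ is a genuine degree-$2$ polynomial in the formal variable $w$, to which $\psE{\cdot}$ may legitimately be applied.

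First I would expand both sides using linearity of $\psE{\cdot}$ and normalization $\psE{1} = 1$:
\[ \psE{\norm{w - w^*}^2} = \psE{\norm{w}^2} - 2\langle \psE{w}, w^*\rangle + \norm{w^*}^2, \qquad \norm{\psE{w} - w^*}^2 = \norm{\mu}^2 - 2\langle \mu, w^*\rangle + \norm{w^*}^2. \]
Subtracting these two expressions, the target inequality \eqref{eq:soscs} is seen to be equivalent to $\psE{\norm{w}^2} \ge \norm{\mu}^2$, i.e. to $\sum_j \bigl( \psE{w_j^2} - \mu_j^2 \bigr) \ge 0$.

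It then suffices to prove $\psE{w_j^2} \ge \mu_j^2$ for every coordinate $j$. Here I would apply the pseudoexpectation to the square of the degree-$1$ polynomial $q_j(w) \triangleq w_j - \mu_j$: since $\deg(q_j^2) = 2$, the nonnegativity-on-squares axiom yields $\psE{q_j(w)^2} \ge 0$, while expanding and using linearity and normalization gives $\psE{q_j(w)^2} = \psE{w_j^2} - 2\mu_j\,\psE{w_j} + \mu_j^2 = \psE{w_j^2} - \mu_j^2$. Summing over $j$ proves the claim. There is no real obstacle here; the only point requiring a moment's care is the observation already noted above, namely that $\mu$ and the $\mu_j$ are constants, so that $q_j$ has degree one and the degree-$2$ pseudoexpectation is indeed entitled to be applied to $q_j^2$.
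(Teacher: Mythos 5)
Your proof is correct, and it takes a genuinely different (and somewhat more elementary) route than the paper's. The paper writes $\norm{\psE{w} - w^*}^2$ via the variational characterization $\sup_{v \in \S^{d-1}} \iprod{v, \psE{w} - w^*}^2$, and then for each fixed unit $v$ invokes the pseudoexpectation Cauchy--Schwarz inequality $\psE{\iprod{v, w - w^*}}^2 \le \psE{\iprod{v, w-w^*}^2}$ (cited as a black box from \cite{barak2014rounding}) before bounding $\iprod{v, w-w^*}^2 \le \norm{w-w^*}^2$. You instead expand both sides of \eqref{eq:soscs} directly, cancel the cross terms, and observe that the difference equals $\psE{\norm{w}^2} - \norm{\psE{w}}^2 = \sum_j \psE{(w_j - \mu_j)^2}$, which is nonnegative term-by-term by the nonnegativity-on-squares axiom. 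The two arguments are essentially dual views of the same fact: your coordinate-wise computation is the ``parallel axis theorem'' for pseudoexpectations, while the paper's is its reformulation as a supremum over directions. What your version buys is self-containedness --- it uses only normalization, linearity, and nonnegativity on squares, with no appeal to an external SoS Cauchy--Schwarz lemma; what the paper's version buys is that it matches a standard lemma citation and would generalize immediately to weighted norms $\norm{\cdot}_\Sigma$ (the stray $\Sigma$ in the paper's displayed equations hints that such a generalization was once contemplated).
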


\begin{proof}
	By the dual definition of $L_2$ norm, the left-hand side of \eqref{eq:soscs} can be written as \begin{equation}
	    \sup_{v\in\S^{d-1}}\iprod{\Sig v,\psE{w} - w^*}^2.
	\end{equation} For any $v\in\S^{d-1}$, \begin{equation}
		\iprod*{\Sig v,\psE{w} - w^*}^2 = \left(\psE{\iprod{\Sig v,w - w^*}}\right)^2 \le \psE{\iprod{\Sig v,w-w^*}^2} \le \psE{\norm{w - w^*}^2_{\Sig}},
	\end{equation} where the first inequality follows by the pseudoexpectation version of SoS Cauchy-Schwarz (see e.g. Lemma A.5 of \cite{barak2014rounding}). Therefore, taking the maximum over all $v \in S^{d - 1}$ proves the inequality.
\end{proof}

Finally, we will need the following elementary but crucial inequality which admits a degree-2 sum-of-squares proof. Roughly speaking, it captures the fact that if the sum of two vectors is small in norm, then either vector must have norm upper bounded in terms of the norm of the other vector:

\begin{fact}\label{fact:sos_simple}
    Let $v_1,v_2$ be $d$-dimensional vector-valued indeterminates. There is a degree-2 sum-of-squares proof of the inequality $\norm{v_1}^2 \le 4\norm{v_2}^2 + \frac{4}{3}\epsilon$ from the constraint $\norm{v_1 + v_2}^2 \le \epsilon$.
\end{fact}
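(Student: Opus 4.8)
The plan is to exhibit an explicit algebraic identity that is manifestly in sum-of-squares form. Write $v_1 = (v_1 + v_2) - v_2$ and expand $\norm{v_1}^2 = \norm{v_1+v_2}^2 - 2\iprod{v_1+v_2, v_2} + \norm{v_2}^2$. I would then look for coefficients $\lambda, \mu > 0$ such that $\norm{v_1}^2 \le (1+\lambda)\norm{v_1+v_2}^2 + \mu\norm{v_2}^2$ holds as a polynomial identity modulo a single square. Completing the square shows this amounts to requiring $\lambda\norm{v_1+v_2}^2 + 2\iprod{v_1+v_2,v_2} + (\mu-1)\norm{v_2}^2$ to be a perfect square of a linear form, i.e. $\lambda(\mu - 1) = 1$; the choice $\lambda = 1/3$, $\mu = 4$ then gives exactly the constants in the statement, since $(1+\lambda) = 4/3$ and the residual square is $\bigl\Vert \tfrac{1}{\sqrt 3}(v_1+v_2) + \sqrt 3\, v_2 \bigr\Vert^2$.

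Concretely, I would verify the identity
\begin{equation*}
4\norm{v_2}^2 + \tfrac{4}{3}\epsilon - \norm{v_1}^2 = \Bigl\Vert \tfrac{1}{\sqrt 3}(v_1+v_2) + \sqrt 3\, v_2 \Bigr\Vert^2 + \tfrac{4}{3}\bigl(\epsilon - \norm{v_1 + v_2}^2\bigr),
\end{equation*}
which is a direct expansion of both sides. The right-hand side is a sum of a square of a linear form in $(v_1, v_2)$ and the nonnegative constant $\tfrac{4}{3}$ times the constraint polynomial $\epsilon - \norm{v_1+v_2}^2 \ge 0$; both terms have degree at most $2$. By the definition of a sum-of-squares proof (taking $a_i(x) = 2/\sqrt 3$ as the multiplier on the constraint and $s_j(x)$ the coordinates of $\tfrac{1}{\sqrt 3}(v_1+v_2) + \sqrt 3 v_2$), this is precisely a degree-$2$ SoS derivation of $\norm{v_1}^2 \le 4\norm{v_2}^2 + \tfrac{4}{3}\epsilon$ from $\norm{v_1+v_2}^2 \le \epsilon$.

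There is no real obstacle here: the entire content is the algebraic identity, and the only "choice" is the weight $\lambda = 1/3$ that balances the contraction constant against the additive slack so as to land on the clean constants $4$ and $4/3$. I would just double-check the expansion of the squared linear form ($\tfrac13\norm{v_1+v_2}^2 + 2\iprod{v_1+v_2,v_2} + 3\norm{v_2}^2$) and confirm it combines with the $\tfrac43$-weighted constraint term to reproduce $4\norm{v_2}^2 + \tfrac43\epsilon - \norm{v_1}^2$, which it does.
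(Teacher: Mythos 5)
Your proof is correct: the identity checks out by direct expansion, and the linear form you square, $\tfrac{1}{\sqrt 3}(v_1+v_2) + \sqrt 3\, v_2 = \tfrac{1}{\sqrt 3}v_1 + \tfrac{4}{\sqrt 3}v_2$, is (up to the scalar $\tfrac{2}{\sqrt 3}$) exactly the $\tfrac12 v_1 + 2v_2$ implicit in the paper's AM-GM step $-2\langle v_1,v_2\rangle \le \tfrac14\|v_1\|^2 + 4\|v_2\|^2$. So this is the same degree-2 SoS certificate as the paper's, merely packaged as a single explicit identity (with the constants motivated by completing the square) rather than as a chain of inequalities added and rearranged.
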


\begin{proof}
    By expanding out the hypothesis, we have
    \[\|v_1\|^2 + 2\langle v_1, v_2 \rangle + \|v_2\|^2 \leq \epsilon.\] By Cauchy-Schwarz, we also have
    \[-2\langle v_1, v_2 \rangle \leq \frac{1}{4}\|v_1\|^2 + 4\|v_2\|^2.\]
    Adding these two inequalities together and rearranging gives the desired inequality.
\end{proof}

\subsection{Concentration Inequalities}
\begin{lemma}[Matrix Hoeffding, see e.g. Theorem 1.3 in \cite{tropp}]\label{lem:matrixhoeffding}
    For any $\delta > 0$, given symmetric random matrices $M_1,\ldots,M_T \in\R^{d\times d}$ satisfying $\norm{M_t} \le 1$ almost surely for all $t$, if $M \triangleq \sum_t M_t$, then \begin{equation}
        \Pr{\norm{M - \E{M}} \ge \sqrt{8T\log(d/\delta)}} \le \delta.
    \end{equation}
\end{lemma}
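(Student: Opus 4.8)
The plan is to derive this directly from the standard Matrix Hoeffding inequality for sums of independent, centered, bounded symmetric matrices (Theorem~1.3 of \cite{tropp}): if $X_1,\dots,X_T\in\R^{d\times d}$ are independent and symmetric with $\E{X_t}=0$ and $X_t^2\preceq A_t^2$ almost surely for fixed matrices $A_t$, then for every $u\ge 0$
\[ \Pr{\lambda_{\max}\Bigl(\sum_t X_t\Bigr)\ge u} \le d\cdot\exp\Bigl(-\frac{u^2}{8\sigma^2}\Bigr), \qquad \sigma^2 := \Bigl\|\sum_t A_t^2\Bigr\|. \]
So the task is just to put our $M_t$'s into this form and read off the constant.

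First I would center the summands: set $X_t := M_t - \E{M_t}$, so that $\sum_t X_t = M - \E{M}$ and each $X_t$ is independent, symmetric, and mean zero. Since $\norm{M_t}\le 1$ almost surely, Jensen's inequality gives $\norm{\E{M_t}}\le\E{\norm{M_t}}\le 1$, hence $\norm{X_t}\le 2$ and $X_t^2\preceq 4\,\Id$ almost surely; equivalently, one can feed the interval bounds $-\Id\preceq M_t\preceq\Id$ into the interval form of Matrix Hoeffding. Either way one obtains a variance proxy of order $T$. Next I would apply the one-sided bound above both to $\sum_t X_t$ and to $\sum_t(-X_t) = -(M-\E{M})$; since $M-\E{M}$ is symmetric, $\norm{M-\E{M}}$ is the larger of its top eigenvalue and minus its bottom eigenvalue, so a union bound over the two tails gives $\Pr{\norm{M-\E{M}}\ge u}\le 2d\exp(-c\,u^2/T)$ for an absolute constant $c$. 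Finally, substituting $u = \sqrt{8T\log(d/\delta)}$ drives the right-hand side below $\delta$; the constant $8$ in the statement is precisely what is needed for this (the harmless factor $2$ from two-sidedness is absorbed, and in fact applications typically only invoke the upper tail).

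I do not expect any genuine obstacle here: the lemma is essentially a black-box invocation of a textbook matrix concentration bound. The only points requiring a little care are (i) that Matrix Hoeffding needs the \emph{almost-sure} bound $X_t^2\preceq A_t^2$ rather than merely $\E{X_t^2}\preceq A_t^2$, which is why one passes through the crude estimate $\norm{X_t}\le 2$ (or the interval form) instead of the variance $\E{X_t^2}\preceq\Id$; and (ii) bookkeeping the constant so that the choice $u=\sqrt{8T\log(d/\delta)}$ indeed pushes the tail below $\delta$ after the two-sided union bound. If one instead wished to exploit $\E{X_t^2}$ (advantageous when the $M_t$ have small variance), the identical argument with Matrix Bernstein in place of Matrix Hoeffding would go through with essentially the same constants.
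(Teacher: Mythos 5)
Your proposal is essentially correct and coincides with what the paper does: the paper states Lemma~\ref{lem:matrixhoeffding} as a citation to Theorem~1.3 of Tropp's ``User-friendly tail bounds'' and gives no proof, so centering the summands, invoking the one-sided tail bound twice, and union-bounding is exactly the intended derivation.

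One small caveat: your claim that ``the constant $8$ in the statement is precisely what is needed'' does not quite survive the bookkeeping you set up. With $X_t := M_t - \E{M_t}$ you only get $\norm{X_t}\le 2$, hence $X_t^2\preceq 4\,\Id$ and variance proxy $\sigma^2 = 4T$, so the two-sided tail is $2d\,e^{-u^2/(32T)}$. Plugging $u=\sqrt{8T\log(d/\delta)}$ yields $2d\,(\delta/d)^{1/4}$, which is not $\le\delta$ in general; one would need $u=\sqrt{32T\log(2d/\delta)}$ to make the numbers close cleanly. This discrepancy is present in the paper's statement itself and is inconsequential there since every downstream use hides the constant inside an $O(\cdot)$, but if you want a self-contained proof you should either adjust the constant or note explicitly that it is being stated up to an unimportant absolute constant. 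Everything else in your argument (the Jensen bound on $\norm{\E{M_t}}$, the use of the a.s.\ bound $X_t^2\preceq A_t^2$ rather than a variance bound, and the two-sided union bound via $-X_t$) is correct.
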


\begin{lemma}[see e.g. \cite{vershynin2018high}]\label{lem:normbound}
	If $v\sim\calN(0,\Sig)$ for some $\Sig\in\R^{d\times d}$, then with probability at least $1 - \delta$, \begin{equation}
		\norm{v} \le O\left(\left(\sqrt{d} + \sqrt{\log(1/\delta)}\right)\norm{\Sig}^{1/2}\right)
	\end{equation}
\end{lemma}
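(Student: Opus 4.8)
The plan is to reduce to the isotropic case and then invoke a standard Gaussian concentration bound. First I would write $v = \Sig^{1/2} g$, where $\Sig^{1/2}$ denotes the PSD square root of $\Sig$ and $g\sim\calN(0,\Id_d)$. Then $\norm{v} = \norm{\Sig^{1/2} g} \le \norm{\Sig^{1/2}}\cdot\norm{g} = \norm{\Sig}^{1/2}\cdot\norm{g}$, so it suffices to show that $\norm{g} \le O(\sqrt{d} + \sqrt{\log(1/\delta)})$ holds with probability at least $1-\delta$. Note that this reduction does not require $\Sig$ to be full rank: the inequality $\norm{\Sig^{1/2}g}\le\lambda_{\max}(\Sig^{1/2})\norm{g}$ holds for any PSD $\Sig$.

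Next I would control $\norm{g}$ by splitting it into its mean and its fluctuation about the mean. For the mean, Jensen's inequality gives $\E{\norm{g}} \le \bigl(\E{\norm{g}^2}\bigr)^{1/2} = \sqrt{d}$. For the fluctuation, the map $x\mapsto\norm{x}$ on $\R^d$ is $1$-Lipschitz, so the Gaussian concentration inequality for Lipschitz functions (the Borell--Tsirelson--Ibragimov--Sudakov inequality) yields $\Pr{\norm{g} \ge \E{\norm{g}} + t} \le \exp(-t^2/2)$ for every $t\ge 0$. Choosing $t = \sqrt{2\log(1/\delta)}$ and combining with the mean bound shows that $\norm{g} \le \sqrt{d} + \sqrt{2\log(1/\delta)}$ with probability at least $1-\delta$, which is the claimed estimate up to the absolute constant. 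Alternatively, one can sidestep Gaussian Lipschitz concentration entirely and apply a $\chi^2$ tail bound (e.g.\ the Laurent--Massart inequality for $\norm{g}^2\sim\chi^2_d$) to get the same conclusion with different constants.

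There is no substantive obstacle here; this is a textbook fact, and both ingredients above are standard (see e.g.\ \cite{vershynin2018high}). The only mild points of care are (i) checking that the square-root reduction is valid even when $\Sig$ is singular, which it is, and (ii) quoting a concentration inequality with a genuine sub-Gaussian tail rather than merely a variance bound, so that the $\sqrt{\log(1/\delta)}$ term comes out with the correct square-root dependence on $\log(1/\delta)$.
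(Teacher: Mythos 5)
The paper does not supply a proof of this lemma; it is quoted as a standard concentration fact with a pointer to \cite{vershynin2018high}, so there is no internal argument to compare against. Your proposal is the standard textbook proof: the reduction $v = \Sigma^{1/2} g$ with $\|\Sigma^{1/2} g\| \le \lambda_{\max}(\Sigma^{1/2})\|g\|$ is valid for any PSD $\Sigma$ (including singular ones), $\E\|g\| \le \sqrt{d}$ by Jensen, and the Borell--TIS/Gaussian Lipschitz concentration inequality gives the sub-Gaussian tail $\Pr[\|g\| \ge \E\|g\| + t] \le e^{-t^2/2}$, which with $t = \sqrt{2\log(1/\delta)}$ yields exactly the claimed bound. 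The argument is correct and there is nothing to fix.
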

We use concentration for Gaussian polynomials, which is a consequence of Gaussian hypercontractivity.
\begin{lemma}[see e.g. \cite{o2014analysis}]\label{lem:polyconc}
	For degree-$d$ polynomial $p: \R^m\to\R$, if $x\sim\calN(0,\Id)$, then \begin{equation}
		\Pr*{\abs*{p(x) - \E{p}} > t\cdot \sqrt{\Var{p}}} \le \exp(-\Omega(t^{2/d})).
	\end{equation}
\end{lemma}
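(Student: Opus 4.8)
The plan is to obtain this exactly as the standard consequence of Gaussian hypercontractivity (the Bonami--Beckner--Nelson inequality), following e.g. \cite{o2014analysis}. \textbf{Step 1 (reduce to a centered polynomial):} replacing $p$ by $f \triangleq p - \E{p}$ changes neither the degree nor the variance, and $\Var{p} = \E{f^2} = \norm{f}_2^2$, where throughout $\norm{g}_q \triangleq \bigl(\E{\abs{g}^q}\bigr)^{1/q}$ for $x\sim\calN(0,\Id)$. So it suffices to bound $\Pr*{\abs{f} > t\norm{f}_2}$ for the degree-$d$ centered polynomial $f$.

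\textbf{Step 2 (hypercontractive moment bound):} the crucial input is that for every real $q \ge 2$ and every degree-$\le d$ polynomial $f$ of a standard Gaussian vector, $\norm{f}_q \le (q-1)^{d/2}\,\norm{f}_2$. I would derive this by expanding $f$ in the Hermite basis as $f = \sum_{k=0}^{d} f_k$ with $f_k$ its degree-exactly-$k$ component, setting $\rho \triangleq 1/\sqrt{q-1} \le 1$, writing $f = U_\rho g$ for $g \triangleq \sum_k \rho^{-k} f_k$ (with $U_\rho$ the Ornstein--Uhlenbeck noise operator), and combining Gaussian hypercontractivity $\norm{U_\rho g}_q \le \norm{g}_2$ with $\norm{g}_2^2 = \sum_k \rho^{-2k}\norm{f_k}_2^2 \le \rho^{-2d}\norm{f}_2^2$ to get $\norm{f}_q \le \rho^{-d}\norm{f}_2 = (q-1)^{d/2}\norm{f}_2$. (Alternatively one can just cite this moment inequality as a black box.) \textbf{Step 3 (Markov, then optimize over $q$):} for any $q \ge 2$, applying Markov's inequality to $\abs{f}^q$ and then Step 2 gives
\[
\Pr*{\abs{f} > t\norm{f}_2} \;\le\; \frac{\norm{f}_q^q}{t^q \norm{f}_2^q} \;\le\; \Bigl(\frac{(q-1)^{d/2}}{t}\Bigr)^{q}.
\]
When $t$ exceeds an absolute constant, $q \triangleq 1 + (t/e)^{2/d}$ satisfies $q \ge 2$ and makes the parenthesized quantity equal to $1/e$, so the bound becomes $e^{-q} \le \exp(-(t/e)^{2/d}) \le \exp(-e^{-1}t^{2/d}) = \exp(-\Omega(t^{2/d}))$, using $e^{-2/d} \ge e^{-1}$. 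For $t$ below that constant $\exp(-\Omega(t^{2/d}))$ is a positive constant, so the claimed inequality holds trivially since probabilities are at most $1$ (reading the right-hand side with its usual implicit multiplicative constant).

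\textbf{Main obstacle.} There is essentially no obstacle beyond correctly invoking hypercontractivity. The one point that needs care is that the hypercontractive constant for \emph{Gaussian} degree-$d$ polynomials is exactly $(q-1)^{d/2}$ (the analogous multilinear bound on the hypercube, via the Bonami lemma, has the same constant, but the two settings should not be conflated), together with the minor range-of-$t$ bookkeeping in Step 3. Since the statement is quoted essentially verbatim from \cite{o2014analysis}, the most economical writeup simply cites the Step 2 moment bound and spells out only Steps 1 and 3.
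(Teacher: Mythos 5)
The paper states this lemma as a black-box citation to \cite{o2014analysis} and does not give a proof of its own, so there is nothing internal to compare against. Your derivation is the standard one and is correct: centering, the Gaussian hypercontractive moment bound $\norm{f}_q \le (q-1)^{d/2}\norm{f}_2$ obtained by writing $f = U_\rho g$ with $\rho = 1/\sqrt{q-1}$, and then Markov applied to $\abs{f}^q$ with the optimizing choice $q = 1 + (t/e)^{2/d}$; the final constant bookkeeping (handling small $t$ trivially, and $e^{-2/d}$ being bounded below by an absolute constant) is also fine. This is precisely the argument in the cited reference, so your proposal matches the intended provenance of the lemma.
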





\newcommand{\thres}{\zeta}
\newcommand{\tinynum}{\nu}

\section{Poly-logarithmic Excess Risk and Confidence Band Recovery}
\label{sec:logT}

In this section we show how to achieve excess risk scaling poly-logarithmically in the number of iterations. While this is worse than the final bound we will show in Section~\ref{sec:TloglogT}, it will introduce many of the important steps in the final analysis and also yield a warm start for our estimate of the trajectory which we will subsequently refine in Section~\ref{sec:TloglogT} to get our final bound. Crucially, we show this algorithm can output a \emph{confidence band} which with high probability (over the entire data generating process) contains the true trajectory.

The main result of this section is the following:
\begin{theorem}\label{thm:main_TlogT}
    For any $\eta \le 0.49$, there is a polynomial-time algorithm that, given the corrupted observations $\brc{\wt{y}_i}$, with probability $1 - \delta$ over the randomness of the input, outputs a trajectory $\brc{\wh{x}_i}$ and steps $\brc{\wh{w}_i}$ for which $\wh{x}_i = A\wh{x}_{i - 1} +\wh{w}_i$ for every $i\in [T]$, and for which,
    \begin{multline}
        \frac{1}{T}\left(\sum^{T-1}_{i=0}\left(a^*_i \norm{B\wh{x}_i - y_i}^2/\tau^2 + \norm{\wh{w}_i}^2/\sigma^2\right) + \|\wh x_0\|^2/R^2\right) - \OPT \\
        \lesssim
        \tau^{-2} \eta \cdot \brk*{E_{\mathsf{noise}} + \rho^2\left(\alpha +\norm{B}^2\sqrt{\log(dT/t\delta)/t}\right)\cdot \left(\frac{\rho^6 E_{\mathsf{noise}} t}{\kappa} + \frac{R^2}{T/t}(d + \log(1/\delta))\right)},
    \end{multline}
    where 
    \begin{align}
        E_{\mathsf{noise}} &\triangleq \tau^2\left(m + \log(T/\delta)\right) + t\rho^2\norm{B}^2\sigma^2\left(d + \log(T/\delta)\right) \label{eq:Enoise_def} \\
        t &\triangleq \Max{s}{\wt{\Theta}(\kappa^{-2}\rho^{12}\norm{B}^4\log(dT/\delta)))}. \label{eq:tdef}
    \end{align}
\end{theorem}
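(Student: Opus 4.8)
The plan is to exhibit a system of polynomial constraints $\mathcal{P}$ in vector variables $\brc{x_i},\brc{w_i},\brc{v_i}$ and Boolean variables $\brc{a_i}$, whose degree-$4$ sum-of-squares relaxation is solvable in $(dT)^{O(1)}$ time, and to show that rounding any pseudodistribution satisfying $\mathcal{P}$ yields the claimed trajectory. The constraints of $\mathcal{P}$ encode: (i) $a_i^2 = a_i$ and the Chernoff bound $\frac1T\sum_i a_i \ge 1-\eta-o(1)$; (ii) the dynamics $x_i = Ax_{i-1}+w_i$ together with the clean-fit constraint $a_i(y_i - Bx_i - v_i) = 0$; (iii) \emph{averaged} noise bounds $\frac1T\sum_i \norm{w_i}^2 \lesssim \sigma^2 d + \cdots$ and $\frac1T\sum_i a_i\norm{v_i}^2 \lesssim \tau^2 m + \cdots$ (averaged, not pointwise, so that feasibility needs no union bound over all $T$ steps); and (iv) the crucial \emph{subsampling} constraint, that in every length-$t$ window $\brc{\ell t,\ldots,(\ell+1)t-1}$ one has $\sum_{j=0}^{t-1}(1-a_{\ell t+j})(A^j)^\top B^\top B A^j \preceq \eta\,\calO_t + O(\sqrt{t\log(dT/\delta)})\cdot\Id$. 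Feasibility is checked by plugging in the ground truth $x_i^*,w_i^*,v_i^*,a_i^*$: (i)--(iii) hold with probability $1-\delta$ by Chernoff and Lemma~\ref{lem:normbound}; (iv) holds by Matrix Hoeffding (Lemma~\ref{lem:matrixhoeffding}) applied in each window, since $\mathbb{E}\big[\sum_j(1-a^*_{\ell t+j})(A^j)^\top B^\top B A^j\big] = \eta\,\calO_t$ and each summand has norm $\le \rho^2\norm{B}^2$, with a union bound over the $T/t$ windows costing only $\log(T/t)$, absorbed into the choice of $t$ in \eqref{eq:tdef}.

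The heart of the argument is to show, \emph{inside the degree-$4$ SoS proof system} and using only $\mathcal{P}$, the error-contraction inequality $\norm{x_{i+t}-x^*_{i+t}}^2 \le \tfrac12\norm{x_i-x^*_i}^2 + E$ for every window index $i$, where $E \lesssim \rho^6 E_{\mathsf{noise}}\, t/\kappa$ collects the noise terms. Following \eqref{eqn:error-decomposition}, decompose $x_{i+t}-x^*_{i+t}$ into (a) freshly introduced noise $(x_{i+t}-x^*_{i+t}) - A^t(x_i-x^*_i) = \sum_j A^{t-j}(w_{i+j}-w^*_{i+j})$, bounded using Fact~\ref{fact:unroll}, Assumption~\ref{assume:uniform}, and the averaged $\norm{w}$ constraints; (b) the propagated observable error $A^t\Pi(x_i-x^*_i)$; and (c) the propagated unobservable error $A^t\Pi^\perp(x_i-x^*_i)$. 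For (c) we use that $\norm{A^t\Pi^\perp x}^2 \le c\,\norm{\Pi^\perp x}^2$ with $c<1$ for $t$ a large enough multiple of $s$ (\eqref{eq:unobservable_decay}, i.e.\ Lemma~\ref{lem:unobservable-decay}), which is SoS-provable. For (b), since $A^t$ need not contract on the observable subspace, we instead prove $\norm{\Pi(x_i-x^*_i)}^2 \le c'\norm{\Pi^\perp(x_i-x^*_i)}^2 + [\text{noise}]$ (Lemma~\ref{lem:observable-case-analysis}), combining two SoS ingredients: the subsampling constraint (iv), which transfers error in fitting the clean observations back to error in the state because the uncorrupted equations in a window span the observable subspace spectrally (Lemma~\ref{lem:anticonc}); and the cancellation inequality $a^*_i a_i\norm{BA^i\Pi(x_0-x^*_0)}^2 \le 4a^*_ia_i\norm{BA^i\Pi^\perp(x_0-x^*_0)}^2 + [\text{noise}]$ (Lemma~\ref{lem:pi-piperp-ineq}, via Fact~\ref{fact:sos_simple}), expressing that on clean rounds past error must cancel between the two subspaces. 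Combining (a)--(c) by the SoS inequality $\norm{u+v+w}^2\le 3(\norm{u}^2+\norm{v}^2+\norm{w}^2)$, and choosing the magnitude of $t$ so that $3c+3c'<\tfrac12$, gives the contraction (Lemma~\ref{lem:error_decay}). Telescoping from the first window, where $\norm{x_0-x^*_0}^2 \lesssim R^2(d+\log(1/\delta)) + [\text{noise}]$ follows from the $\norm{x_0}^2/R^2$ term of the objective at feasibility, yields the pointwise bound $\norm{x_i-x^*_i}^2 \lesssim \rho^6 E_{\mathsf{noise}} t/\kappa + R^2(d+\log(1/\delta))$, which also establishes the confidence band (Corollary~\ref{cor:warmstart}).

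It remains to convert pointwise closeness into the excess-risk bound. Pointwise closeness alone is not enough, since a clean round $i$ with $a^*_i=1$ but program-guess $a_i=0$ can contribute $\norm{Bx_i-y_i}^2/\tau^2$ that the oracle avoids; the standard within-window accounting (Lemma~\ref{lem:outer}) bounds, per length-$t$ window, the excess over $\OPT$ by (number of clean rounds with $a_i=0$) times (per-round error $O(\norm{B}^2\cdot\text{band} + \text{noise})/\tau^2$), and $\frac1T\sum_i a^*_i(1-a_i)\lesssim \eta$ follows from combining $\frac1T\sum a_i\ge 1-\eta-o(1)$ with $\frac1T\sum a^*_i\ge 1-\eta-o(1)$. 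Summing over windows and substituting the band from the previous step produces the stated bound, in which $\rho^2(\alpha+\norm{B}^2\sqrt{\log(dT/t\delta)/t})$ is precisely the observability-to-fitting transfer factor and $\frac{R^2}{T/t}(d+\log(1/\delta))$ is the amortized first-window cost. For efficiency: every inequality above has a degree-$\le 4$ SoS proof from $\mathcal{P}$, so by soundness a degree-$4$ pseudoexpectation $\wt{\mathbb{E}}$ satisfying $\mathcal{P}$ (computable in $(dT)^{O(1)}$ time) satisfies each of them with $\wt{\mathbb{E}}$ applied; output $\wh{x}_i = \wt{\mathbb{E}}[x_i]$, $\wh{w}_i=\wt{\mathbb{E}}[w_i]$, so $\wh{x}_i = A\wh{x}_{i-1}+\wh{w}_i$ by linearity, and by Lemma~\ref{lem:psE-CS} the rounded trajectory inherits $\norm{\wh{x}_i-x^*_i}^2 \le \wt{\mathbb{E}}\norm{x_i-x^*_i}^2$ and likewise for the objective terms.

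The main obstacle is the observable-error estimate of Lemma~\ref{lem:observable-case-analysis}: because $A^t$ does not contract on the observable subspace, one has no direct handle on $A^t\Pi(x_i-x^*_i)$ and must route through the unobservable subspace via the cancellation inequality; a naive combination of subsampling and cancellation yields a constant $c'$ too large to close the recursion $3c+3c'<\tfrac12$, and getting it small requires re-using the unobservable decay \eqref{eq:unobservable_decay} a second time \emph{within} the observable estimate, all while keeping every step expressible as a low-degree SoS proof. A secondary subtlety is passing from the spectral (quadratic-form) subsampling bound to a genuine bound on $\norm{\Pi(x_i-x^*_i)}$ without cancellation between the two subspaces, which is exactly what the anti-concentration Lemma~\ref{lem:anticonc} is designed to rule out.
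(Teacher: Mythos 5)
Your plan follows the paper's overall architecture for Theorem~\ref{thm:main_TlogT}\--- polynomial system with subsampling constraints, feasibility of the ground truth, SoS error contraction via the decomposition into fresh noise plus propagated observable/unobservable error, and an outer accounting step\--- but there is one substantive mismatch and a couple of smaller slips.

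The substantive issue is your choice of \emph{averaged} noise constraints in item (iii), ``so that feasibility needs no union bound over all $T$ steps.'' For this theorem the paper's Program~\ref{program:sos} uses \emph{pointwise} constraints $\norm{v_i}^2 \le O(\tau^2(m + \log(T/\delta)))$ and $\norm{w_i}^2 \le O(\sigma^2(d + \log(T/\delta)))$, and the $\log(T/\delta)$ in $E_{\mathsf{noise}}$ is precisely the union-bound cost you were trying to avoid. More importantly, your per-window contraction $\norm{x_{i+t}-x^*_{i+t}}^2 \le \tfrac12\norm{x_i-x^*_i}^2 + E$ with $E$ depending only on $t$ and the noise variances cannot be derived in SoS from an averaged constraint: a bound on $\frac1T\sum_i\norm{w_i}^2$ tells you nothing about $\sum_{j<t}\norm{w_{\ell t + j}}^2$ in any \emph{fixed} window $\ell$, which is what the decomposition step (a) and Lemma~\ref{lem:pi-piperp-ineq} both need. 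The paper does eventually move to averaged constraints (Program~\ref{program:sosv2}, Constraints~\ref{constraint2:totalnoise}\--\ref{constraint2:measurements_noise_corrupted}), but only in concert with the additional $b_\ell$ indicator variables and a telescoping argument that carries the per-window noise terms $\epsilon_{\ell,i},\gamma_\ell$ through the whole recursion and sums them at the end (Lemma~\ref{lem:error-first-iterate-sos2}); that machinery is absent from your proposal and belongs to the $\log\log T$ refinement, not this theorem. As written, either you keep the pointwise constraints (and then the proposal matches the paper and is correct), or you must import the Section~\ref{sec:TloglogT} apparatus.

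Two smaller points. First, your claim that $\norm{x_0-x^*_0}^2\lesssim R^2(d+\log(1/\delta))$ ``follows from the $\norm{x_0}^2/R^2$ term of the objective at feasibility'' is not right: bounding the objective only gives $\norm{x_0}^2 \lesssim R^2 T\cdot\OPT$, which is far too weak; the paper gets the bound from the explicit Constraint~\ref{constraint:bound} $\norm{x_0}^2\le R^2(d+O(\log(1/\delta)))$ together with the same Gaussian tail bound on $\norm{x^*_0}^2$. Second, in closing the recursion you write ``choosing the magnitude of $t$ so that $3c+3c'<\tfrac12$,'' but after Case~2 of Lemma~\ref{lem:observable-case-analysis} you must still pay $\norm{A^t\Pi q}^2\le\rho^2\norm{\Pi q}^2$ before applying the observable-bounded-by-unobservable inequality, so the condition is really of the form $3\rho^2 c' + 3c < \tfrac12$; the paper's constant $c'=\tfrac{1}{10\rho^2}$ is chosen precisely so that the $\rho^2$ cancels. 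These are fixable slips, but the averaged-constraint choice is a genuine gap for the argument as proposed.
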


\subsection{Sum-of-Squares Relaxation}

We now formulate the sum-of-squares program we work with in this section. We begin by introducing an important parameter, the so-called \emph{window size} $t$. Recall from Assumption~\ref{assume:obs} that we assume that the observability matrix $\calO_s$ is well-conditioned. We will take $t$ to be a sufficiently large multiple of $s$ such that, roughly speaking, the contribution to the observability matrix $\calO_t$ from the uncorrupted time steps is also well-conditioned. We defer the tuning of $t$ to later in the proof of Theorem~\ref{thm:main_TlogT}.
For convenience, given $0\le i < T$, let $\ell(i)\triangleq \floor{i/t}$ denote the index of the window to which iterate $i$ belongs.

At this point, we can define our sum-of-squares relaxation:

\begin{program}\label{program:sos}
	Let $\brc{y_i}$ be the observations we are given, and let window size $t\in\mathbb{N}$ be a parameter to be tuned later. The program variables are $d$-dimensional vector-valued variables $\brc{x_i}$ (trajectory estimates) and $\brc{w_i}$ (process noise estimates), $m$-dimensional vector-valued variables $\brc{v_i}$ (observation noise estimates), and Boolean variables $\brc{a_i}$ (indicators for uncorrupted time steps), and the constraints are that for all $0 \le i < T$, 
	\vspace{0.05cm}
	\noindent\emph{Boolean indicators for uncorrupted steps}
	\begin{enumerate}
		\item $a_i^2 = a_i$ \label{item:boolean}
	\end{enumerate}
	\noindent\emph{Trajectory estimate follow linear dynamics and fit $y_i$'s on uncorrupted steps}
	\begin{enumerate}
	    \setcounter{enumi}{1}
		\item $x_i = Ax_{i-1} + w_i$ \label{item:dynamics}
      	\item $a_i(y_i - Bx_i - v_i) = 0$ \label{item:measurements}
  	\end{enumerate}
  	\noindent\emph{Only $\eta$ fraction of timesteps corrupted}
  	\begin{enumerate}
  	\setcounter{enumi}{3}
      	\item $\sum_{i=0}^{T-1} a_i \ge (1-1.01\eta)T$ \label{item:many_inliers}
    \end{enumerate}
    \noindent\emph{Process and observation noise bounded}
    \begin{enumerate}
        \setcounter{enumi}{4}
        \item $\norm{v_i}^2 \le O(\tau^2(m + \log(T/\delta))$\label{item:measurements_noise}
        \item $\norm{w_i}^2 \le O(\sigma^2(d + \log(T/\delta))$\label{item:dynamics_noise}
    \end{enumerate}
    \noindent\emph{Random corruptions subsample observability matrix in each window}
    \begin{enumerate}
        \setcounter{enumi}{6}
        \item $\sum^{t-1}_{j=0} (1 - a_{\ell t + j}) (A^j)^{\top} B^{\top} B A^j \preceq \eta\cdot \calO_{\t} + O\left(\rho^2\norm{B}^2\sqrt{\t\log(dT/\t\delta)}\right)\cdot \Id$ for all $0\le \ell < T/\t$ \label{item:subsample}
    \end{enumerate}
    \noindent\emph{Initial state bounded}
    \begin{enumerate}
        \setcounter{enumi}{7}
        \item $\norm{x_0}^2 \le R^2(d + O(\log(1/\delta)))$ \label{constraint:bound}
	\end{enumerate}
	The program objective is to minimize
	\begin{equation}
		\min \frac{1}{T} \psE*{ \sum^{T-1}_{i=0} \left(a_i \norm{Bx_i - y_i}^2/\tau^2 + \norm{w_i}^2/\sigma^2\right) + \|x_0\|^2/R^2}\label{eq:obj}
	\end{equation}
	over degree-$4$ pseudoexpectations satisfying the above constraints. 
\end{program}
\begin{remark}[Uncorrupted Case: Equivalence to Kalman Smoother]\label{rmk:uncorrupted-case}
Suppose that we know there are no corruptions: then we can set $\eta = 0$ in the above program and therefore eliminate the variables $a_i$ (they are all equal to $1$). Then, by a well-known folklore argument, the SoS program is equivalent to the corresponding convex program with actual variables $x_i \in \mathbb{R}^d, v_i \in \mathbb{R}^m$, etc. with the same set of constraints. (This is because, by SoS Cauchy Schwarz, replacing the pseudoexpectation $\psE*{\cdot}$ with the delta distribution over $\psE*{x}$ gives a valid pseudoexpectation with equal or better objective value.) Then the objective is the same as the MAP objective, and as argued below the constraints are satisfied with high probability by the unconstrained MAP solution (Kalman smoother), so our algorithm simply outputs the MAP.
\end{remark}

\subsection{Feasibility of Oracle Kalman Smoother}

In the following section, we show that the output of the oracle Kalman filter, i.e. the algorithm which knows precisely which time steps have been corrupted and runs the offline Kalman filter (Kalman smoother) on the uncorrupted steps to optimally estimate the trajectory, satisfies the constraints of the Program with high probability. In the proof of Lemma~\ref{lem:feasible}, we show how to do this by reducing to showing that the ground truth $x^*$ is feasible with high probability, which is more straightforward. The key fact which allows us to do this is knowledge that the posterior is a Gaussian centered at the output of the Kalman filter. 
\begin{lemma}\label{lem:feasible}
    Let $\brc{x_i}$ be the sequence of estimates given by running the Kalman smoother (i.e. offline Kalman filter) on the uncorrupted part of the trajectory, let $a_i = a^*_i$, let $w_i = x_i - A x_{i - 1}$ for all $T$, let $v_i = y_i - B x_i$ when $a^*_i = 1$ and otherwise $v_i = 0$. Let $E[\cdot]$ be the expectation with respect to the delta distribution at this point $(x_i,a_i,v_i,w_i)_{i = 1}^n$. Then $E[\cdot]$ is feasible for Program~\ref{program:sos} with probability at least $1 - \delta$.
\end{lemma}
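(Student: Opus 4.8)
The plan is to verify directly that the point described in the statement satisfies every constraint of Program~\ref{program:sos} with probability at least $1-\delta$; since the delta functional at an actual point satisfying a polynomial system is automatically a valid degree-$4$ pseudoexpectation for that system, this suffices. Write $\hat x_i$ for the Kalman smoother estimate, $\hat w_i \triangleq \hat x_i - A\hat x_{i-1}$, and $\hat v_i \triangleq y_i - B\hat x_i$ on clean steps (and $\hat v_i \triangleq 0$ on corrupted steps); these, together with $a_i = a^*_i$, are the values assigned to the program variables. I would split the constraints into three groups. The first group, Constraints~\ref{item:boolean}, \ref{item:dynamics} and \ref{item:measurements}, holds deterministically: Constraint~\ref{item:boolean} because $a^*_i\in\{0,1\}$, Constraint~\ref{item:dynamics} by the definition of $\hat w_i$, and Constraint~\ref{item:measurements} because on corrupted steps $a^*_i = 0$ annihilates the expression while on clean steps $\hat v_i = y_i - B\hat x_i$ by construction; note these hold for \emph{any} adversary, as $\hat x$ depends only on $(a^*_i)$ and the clean observations.

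The second group consists of the constraints depending only on the corruption pattern $(a^*_i)$, namely Constraints~\ref{item:many_inliers} and \ref{item:subsample}. Constraint~\ref{item:many_inliers} is a statement about $\sum_i a^*_i$ with the $a^*_i$ i.i.d.\ $\mathrm{Ber}(1-\eta)$; it is trivial when $\eta = 0$ and otherwise follows from a Chernoff bound with failure probability $\le \delta/4$ (assuming, harmlessly, that $T$ exceeds a suitable multiple of $\eta^{-2}\log(1/\delta)$). For Constraint~\ref{item:subsample}, fix a window index $\ell$ and set $M_j \triangleq (1-a^*_{\ell t+j})(A^j)^\top B^\top B A^j$; by Assumption~\ref{assume:uniform} we have $\norm{M_j}\le\rho^2\norm{B}^2$, and $\sum_j\mathbb{E}[M_j] = \eta\,\calO_t$, so Lemma~\ref{lem:matrixhoeffding} applied to $M_j/(\rho^2\norm{B}^2)$ with failure probability $\delta t/(4T)$ yields exactly the claimed spectral bound, and a union bound over the $T/t$ windows costs $\delta/4$ in total.

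The third group, Constraints~\ref{item:measurements_noise}, \ref{item:dynamics_noise} and \ref{constraint:bound}, is where the hypothesis that the posterior of the trajectory given the clean observations is Gaussian with mean $(\hat x_i)_i$ is used. Let $Y \triangleq \{y^*_i : a^*_i = 1\}$ and $\theta \triangleq (x^*_0, (w^*_i)_i, (v^*_i)_i)$. Since the trajectory and the clean observations are linear images of $\theta$, the pair $(\theta,Y)$ is jointly Gaussian with mean zero, and $\hat x_i = \mathbb{E}[x^*_i\mid Y]$; by linearity of conditional expectation, $\hat x_0 = \mathbb{E}[x^*_0\mid Y]$, $\hat w_i = \mathbb{E}[w^*_i\mid Y]$, and $\hat v_i = \mathbb{E}[v^*_i\mid Y]$ on clean steps. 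In particular each of $\hat x_0,\hat w_i,\hat v_i$ is itself a centered Gaussian vector (a linear function of the centered Gaussian $Y$), and by the law of total variance its covariance is dominated in the PSD order by the corresponding prior covariance $R^2\Id_d$, $\sigma^2\Id_d$, $\tau^2\Id_m$. Applying Lemma~\ref{lem:normbound} to each with failure probability $\delta/(8T)$ and union bounding over $0\le i<T$ then gives $\norm{\hat x_0}^2 = O(R^2(d+\log(1/\delta)))$, $\norm{\hat w_i}^2 = O(\sigma^2(d+\log(T/\delta)))$ and $\norm{\hat v_i}^2 = O(\tau^2(m+\log(T/\delta)))$ for all $i$ (Constraint~\ref{item:measurements_noise} being trivial on corrupted steps, where $\hat v_i = 0$). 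A final union bound over the three groups yields the claim with total failure probability at most $\delta$.

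The main obstacle \textemdash{} really the only non-routine point \textemdash{} is this third group. The instinctive move is to verify the bounds of Constraints~\ref{item:measurements_noise}, \ref{item:dynamics_noise} and \ref{constraint:bound} for the ground truth $(x^*_i)$, but the point we must certify is the Kalman smoother output, not the ground truth. The way around this is the observation above: under the Gaussian model the Kalman smoother estimate is a conditional expectation, so the noise estimates $\hat w_i,\hat v_i$ and initial state $\hat x_0$ it induces are \emph{no noisier}, in the PSD sense, than the true $w^*_i,v^*_i,x^*_0$, and hence the same Gaussian norm bounds apply. Everything else is routine Chernoff and matrix-Hoeffding bookkeeping over the corruption coins and the Gaussian noise.
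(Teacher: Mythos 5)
Your proof is correct, and for the constraints that concern the Kalman smoother output (Constraints~\ref{item:measurements_noise}, \ref{item:dynamics_noise}, \ref{constraint:bound}) it takes a genuinely different route from the paper. The paper's proof first verifies these constraints for the \emph{ground truth} trajectory (Lemma~\ref{lem:feasible-gt}), then transfers the conclusion to the smoother output by combining the Nishimori identity, a Markov-inequality averaging step, and the convex-geometry fact (Lemma~\ref{lem:gaussian-convex}) that if a Gaussian places at least half its mass on a closed convex set then its mean lies in that set. You instead observe that, conditionally on the corruption pattern, $\hat x_0 = \E[x^*_0 \mid Y]$, $\hat w_i = \E[w^*_i \mid Y]$, and (on clean steps) $\hat v_i = \E[v^*_i \mid Y]$ are each linear images of the centered Gaussian $Y$, so they are themselves centered Gaussians whose covariances are dominated in the PSD order by the prior covariances $R^2\Id$, $\sigma^2\Id$, $\tau^2\Id$ via the law of total variance; Lemma~\ref{lem:normbound} then applies directly. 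Your argument is shorter and avoids the Nishimori/Markov/convexity chain entirely. The trade-off is generality: the paper's route works for \emph{any} convex constraint set that the ground truth satisfies with high probability, whereas your covariance-domination shortcut is tailored to norm-ball constraints (which is all that is needed here). You might also note explicitly that the identity $\hat v_i = \E[v^*_i\mid Y]$ on clean steps uses $y_i \in Y$ so that $\E[y_i\mid Y]=y_i$ — a one-line remark, but it is the step that ties the three conditional-expectation identities together.
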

\begin{proof}
It is immediate that Constraints~\ref{item:boolean}, \ref{item:dynamics}, and \ref{item:measurements} are satisfied.

Constraints~\ref{item:many_inliers} and \ref{item:subsample} only involve $a^*_i$ and we verify them in Lemma~\ref{lem:feasible-gt}. It remains to check Constraints \ref{item:measurements_noise} and \ref{item:dynamics_noise}.

For what follows, suppose $a^*_i$ is fixed. 
We claim the following two distributions on $\brc{x^*_i}$ are equal:
\begin{enumerate}
    \item Sample a trajectory $\brc{x^*_i}$ from the prior.
    \item Sample a trajectory $\brc{x^0_i}$ from the prior, sample observations $y_i$ for times where $a^*_i = 1$ given this trajectory, and sample trajectory $\brc{x^*_i}$ from the resulting posterior on $\brc{x^0_i}$ given $y_i$.
\end{enumerate}
The equivalence of these two follows from the following basic fact: given a pair of random variables $(X,Y)$, it's equivalent to sample $X$ from its marginal law directly, or to first sample $Y$ from its marginal law, and then to sample $X$ conditional on $Y$. In the second case, the observations are the random variable $Y$ and the trajectory is $X$; the fact implies that $Y$ is sampled from its marginal law, which means that the marginal law of $\brc{x^*_i}$ is simply the prior on trajectories. This fact is sometimes called the Nishimori identity.

Recall that the Kalman smoother output is simply the posterior mean $\hat{x}_i = \E{x^*_i \mid \{y_i\}_{i : a^*_i = 1}}$ and that the posterior on trajectories is a multivariate Gaussian distribution. By Lemma~\ref{lem:feasible-gt}, we have that
\begin{align}
    \|y_i - B x^*_i\|^2 &\le O(m\tau^2 + \tau^2\log(T/\delta)) \\
    \|x^*_i - A x^*_{i - 1}\|^2 &\le O(d\sigma^2 + \sigma^2\log(T/\delta)) \label{eqn:calK}\\
    \|x^*_0\|^2 &\le R^2(d + O(\log(1/\delta)))
\end{align} 
uniformly over $i$ with probability at least $1 - \delta$, then by the law of total probability we know that for $\mathcal{K}$ the feasible set defined by the constraints above in \eqref{eqn:calK},
\[ \delta \ge \Pr{(x^*,y) \notin \mathcal{K}} = \E{\Pr{(x^*,y) \notin \mathcal{K} | y}} \]
so by Markov's inequality $\Pr{\Pr{(x^*,y) \notin \mathcal{K} | y} > 1/3} \le 3\delta$, i.e. $\Pr{\Pr{(x^*,y) \in \mathcal{K} | y} \ge 2/3} \ge 1 - 3\delta$, which by Lemma~\ref{lem:gaussian-convex} implies that $\Pr{(\E{x^*|y},y) \in \mathcal{K}} \ge 1 - 3\delta$ as well. 
Adjusting the value of $\delta$ by constants proves the result.
\end{proof}
\begin{lemma}\label{lem:gaussian-convex}
    Suppose that $\mathcal{K}$ is a closed convex set, $Z \sim N(\mu,\Sigma)$ is an arbitrary Gaussian random vector, and $\Pr{Z \in \mathcal K} \ge 0.5$. Then $\mu \in \mathcal{K}$.
\end{lemma}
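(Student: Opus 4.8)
The plan is to argue by contradiction using the separating hyperplane theorem. Suppose $\mu \notin \mathcal{K}$. Since $\mathcal{K}$ is closed and convex and $\{\mu\}$ is compact, there is a hyperplane strictly separating them: there exist a unit vector $v$ and a scalar $c$ such that $\langle v, z \rangle \le c$ for all $z \in \mathcal{K}$, while $\langle v, \mu \rangle > c$. Hence $\{Z \in \mathcal{K}\} \subseteq \{\langle v, Z\rangle \le c\}$, so it suffices to show $\Pr{\langle v, Z \rangle \le c} < 1/2$, which contradicts the hypothesis $\Pr{Z \in \mathcal{K}} \ge 1/2$.

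Now $\langle v, Z \rangle$ is a one-dimensional Gaussian with mean $\langle v, \mu \rangle > c$ and some variance $v^\top \Sigma v \ge 0$. If the variance is strictly positive, then $\langle v, Z\rangle$ is a nondegenerate Gaussian whose median equals its mean $\langle v, \mu\rangle$, which is strictly greater than $c$; therefore $\Pr{\langle v, Z\rangle \le c} < 1/2$. If the variance is zero, then $\langle v, Z \rangle = \langle v, \mu\rangle > c$ almost surely, so $\Pr{\langle v, Z\rangle \le c} = 0 < 1/2$. In both cases we reach the desired contradiction, so $\mu \in \mathcal{K}$.

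I do not anticipate a serious obstacle here; the only mild subtlety is handling the possibly-degenerate covariance $\Sigma$ (so that $v^\top \Sigma v$ could vanish), which the case split above dispatches cleanly. One could alternatively avoid the case split entirely by invoking Jensen's inequality: the indicator-of-halfspace argument can be replaced by noting that for the convex function $d(\cdot) = \mathrm{dist}(\cdot, \mathcal{K})$ we have $d(\mu) = d(\E{Z}) \le \E{d(Z)}$, but this gives $d(\mu) \le \E{d(Z)}$ rather than $d(\mu) = 0$ directly, so the separating-hyperplane route is cleaner and I would present that one.
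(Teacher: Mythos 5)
Your proof is correct and rests on the same key observation as the paper's: reduce to halfspaces and use that a one-dimensional Gaussian has median equal to its mean. The paper argues directly (for each supporting halfspace $H \supseteq \mathcal{K}$, $\Pr{Z \in H} \ge 1/2$ forces $\mu \in H$, so $\mu$ lies in the intersection of all such halfspaces, which is $\mathcal{K}$), while you argue by contradiction via a strictly separating hyperplane; these are two presentations of the same idea and both are valid. Your explicit handling of the degenerate case $v^\top\Sigma v = 0$ is a nice touch, though it is also covered implicitly by the paper's appeal to symmetry about the mean.
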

\begin{proof}
    First we show this when $\mathcal{K}$ is an affine halfspace, i.e. $\mathcal{K} = \{ x : \langle a, x \rangle \ge b \}$ for some $a$ and $b$ arbitrary.
    The assumption gives that $\langle a, Z \rangle \ge b$ with probability greater than 50\%; since the marginal law of $\langle a, Z \rangle$ is $N(\langle a, \mu \rangle, a^T \Sigma a)$, and the Gaussian is symmetrical about its mean, it must be that $\langle a, \mu \rangle \ge b$ and so $\mu \in \mathcal{K}$. Now the result follows for arbitrary convex sets by writing them as intersections of affine halfspaces, since the above argument shows that $\mu$ will lie in each halfspace (since the probability of lying in each halfspace is at least as large as lying in the intersection), hence in the intersection of the halfspaces. 
\end{proof}

\begin{lemma}\label{lem:matrix_conc} 
    For any $\delta > 0$, \begin{equation}
        \norm*{\sum^{\t-1}_{i=0}a^*_i (A^i)^{\top}B^{\top}BA^i - (1 - \eta)\calO_{\t}} \le O(\rho^2\norm{B}^2 \sqrt{ \t \log(d/\delta)}) \label{eq:deviation}
    \end{equation} with probability at least $1 - \delta$.
\end{lemma}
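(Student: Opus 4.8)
The plan is to apply the matrix Hoeffding inequality (Lemma~\ref{lem:matrixhoeffding}) to the independent random summands $M_i \triangleq a^*_i\,(A^i)^{\top}B^{\top}BA^i$ for $0\le i<\t$. These are symmetric $d\times d$ matrices, and since the coins $\brc{a^*_i}$ are mutually independent, so are the $M_i$. Their expectations sum to $\sum_{i=0}^{\t-1}\E{M_i} = (1-\eta)\sum_{i=0}^{\t-1}(A^i)^{\top}B^{\top}BA^i = (1-\eta)\calO_{\t}$, which is precisely the centering term appearing in \eqref{eq:deviation}.

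The only quantitative input needed is a uniform almost-sure operator-norm bound on each $M_i$. Since $a^*_i\in\brc{0,1}$, we have $\norm{M_i}\le \norm{(A^i)^{\top}B^{\top}BA^i} = \norm{BA^i}^2 \le \norm{B}^2\norm{A^i}^2 \le \rho^2\norm{B}^2$, where the final inequality is uniform stability (Assumption~\ref{assume:uniform}). Hence the rescaled matrices $\wh{M}_i \triangleq M_i/(\rho^2\norm{B}^2)$ satisfy $\norm{\wh{M}_i}\le 1$ almost surely, so they meet the hypotheses of Lemma~\ref{lem:matrixhoeffding}.

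Applying Lemma~\ref{lem:matrixhoeffding} with $T=\t$ to $\brc{\wh{M}_i}$ gives that with probability at least $1-\delta$, $\norm{\sum_{i}\wh{M}_i - \E{\sum_i \wh{M}_i}} \le \sqrt{8\t\log(d/\delta)}$. Multiplying through by $\rho^2\norm{B}^2$ and substituting the expectation computed above yields $\norm{\sum_{i=0}^{\t-1}a^*_i (A^i)^{\top}B^{\top}BA^i - (1-\eta)\calO_{\t}} \le \rho^2\norm{B}^2\sqrt{8\t\log(d/\delta)} = O(\rho^2\norm{B}^2\sqrt{\t\log(d/\delta)})$, which is the claimed bound. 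There is essentially no obstacle beyond correctly identifying the per-summand norm bound; the real content of the statement is conceptual, namely that the random corruption pattern spectrally ``subsamples'' the observability matrix $\calO_{\t}$ up to additive fluctuations of order $\sqrt{\t}$, which is exactly what drives the feasibility of Constraint~\ref{item:subsample} in Program~\ref{program:sos}.
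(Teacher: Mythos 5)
Your proof is correct and follows precisely the same route the paper takes: verify the per-summand operator-norm bound $\norm{(A^i)^\top B^\top B A^i}\le\rho^2\norm{B}^2$ via uniform stability, identify the expectation as $(1-\eta)\calO_\t$, and invoke Matrix Hoeffding (Lemma~\ref{lem:matrixhoeffding}). You have merely spelled out the one-line argument from the paper in full detail.
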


\begin{proof}
    We apply the Matrix Hoeffding inequality (Lemma~\ref{lem:matrixhoeffding}), 
    using that $\norm{(A^i)^{\top} B^{\top}BA^i} \le \rho^2\norm{B}^2$ by uniform stability.
\end{proof}

\begin{lemma}\label{lem:feasible-gt}
With probability at least $1 - \delta$, the ground truth $(x^*_i,w^*_i, v^*_i,a^*_i)$ satisfies the constraints of Program~\ref{program:sos} 
provided $T = \Omega(\log(2/\delta)/\eta)$. 
\end{lemma}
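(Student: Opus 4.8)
The plan is to verify each of the eight constraints of Program~\ref{program:sos} for the assignment $(x_i,w_i,v_i,a_i) = (x^*_i,w^*_i,v^*_i,a^*_i)$, showing that each fails with probability at most a constant fraction of $\delta$, and then union bounding. Three of the constraints hold deterministically: Constraint~\ref{item:boolean} since $a^*_i\in\{0,1\}$; Constraint~\ref{item:dynamics} since $w^*_i$ is defined exactly by $x^*_i = Ax^*_{i-1}+w^*_i$ via \eqref{eq:dynamics}; and Constraint~\ref{item:measurements} since on clean rounds $y_i = y^*_i = Bx^*_i + v^*_i$ by \eqref{eq:measure}, while on corrupted rounds the factor $a^*_i = 0$ makes it trivial.

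For the norm-bound constraints~\ref{item:measurements_noise}, \ref{item:dynamics_noise}, and \ref{constraint:bound}, I would apply the Gaussian norm concentration bound (Lemma~\ref{lem:normbound}) to $v^*_i\sim\calN(0,\tau^2\Id_m)$, $w^*_i\sim\calN(0,\sigma^2\Id_d)$, and $x^*_0\sim\calN(0,R^2\Id_d)$ respectively. For a single $v^*_i$ this gives $\norm{v^*_i}^2 \le O(\tau^2(m+\log(1/\delta')))$ with probability $1-\delta'$; choosing $\delta' = \Theta(\delta/T)$ and union bounding over the at most $T$ relevant indices produces exactly the $\log(T/\delta)$ slack appearing in the constraints, and similarly for $w^*_i$; Constraint~\ref{constraint:bound} involves only the single vector $x^*_0$ so no union bound is needed there.

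Constraint~\ref{item:many_inliers} is where the hypothesis $T = \Omega(\log(2/\delta)/\eta)$ is used: the variables $1-a^*_i$ are i.i.d.\ $\mathrm{Ber}(\eta)$, so $\E*{\sum_i(1-a^*_i)} = \eta T$, and the multiplicative Chernoff bound gives $\Pr*{\sum_i(1-a^*_i) > 1.01\,\eta T} \le \exp(-\Omega(\eta T))$, which is at most $\delta/8$ once $\eta T = \Omega(\log(1/\delta))$. Finally, Constraint~\ref{item:subsample} follows from matrix concentration applied window-by-window: for a fixed window index $\ell$, the matrix $\sum_{j=0}^{t-1}(1-a^*_{\ell t + j})(A^j)^\top B^\top B A^j$ has expectation $\eta\calO_t$, and each summand has operator norm at most $\rho^2\norm{B}^2$ by uniform stability (Assumption~\ref{assume:uniform}), so Matrix Hoeffding (Lemma~\ref{lem:matrixhoeffding}), or equivalently Lemma~\ref{lem:matrix_conc} applied to the window, gives that it deviates from $\eta\calO_t$ in operator norm by $O(\rho^2\norm{B}^2\sqrt{t\log(d/\delta')})$ with probability $1-\delta'$; taking $\delta' = \Theta(\delta t/T)$ and union bounding over the $T/t$ windows yields the stated $\log(dT/t\delta)$ factor, and since we only need the one-sided bound $\preceq \eta\calO_t + (\text{dev})\cdot\Id$, only the upper-tail direction is required.

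I do not expect a genuine obstacle here: the entire content is bookkeeping the union bounds so the failure probabilities sum to at most $\delta$ and checking that the logarithmic slack built into each constraint of Program~\ref{program:sos} is precisely what each union bound consumes. The only points requiring a little care are (i) that the lower bound on $T$ is exactly what makes the Chernoff estimate for Constraint~\ref{item:many_inliers} beat $\delta$, and (ii) that Constraint~\ref{item:subsample} is a one-sided spectral inequality, so it suffices to control $\lambda_{\max}$ of the deviation rather than its full operator norm (though invoking the symmetric bound of Lemma~\ref{lem:matrixhoeffding} costs nothing).
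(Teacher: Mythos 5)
Your proof is correct and takes essentially the same route as the paper's: equality constraints hold by definition, Constraint~\ref{item:many_inliers} follows from a Chernoff/Bernstein tail bound on $\sum_i(1-a^*_i)$ using $T = \Omega(\log(2/\delta)/\eta)$, the norm constraints follow from Gaussian concentration with the logarithmic slack absorbing the union bound over $T$ indices, and Constraint~\ref{item:subsample} follows from Lemma~\ref{lem:matrix_conc} applied window-by-window. The only cosmetic difference is that you invoke multiplicative Chernoff where the paper cites Bernstein's inequality; these give the same conclusion here.
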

\begin{proof}
Equality constraints~\ref{item:boolean}, \ref{item:dynamics}, and \ref{item:measurements} are satisfied by definition of the process. The remaining inequality constraints follow from a union bound as follows. The bound on Constraint~\ref{item:many_inliers} follows from Bernstein's inequality (see e.g. \cite{vershynin2018high}). 
Constraint~\ref{item:measurements_noise} follows by standard Gaussian concentration with probability at least $1 - \delta$.
The same reasoning applies to Constraints~\ref{item:dynamics_noise} and \ref{constraint:bound}. Constraint 7 follow from Lemma~\ref{lem:matrix_conc} applied to every window $0\le \ell < T/t$.
\end{proof}

\subsection{Outer Argument}

In this section we reduce the problem of competing with $\OPT$ to getting good prediction error on the first iterate of every window.

\begin{lemma} \label{lem:outer}
    Let $\psE{\cdot}$ be the solution to Program~\ref{program:sos}, assuming it is feasible. Let $\wh{x}_i \triangleq \psE{x_i}$ and $\wh{w}_i \triangleq \psE{w_i}$ for every $0\le i < T$. Provided the event of Lemma~\ref{lem:feasible} holds, then \begin{multline}
        \frac{1}{T}\left(\sum^{T-1}_{i=0}\left(a^*_i \norm{B\wh{x}_i - y_i}^2/\tau^2 + \norm{\wh{w}_i}^2/\sigma^2\right) + \|\wh{x}_0\|^2/R^2\right) - \OPT \lesssim \\
        \eta \left(E_{\mathsf{noise}} + \rho^2\left(\alpha +\norm{B}^2\sqrt{\log(dT/t\delta)/t}\right)\cdot \frac{1}{T/t}\sum^{T/t-1}_{\ell=0} \psE*{\norm{x_{\ell t} - x^*_{\ell t}}^2}/\tau^2\right). \label{eq:outer}
    \end{multline}
    where $E_{\mathsf{noise}}$ is defined in \eqref{eq:Enoise_def}.
\end{lemma}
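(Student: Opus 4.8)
The plan is to combine three ingredients: optimality of the SoS solution, feasibility of the oracle Kalman smoother, and Jensen's inequality. By Lemma~\ref{lem:feasible} the oracle Kalman smoother (with its $a_i$ set to $a^*_i$) is feasible for Program~\ref{program:sos} with probability $\ge 1-\delta$, and its program-objective value is exactly $\OPT$; hence the optimal degree-$4$ pseudoexpectation satisfies $\frac1T\psE*{\sum_i(a_i\norm{Bx_i-y_i}^2/\tau^2 + \norm{w_i}^2/\sigma^2) + \norm{x_0}^2/R^2} \le \OPT$. Passing from $\psE{\cdot}$ to the point estimate $\wh{x}_i = \psE{x_i}$, $\wh{w}_i = \psE{w_i}$ only helps: Lemma~\ref{lem:psE-CS} gives $\norm{B\wh{x}_i - y_i}^2 \le \psE{\norm{Bx_i - y_i}^2}$, $\norm{\wh{w}_i}^2 \le \psE{\norm{w_i}^2}$, $\norm{\wh{x}_0}^2 \le \psE{\norm{x_0}^2}$, and since $a^*_i$ is a constant, $a^*_i\norm{B\wh{x}_i - y_i}^2 \le \psE{a^*_i\norm{Bx_i - y_i}^2}$. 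Subtracting, the left-hand side of \eqref{eq:outer} is bounded by $\frac{1}{T\tau^2}\psE*{\sum_i (a^*_i - a_i)\norm{Bx_i - y_i}^2}$; the entire task is to bound this ``indicator-mismatch'' term, which is nonzero only because the program guesses $\{a_i\}$ rather than knowing $\{a^*_i\}$.

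The next step is to peel off the mismatch using Boolean identities. Since $a_i^2 = a_i$ and $a^*_i \in \{0,1\}$ is a constant, there is a degree-$2$ SoS proof that $(a^*_i - a_i)\norm{Bx_i - y_i}^2 \le a^*_i(1-a_i)\norm{Bx_i - y_i}^2 \le (1-a_i)\norm{Bx_i - y_i}^2$ (the first defect is $(1-a^*_i)a_i^2$ times a square). On a truly clean step $a^*_i = 1$ we have $y_i = Bx^*_i + v^*_i$, so $a^*_i(1-a_i)\norm{Bx_i - y_i}^2 \le 2a^*_i(1-a_i)\norm{B(x_i - x^*_i)}^2 + 2(1-a_i)\norm{v^*_i}^2$. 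For the second piece, $\norm{v^*_i}^2$ is a constant with $\norm{v^*_i}^2 = O(\tau^2(m+\log(T/\delta)))$ for all $i$ with probability $\ge 1-\delta$ (Lemma~\ref{lem:normbound}), and $\sum_i(1-a_i) \le 1.01\eta T$ holds inside the program (Constraint~\ref{item:many_inliers}), so $\frac{1}{T\tau^2}\psE*{\sum_i(1-a_i)\norm{v^*_i}^2} \lesssim \eta(m + \log(T/\delta)) \lesssim \eta E_{\mathsf{noise}}/\tau^2$.

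The heart of the argument is bounding $\frac{1}{T\tau^2}\psE*{\sum_i a^*_i(1-a_i)\norm{B(x_i - x^*_i)}^2}$, and this is where the first iterates $x_{\ell t}$ enter. Group the sum by windows: for $i = \ell t + j$ with $0 \le j < t$, unroll Constraint~\ref{item:dynamics} to write $x_i - x^*_i = A^j(x_{\ell t} - x^*_{\ell t}) + \xi_i$, where $\xi_i \triangleq \sum_{k=\ell t+1}^{i} A^{i-k}(w_k - w^*_k)$ is a within-window accumulated noise; SoS Cauchy--Schwarz, uniform stability, Constraint~\ref{item:dynamics_noise}, and Gaussian concentration on $\{w^*_k\}$ give an SoS bound $\norm{\xi_i}^2 \le \mathrm{poly}(t)\cdot\rho^2\sigma^2(d + \log(T/\delta))$, whose contribution (again using $\sum_i(1-a_i)\le 1.01\eta T$) folds into the $\eta E_{\mathsf{noise}}/\tau^2$ budget. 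The remaining leading part, per window, is the quadratic form $v_\ell^{\top}\bigl(\sum_{j<t}a^*_{\ell t+j}(1-a_{\ell t+j})(A^j)^{\top}B^{\top}BA^j\bigr)v_\ell$ with $v_\ell \triangleq x_{\ell t} - x^*_{\ell t}$; bounding $a^*(1-a)\le (1-a)$, invoking the subsampling Constraint~\ref{item:subsample}, and then $\calO_t \preceq t\alpha\rho^2\Id$ (Fact~\ref{fact:epochs}) shows the middle matrix is $\preceq \bigl(\eta t\alpha\rho^2 + O(\rho^2\norm{B}^2\sqrt{t\log(dT/t\delta)})\bigr)\Id$. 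Summing over the $T/t$ windows, dividing by $T\tau^2$, and rewriting $\frac1T\sum_\ell = \frac1t\cdot\frac{1}{T/t}\sum_\ell$ produces the coefficient $\rho^2\bigl(\alpha + \norm{B}^2\sqrt{\log(dT/t\delta)/t}\bigr)$ in front of $\frac{1}{T/t}\sum_\ell\psE*{\norm{x_{\ell t}-x^*_{\ell t}}^2}/\tau^2$, recovering the overall $\eta$ (for the $\alpha$-term directly, and for the subleading $\sqrt{\cdot/t}$-term via one further appeal to Constraint~\ref{item:many_inliers}, which controls the windows where the flagged-corrupted count is large). Finally, since every inequality above lives in the degree-$4$ SoS proof system, applying $\psE{\cdot}$ throughout turns $\norm{v_\ell}^2$ into $\psE*{\norm{x_{\ell t}-x^*_{\ell t}}^2}$ and completes the argument.

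The main obstacle I anticipate is the window-by-window SoS bookkeeping with the exact stated constants: bounding the quadratic form $v_\ell^{\top}\bigl(\sum_j a^*_{\ell t+j}(1-a_{\ell t+j})(A^j)^{\top}B^{\top}BA^j\bigr)v_\ell$ using only the Boolean, counting, and subsampling constraints; arranging the $1.01\eta$-versus-$\eta$ slack (together with the $o(1)$ Chernoff error on $\sum_i a^*_i$) so that the clean $\eta$ prefactor emerges on every term; and verifying that the within-window noise $\xi_i$ is genuinely of lower order (polynomial in $t$, not in $T$) and hence absorbable into $E_{\mathsf{noise}}$. The remaining steps --- Jensen / SoS Cauchy--Schwarz, the clean-step split of $\norm{Bx_i - y_i}^2$, and Gaussian tail bounds for $v^*$ and $w^*$ --- are routine.
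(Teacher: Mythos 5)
Your proposal follows essentially the same route as the paper: the paper factors the argument through an intermediate Lemma~\ref{lem:kkm} that reduces the excess risk to $\psE{\frac1T\sum_i(1-a_i)\norm{B(x_i-x^*_i)}^2/\tau^2}$ plus an $O(\eta E_{\mathsf{noise}})$ term, via exactly your chain (SoS Cauchy--Schwarz/Jensen $\to$ optimality against the feasible oracle solution $\to$ the $a^*_i = a^*_i a_i + a^*_i(1-a_i)$ split), and then the proof of Lemma~\ref{lem:outer} proper does the window-wise unrolling via Fact~\ref{fact:unroll}, controls the accumulated within-window noise by Constraint~\ref{item:dynamics_noise} and feasibility of the ground truth, and applies Constraint~\ref{item:subsample} together with $\calO_t \preceq t\alpha\rho^2\Id$. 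One small point where you are actually slightly more careful than the paper: in the clean-step split $\norm{Bx_i - y_i}^2 \le 2\norm{B(x_i-x^*_i)}^2 + 2\norm{v^*_i}^2$ you correctly use the \emph{true} noise $v^*_i$ (bounded via the feasibility event / Gaussian tail), whereas the paper's Lemma~\ref{lem:kkm} writes the program variable $v_i$, which only equals $y_i - Bx_i$ when $a_i = 1$; since we are in the $a_i = 0$ branch, $v^*_i$ is what is actually being bounded. Your flagged worry about the $\eta$ prefactor on the $\sqrt{\log(dT/t\delta)/t}$ term is also well founded: Constraint~\ref{item:subsample} gives a per-window additive error $O(\rho^2\norm{B}^2\sqrt{t\log(dT/t\delta)})$ with \emph{no} $\eta$ (the paper uses matrix Hoeffding, not Bernstein), so after dividing by $t$ one obtains $\eta\alpha\rho^2 + O(\rho^2\norm{B}^2\sqrt{\log(dT/t\delta)/t})$, not $\eta\cdot(\alpha\rho^2 + O(\ldots))$; your proposed rescue via Constraint~\ref{item:many_inliers} does not repair this because the Hoeffding error is incurred on every window regardless of how few timesteps in it are flagged corrupted. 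This is a cosmetic imprecision already present in the paper's own proof and is immaterial for Theorem~\ref{thm:main_TlogT} since $t$ is chosen so that the $\sqrt{\cdot/t}$ term is dominated; but you were right to be suspicious of that exact factorization.
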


Before proving this, we will need the following helper lemma which we will reuse with minor modifications later in Section~\ref{sec:TloglogT}.

\begin{lemma}\label{lem:kkm}
    Let $\psE{\cdot}$ be the solution to Program~\ref{program:sos}, assuming it is feasible. Let $\wh{x}_i \triangleq \psE{x_i}$ and $\wh{w}_i \triangleq \psE{w_i}$ for every $0\le i < T$. Provided the event of Lemma~\ref{lem:feasible} holds, then
    \begin{multline}
        \frac{1}{T}\left(\sum_{i=0}^{T-1}\left(a^*_i \| B\wh{x}_i - y_i\|^2/\tau^2 +  \norm{\wh{w}_i}^2/\sigma^2\right) + \|\hat{x}_0\|^2/R^2\right) - \OPT \le \\
        \psE*{\frac{1}{T}\sum_{i=0}^{T-1} (1 - a_i)\| B(x_i - x^*_i)\|^2/\tau^2} + O(\eta\cdot (m + \log(T/\delta))).
    \end{multline}
\end{lemma}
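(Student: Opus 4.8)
The plan is to compare the pseudoexpectation objective against $\OPT$ by writing $\OPT = L(\wh x^{\mathsf{opt}})$ where $\wh x^{\mathsf{opt}}$ is the oracle Kalman smoother, and then exploit that by Lemma~\ref{lem:feasible} the delta distribution at $\wh x^{\mathsf{opt}}$ is a \emph{feasible} point of Program~\ref{program:sos}. Since $\psE{\cdot}$ is optimal for the program and this delta distribution is feasible, we get
\[
	\frac{1}{T}\psE*{\sum_i\brk*{a_i\norm{Bx_i - y_i}^2/\tau^2 + \norm{w_i}^2/\sigma^2} + \norm{x_0}^2/R^2} \le \OPT.
\]
So the only thing standing between us and the conclusion is the gap between the program objective (which uses the \emph{program} indicators $a_i$) and the quantity we actually care about (which uses the \emph{true} indicators $a^*_i$ and the rounded iterates $\wh x_i = \psE{x_i}$, $\wh w_i = \psE{w_i}$).

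**Passing from pseudoexpectation to rounded iterates.**
First I would handle the convexity loss from rounding. The map $x \mapsto \sum_i a^*_i\norm{Bx_i - y_i}^2/\tau^2 + \sum_i\norm{w_i}^2/\sigma^2 + \norm{x_0}^2/R^2$ is a convex quadratic in $(x,w,x_0)$, so by (pseudoexpectation) Jensen — concretely Lemma~\ref{lem:psE-CS} applied coordinatewise, or just expanding squares and using $\psE{z}^2 \le \psE{z^2}$ — we have
\[
	\frac{1}{T}\brk*{\sum_i\brk*{a^*_i\norm{B\wh x_i - y_i}^2/\tau^2 + \norm{\wh w_i}^2/\sigma^2} + \norm{\wh x_0}^2/R^2} \le \frac{1}{T}\psE*{\sum_i\brk*{a^*_i\norm{Bx_i - y_i}^2/\tau^2 + \norm{w_i}^2/\sigma^2} + \norm{x_0}^2/R^2}.
\]
So it suffices to bound the right-hand side minus $\OPT$, i.e.\ to bound the pseudoexpectation with $a^*_i$ in place of the program's $a_i$, minus the program objective (which is $\le\OPT$).

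**The main step: swapping $a_i$ for $a^*_i$.**
Here is where the real work is. We need
\[
	\frac1T\psE*{\sum_i a^*_i\norm{Bx_i - y_i}^2/\tau^2} - \frac1T\psE*{\sum_i a_i\norm{Bx_i - y_i}^2/\tau^2} \le \psE*{\frac1T\sum_i (1-a_i)\norm{B(x_i-x^*_i)}^2/\tau^2} + O(\eta(m+\log(T/\delta))).
\]
Write $a^*_i - a_i \le a^*_i(1-a_i) \le (1-a_i)$ pointwise, so the left side is at most $\frac1T\psE{\sum_i a^*_i(1-a_i)\norm{Bx_i - y_i}^2/\tau^2}$. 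On a timestep with $a^*_i = 1$ we have $y_i = y^*_i = Bx^*_i + v^*_i$, so $\norm{Bx_i - y_i}^2 = \norm{B(x_i - x^*_i) - v^*_i}^2 \le 2\norm{B(x_i-x^*_i)}^2 + 2\norm{v^*_i}^2$ (this step has a degree-2 SoS proof). Summing, $\frac1T\psE{\sum_i a^*_i(1-a_i)\norm{Bx_i-y_i}^2/\tau^2} \le \frac2T\psE{\sum_i(1-a_i)\norm{B(x_i-x^*_i)}^2/\tau^2} + \frac2{T\tau^2}\sum_i(1-a_i^*)\norm{v^*_i}^2$ — wait, more carefully $\sum_i a^*_i(1-a_i)\norm{v^*_i}^2 \le \sum_{i:a^*_i=1}\norm{v^*_i}^2$, and by Gaussian concentration each clean $\norm{v^*_i}^2 = O(\tau^2(m+\log(T/\delta)))$, while the number of terms with $1-a_i=1$ is at most $1.01\eta T$ by Constraint~\ref{item:many_inliers}. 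So this contributes $O(\eta(m+\log(T/\delta)))$, and the first term is within the claimed bound (up to the constant $2$, which is fine for $\lesssim$). I should double-check the bookkeeping on whether the factor-2 blowup on the $(1-a_i)\norm{B(x_i-x^*_i)}^2$ term is acceptable — it is, since the lemma only claims an upper bound with an unspecified constant on that term is \emph{not} quite what's written; actually the lemma states that term with coefficient exactly $1$. I would absorb the extra factor by instead using that on clean rounds the program objective already ``pays for'' $a_i\norm{Bx_i-y_i}^2$, so the net coefficient works out; alternatively, re-examine: $a^*_i - a_i$ can also be negative, and the program objective contains $+a_i\norm{Bx_i-y_i}^2$ which we are subtracting, so terms with $a_i=1, a^*_i=0$ \emph{help} us. The cleanest route is: objective difference $= \frac1T\psE{\sum_i(a^*_i - a_i)\norm{Bx_i-y_i}^2/\tau^2} + (\text{program obj} - \OPT) \le \frac1T\psE{\sum_i(a^*_i-a_i)\norm{Bx_i-y_i}^2/\tau^2}$, then bound $(a^*_i - a_i)\norm{Bx_i-y_i}^2 \le (1-a_i)\norm{Bx_i - y_i}^2 \cdot \mathbb{1}[a^*_i=1]$, and the $v^*_i$ cross term is the only thing not directly of the desired form; but to keep the coefficient at $1$ one uses $\norm{Bx_i-y_i}^2 = \norm{B(x_i-x^*_i)}^2 - 2\langle B(x_i-x^*_i),v^*_i\rangle + \norm{v^*_i}^2$ and bounds the cross term by SoS-Cauchy–Schwarz against a small multiple, charging the slack to the noise budget.

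**Obstacle.**
I expect the one genuinely fiddly point to be keeping the coefficient on $\psE{\frac1T\sum_i(1-a_i)\norm{B(x_i-x^*_i)}^2/\tau^2}$ at exactly $1$ (rather than $2$ or $4$) while still absorbing the $v^*_i$ cross-terms into the $O(\eta(m+\log(T/\delta)))$ slack — this requires being slightly clever about splitting $\norm{Bx_i - y_i}^2$ and using that we are only summing over the $\le 1.01\eta T$ indices where $1-a_i = 1$, so the cross term $\sum_i (1-a_i)\langle B(x_i-x^*_i), v^*_i\rangle$ can be controlled by AM–GM with a $(1-\epsilon)$ vs.\ $\epsilon^{-1}$ split, pushing the $(1-a_i)\norm{B(x_i-x^*_i)}^2$ part down toward coefficient $1$ as $\epsilon\to0$ at the cost of a larger $\epsilon^{-1}\sum_i(1-a_i)\norm{v^*_i}^2$, which is still $O(\eta(m+\log(T/\delta)))$ for any fixed $\epsilon$. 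Everything else — feasibility of the oracle point, optimality of the pseudoexpectation, and the convexity step — is routine given the results already established.
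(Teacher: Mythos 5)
Your proof is correct and follows essentially the same route as the paper's: Lemma~\ref{lem:psE-CS} to pass from the rounded iterates to pseudoexpectations, optimality of $\psE{\cdot}$ against the feasible oracle point, the split $a^*_i - a_i \le a^*_i(1-a_i)$, and the noise bound from Constraints~\ref{item:many_inliers} and~\ref{item:measurements_noise}. The only place you spent unnecessary effort was in trying to push the coefficient of $\psE*{\frac{1}{T}\sum_i(1-a_i)\norm{B(x_i-x^*_i)}^2/\tau^2}$ down to exactly $1$: the paper's own proof produces a factor of $2$ at that step (via $\norm{B(x_i-x^*_i) - v^*_i}^2 \le 2\norm{B(x_i-x^*_i)}^2 + 2\norm{v^*_i}^2$), the stated coefficient of $1$ is a harmless imprecision since the lemma's only downstream use in Lemma~\ref{lem:outer} is stated with $\lesssim$, and so the naive triangle-inequality bound suffices without the $(1+\epsilon)$ AM--GM refinement you describe.
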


\begin{proof}
    By Lemma~\ref{lem:psE-CS}, for any $0\le i < T$, $\norm{B\wh{x}_i - y_i}^2 \le \psE{\norm{Bx_i - y_i}^2}$ and $\norm{\wh{w}_i}^2 \le \psE{\norm{w_i}^2}$, so it suffices to prove that the pseudoexpectation of $\sum^{T-1}_{i=0}\left(a^*_i\norm{Bx_i - y_i}^2/\tau^2 + \norm{w_i}^2/\sigma^2\right) + \|x_0\|^2/R^2$ is sufficiently bounded using the constraints of Program~\ref{program:sos}.
    First, by splitting up $a^*_i = a^*_ia_i + a^*_i(1-a_i)$, we have
    \begin{align}
        & \sum_{i=0}^{T-1} \left(a^*_i\| Bx_i - y_i\|^2/\tau^2 + \norm{w_i}^2/\sigma^2\right)  \label{eq:main_obj}
        \\
        &= \sum_{i = 0}^{T-1}\left(\norm{w_i}^2/\sigma^2 + a^*_ia_i\| Bx_i - y_i\|^2/\tau^2 + a^*_i(1 - a_i)\| Bx_i - y_i\|^2/\tau^2\right) \\
        &\leq \sum_{i = 0}^{T-1}\left(\norm{w_i}^2/\sigma^2 + a_i \| Bx_i - y_i\|^2/\tau^2 + 2a^*_i(1 - a_i)\left( \| B(x_i - x^*_i)\|^2/\tau^2 + \|v_i\|^2/\tau^2\right)\right) \label{eq:rewrite_obj}
    \end{align}
    where in the inequality we used the fact that $a^*_i \le 1$ and that for $i$ satisfying $a^*_i = 1$, $\norm{Bx_i - y_i}^2 = \norm{B(x_i - x^*_i) - v_i}^2 \le 2\norm{B(x_i - x^*_i)}^2 + 2\norm{v_i}^2$. Furthermore, note that
    \begin{equation}
        \sum^{T-1}_{i=0} a^*_i(1 - a_i)\norm{v_i}^2/\tau^2 \lesssim \eta\left(m + \log(T/\delta)\right)T, \label{eq:av}
    \end{equation} by Constraints~\ref{item:many_inliers} and \ref{item:measurements_noise}. Putting \eqref{eq:rewrite_obj} and \eqref{eq:av} together allows us to upper bound the pseudo-expectation of $\sum^{T-1}_{i=0}\left(a^*_i\norm{Bx_i - y_i}^2/\tau^2 + \norm{w_i}^2/\sigma^2\right) + \|x_0\|^2/R^2$ by
    \begin{equation}
        \OPT + \psE*{\frac{1}{T}\sum_{i=0}^{T-1} (1 - a_i)\| B(x_i - x^*_i)\|^2/\tau^2} + O(\eta(m + \log(T/\delta))). \label{eq:firstopt}
    \end{equation}
    where we used the fact that $\psE{\cdot}$ minimizes the objective \eqref{eq:obj}, the fact that the oracle Kalman filter solution is feasible because the event of Lemma~\ref{lem:feasible} holds, as well as the fact that $a^*_i \le 1$.
\end{proof}

We now proceed with the proof of Lemma~\ref{lem:outer}.

\begin{proof}[Proof of Lemma~\ref{lem:outer}]
    Lemma~\ref{lem:kkm} reduces upper bounding the excess risk achieved by $\brc{\wh{x}_i},\brc{\wh{w}_i}$ to bounding the main term $\psE{\frac{1}{T}\sum^{T-1}_{i=0} (1 - a_i) \norm{B(x_i - x^*_i)}^2/\tau^2}$ in \eqref{eq:firstopt}, which we do now. Using Fact~\ref{fact:unroll}, for any $i = \ell t + j$ we can write $B(x_i - x^*_i) = BA^j(x_{\ell t} - x^*_{\ell t}) + \sum^j_{s=0} BA^{j - s}(w_{\ell t + s} - w^*_{\ell t + s})$.
    We thus have \begin{align}
        \MoveEqLeft \frac{1}{T}\sum_{i=0}^{T-1} (1 - a_i)\| B(x_i - x^*_i)\|^2 \\
        &= \frac{1}{T} \sum_{\ell,j} (1 - a_{\ell t + j})\norm*{BA^j(x_{\ell t} - x^*_{\ell t}) + \sum^j_{s=0} BA^{j-s}(w_{\ell t + s} - w^*_{\ell t + s})}^2 \\
        &\le \frac{3}{T} \sum_{\ell, j} (1 - a_{\ell t + j}) \left(\norm*{BA^j(x_{\ell t} - x^*_{\ell t})}^2 + \norm*{\sum^j_{s=0}BA^{j-s}w_{\ell t + s}}^2 + \norm*{\sum^j_{s=0}BA^{j-s}w^*_{\ell t +s}}^2\right) \label{eq:subsample_and_noise}
    \end{align}
    We can control the two noise terms on the right by noting that for any $\ell,j$, \begin{equation}
        \norm*{\sum^j_{s = 0} BA^{j-s} w_{\ell t + s}}^2 \le (j + 1)\sum^j_{s=0} \norm{BA^{j-s} w_{\ell t + s}}^2 \lesssim t\rho^2\norm{B}^2\sigma^2(d + \log(T/\delta)),
    \end{equation} where in the last step we used Constraint~\ref{item:dynamics_noise}.
    Because the true process noise $\brc{w^*_i}$ is part of a feasible solution to Program~\ref{program:sos}, from Constraint~\ref{item:many_inliers} we conclude that
    \begin{equation}
        \frac{1}{T}\sum_{\ell,j}(1 - a_{\ell t + j}) \left(\norm*{\sum_s BA^{j-s}w_{\ell t + s}}^2 + \norm*{\sum_s BA^{j-s}w^*_{\ell t + s}}^2\right) \lesssim \eta t\rho^2\norm{B}^2\sigma^2(d + \log(T/\delta))
    \end{equation}
    For the remaining terms in \eqref{eq:subsample_and_noise}, we invoke Constraint~\ref{item:subsample} and the bound on $\norm{\calO_t}$ in \eqref{eq:opnormOt} to get
    \begin{equation}
        \frac{1}{T}\sum_{\ell,j}(1 - a_{\ell t + j})\norm{BA^j(x_{\ell t} - x^*_{\ell t})}^2 \le 1.01\eta\rho^2\left(\alpha + O\left( \norm{B}^2\sqrt{\log(dT/t\delta)/t}\right)\right) \frac{1}{T/t}\sum^{T/t-1}_{\ell=0} \norm{x_{\ell t} - x^*_{\ell t}}^2
    \end{equation}
    from which the lemma follows by substituting the two estimates above into \eqref{eq:subsample_and_noise}. 
\end{proof}

\subsection{Decay of Unobservable Subspace}
\label{sec:old_decay}

In this section we show how to bound our prediction error on the first iterate of every window. Towards proving this, the main result of this subsection is to show that our error in estimating these first iterates decays exponentially over time provided that a certain matrix concentration event holds in every window.

We begin by describing this event. Let $\Pi$ denote the projection to the \emph{observable subspace}, that is, to the subspace of $v\in\R^d$ for which $v^{\top} \calO_t v \ge \thres$ for $\thres \triangleq \frac{\kappa \t}{40000\rho^4}$, where the window size $t$ will be optimized at the end of this section. The matrix concentration that we need to hold in every window is the following:

\begin{lemma}\label{lem:anticonc}
    Suppose $t= \wt{\Omega}\left(\kappa^{-2} \rho^{12} \norm{B}^4\log(dT/\delta)\right)$. Then with probability at least $1 - \delta$ over the randomness of $\brc{a^*_i}$, we have that for all windows $0\le \ell < T/t$, there is a degree-2 SoS proof of the psd inequality
    \begin{equation}
        \sum^{t-1}_{i = 0} a^*_{\ell t + i} a_{\ell t + i} \Pi (A^i)^{\top} B^{\top} B A^i \Pi \succeq \frac{1}{100}\Pi\calO_t \Pi \label{eq:psdlb}
    \end{equation}
    using the constraints of Program~\ref{program:sos}.
\end{lemma}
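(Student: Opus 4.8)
Write $M_i \triangleq (A^i)^{\top}B^{\top}BA^i$, so that in a fixed window the target is $\sum_{i=0}^{t-1} a^*_{\ell t+i}a_{\ell t+i}\,\Pi M_i\Pi \succeq \frac{1}{100}\Pi\calO_t\Pi$. The conceptual point is that the program has no access to the realized corruption pattern $\brc{a^*_i}$, so we cannot lower-bound the left-hand side directly. Instead the plan is to split, within each window, $a^*_i a_i = a^*_i - a^*_i(1-a_i)$ and to bound the two pieces by completely different means: the ``clean total'' $\sum_i a^*_i \Pi M_i\Pi$ is a fixed (data-dependent but variable-free) matrix that we control with a probabilistic argument, while the ``corrupted part'' $\sum_i a^*_i(1-a_i)\Pi M_i\Pi$ is bounded from above \emph{inside the SoS proof system} using Constraint~\ref{item:subsample}, which was placed in Program~\ref{program:sos} precisely to enable this.

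\emph{Clean total.} Since $\sum_i a^*_{\ell t+i} M_i$ has mean $(1-\eta)\calO_t$ and each summand has norm at most $\rho^2\norm{B}^2$ by uniform stability, Matrix Hoeffding (Lemma~\ref{lem:matrix_conc}), applied to each window and union-bounded over the $T/t$ windows, gives $\norm{\sum_i a^*_{\ell t+i} M_i - (1-\eta)\calO_t}\le O\big(\rho^2\norm{B}^2\sqrt{t\log(dT/t\delta)}\big)$ for all $\ell$ with probability $1-\delta$. Conjugating by $\Pi$ and using that $\Pi\calO_t\Pi\succeq\thres\,\Pi$ by definition of the observable subspace (so an additive error $\epsilon\,\Pi$ is absorbed as $(\epsilon/\thres)\,\Pi\calO_t\Pi$), and recalling $\thres = \kappa t/(40000\rho^4)$, this becomes $\sum_i a^*_{\ell t+i}\Pi M_i\Pi \succeq \big(1-\eta - O(\rho^6\norm{B}^2\sqrt{\log(dT/t\delta)}/(\kappa\sqrt t))\big)\Pi\calO_t\Pi$. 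For $t = \wt{\Omega}(\kappa^{-2}\rho^{12}\norm{B}^4\log(dT/\delta))$ with a large enough hidden constant, the error term drops below, say, $1/1000$, so this matrix is $\succeq (1-\eta-1/1000)\Pi\calO_t\Pi$. No indeterminates are involved here; this enters the final SoS certificate trivially as a PSD inequality between constant matrices.

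\emph{Corrupted part.} Here I would work in degree-$2$ SoS. From the Boolean axiom $a_i^2 = a_i$ we get $1-a_i = (1-a_i)^2$, hence against a test vector $u$ the expression $(1-a^*_i)(1-a_i)\,u^{\top}\Pi M_i\Pi u = (1-a^*_i)(1-a_i)^2\norm{BA^i\Pi u}^2$ is a sum of squares (a nonnegative constant times a square in $a$ times squares in $u$); summing over $i$ gives $\sum_i a^*_i(1-a_i)\Pi M_i\Pi \preceq \sum_i (1-a_i)\Pi M_i\Pi$ in SoS. Conjugating Constraint~\ref{item:subsample} by $\Pi$ (substitute $u\mapsto\Pi u$ in its certificate) and again absorbing the additive $\Id$-term into $\Pi\calO_t\Pi$, we obtain $\sum_i (1-a_{\ell t+i})\Pi M_i\Pi \preceq (\eta + 1/1000)\Pi\calO_t\Pi$ in SoS for $t$ as above. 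Subtracting the corrupted-part upper bound from the clean-total lower bound and invoking the standing assumption $\eta\le 0.49$ gives, as a degree-$2$ SoS consequence of the constraints of Program~\ref{program:sos}, $\sum_i a^*_{\ell t+i}a_{\ell t+i}\Pi M_i\Pi \succeq (1-2\eta-1/500)\Pi\calO_t\Pi \succeq \frac{1}{100}\Pi\calO_t\Pi$, simultaneously for all windows on the probability-$(1-\delta)$ event above.

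\paragraph{Main difficulty.} There is no deep obstacle, but two points need care. First, the SoS bookkeeping: every matrix inequality used for the corrupted part must be traceable to a program axiom (Constraint~\ref{item:subsample}, the Boolean constraint), to a PSD inequality between constant matrices (automatically SoS), or to conjugation/scaling of such by SoS-nonnegative quantities, so that the certificate stays degree $2$ in the $a_i$. Second, the final constant has only a factor-of-two margin: $1-2\eta \ge 0.02$ for $\eta\le 0.49$, so the two additive slack terms must be driven below a small absolute constant, which is exactly what the lower bound $t = \wt{\Omega}(\kappa^{-2}\rho^{12}\norm{B}^4\log(dT/\delta))$ and the constant $40000$ in $\thres$ are calibrated to guarantee.
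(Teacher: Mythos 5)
Your proof is correct and takes essentially the same route as the paper's: the split $a^*_i a_i = a^*_i - a^*_i(1-a_i)$, with the clean part handled by Matrix Hoeffding (the paper's Lemma~\ref{lem:matrix_conc}) conditioned on and union-bounded over windows, the corrupted part bounded in SoS via the Boolean axiom followed by Constraint~\ref{item:subsample} conjugated by $\Pi$, and the additive $O(\rho^2\norm{B}^2\sqrt{t\log(dT/t\delta)})\,\Pi$ slack absorbed into $\Pi\calO_t\Pi$ via $\Pi\preceq\thres^{-1}\Pi\calO_t\Pi$, then closing with $\eta\le 0.49$ and the stated lower bound on $t$. The only cosmetic difference is that you spell out the degree-$2$ SoS bookkeeping (the $(1-a^*_i)(1-a_i)^2\norm{BA^i\Pi u}^2$ sum-of-squares argument for discarding the $a^*_i$ coefficient) a bit more explicitly than the paper does.
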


\begin{proof}
    We will focus on $\ell = 0$ and apply a union bound over $\ell$ at the end. Recall from Lemma~\ref{lem:matrix_conc} that with probability at least $1 - \delta/(2T/t)$ we have \begin{equation}
        \sum^{t-1}_{i=0} a^*_i \Pi (A^i)^{\top} B^{\top} B A^i \Pi \succeq (1 - \eta)\cdot \Pi\calO_t\Pi - O\left(\rho^2\norm{B}^2\sqrt{t\log(dT/t\delta)}\right)\cdot \Pi.
    \end{equation}
    Condition on this event. Write $a^*_i a_i = a^*_i - a^*_i(1 - a_i) \ge a^*_i - (1 - a_i)$, where in the inequality we use Constraint~\ref{item:boolean}. By Constraint~\ref{item:subsample}, we have \begin{equation}
        \sum^{t-1}_{i=0} (1 - a_i) \Pi (A^i)^{\top} B^{\top} B A^i \Pi \preceq \eta\cdot \Pi\calO_t \Pi + O\left(\rho^2\norm{B}^2\sqrt{t\log(dT/t\delta)}\right) \cdot \Pi,
    \end{equation} 
    so because $\Pi \preceq \thres^{-1} \cdot \Pi\calO_t\Pi$ by definition of $\Pi$, we have a degree-2 SoS proof of the inequality
    \begin{equation}
        \sum^{t-1}_{i = 1} a^*_{\ell t + i} a_{\ell t + i} \Pi (A^i)^{\top} B^{\top} B A^i \Pi \succeq \left(1 - 2\eta - O\left(\thres^{-1}\rho^2\norm{B}^2\sqrt{t\log(dT/t\delta)}\right)\right) \Pi\calO_t \Pi.
    \end{equation}
    We would like $t$ to be large enough that the factor on the right-hand side exceeds $1/100$. As $\eta \le 0.49$, it suffices for $\thres \ge O(\rho^2 \norm{B}^2\sqrt{t\log(dT/t\delta)})$. Recalling that $\thres = \frac{\kappa \t}{40000\rho^4}$, we need to take $t \ge \wt{\Omega}\left(\kappa^{-2} \rho^{12} \norm{B}^4 \log(dT/\delta)\right)$. The proof follows by union bounding over $0 \le \ell < T/t$.
\end{proof}

We now turn to showing the main result of this section, namely that provided the event of Lemma~\ref{lem:anticonc} holds, our prediction error on the first iterate of every window decays exponentially over time.

\begin{lemma}\label{lem:error_decay}
    Let pseudoexpectation $\psE{\cdot}$ be the solution to Program~\ref{program:sos}, assuming it is feasible. Provided the event of Lemma~\ref{lem:anticonc} holds, we have
    \[ \psE{\|x_{\ell t} - x^*_{\ell t}\|^2} \le \frac{1}{2}\psE{\|x_{(\ell - 1) t} - x^*_{(\ell - 1) t}\|^2} + O(\rho^6 E_{\mathsf{noise}} t/\kappa), \]
    where $E_{\mathsf{noise}}$ is defined in \eqref{eq:Enoise_def}.
\end{lemma}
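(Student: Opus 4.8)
The plan is to establish the one‑step geometric contraction \eqref{eq:error-contraction} at the level of the program variables, i.e. as a degree‑$4$ sum‑of‑squares consequence of the constraints of Program~\ref{program:sos} on the events of Lemmas~\ref{lem:feasible-gt} and \ref{lem:anticonc}, and then apply the pseudoexpectation $\psE{\cdot}$ to both sides and invoke soundness. Fix a window index $\ell\ge 1$, set $i\triangleq(\ell-1)t$, and let $\Pi^{\perp}\triangleq\Id-\Pi$. Following \eqref{eqn:error-decomposition}, decompose
\[
x_{\ell t}-x^*_{\ell t}=\bigl[(x_{\ell t}-x^*_{\ell t})-A^t(x_i-x^*_i)\bigr]+A^t\Pi(x_i-x^*_i)+A^t\Pi^{\perp}(x_i-x^*_i),
\]
so that an SoS triangle inequality reduces the lemma to bounding the squared norm of each of the three terms on the right.

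For the first term (the new noise introduced over the window), Fact~\ref{fact:unroll} applied both to the program iterates (via Constraint~\ref{item:dynamics}) and to the ground truth gives $(x_{\ell t}-x^*_{\ell t})-A^t(x_i-x^*_i)=\sum_{j=1}^{t}A^{t-j}(w_{i+j}-w^*_{i+j})$, so by $\norm{A^{t-j}}\le\rho$ (Assumption~\ref{assume:uniform}) and SoS Cauchy--Schwarz its squared norm is at most $t\rho^2\sum_{j=1}^t\norm{w_{i+j}-w^*_{i+j}}^2$; Constraint~\ref{item:dynamics_noise} and the high‑probability bound on $\norm{w^*_{i+j}}^2$ from Lemma~\ref{lem:feasible-gt} bound this by $O(t^2\rho^2\sigma^2(d+\log(T/\delta)))$, which is absorbed into $O(\rho^6E_{\mathsf{noise}}t/\kappa)$ using \eqref{eq:Enoise_def} and $\kappa\le\rho^2\norm{B}^2$. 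For the third term, I would invoke the decay property of the unobservable subspace \eqref{eq:unobservable_decay}: once $t$ is a sufficiently large multiple of $s$, there is a degree‑$2$ SoS proof that $\norm{A^t\Pi^{\perp}(x_i-x^*_i)}^2\le c\,\norm{\Pi^{\perp}(x_i-x^*_i)}^2$ for an absolute constant $c<1$ (this is the fixed matrix inequality $\Pi^{\perp}(A^t)^{\top}A^t\Pi^{\perp}\preceq c\,\Pi^{\perp}$, which holds because a vector in the unobservable subspace contributes little to $\calO_s$ and hence contracts under powers of $A$).

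The main obstacle is the second (observable) term $\norm{A^t\Pi(x_i-x^*_i)}^2$, for which $\norm{A^t}\le\rho$ reduces matters to bounding $\norm{\Pi(x_i-x^*_i)}^2$ by the unobservable error, i.e. to establishing \eqref{eq:overview-observable}. I would prove this by combining two ingredients over the window $[i,i+t)$: (i) the anti‑concentration lower bound \eqref{eq:psdlb} of Lemma~\ref{lem:anticonc}, which, after unrolling $x_{i+j}-x^*_{i+j}=A^j(x_i-x^*_i)+[\text{noise}]$, lets $\norm{\Pi(x_i-x^*_i)}^2$ be controlled (up to a $1/\zeta$ factor) by $\sum_j a^*_{i+j}a_{i+j}\norm{BA^j\Pi(x_i-x^*_i)}^2$; and (ii) the cancellation bound \eqref{eqn:pi-piperp-intro} (its windowed analogue), which, because Constraint~\ref{item:measurements} forces the estimate to fit each uncorrupted $y_{i+j}$, shows each summand is at most $4a^*_{i+j}a_{i+j}\norm{BA^j\Pi^{\perp}(x_i-x^*_i)}^2+[\text{noise}]$ (in the spirit of Fact~\ref{fact:sos_simple}). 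The subtlety — and the reason this step is the crux — is that feeding (ii) naively into (i) only yields \eqref{eq:overview-observable} with $c'$ an absolute constant, which after the $\norm{A^t}^2\le\rho^2$ loss is too large to close a contraction; to drive $c'$ down far enough I would re‑use the decay estimate \eqref{eq:unobservable_decay}, exploiting that the unobservable error responsible for the interference has itself been attenuated by the dynamics across the window. This refined accounting is the delicate part of the proof.

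Finally, combining the three bounds with an SoS triangle inequality (the resulting factor‑$3$ loss being harmless since $\norm{\Pi^{\perp}(x_i-x^*_i)}^2\le\norm{x_i-x^*_i}^2$), using $\norm{x_i-x^*_i}^2=\norm{\Pi(x_i-x^*_i)}^2+\norm{\Pi^{\perp}(x_i-x^*_i)}^2$, and taking $t$ as large as required in Lemma~\ref{lem:anticonc} so that $3\rho^2c'+3c\le\tfrac12$, produces a degree‑$4$ SoS proof from the constraints of Program~\ref{program:sos} of
\[
\norm{x_{\ell t}-x^*_{\ell t}}^2\le\tfrac12\,\norm{x_{(\ell-1)t}-x^*_{(\ell-1)t}}^2+O(\rho^6E_{\mathsf{noise}}t/\kappa).
\]
Applying $\psE{\cdot}$ to both sides and invoking soundness of SoS proofs then gives the claimed inequality.
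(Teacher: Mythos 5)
Your error decomposition following \eqref{eqn:error-decomposition} and your treatment of the new-noise term and of the unobservable term $A^t\Pi^{\perp}q$ match the paper. The gap is in the observable term $\|A^t\Pi q\|^2$. You propose to pass through $\|A^t\Pi q\|^2\le\rho^2\|\Pi q\|^2$ and then establish \eqref{eq:overview-observable}, i.e.\ $\|\Pi q\|^2\le c'\|\Pi^{\perp}q\|^2+[\text{noise}]$, as an SoS consequence with $c'$ small enough that $3\rho^2c'+3c\le\tfrac12$ (so $c'\lesssim\rho^{-2}$). No such unconditional bound holds. Chaining the three SoS-provable ingredients you cite --- the subsampling lower bound $\sum_i a^*_ia_i\|BA^i\Pi q\|^2\ge\tfrac{\zeta}{100}\|\Pi q\|^2$ from Lemma~\ref{lem:anticonc}, the cancellation bound $a^*_ia_i\|BA^i\Pi q\|^2\le 4a^*_ia_i\|BA^i\Pi^{\perp}q\|^2+O(E_{\mathsf{noise}})$ from Lemma~\ref{lem:pi-piperp-ineq}, and the spectral upper bound $\sum_i a^*_ia_i\|BA^i\Pi^{\perp}q\|^2\le\zeta\|\Pi^{\perp}q\|^2$ --- yields only $\|\Pi q\|^2\le 400\|\Pi^{\perp}q\|^2+[\text{noise}]$: the two $\zeta$'s cancel, leaving an $O(1)$ constant. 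Neither taking $t$ larger (the factor $4$ from Fact~\ref{fact:sos_simple} and the $\approx(1-2\eta)^{-1}$ from subsampling are $t$-independent) nor ``re-using'' \eqref{eq:unobservable_decay} (which governs $A^t$ contracting $\Pi^{\perp}$-vectors, and gives no relation between $\|\Pi q\|$ and $\|\Pi^{\perp}q\|$ at a fixed time) pushes the constant below $1$, let alone to $\rho^{-2}$. Display \eqref{eq:overview-observable} in the overview is a heuristic compression; the paper never proves it unconditionally.

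What the paper actually does is Lemma~\ref{lem:observable-case-analysis}: a dichotomy on the scalar comparison of $\psE{\|\Pi q\|^2}$ with $\psE{\frac{1}{4000\rho^2\zeta}\sum_i\|BA^i\Pi q\|^2}$. In Case~2 the case hypothesis supplies exactly the extra factor $\frac{1}{4000\rho^2\zeta}$ needed to bring the constant down to $\frac{1}{10\rho^2}$. In Case~1 the paper never tries to bound $\|\Pi q\|^2$ by $\|\Pi^{\perp}q\|^2$ at all; instead an averaging argument in the spirit of Lemma~\ref{lem:unobservable-decay}, selecting a favorable index $j<t/s$, gives $\psE{\|A^t\Pi q\|^2}\le\frac{1}{10}\psE{\|\Pi q\|^2}$ directly, and the contraction then closes via $\|\Pi q\|^2+\|\Pi^{\perp}q\|^2=\|q\|^2$. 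This also means your framing of the proof as ``one degree-$4$ SoS certificate, then apply soundness'' does not describe the argument: the reasoning is case-by-case on pseudoexpectation values (and in Case~1 depends on a $j$ chosen using the pseudoexpectation), with the SoS-provable inequalities applied inside each branch. The structural point to absorb is that the lossy step $\|A^t\Pi q\|^2\le\rho^2\|\Pi q\|^2$ must be avoided; one way is the paper's dichotomy, and another (not the paper's but SoS-clean) is to use the fixed matrix inequality $\Pi(A^t)^{\top}A^t\Pi\preceq\frac{\rho^2}{\kappa t}\Pi\calO_t\Pi$, which bounds $\|A^t\Pi q\|^2$ directly in terms of the observability quadratic form and wins back the missing $\rho^{-2}$ factor when combined with the three ingredients above.
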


Before proving Lemma~\ref{lem:error_decay}, we first show how to use it to conclude the proof of Theorem~\ref{thm:main_TlogT}.

\begin{proof}[Proof of Theorem~\ref{thm:main_TlogT}]
    Take $t$ as in \eqref{eq:tdef}. The events of Lemma~\ref{lem:feasible} and Lemma~\ref{lem:anticonc} hold with probability at least $1 - 2\delta$. By summing the conclusion of Lemma~\ref{lem:error_decay} over the time windows, we get \begin{equation}
        \frac{1}{T/t}\sum^{T/t-1}_{\ell=0} \psE*{\norm{x_{\ell t} - x^*_{\ell t}}^2} \le \frac{1}{T/t}\psE*{\norm{x_0 - x^*_0}^2} + O(\rho^6 E_{\mathsf{noise}} t/\kappa). \label{eq:corgeo}
    \end{equation} 
    Recall that $x^*_0\sim\calN(0,R^2\cdot\Id)$, so by standard concentration, $\norm{x^*_0}^2 \le R^2(d + O(\log(1/\delta)))$ with probability at least $1 - \delta$. $\norm{x^*_0}^2$ is similarly bounded by Constraint~\ref{constraint:bound}. We can thus bound $\psE*{\norm{x_0 - x^*_0}^2}$ by $2R^2(d + O(\log(1/\delta)))$, so plugging this into \eqref{eq:corgeo} and invoking Lemma~\ref{lem:outer}, we conclude the proof of Theorem~\ref{thm:main_TlogT}.
\end{proof}

We now proceed to the proof of Lemma~\ref{lem:error_decay}. The first step is an averaging argument to show that applying $A^t$ to a vector in the unobservable subspace is guaranteed to decrease its norm. 

\begin{restatable}{lemma}{averaging}\label{lem:unobservable-decay}
	For any vector $x\in\R^d$, $\norm{A^t \Pi^{\perp} x}^2 \le \frac{1}{40000\rho^2}\norm{\Pi^{\perp} x}^2$.
\end{restatable}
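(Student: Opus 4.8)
The plan is to combine the epoch factorization of $\calO_t$ with a pigeonhole/averaging argument. Write $y \triangleq \Pi^{\perp} x$. Since $\Pi^{\perp}$ projects onto the span of those eigenvectors of $\calO_t$ whose eigenvalue is strictly below the threshold $\thres = \frac{\kappa t}{40000\rho^4}$, we get for free the global bound $y^{\top}\calO_t y \le \thres\,\|y\|^2$. The goal is to convert this smallness of the quadratic form into smallness of $\|A^t y\|^2$, and the key realization is that a single term of the epoch expansion is not enough — we must use all $t/s$ of them.

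First I would invoke Fact~\ref{fact:epochs} (applicable since the window size $t$ is chosen to be a multiple of $s$, as in the setup of Program~\ref{program:sos}): $\calO_t = \sum_{r=0}^{t/s-1}(A^{rs})^{\top}\calO_s A^{rs}$. Substituting $y$ and using $\lambda_{\min}(\calO_s)\ge \kappa s$ from Assumption~\ref{assume:obs},
\[
\thres\,\|y\|^2 \;\ge\; y^{\top}\calO_t y \;=\; \sum_{r=0}^{t/s-1}(A^{rs}y)^{\top}\calO_s(A^{rs}y)\;\ge\; \kappa s\sum_{r=0}^{t/s-1}\|A^{rs}y\|^2 ,
\]
so $\sum_{r=0}^{t/s-1}\|A^{rs}y\|^2 \le \frac{\thres}{\kappa s}\|y\|^2$. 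This is a sum of $t/s$ nonnegative terms, so some index $r^\star\in\{0,\dots,t/s-1\}$ attains at most the average, i.e. $\|A^{r^\star s}y\|^2 \le \frac{\thres}{\kappa t}\|y\|^2$.

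The last step transports this bound from timestep $r^\star s$ forward to timestep $t$. Since $r^\star \le t/s - 1$ we have $t - r^\star s \ge s > 0$, so uniform stability (Assumption~\ref{assume:uniform}) gives $\|A^t y\| = \|A^{t - r^\star s}(A^{r^\star s} y)\| \le \rho\,\|A^{r^\star s}y\|$, hence $\|A^t y\|^2 \le \rho^2\|A^{r^\star s}y\|^2 \le \frac{\rho^2\thres}{\kappa t}\|y\|^2$. Plugging in $\thres = \frac{\kappa t}{40000\rho^4}$ collapses the right-hand side to exactly $\frac{1}{40000\rho^2}\|y\|^2 = \frac{1}{40000\rho^2}\|\Pi^{\perp}x\|^2$, which is the claim.

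I do not expect a real obstacle; the only subtlety is the one already flagged: using just one term of the telescoped sum would lose a factor of $t/s$, so it is essential both to average over all $t/s$ epochs and to have set $\thres$ proportional to $t$ (rather than to $s$) so that the averaging gain precisely cancels the subsequent loss from applying $A^{t-r^\star s}$.
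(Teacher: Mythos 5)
Your proof is correct and takes essentially the same route as the paper's: use Fact~\ref{fact:epochs} to expand $y^{\top}\calO_t y$ into the epoch sum, lower-bound each epoch via $\lambda_{\min}(\calO_s)\ge\kappa s$, invoke pigeonhole to find a good index $r^\star$, and finish with uniform stability. The only cosmetic difference is that you chain the inequalities from $\thres\|y\|^2$ downward while the paper starts from the average $\frac{1}{t/s}\sum_j\|A^{js}\Pi^{\perp}x\|^2$ and bounds it upward; the content is identical.
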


\begin{proof}
	We have that \begin{align}
		\frac{1}{\t/s}\sum^{\t/s-1}_{j=0} \norm{A^{j s}\Pi^{\perp} x}^2 &= \frac{1}{\t/s}\sum^{\t/s-1}_{j=0} x^{\top}\Pi^{\perp} (A^{j s})^{\top} A^{j s} \Pi^{\perp} x \\
		&\le \frac{1}{\kappa \t} \sum^{\t/s-1}_{j=0} x^{\top} \Pi^{\perp}(A^{j s})^{\top} \calO_s A^{js}\Pi^{\perp} x \\
		&= \frac{1}{\kappa \t} x^{\top}\Pi^{\perp}\calO_t \Pi^{\perp} x \le \frac{1}{40000\rho^4} \norm{\Pi^{\perp} x}^2,
	\end{align} where the third step follows by the first part of Fact~\ref{fact:epochs} and the last step follows by the definition of $\Pi^{\perp}$ and $\thres = \frac{\kappa \t}{40000\rho^4}$. By averaging, there exists there some $0 \le j < t/s$ for which $\norm{A^{js}\Pi^{\perp} x}^2 \le \frac{1}{40000\rho^4}\norm{\Pi^{\perp}x}^2$. The lemma follows by uniform stability.
\end{proof}

We will eventually take $x$ to be the difference between our estimate of an iterate at the beginning of a window and the ground truth. Informally, this will tell us that over the course of a window of size $t$, the component of the error that started in the unobservable subspace has decayed.

What about the component of the error that started in the \emph{observable} subspace? By uniform stability, it cannot increase by too much, but unlike the unobservable component, it need not decay. This brings us to the win-win argument at the core of the proof of Lemma~\ref{lem:observable-case-analysis}: when the observable component does not decay, we can still relate it to the observable component in the previous time window just by uniform stability, and then bound this by a tiny fraction of the {unobservable component in the previous time window}!

\begin{lemma}\label{lem:pi-piperp-ineq}
    With probability at least $1 - \delta$, the following holds true for for every window $0\le \ell < T/t$, for $q \triangleq x_{\ell t} - x^*_{\ell t}$, all for all $i < t$.
    There is a degree-4 SoS proof from the constraints in Program~\ref{program:sos} that
    \begin{equation}
        a^*_{\ell t + i} a_{\ell t + i}\| BA^i \Pi q\|^2 \leq 4a^*_{\ell t + i} a_{\ell t + i}\| BA^i \Pi^\perp q\|^2 + O(E_{\mathsf{noise}}),
    \end{equation}    
    where recall that $E_{\mathsf{noise}}$ is defined in \eqref{eq:Enoise_def}.
\end{lemma}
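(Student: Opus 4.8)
The plan is to exploit the key structural constraint that on a clean, uncorrupted timestep our estimator must fit the observation, and to unroll this over the window to expose the interference between the observable and unobservable parts of the past error. Concretely, fix a window $\ell$ and write $q \triangleq x_{\ell t} - x^*_{\ell t}$, and fix $i < t$. On a timestep $\ell t + i$ with $a^*_{\ell t + i} a_{\ell t + i} = 1$, Constraint~\ref{item:measurements} forces $y_{\ell t + i} = B x_{\ell t + i} + v_{\ell t + i}$, while the ground truth satisfies $y_{\ell t + i} = B x^*_{\ell t + i} + v^*_{\ell t + i}$ (using $a^*_{\ell t + i} = 1$). Subtracting, $B(x_{\ell t + i} - x^*_{\ell t + i}) = v^*_{\ell t + i} - v_{\ell t + i}$, which is controlled: $\norm{v_{\ell t + i}}^2 \lesssim \tau^2(m + \log(T/\delta))$ by Constraint~\ref{item:measurements_noise}, and $\norm{v^*_{\ell t + i}}^2 \lesssim \tau^2(m + \log(T/\delta))$ with high probability by Gaussian concentration (union bounded over all $T$ timesteps, which is the only place the $1-\delta$ failure probability enters). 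So $a^*_{\ell t + i} a_{\ell t + i}\norm{B(x_{\ell t + i} - x^*_{\ell t+ i})}^2 \le O(E_{\mathsf{noise}})$ with a degree-$2$ SoS proof.

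Next I would unroll $x_{\ell t + i} - x^*_{\ell t + i}$ back to the start of the window using Fact~\ref{fact:unroll} and Constraint~\ref{item:dynamics}: $x_{\ell t + i} - x^*_{\ell t + i} = A^i q + \sum_{s=1}^{i} A^{i-s}(w_{\ell t + s} - w^*_{\ell t + s})$. Applying $B$ and using SoS triangle inequality ($\norm{u+z}^2 \le 2\norm{u}^2 + 2\norm{z}^2$), together with Constraint~\ref{item:dynamics_noise}, uniform stability, and the Gaussian bound on $\norm{w^*_{\ell t+s}}$, gives $a^*_{\ell t+i} a_{\ell t+i}\norm{BA^i q}^2 \le 2 a^*_{\ell t+i} a_{\ell t+i}\norm{B(x_{\ell t+i} - x^*_{\ell t+i})}^2 + O(t\rho^2\norm{B}^2\sigma^2(d + \log(T/\delta)))$, and the right side is $O(E_{\mathsf{noise}})$ by the previous paragraph. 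So in fact we already have $a^*_{\ell t+i} a_{\ell t+i}\norm{BA^i q}^2 \le O(E_{\mathsf{noise}})$. Now decompose $q = \Pi q + \Pi^\perp q$; then $BA^i \Pi q = BA^i q - BA^i \Pi^\perp q$, so by Fact~\ref{fact:sos_simple} (the degree-$2$ SoS inequality $\norm{v_1}^2 \le 4\norm{v_2}^2 + \tfrac43\epsilon$ from $\norm{v_1 + v_2}^2 \le \epsilon$), applied with $v_1 = BA^i\Pi q$, $v_2 = BA^i\Pi^\perp q$, $v_1 + v_2 = BA^i q$, multiplied through by the Boolean $a^*_{\ell t+i} a_{\ell t+i}$, we obtain $a^*_{\ell t+i} a_{\ell t+i}\norm{BA^i \Pi q}^2 \le 4 a^*_{\ell t+i} a_{\ell t+i}\norm{BA^i \Pi^\perp q}^2 + O(E_{\mathsf{noise}})$, which is exactly the claim, with a degree-$4$ proof (the multiplication by the Boolean and the earlier unrolling push the degree to $4$).

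The only genuinely probabilistic ingredient is the uniform-over-$i$ bound on the true noise vectors $v^*$ and $w^*$, which is a standard Gaussian tail union bound and accounts for the $1-\delta$ in the statement; everything else is a deterministic SoS manipulation using the program constraints. The mildly delicate point to get right is bookkeeping the SoS degree — one must be careful that multiplying the Fact~\ref{fact:sos_simple} inequality by the Boolean $a^*_{\ell t+i} a_{\ell t+i}$ and invoking the unrolled expression (whose squared norm is degree $2$ in the $x,w$ variables) keeps the total degree at $4$, which it does. I do not anticipate a serious obstacle here; this lemma is a relatively self-contained building block, and the real work is deferred to Lemma~\ref{lem:observable-case-analysis} and Lemma~\ref{lem:error_decay} where it gets combined with the subsampling bound of Lemma~\ref{lem:anticonc} and the decay estimate of Lemma~\ref{lem:unobservable-decay}.
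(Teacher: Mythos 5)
Your proof is correct and follows essentially the same route as the paper's: bound $a^*_i a_i\|BA^i q\|^2$ by $O(E_{\mathsf{noise}})$ using the fit constraint, the dynamics unrolling, and the noise bounds (with the single probabilistic ingredient being Gaussian concentration on the true noise vectors), then apply Fact~\ref{fact:sos_simple} with $v_1 = a^*_i a_i BA^i\Pi q$ and $v_2 = a^*_i a_i BA^i\Pi^\perp q$. The only cosmetic difference is that you telescope through $B(x_{\ell t+i}-x^*_{\ell t+i})$ as an intermediate quantity before unrolling, whereas the paper expands $BA^i q$ into four terms and applies a single three-term triangle inequality; the two are algebraically equivalent.
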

\begin{proof}
Without loss of generality we can assume $\ell = 0$.
For any $i < t$, we have the following sequence of inequalities in degree-4 SoS \begin{align}
        a_i a^*_i \norm*{BA^i(\Pi + \Pi^{\perp})q}^2 &= a_i a^*_i \norm*{BA^iq}^2 \\
        &= a_i a^*_i\norm*{(y_i - Bx^*_i) - (y_i - Bx_i) + (Bx^*_i - BA^i x^*_0) - (Bx_i - BA^i x_0)}^2 \\
        &\le 3\norm{v^*_i}^2 + 3\norm{v_i}^2 + 3\norm*{\sum^i_{s=1} BA^{i - s} (w_s - w^*_s)}^2 \le E_{\mathsf{noise}}, \label{eq:pipiperp}
    \end{align}
    where we used the constraints and event of Lemma~\ref{lem:feasible} in the last step (which holds with probability at least $1 - \delta$).
    So the lemma follows by applying Fact~\ref{fact:sos_simple} to $\epsilon \triangleq E_{\mathsf{noise}}$, $v_1 = a^*_i a_i BA^i \Pi q $, and $v_2 = a^*_i a_i BA^i \Pi^\perp q $.
\end{proof}

\begin{lemma}\label{lem:observable-case-analysis}
    Let pseudoexpectation $\psE{\cdot}$ be the solution to Program~\ref{program:sos}, assuming it is feasible. Provided the events of Lemma~\ref{lem:anticonc} and Lemma~\ref{lem:pi-piperp-ineq} hold, then at least one of the following holds for every window $0\le \ell < T/t$ for $q \triangleq x_{\ell t} - x^*_{\ell t}$:
    \begin{enumerate}
        \item (Observable component decays) 
        \begin{equation}
            \psE*{\norm{A^{t}\Pi q}^2} 
            \le \frac{1}{10}\psE*{\| \Pi q \|^2}.
        \end{equation}
        \item (Observable error bounded by unobservable error)
        \begin{equation}
            \psE*{\norm{\Pi q}^2} \le \frac{1}{10\rho^2}\psE*{\norm{\Pi^{\perp} q}^2} + O\left(E_{\mathsf{noise}}\,\rho^4/\kappa\right).
        \end{equation}
    \end{enumerate}
\end{lemma}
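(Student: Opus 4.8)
I would prove this as a win--win dichotomy, working inside the degree-$4$ SoS proof system of Program~\ref{program:sos}. Fix a window $\ell$; since the relevant constraints are invariant under shifting the window index, assume $\ell=0$ and write $q=x_0-x^*_0$. Case split on the first alternative: if $\psE{\|A^t\Pi q\|^2}\le\tfrac1{10}\psE{\|\Pi q\|^2}$ we are done, so assume this fails and aim to prove the second alternative $\psE{\|\Pi q\|^2}\le\tfrac1{10\rho^2}\psE{\|\Pi^\perp q\|^2}+O(E_{\mathsf{noise}}\rho^4/\kappa)$. The raw material is a sandwich relating the observable error, the ``fitting energy'' $\sum_{i<t}a^*_i a_i\|BA^i\Pi q\|^2$, and the unobservable error.

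First, the crude version of the sandwich. By Lemma~\ref{lem:anticonc}, conjugating its psd inequality by $q$ and using $\Pi\calO_t\Pi\succeq\thres\Pi$, one gets $\sum_{i<t}a^*_i a_i\|BA^i\Pi q\|^2\ge\tfrac1{100}(\Pi q)^\top\calO_t(\Pi q)\ge\tfrac{\thres}{100}\|\Pi q\|^2$. In the other direction, summing Lemma~\ref{lem:pi-piperp-ineq} over $i<t$ and using $0\le a^*_i a_i\le 1$, $\sum_{i<t}a^*_i a_i(A^i)^\top B^\top BA^i\preceq\calO_t$, and $\Pi^\perp\calO_t\Pi^\perp\preceq\thres\Pi^\perp$, one gets $\sum_{i<t}a^*_i a_i\|BA^i\Pi q\|^2\le 4\sum_{i<t}a^*_i a_i\|BA^i\Pi^\perp q\|^2+O(tE_{\mathsf{noise}})\le 4\thres\|\Pi^\perp q\|^2+O(tE_{\mathsf{noise}})$. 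Chaining the two and recalling $\thres=\kappa t/(40000\rho^4)$ yields the degree-$4$ SoS estimate $\|\Pi q\|^2\lesssim\|\Pi^\perp q\|^2+O(\rho^4E_{\mathsf{noise}}/\kappa)$ --- of the right shape but with an absolute constant of order a few hundred, far above the target $1/(10\rho^2)$.

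The hard step is amplifying this constant, and this is where the unobservable decay $\|A^t\Pi^\perp q\|^2\le\tfrac1{40000\rho^2}\|\Pi^\perp q\|^2$ of Lemma~\ref{lem:unobservable-decay} enters; numerically the target differs from the crude bound by about one decay factor $\Theta(1/\rho^2)$, so the plan is to spend one application of that lemma inside the sandwich. Concretely, in the branch where the observable error does not contract ($\psE{\|A^t\Pi q\|^2}>\tfrac1{10}\psE{\|\Pi q\|^2}$), I would write $A^t\Pi q=(x_t-x^*_t)-\xi-A^t\Pi^\perp q$ with $\xi\triangleq\sum_{s=1}^t A^{t-s}(w_s-w^*_s)$ a process-noise term of size $O(E_{\mathsf{noise}})$ bounded via Constraint~\ref{item:dynamics_noise}, apply the decay bound to $A^t\Pi^\perp q$, and re-run the sandwich at the next window (whose first iterate is $x_t-x^*_t$) so as to charge the non-contracting observable mass onto an already-decayed unobservable quantity, the factor $40000\rho^4$ hard-wired into $\thres$ being exactly what absorbs the $\rho^2$ slack from the crude step. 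Balancing the constants then upgrades the bound to $1/(10\rho^2)$. I expect this amplification to be the main obstacle: the crude sandwich is routine, but pushing the decay through it while avoiding a self-referential dependence on $\|A^t\Pi q\|^2$ when one re-expands at the next window is delicate and is what forces the particular numerical choices in the definitions of $\thres$ and of the other constants. Passing from these SoS inequalities to the stated inequalities between pseudoexpectations is then automatic by soundness.
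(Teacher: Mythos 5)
Your crude sandwich is correct and you have accurately calibrated its constant (about $400$), but the amplification step you sketch is where the argument breaks down, and the mechanism in the paper is different from what you are picturing. Crucially, Lemma~\ref{lem:unobservable-decay} is not used anywhere in the proof of Lemma~\ref{lem:observable-case-analysis}. The decay bound $\|A^t\Pi^\perp q\|^2 \le \frac{1}{40000\rho^2}\|\Pi^\perp q\|^2$ relates the unobservable component at time $\ell t$ to its image at time $(\ell+1) t$; it is a statement about time evolution, and it gives no purchase on the ratio $\|\Pi q\|^2/\|\Pi^\perp q\|^2$ \emph{at a fixed time}, which is what alternative~2 asserts. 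Concretely, if you try to carry out the plan of writing $A^t\Pi q = (x_t - x^*_t) - \xi - A^t\Pi^\perp q$ and ``re-running the sandwich at the next window,'' the sandwich at window $\ell+1$ bounds $\|\Pi(x_t-x^*_t)\|^2$ in terms of $\|\Pi^\perp(x_t-x^*_t)\|^2$; expanding $\Pi^\perp(x_t-x^*_t) = \Pi^\perp A^t \Pi q + \Pi^\perp A^t\Pi^\perp q + \Pi^\perp\xi$ and using uniform stability on the first summand yields something like $\|A^t\Pi q\|^2 \lesssim \rho^2\|\Pi q\|^2 + \frac{1}{\rho^2}\|\Pi^\perp q\|^2 + [\text{noise}]$, and combined with $\|A^t\Pi q\|^2 > \frac{1}{10}\|\Pi q\|^2$ this is vacuous since $\rho\ge 1$; the term $\rho^2\|\Pi q\|^2$ swamps the left-hand side rather than being absorbed. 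There is also a structural worry that ``re-running at the next window'' introduces a dependence on the next window's estimate, which the lemma's pointwise-per-window conclusion should not require.

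The paper's route is different and simpler. Instead of casing on the negation of alternative~1, it cases on whether $\psE{\|\Pi q\|^2}$ exceeds $\frac{1}{4000\rho^2\thres}\sum_{i<t}\psE{\|BA^i\Pi q\|^2}$; the factor $\rho^2$ that you are missing is built into this threshold, not imported from the decay lemma. When the threshold is exceeded, one uses Fact~\ref{fact:epochs} to rewrite $\sum_{i<t}\|BA^i\Pi q\|^2 = \sum_{j<t/s} q^\top\Pi(A^{js})^\top\calO_s A^{js}\Pi q \ge \kappa s\sum_{j<t/s}\|A^{js}\Pi q\|^2$, and a straightforward averaging over the $t/s$ epochs inside the window, followed by uniform stability to bridge from $A^{js}$ to $A^t$, gives alternative~1 with constant $\frac{1}{10}$; this averaging-plus-stability step is entirely missing from your plan and is what makes the case split exhaustive. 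When the threshold is not exceeded, one chains the case assumption with the psd lower bound of Lemma~\ref{lem:anticonc} to replace $\sum\|BA^i\Pi q\|^2$ by $100\sum a^*_ia_i\|BA^i\Pi q\|^2$, then applies Lemma~\ref{lem:pi-piperp-ineq} and the upper bound $\sum a^*_ia_i\|BA^i\Pi^\perp q\|^2 \le \thres\|\Pi^\perp q\|^2$; the $\rho^2$ from the threshold passes straight through and yields exactly $\frac{1}{10\rho^2}\psE{\|\Pi^\perp q\|^2}+O(E_{\mathsf{noise}}\rho^4/\kappa)$ without ever needing to amplify $400$ down to $\frac{1}{10\rho^2}$ by a separate mechanism. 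Your observation that the numerical gap is about one factor of $\frac{1}{40000\rho^2}$ is a coincidence of how $\thres$ was tuned so that both Lemma~\ref{lem:unobservable-decay} and the Case~1 averaging have clean constants; the two lemmas are consumers of the same $\thres$, not ingredients in each other's proofs.
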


\begin{proof}
    Without loss of generality we can assume $\ell = 0$. In addition to the lower bound of \eqref{eq:psdlb}, we also have a degree-2 SoS proof of the upper bound
    \begin{equation}\label{eqn:unobs-psd}
        \sum_{i=0}^{T-1} a^*_i a_i \Pi^\perp (A^i)^T B^T B A^i \Pi^\perp \preceq  \Pi^\perp \mathcal{O}_t \Pi^\perp \preceq \thres \cdot I,
    \end{equation}
    where in the first step we used that $a_i a^*_i \le 1$ by Constraint~\ref{item:boolean} and in the second step we used the definition of $\Pi$.
    
    For convenience, define $q\triangleq x_0 - x^*_0$. We proceed by casework on whether there is a gap between $\psE*{(\Pi q)^{\top} \calO_t (\Pi q)}$ and $\psE{\thres\norm{\Pi q}^2}$:
    
    \noindent{\textbf{Case 1}}: $\psE*{\| \Pi q \|^2} \ge \psE*{\frac{1}{4000\rho^2\thres}\sum_{i=0}^{t-1}  \| BA^i \Pi q\|^2}$.
    
    The analysis for this case is very similar to the analysis in Lemma~\ref{lem:unobservable-decay}. We have 
    \begin{align*}
        \psE*{\| \Pi q \|^2} 
        &\geq \psE*{\frac{1}{4000\rho^2\thres}\sum_{i=0}^{t-1}  \| BA^i \Pi q\|^2} \\
        &= \psE*{\frac{1}{4000\rho^2\thres}\sum_{j = 0}^{t/s-1}  q^{\top}\Pi {A^{js}}^{\top} \calO_s A^{js} \Pi q} \geq \psE*{\frac{10\rho^2 s}{t}\sum_{j = 0}^{t/s-1}  \|A^{js} \Pi q\|^2},
    \end{align*}
    where in the last step we used the definition of $\thres$ and the assumption that $\lambda_{\min}(\calO_s) \ge \kappa s$. Rearranging, we obtain 
    \begin{equation}
        \frac{1}{10\rho^2} \psE{\| \Pi q \|^2} \geq \frac{1}{t/s}\sum_{j=0}^{t/s-1} \psE*{\|A^{js} \Pi q\|^2}.
    \end{equation}
    Therefore, there exists some index $0 \le j < t/s$ for which $\psE*{\norm{A^{js}\Pi q}^2} \le \frac{1}{10\rho^2}\psE*{\norm{\Pi q}^2}$. By uniform stability, we obtain the first desired outcome in the lemma statement.

    \noindent{\textbf{Case 2}}: $ \psE*{\| \Pi q \|^2} \le \psE*{\frac{1}{4000\rho^2\thres}\sum_{i=0}^{t-1}  \| BA^i \Pi q\|^2}$.

    In this case we invoke \eqref{eq:psdlb} to obtain
    \begin{equation} \label{eqn:fix-main}
    \psE*{\| \Pi q \|^2} \leq \psE*{\frac{1}{4000\rho^2\thres}\sum_{i=0}^{t-1}\| BA^i \Pi q\|^2} \leq \psE*{\frac{1}{40\rho^2\thres}\sum_{i=0}^{t-1} a^*_i a_i\| BA^i \Pi q\|^2}.
    \end{equation}
    Recall from Lemma~\ref{lem:pi-piperp-ineq} that we have a degree-4 SoS proof of
    \begin{equation}
        a^*_i a_i\| BA^i \Pi q\|^2 \leq 4a^*_i a_i\| BA^i \Pi^\perp q\|^2 + O(E_{\mathsf{noise}}).
    \end{equation}
    Summing this inequality over $i < t$ and taking pseudo-expectations, we get
    \[\psE*{\sum_{i=0}^{t-1} a^*_i a_i\| BA^i \Pi q\|^2} \le \psE*{4\sum_{i=0}^{t-1}a^*_i a_i\| BA^i \Pi^\perp q\|^2} + O(E_{\mathsf{noise}}t). \]
    Substituting this back into the main bound \eqref{eqn:fix-main}, we get
    \begin{align}
    \psE*{\| \Pi q \|^2} &\leq  \psE*{\frac{1}{10\rho^2\thres}\sum_{i=0}^{t-1} a^*_i a_i\| BA^i \Pi^\perp q\|^2} + O\left(\frac{E_{\mathsf{noise}} t}{30\thres}\right) \\
    &\le \psE*{\frac{1}{10\rho^2}\norm{\Pi^{\perp} q}^2} + O\left(\frac{E_{\mathsf{noise}} t}{30\thres}\right),
    \end{align} where in the last step we used \eqref{eqn:unobs-psd}. Unpacking the definition of $\thres$, we arrive at the second desired bound.
\end{proof}

We are now ready to prove Lemma~\ref{lem:error_decay}:

\begin{proof}[Proof of Lemma~\ref{lem:error_decay}]
    By the SoS triangle inequality,
    \begin{align*}
    \MoveEqLeft\|x_{\ell t} - x^*_{\ell t}\|^2 \\
    &\le 2 \|A^t(x_{(\ell - 1)t} - x^*_{(\ell - 1)t})\|^2 + O(t^2 \sigma^2(d + \log(T/\delta))) \\
    &\le 4 \|A^t\Pi(x_{(\ell - 1)t} - x^*_{(\ell - 1)t})\|^2 + 4\|A^t\Pi^{\perp} (x_{(\ell - 1)t} - x^*_{(\ell - 1)t})\|^2 + O(t^2 \sigma^2(d + \log(T/\delta))) \\
    &\le 4 \|A^t\Pi(x_{(\ell - 1)t} - x^*_{(\ell - 1)t})\|^2  + (1/10000) \|\Pi^{\perp} (x_{(\ell - 1)t} - x^*_{(\ell - 1)t})\|^2 + O(t^2 \sigma^2(d + \log(T/\delta)))
    \end{align*}
    where we used Constraint~\ref{item:dynamics_noise} and triangle inequality in the first inequality, SoS triangle inequality in the second inequality, and Lemma~\ref{lem:unobservable-decay} in the third inequality. 
    
    Now based on Lemma~\ref{lem:observable-case-analysis} applied to $\ell - 1$, we consider the following two cases:
    
    \noindent\textbf{Case 1}: Observable component decays, that is, we have
    \[ \psE*{\norm{A^{t}\Pi (x_{(\ell - 1) t} - x^*_{(\ell -1) t})}^2} \leq \frac{1}{10} \psE*{\| \Pi (x_{(\ell -1) t} - x^*_{(\ell - 1) t}) \|^2}. \]
        Then we can argue that the error at time $\ell t$ is a small fraction of the error at time $(\ell - 1)t$ because both the observable and unobservable components of the error at time $(\ell - 1)t$ have decayed over $t$ steps. Formally:
        \begin{align*}
            \MoveEqLeft\psE{\|x_{\ell t} - x^*_{\ell t}\|^2}  \\
            &\le \psE*{4 \|A^t\Pi(x_{(\ell - 1)t} - x^*_{(\ell - 1)t})\|^2  + (1/10000) \|\Pi^{\perp} (x_{(\ell - 1)t} - x^*_{(\ell - 1)t})\|^2} + O(t^2 \sigma^2(d + \log(T/\delta))) \\
            &\le  \psE*{(2/5)\|\Pi (x_{(\ell - 1)t} - x^*_{(\ell - 1)t})\|^2 + (1/10000) \|\Pi^{\perp} (x_{(\ell - 1)t} - x^*_{(\ell - 1)t})\|^2} + O(t^2 \sigma^2(d + \log(T/\delta))) \\
            &\le (1/2) \psE*{\|x_{(\ell - 1)t} - x^*_{(\ell - 1)t}\|^2} + O(t^2 \sigma^2(d + \log(T/\delta))).
        \end{align*}
        where in the last step we used the Pythagorean Theorem.
    
    \noindent\textbf{Case 2}: Observable error bounded by unobservable error, that is
        \begin{equation}
            \psE*{\norm{\Pi q}^2} \le \frac{1}{10\rho^2}\psE*{\norm{\Pi^{\perp} q}^2} + O(E_{\mathsf{noise}}\rho^4/\kappa)   \label{eq:usecase2}
        \end{equation}
        where $q \triangleq x_{(\ell - 1)t} - x^*_{(\ell - 1)t}$. Then we can argue that the error at time $\ell t$ is a small fraction of the error at time $(\ell - 1)t$ as follows. As in Case 1, the unobservable error at time $(\ell - 1)t$ has decayed. As discussed above, the observable error at time $\ell t$ might even be bigger than the observable error at time $(\ell - 1)t$, but it can't be much bigger because of uniform stability. On the other hand, the latter is bounded by a small fraction of the \emph{unobservable} error at time $(\ell - 1)t$. This lets us conclude that the overall error at time $\ell t$ is bounded even by the unobservable error at time $(\ell - 1)t$. Formally,
        \begin{align*}
            \MoveEqLeft \psE{\|x_{\ell t} - x^*_{\ell t}\|^2} 
            \\
            &\le \psE*{4 \|A^t\Pi(x_{(\ell - 1)t} - x^*_{(\ell - 1)t})\|^2  + (1/10000) \|\Pi^{\perp} (x_{(\ell - 1)t} - x^*_{(\ell - 1)t})\|^2} + O(t^2\sigma^2(d + \log(T/\delta))) \\
            &\le 4 \rho^2 \|\Pi(x_{(\ell - 1)t} - x^*_{(\ell - 1)t})\|^2  + (1/10000) \|\Pi^{\perp} (x_{(\ell - 1)t} - x^*_{(\ell - 1)t})\|^2 + O(t^2\sigma^2(d + \log(T/\delta))) \\
            &\le \frac{1}{2}\psE*{\norm{q}^2} + O(\rho^6 E_{\mathsf{noise}} t/\kappa).
        \end{align*}
        where $C < 1$ is an absolute constant and in the last step we used \eqref{eq:usecase2} and absorbed $O(t^2\sigma^2(d + \log(T/\delta)))$ into $O(\rho^6 E_{\mathsf{noise}}t/\kappa)$.
        Since we showed the desired conclusion in both cases, the proof is complete.
\end{proof}

\subsection{Confidence Band Recovery}

Here we note that as a consequence of Theorem~\ref{thm:main_TlogT}, our estimate $\brc{\psE{x_i}}$ of the trajectory is actually pointwise $O(\log T)$-close to the true trajectory at all time steps, except for a $o(1)$ proportion of time close to time zero. This will be useful in the next section when we use this as a warm start for a second sum-of-squares relaxation that will achieve excess risk \emph{doubly logarithmic} in $T$.

\begin{corollary}\label{cor:warmstart}
    Let pseudoexpectation $\psE{\cdot}$ be the solution to Program~\ref{program:sos}, assuming it is feasible. Then provided the event of Lemma~\ref{lem:anticonc} holds, for all $0\le i < T$ the estimates $\brc{\psE{x_i}}$ satisfy
    \begin{equation}
        \norm{\psE{x_i} - x^*_i} \lesssim \frac{\rho}{2^{\ell(i)/2}}R\left(\sqrt{d} + \sqrt{\log(1/\delta)}\right) + O(\rho^4 E^{1/2}_{\mathsf{noise}}t^{1/2}/\kappa^{1/2})
    \end{equation}
\end{corollary}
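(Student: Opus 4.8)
The plan is to iterate the geometric error-decay recursion of Lemma~\ref{lem:error_decay} down to the first window, control the base case with the program's boundedness constraint, handle timesteps in the interior of a window by unrolling the linear dynamics, and finally convert the resulting second-moment bound into the claimed first-moment bound via SoS Cauchy--Schwarz (Lemma~\ref{lem:psE-CS}). Concretely, writing $a_\ell \triangleq \psE{\norm{x_{\ell t} - x^*_{\ell t}}^2}$, Lemma~\ref{lem:error_decay} gives $a_\ell \le \tfrac12 a_{\ell-1} + O(\rho^6 E_{\mathsf{noise}} t/\kappa)$, so unrolling and summing the geometric series yields $a_\ell \le 2^{-\ell} a_0 + O(\rho^6 E_{\mathsf{noise}} t/\kappa)$. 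For the base case, Constraint~\ref{constraint:bound} forces $\psE{\norm{x_0}^2} \le R^2(d + O(\log(1/\delta)))$, while standard Gaussian concentration (Lemma~\ref{lem:normbound}, holding with high probability over $x^*_0 \sim \calN(0,R^2\Id)$, as already used in Lemma~\ref{lem:feasible-gt}) gives $\norm{x^*_0}^2 \le R^2(d + O(\log(1/\delta)))$; hence $a_0 \le 2\psE{\norm{x_0}^2} + 2\norm{x^*_0}^2 = O(R^2(d + \log(1/\delta)))$.

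Next I would pass from window boundaries to an arbitrary timestep $i = \ell t + j$ with $0 \le j < t$. By Fact~\ref{fact:unroll}, $x_i - x^*_i = A^j(x_{\ell t} - x^*_{\ell t}) + \sum_{s=1}^j A^{j-s}(w_{\ell t+s} - w^*_{\ell t+s})$ as a polynomial identity in the program variables. Applying the SoS triangle inequality, uniform stability $\norm{A^j}\le\rho$, Constraint~\ref{item:dynamics_noise} to bound each $\norm{w_i}^2$, and Gaussian concentration to bound each $\norm{w^*_i}^2$, we get $\psE{\norm{x_i - x^*_i}^2} \lesssim \rho^2 a_{\ell(i)} + t^2\rho^2\sigma^2(d + \log(T/\delta))$. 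Substituting the bound on $a_{\ell(i)}$ from the previous paragraph gives
\[ \psE{\norm{x_i - x^*_i}^2} \lesssim \frac{\rho^2}{2^{\ell(i)}}\,R^2(d + \log(1/\delta)) + \frac{\rho^8 E_{\mathsf{noise}} t}{\kappa} + t^2\rho^2\sigma^2(d + \log(T/\delta)). \]

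Finally, Lemma~\ref{lem:psE-CS} gives $\norm{\psE{x_i} - x^*_i}^2 \le \psE{\norm{x_i - x^*_i}^2}$, so taking square roots and using $\sqrt{a+b}\le\sqrt a + \sqrt b$ produces the asserted bound together with an extra additive term $t\rho\sigma\sqrt{d+\log(T/\delta)}$. This term is absorbed into $O(\rho^4 E_{\mathsf{noise}}^{1/2}t^{1/2}/\kappa^{1/2})$: since $E_{\mathsf{noise}} \ge t\rho^2\norm{B}^2\sigma^2(d+\log(T/\delta))$ and, by Assumptions~\ref{assume:obs}--\ref{assume:uniform}, $\kappa \le \lambda_{\max}(\calO_s)/s \le \rho^2\norm{B}^2$, one checks $\rho^8 E_{\mathsf{noise}} t/\kappa \ge t^2\rho^2\sigma^2(d+\log(T/\delta))$, as needed. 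This yields the claimed inequality.

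The main obstacle is essentially bookkeeping: the statement follows almost immediately from Lemma~\ref{lem:error_decay}, and the only mildly delicate point is verifying that the several distinct ``noise-floor'' contributions --- the geometric-series tail of the decay recursion and the within-window accumulation of process noise --- all collapse into the single term $O(\rho^4 E_{\mathsf{noise}}^{1/2}t^{1/2}/\kappa^{1/2})$, which is where the elementary bound $\kappa \le \rho^2\norm{B}^2$ enters. One should also confirm that the auxiliary concentration bounds on $x^*_0$ and $\brc{w^*_i}$ hold on the same high-probability event already invoked in the hypothesis (feasibility plus the event of Lemma~\ref{lem:anticonc}), so that the final bound holds deterministically on that event.
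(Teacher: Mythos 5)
Your proposal is correct and follows essentially the same route as the paper: unroll the recursion of Lemma~\ref{lem:error_decay} to bound $\psE{\norm{x_{\ell t} - x^*_{\ell t}}^2}$ at window boundaries, use Constraint~\ref{constraint:bound} plus Gaussian concentration to control the base case, unroll the dynamics $t$ steps to reach an arbitrary $i$, and finish with Lemma~\ref{lem:psE-CS}. The only cosmetic difference is the order of operations: the paper applies SoS Cauchy--Schwarz at the window boundary and then manipulates the honest vectors $\psE{x_{\ell t}}$ and $\psE{w_j}$, while you stay in pseudoexpectation-squared land until the very end and apply psE-CS once; the two give the same bound. One genuine contribution of your writeup is the explicit verification that the within-window accumulation term $t\rho\sigma\sqrt{d+\log(T/\delta)}$ is dominated by $O(\rho^4 E_{\mathsf{noise}}^{1/2}t^{1/2}/\kappa^{1/2})$ via the observation $\kappa \le \lambda_{\max}(\calO_s)/s \le \rho^2\norm{B}^2$ (and $\rho\ge 1$); the paper silently drops that term in its final display without explanation, so your check closes a small gap in the published argument.
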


\begin{proof}
    By unrolling Lemma~\ref{lem:error_decay} and using the fact that $\psE{\norm{x_0 - x^*_0}^2} \le 2R^2(d + O(\log(1/\delta)))$, we conclude that
    \begin{equation}
        \psE{\|x_{\ell t} - x^*_{\ell t}\|^2} \le \frac{1}{2}\psE{\|x_{(\ell - 1) t} - x^*_{(\ell - 1) t}\|^2} + O(\rho^6 E_{\mathsf{noise}}t/\kappa) \le \frac{1}{2^{\ell - 1}}\cdot 2R^2(d + O(\log(1/\delta))) + O(\rho^6 E_{\mathsf{noise}}t/\kappa),
    \end{equation}
    so by Lemma~\ref{lem:psE-CS}, we have that
    \begin{equation}
        \|\psE{x_{\ell t}} - x^*_{\ell t}\|^2 \le 4R^2(d + O(\log(1/\delta)))/2^{\ell} + O(\rho^6 E_{\mathsf{noise}}t/\kappa).
    \end{equation}
    It remains to bound the error on the iterates \emph{within} each window. For any $0 < i < t$, Constraint~\ref{item:dynamics} and Fact~\ref{fact:unroll} imply that 
    \begin{align}
        \MoveEqLeft\norm*{\psE{x_{\ell t + i}} - x^*_{\ell t +i}} \\
        &= \norm*{A^i (\psE{x_{\ell t + i}} - x^*_{\ell t + i})  + \sum^i_{j=1} A^{i-j} (\psE{w_{\ell t + j}} - w^*_{\ell t + j})} \\
        &\le \rho\norm{\psE{x_{\ell t +i}} - x^*_{\ell t +i}} + \sum^i_{j=1} \rho\norm*{\psE{w_{\ell t + j}} - w^*_{\ell t+j}} \\
        &\lesssim \rho \left(\frac{1}{2^{\ell/2}}R\left(\sqrt{d} + \sqrt{\log(1/\delta)}\right) + O(\rho^3 E^{1/2}_{\mathsf{noise}}/\kappa^{1/2})\right) + t\rho\left(\sigma\sqrt{d} + \sigma\sqrt{\log(T/\delta)}\right). \\
        &\lesssim \rho \left(\frac{1}{2^{\ell/2}}R\left(\sqrt{d} + \sqrt{\log(1/\delta)}\right) + O(\rho^3 E^{1/2}_{\mathsf{noise}}/\kappa^{1/2})\right)
    \end{align}
    as desired.
\end{proof}
\newcommand{\concprob}{\delta_1}

\section{Achieving \texorpdfstring{$\log\log T$}{loglog T} Excess Risk}
\label{sec:TloglogT}


In this section, we complete the proof of our main theorem by showing how to refine the poly-logarithmic excess risk guarantee of the previous section to get doubly logarithmic excess risk. At a very high level, the bottleneck in the analysis from the previous section arises when we argue that for most windows, the uncorrupted time steps subsample the observability matrix. The issue is that for any window, with probability $\delta$ the error from matrix concentration corresponding to the right-hand side of \eqref{eq:deviation} will scale worse than $\sqrt{\log(d/\delta)}$. In the previous section, we simply took $\delta = O(1/T)$ so that by a union bound, with high probability this will not happen for any window, hence our poly-logarithmic excess risk bound.

In this section, we instead take $\delta = O(1/\polylog T)$ so that matrix concentration will now fail on $O(1/\polylog T)$ fraction of the windows, but over the windows where matrix concentration holds, the risk scales with $\log\log T$ instead of $\log T$. To handle the ``bad'' windows where matrix concentration fails, we will exploit the fact that by Corollary~\ref{cor:warmstart}, the estimate from the previous section is pointwise $O(\polylog T)$-close to the true trajectory (i.e. it forms a high probability confidence band). By folding this into a new sum-of-squares relaxation and introducing additional indicator variables corresponding to the events that matrix concentration holds in particular windows, we are able to bound the contribution of the bad windows to the risk by $O(\polylog(T)/\polylog(T)) = O(1)$. Formally, we show the following bound:

\begin{theorem}\label{thm:main_TloglogT}
    For any $\eta \le 0.49$, there is a polynomial-time algorithm that, given the corrupted observations $\brc{\wt{y}_i}$, outputs a trajectory $\brc{\wh{x}_i}$ and steps $\brc{\wh{w}_i}$ for which $\wh{x}_i = A\wh{x}_{i - 1} +\wh{w}_i$ for every $i\in [T]$, and for which
    \begin{multline}
        \avgT\left(a^*_i \norm{B\wh{x}_i - y_i}^2/\tau^2 + \norm{\wh{w}_i}^2/\sigma^2\right) + \frac{\|\wh{x}_0\|^2}{R^2 T} - \OPT \\
        \lesssim \poly(s,\alpha,1/\kappa,\rho,\|B\|,\log d,\log(1/\delta)) \brk*{ \eta\log(1/\eta)\left(m + d(\sigma^2/\tau^2)\log\log T\right) + \eta^{1/2}(R^2 d/\tau^2)T^{-0.49}}
    \end{multline}
    provided $T \ge \Max{\wt{\Omega}(\eta^{-1}\kappa^{-2}\rho^{12}\norm{B}^4\log d)}{\Omega(t^2\log^k(1/\delta))})$.
\end{theorem}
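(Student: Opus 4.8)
The plan is to run a second sum-of-squares relaxation, Program~\ref{program:sosv2}, on windows of a much shorter length $t' = \wt{\Theta}(\poly(s,\alpha,1/\kappa,\rho,\|B\|)\cdot\log\log T)$, reusing the whole machinery of Section~\ref{sec:logT} but with two new ingredients: (i) the pointwise confidence band from Corollary~\ref{cor:warmstart}, incorporated as a hard constraint $\|x_i - \wh x^{(1)}_i\|^2 \le \poly(\ldots)\log^{c}T$ on the trajectory variables, where $\wh x^{(1)}$ is the output of Program~\ref{program:sos} and $c$ depends on the system parameters; and (ii) a Boolean variable $b_\ell$ for each window $\ell$ guessing whether the subsampling event of Lemma~\ref{lem:matrix_conc} held there, so that the analogue of Constraint~\ref{item:subsample} is imposed only in the $b_\ell$-multiplied form, together with the global constraint $\sum_\ell(1-b_\ell)\lesssim\delta_1(T/t')$ with $\delta_1 = 1/\polylog T$. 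This $\delta_1$ is the slack we allow: with $\delta = \delta_1$, Lemma~\ref{lem:matrix_conc} fails in any fixed window only with probability $\delta_1$, and since these events depend on disjoint blocks of the independent $a^*_i$'s, a Chernoff bound shows at most an $O(\delta_1)$ fraction of windows are ``bad'' with high probability (this is where the lower bound on $T$ enters). Feasibility then follows exactly as in Lemma~\ref{lem:feasible} and Lemma~\ref{lem:feasible-gt}: the oracle Kalman smoother, with $b_\ell$ set to the true indicator of the Lemma~\ref{lem:matrix_conc} event, satisfies all constraints with probability $1-\delta$, using additionally that $\wh x^{(1)}$ lies in its own confidence band (Corollary~\ref{cor:warmstart}). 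Since $\log(1/\delta_1) = O(\log\log T)$, keeping $t'$ at the $\log\log T$ scale is consistent.

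Next comes the outer reduction, the analogue of Lemmas~\ref{lem:kkm} and \ref{lem:outer}. Using only averaged bounds on the process and observation noise (the finer analysis needs nothing pointwise) and SoS Cauchy--Schwarz, bounding the excess risk reduces to controlling $\psE*{\tfrac1T\sum_i(1-a_i)\|B(x_i-x^*_i)\|^2}$; unrolling the dynamics inside each window, this splits into a noise term of size $\poly(\ldots)\eta t'(d\sigma^2 + \tau^2 m)$ --- the origin of the $d(\sigma^2/\tau^2)$ factor and, through the tuning of $t'$, the $\log\log T$ factor --- plus $\tfrac1T\sum_{\ell,j}(1-a_{\ell t'+j})\|BA^j(x_{\ell t'}-x^*_{\ell t'})\|^2$. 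I would split this last quantity on the value of $b_\ell$. On windows with $b_\ell = 1$ the $b_\ell$-multiplied subsampling constraint applies, so, as in Lemma~\ref{lem:outer}, the contribution is $\lesssim \eta\rho^2(\alpha + \|B\|^2\sqrt{\log(dT/t'\delta_1)/t'})\cdot\tfrac{1}{T/t'}\sum_\ell\psE*{\|x_{\ell t'}-x^*_{\ell t'}\|^2}$. On windows with $b_\ell = 0$, of which there are $\lesssim\delta_1(T/t')$, the confidence-band constraint together with Corollary~\ref{cor:warmstart} gives $\|B(x_i-x^*_i)\|^2 \le \poly(\ldots)\log^{O(c)}T$, so their total contribution is $\lesssim\poly(\ldots)\delta_1\log^{O(c)}T$, which is driven below the target $\eta^{1/2}(R^2 d/\tau^2)T^{-0.49}$ by taking $1/\delta_1$ a large enough power of $\log T$ (costing only a $\poly(\log\log T)$ factor in $t'$, hence in the final bound); the residual $T^{-0.49}$ comes from the $R^2 d$-sized burn-in of the confidence band near time zero and a Bernstein fluctuation in the corruption count.

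It remains to bound $\tfrac{1}{T/t'}\sum_\ell\psE*{\|x_{\ell t'}-x^*_{\ell t'}\|^2}$, which is Lemma~\ref{lem:error-first-iterate-sos2}. The per-window building blocks transfer unchanged --- the unobservable-subspace contraction (Lemma~\ref{lem:unobservable-decay}), the anticoncentration bound (Lemma~\ref{lem:anticonc}), and the win--win case analysis (Lemmas~\ref{lem:observable-case-analysis} and \ref{lem:error_decay}) --- so where $b_\ell = 1$ they combine into a contraction $\psE*{b_\ell\|x_{(\ell+1)t'}-x^*_{(\ell+1)t'}\|^2}\le\tfrac12\psE*{b_\ell\|x_{\ell t'}-x^*_{\ell t'}\|^2} + \poly(\ldots)t'(d\sigma^2+\tau^2 m)$, while where $b_\ell = 0$ uniform stability gives only the crude growth bound $\psE*{\|x_{(\ell+1)t'}-x^*_{(\ell+1)t'}\|^2}\le 4\rho^2\psE*{\|x_{\ell t'}-x^*_{\ell t'}\|^2}+\poly(\ldots)$, and the confidence-band constraint caps $\psE*{\|x_{\ell t'}-x^*_{\ell t'}\|^2}\le\poly(\ldots)\log^{O(c)}T$ for every $\ell$. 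An amortization finishes the bound: after each bad window the error is at most $\poly(\ldots)\log^{O(c)}T$, then halves at every subsequent good window, so it returns to the $\poly(t')$ scale within $O(\log\polylog T)=O(\log\log\log T)$ windows; since bad windows occupy only an $O(\delta_1)$ fraction, the windows in these recovery intervals occupy an $o(1)$ fraction, and together with the $O(\log(R^2 d))$ windows of burn-in near time zero the average is $\lesssim\poly(\ldots)t'(d\sigma^2+\tau^2 m)$ plus the $T^{-0.49}$ term. Plugging this into the outer reduction and optimizing $t'$ --- balancing the $\eta t' d\sigma^2/\tau^2$ term against the remaining $\eta(m + \alpha\cdot\mathrm{err})$ terms, which is where the $\log(1/\eta)$ arises --- yields the claimed bound, with the $\poly(\ldots)$ prefactor absorbing the system constants and $\log(1/\delta)$. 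I expect Lemma~\ref{lem:error-first-iterate-sos2} to be the main obstacle: ``good'' versus ``bad'' is no longer a deterministic partition of windows but is encoded by the program variables $b_\ell$, so the contraction, the crude growth bound, and especially the recovery-interval amortization must all be carried out inside the pseudoexpectation in an SoS-friendly way, using only the global constraint $\sum_\ell(1-b_\ell)\lesssim\delta_1(T/t')$ rather than the true, unknown set of bad windows.
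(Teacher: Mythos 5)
Your overall architecture matches the paper's: shorter windows tuned to $\log\log T$, a confidence-band constraint fed from the Program~\ref{program:sos} output, Boolean $b_\ell$ indicators with the global sparsity constraint, and an outer reduction to the per-window first-iterate errors. But there are two places where you diverge from what the paper actually does, and the first is a genuine gap that you yourself flag.

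First, the amortization. You propose to track the error forward through a string of good windows after each bad one and argue it ``returns to the $\poly(t')$ scale within $O(\log\log\log T)$ windows.'' This cannot be carried out as a sum-of-squares proof: the partition into good and bad windows is encoded by program variables, so you cannot case on a particular assignment of $\brc{b_\ell}$ and then run a telescoping argument over a ``recovery interval.'' The paper avoids this entirely with a rearrangement trick (Lemma~\ref{lem:error-first-iterate-sos2}): write $1 \le b_{\ell-1}b^*_{\ell-1} + \frac{3}{2}(1-b_{\ell-1}) + \frac{3}{2}(1-b^*_{\ell-1})$ and split the average $\frac{1}{T/t}\sum_\ell \psE{\|x_{\ell t}-x^*_{\ell t}\|^2}$ accordingly. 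For the $b_{\ell-1}b^*_{\ell-1}$ piece, the contraction lemma (Lemma~\ref{lem:error_decay2}) gives $\psE{b_{\ell-1}b^*_{\ell-1}\|x_{\ell t}-x^*_{\ell t}\|^2} \le \frac12\psE{\|x_{(\ell-1)t}-x^*_{(\ell-1)t}\|^2} + \text{noise}$ after dropping $b_{\ell-1}b^*_{\ell-1}\le 1$ on the right. Summing over $\ell$, the right side is $\frac12$ times essentially the same averaged quantity (shifted by one index), so one rearranges and the $\frac12$ gap closes the recursion in a single inequality; no per-window growth bound or recovery-interval counting is needed. For the rare pieces $(1-b_{\ell-1})$ and $(1-b^*_{\ell-1})$, the confidence-band constraint bounds each summand by $\epsgeo + \rho^2 R^2(d+\log(1/\delta))/2^\ell$ and the sparsity of bad windows gives total contribution $\lesssim \delta_1\epsgeo + \rho^2R^2(d+\log(1/\delta))/(T/t)$, which is the source of the $T^{-0.49}$ boundary term. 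You do not need the crude $4\rho^2$ growth bound at all.

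Second, the provenance of the $\log(1/\eta)$ factor. You attribute it to the optimization of $t'$ against the noise terms, but in the paper it comes from a different mechanism: the averaged-noise constraints (Constraints~\ref{constraint2:totalnoise} and~\ref{constraint2:measurements_noise_corrupted}) use $k$-th moment bounds and H\"older's inequality to get contributions scaling as $k\eta^{1-2/k}$, and then one sets $k = \lfloor 2\log(1/\eta)\rfloor$ so that $k\eta^{1-2/k} = 2e\eta\log(1/\eta)$. This moment truncation is also what forces the $\Omega(t^2\log^k(1/\delta))$ requirement on $T$ in the theorem statement, which your proposal does not account for. Without this ingredient you would only get an $\eta^{1-o(1)}$-type dependence, not $\eta\log(1/\eta)$.
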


\subsection{Sum-of-Squares Relaxation}

In this section we define a new sum-of-squares program to exploit Corollary~\ref{cor:warmstart}. Recall that the estimate from the previous section, call it $\brc{x'_i}$, satisfies
\begin{equation}
    \norm{x'_i - x^*_i}^2 \lesssim \frac{\rho^2}{2^{\ell(i)}}R^2(d + \log(1/\delta)) + \epsgeo,
\end{equation}
where $\epsgeo$ satisfies
\begin{equation}
    \epsgeo \triangleq O\left(\frac{\rho^8 t_{\mathsf{pre}}}{\kappa}\cdot \brk*{\tau^2\left(m + \log(T/\delta)\right) + \sigma^2 \left(d +\log(T/\delta)\right)\cdot \rho^2 t_{\mathsf{pre}} \norm{B}^2}\right) \label{eq:eps_geo2}
\end{equation}
for $t_{\mathsf{pre}}\triangleq \Max{s}{\wt{\Theta}(\kappa^{-2}\rho^{12}\norm{B}^4\log(dT/\delta))}$ from \eqref{eq:tdef} denoting the window size parameter from the previous section. 
Note that $t_{\mathsf{pre}}$ scales logarithmically in $T$ and $\epsgeo$ therefore scales as $\log^3 T$. We now define our program:

\begin{program}\label{program:sosv2} 
	Let $\brc{y_i}$ be the observations we are given, and let $\delta_1 > 0$ and window size $t\in\mathbb{N}$ be parameters to be tuned later. The program also takes in as parameters a sequence $\brc{x'_0,\ldots,x'_{T-1}}$ of vectors in $\R^d$, corresponding to an estimate of the trajectory.
	
	The program variables are $\brc{x_i}$, $\brc{w_i}$, $\brc{v_i}$, and $\brc{a_i}$ as before, as well as Boolean variables $\brc{b_{\ell}}^{T/t-1}_{\ell = 0}$ (indicators for windows where matrix concentration holds), and the constraints are that for all $0 \le i < T$ and $0 \le \ell < T/t$:
	
	\vspace{0.3cm}
	\noindent\emph{Constraints from Program~\ref{program:sos}}
    \begin{enumerate}
    	\item $a_i^2 = a_i$\label{constraint2:a_boolean}
    	\item $x_i = Ax_{i-1} + w_i$ \label{constraint2:dynamics}
      	\item $a_i(y_i - Bx_{i-1} - v_i) = 0$ \label{constraint2:measurements}
      	\item $\sum_{i=0}^{T-1} a_i \ge (1-1.01\eta)T$ \label{constraint2:many_inliers}
      	\item $\norm{x_0}^2 \le R^2(d + O(\log(1/\delta)))$ \label{constraint2:bound}
    \end{enumerate}
    
    \noindent\emph{Random windows where subsampling succeeds/fails}
    \begin{enumerate}
  	      \setcounter{enumi}{5}
      	\item $b_{\ell}^2 = b_{\ell}$ \label{constraint2:b_boolean}
      	\item $\sum_{\ell=0}^{T/t-1} b_{\ell} \ge (1 - \concprob)\cdot T/t$ \label{constraint2:many_b}
      	\item $\sum_{i=0}^{T-1} (1 - b_{\ell(i)})(1 - a_i) \le \eta\delta_1$\label{constraint2:many_ab}
    \end{enumerate}
    \noindent\emph{Confidence band given by $\brc{x'_i}$}
    \begin{enumerate}
        \setcounter{enumi}{8}
      	\item $\norm{x_i - x'_i}^2 \le \epsgeo + O(\rho^2 R^2(d + \log(1/\delta))/2^{\ell(i)})$ \label{constraint2:funnel}
    \end{enumerate}
    \noindent\emph{Process and observation noise bounded on average}
    \begin{enumerate}
        \setcounter{enumi}{9}
        \item $\frac{1}{T}\sum_{\ell<T/\t, j<\t} (1 - a_{\ell t + j}) \norm{\sum^j_{i=1}BA^{j - i}w_{\ell\t+i}}^2 \le O\left(\eta^{1-2/k}\t\alpha\sigma^2\rho^2 m k\right)$ \label{constraint2:totalnoise}
      	\item $\frac{1}{T}\sum_{\ell<T/\t} \norm{\sum^t_{i=1}A^{t-i}w_{\ell\t+i}}^2 \le O\left(\sigma^2\rho^2 d\right)$ \label{constraint2:totalnoise_dumb} 
        \item $\frac{1}{T}\sum_{i = 0}^{T-1} \norm{v_i}^2 = O(\tau^2(m + \log(2/\delta)/T))$\label{constraint2:measurements_noise}
        \item $\sum_{i=0}^{T-1} (1 - a_i)\norm{v_i}^2 \le O(mk\tau^2\cdot\eta^{1-2/k})$ \label{constraint2:measurements_noise_corrupted}
    \end{enumerate}
    \noindent\emph{Subsampling with confidence $1 - \delta_1$ in each window}
    \begin{enumerate}
        \setcounter{enumi}{13}
        \item $b_{\ell} \sum^{\t- 1}_{j=0} (1 - a_{\ell t + j}) (A^j)^{\top} B^{\top} B A^j \preceq b_{\ell}\cdot \left(\eta\cdot \calO_{\t} + O\left(\rho^2\norm{B}^2\sqrt{\t\log(d/\delta_1)}\right)\cdot \Id\right)$ \label{constraint2:subsample}
	\end{enumerate}
	The program objective is to minimize
	\begin{equation}
		\min\psE*{\frac{1}{T} \sum^{T-1}_{i=0} \left(a_i \norm{Bx_i - y_i}^2 / \tau^2 + \norm{w_i}^2 / \sigma^2\right)}\label{eq:obj2}
	\end{equation}
	over degree-$4$ pseudoexpectations satisfying the above constraints. 
\end{program}

As Program~\ref{program:sosv2} is considerably more involved than Program~\ref{program:sos} (note in particular the new Constraints~\ref{constraint2:b_boolean}-\ref{constraint2:subsample}), here we spend some time to highlight the key differences.
First notice that we now have an additional set of Boolean variables $\brc{b_{\ell}}$ indicating the windows whose uncorrupted time steps the program deems to have properly subsampled the observability matrix. In Constraint~\ref{constraint2:subsample} we choose the error in subsampling to scale with $\sqrt{\log(d/\delta_1)}$ for a parameter $\delta_1$ which we will take to scale logarithmically in $T$ so that $1 - \delta_1$ fraction of windows will incur at most this level of subsampling error (Constraint~\ref{constraint2:many_b}). We will additionally insist that the fraction of time steps which are simultaneously corrupted and belong to a window where matrix concentration fails is at most roughly $\eta\delta_1$ (Constraint~\ref{constraint2:many_ab}).

Also note that we have introduced an additional set of parameters, namely the sequence $\brc{x'_0,\ldots,x'_{T-1}}$ which we will ultimately take to be the trajectory estimate from the algorithm in Section~\ref{sec:logT}. In Constraint~\ref{constraint2:funnel} we encode the fact that this estimate is pointwise $\log T$-close to the true trajectory by Corollary~\ref{cor:warmstart}.

Finally, in lieu of constraining $\norm{v_i}^2,\norm{w_i}^2$ to be bounded for all $i$, we simply constrain them to be small on average in an appropriate sense captured by Constraints~\ref{constraint2:totalnoise}, \ref{constraint2:totalnoise_dumb}, \ref{constraint2:measurements_noise}, \ref{constraint2:measurements_noise_corrupted}. These four constraints may appear somewhat \emph{ad hoc}, but at a high level the reason for their inclusion is that in various parts in the proof of Theorem~\ref{thm:main_TloglogT}, we need to upper bound some sum over noise terms across all time steps or all time windows. In Section~\ref{sec:logT} we handled these sums by constraining their individual summands to be bounded, but this incurred a factor of $\log T$ through the union bound. Here we are being more careful and simply constraining the full sums to be bounded, leading to constraints in Program~\ref{program:sosv2} that are somewhat harder to parse but yield better error bounds.

We give a full description of our algorithm in Algorithm~\ref{alg:sos} below.

\begin{algorithm2e}
\DontPrintSemicolon
\caption{\textsc{SoSKalman}($\brc{y_i},\delta$)}
\label{alg:sos}
	\KwIn{Observations $y_0,\ldots,y_{T-1}$, failure probability $\delta$}
	\KwOut{Estimate $\wh{x}_0,\ldots,\wh{x}_{T-1}$ for the trajectory}
	    $\delta_1 \gets \Theta(\log(1/\delta)/\log^3 T)$.\;
	    $t_{\mathsf{pre}} \gets \Max{s}{\wt{\Theta}(\kappa^{-2}\rho^{12}\norm{B}^4\log(dT/\delta)))}$.\;
	    $t \gets \Max{s}{\wt{\Theta}(\kappa^{-2}\rho^{12}\norm{B}^4\log(d/\delta_1)))}$.\;
	    Let $\wt{\mathbb{E}}_{\mathsf{pre}}[\cdot]$ be the pseudoexpectation solving Program~\ref{program:sos} with window size $t_{\mathsf{pre}}$.\;
	    $x'_i \gets \wt{\mathbb{E}}_{\mathsf{pre}}[x_i]$ for every $0 \le i < T$.\;
        Let $\psE{\cdot}$ be the pseudoexpectation solving Program~\ref{program:sosv2} with window size $t$ and sequence $\brc{x'_0,\ldots,x'_{T-1}}$.\;
        $\wh{x}_i\gets \psE{x_i}$ for every $0 \le i < T$.\;
        Output $\brc{\wh{x}_0,\ldots,\wh{x}_{T-1}}$.\;
\end{algorithm2e}

\subsection{Feasibility of Oracle Kalman Smoother}

The goal for the this subsection will be to prove that Program~\ref{program:sosv2} has a feasible solution corresponding the output of the oracle Kalman smoother. 
To show this, it suffices by the argument of Lemma~\ref{lem:feasible} to show that the ground truth $x^*$ is feasible with high probability:

\begin{lemma}\label{lem:feasible2}
    With probability at least $1 - \delta$, the ground truth $(x^*_i, w^*_i, v^*_i, a^*_i, b^*_{\ell})$ is feasible for Program~\ref{program:sosv2}
    provided $T \ge \Omega(\Max{\brc*{(t/\eta\delta_1)\log(1/\delta)}}{\brc*{t^2\log^k(1/\delta)}})$.
\end{lemma}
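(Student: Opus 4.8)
The plan is to follow the reduction already set up in Lemma~\ref{lem:feasible}: since the posterior of the trajectory given the clean observations is Gaussian with mean equal to the oracle Kalman smoother, Lemma~\ref{lem:gaussian-convex} shows that feasibility of the smoother follows from feasibility of the \emph{ground truth}, so it suffices to produce values of the new variables $b_\ell$ for which the assignment $x_i = x^*_i$, $w_i = w^*_i$, $a_i = a^*_i$, $v_i = v^*_i$ satisfies every constraint of Program~\ref{program:sosv2} with probability $1-\delta$. We take $b^*_\ell$ to be the indicator of the event that the Matrix Hoeffding conclusion of Lemma~\ref{lem:matrix_conc} holds in the $\ell$-th window (equivalently, that the psd bound in Constraint~\ref{constraint2:subsample} holds for the ground truth), at a confidence $1-\delta_1'$ for a $\delta_1' = \poly(\delta_1,\eta)$, much smaller than $\delta_1$, to be fixed below. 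With this choice Constraint~\ref{constraint2:subsample} is satisfied for free (on $\{b^*_\ell=1\}$ the psd bound holds; on $\{b^*_\ell=0\}$ both sides vanish), as are the equality/Boolean constraints~\ref{constraint2:a_boolean}, \ref{constraint2:dynamics}, \ref{constraint2:measurements}, \ref{constraint2:b_boolean}; and Constraint~\ref{constraint2:funnel} is exactly the confidence-band bound of Corollary~\ref{cor:warmstart} for the estimate $\{x'_i\}$ produced by the algorithm of Section~\ref{sec:logT}.

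For the rest, Constraints~\ref{constraint2:many_inliers} (a Chernoff bound on $\sum_i a^*_i$, needing $T = \Omega(\eta^{-1}\log(1/\delta))$), \ref{constraint2:bound} (Gaussian norm concentration for $x^*_0$), \ref{constraint2:totalnoise_dumb} and \ref{constraint2:measurements_noise} all follow from standard scalar/Gaussian concentration and Bernstein's inequality, exactly as in Lemma~\ref{lem:feasible-gt}. (For \ref{constraint2:totalnoise_dumb} one uses $\sum_{i\le t}A^{t-i}w^*_{\ell t+i} = x^*_{(\ell+1)t} - A^t x^*_{\ell t}$, whose squared norm has mean $\Theta(\sigma^2\rho^2 d t)$, so the $1/T$-normalized sum of the $T/t$ independent window contributions concentrates at $\Theta(\sigma^2\rho^2 d)$; similarly for \ref{constraint2:measurements_noise}.) Constraints~\ref{constraint2:totalnoise} and \ref{constraint2:measurements_noise_corrupted} are different in kind: the summands (such as $\|v^*_i\|^2$ or $\|\sum_i BA^{j-i}w^*_{\ell t+i}\|^2$) are degree-two polynomials of Gaussians, hence only sub-exponential, \emph{and} the sums are over the random set of corrupted indices. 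For these we apply a $k$-th moment (Rosenthal-type) bound rather than Bernstein; the factors $\eta^{1-2/k}$, $k$, and $mk$ appearing in the stated right-hand sides are precisely what such a bound yields, and the hypothesis $T = \Omega(t^2\log^k(1/\delta))$ is exactly what makes the resulting deviation terms of lower order.

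The crux, and the step I expect to be the main obstacle, is Constraints~\ref{constraint2:many_b} and \ref{constraint2:many_ab}. Because the event defining $b^*_\ell$ depends only on $\{a^*_i : \ell t\le i<(\ell+1)t\}$, the $b^*_\ell$ are mutually independent with $\Pr[b^*_\ell = 0]\le\delta_1'\ll\delta_1$; hence $\sum_\ell(1-b^*_\ell)$ is a sum of $T/t$ independent Bernoulli's of mean $\le\delta_1' T/t$, and a Chernoff bound gives $\sum_\ell(1-b^*_\ell)\le\delta_1 T/t$ except with probability $\exp(-\Omega(\delta_1' T/t))\le\delta$, using $T/t = \Omega(\delta_1^{-1}\log(1/\delta))$, which is Constraint~\ref{constraint2:many_b}. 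For Constraint~\ref{constraint2:many_ab}, write $N_\ell\triangleq\sum_{j<t}(1-a^*_{\ell t+j})$; we must bound $\sum_\ell(1-b^*_\ell)N_\ell$. By Cauchy--Schwarz, $\mathbb{E}[(1-b^*_\ell)N_\ell]\le\sqrt{\Pr[b^*_\ell=0]}\,\sqrt{\mathbb{E}[N_\ell^2]}\le\sqrt{\delta_1'}\cdot O(\sqrt{\eta t}+\eta t)$, which can be driven below $\frac{1}{2}\eta\delta_1 t$ by taking $\delta_1'$ a sufficiently small polynomial in $\delta_1$ and $\eta$; since the variables $(1-b^*_\ell)N_\ell\in[0,t]$ are independent with total mean $\le\frac{1}{2}\eta\delta_1 T$, a Bernstein bound then gives $\sum_\ell(1-b^*_\ell)N_\ell\le\eta\delta_1 T$ except with probability $\exp(-\Omega(\eta\delta_1 T/t))\le\delta$ --- which is exactly where the hypothesis $T = \Omega((t/\eta\delta_1)\log(1/\delta))$ is used. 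The delicate point is that this forces $\delta_1'$, and hence the window size $t = \wt\Theta(\kappa^{-2}\rho^{12}\|B\|^4\log(d/\delta_1'))$, hence $t_{\mathsf{pre}}$ and $\epsgeo$, to depend on $\eta$; reconciling this with the stated bounds requires a case analysis on the corruption density $\eta t$ relative to the subsampling slack $\sqrt{t\log(d/\delta_1)}$ (in the ``sparse'' regime one instead argues directly that with high probability no window accumulates enough corruptions to have $b^*_\ell = 0$, so $\sum_\ell(1-b^*_\ell)N_\ell = 0$), and keeping these thresholds consistent so that Constraint~\ref{constraint2:subsample} stays automatic is the fiddliest part. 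Finally, a union bound over the $O(1)$ events above, a constant-factor adjustment of $\delta$, and the transfer from the ground truth to the oracle Kalman smoother via Lemma~\ref{lem:gaussian-convex} (exactly as in Lemma~\ref{lem:feasible}) complete the proof.
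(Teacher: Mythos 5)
Your overall plan matches the paper's: exhibit the ground truth $(x^*_i,w^*_i,v^*_i,a^*_i,b^*_\ell)$ as a feasible point and verify the constraints one by one. One organizational note: Lemma~\ref{lem:feasible2} only asserts feasibility of the \emph{ground truth}; the transfer to the oracle Kalman smoother via Lemma~\ref{lem:gaussian-convex} is invoked outside the lemma (in the sentence preceding it), so you do not need that step inside this proof. Your checklist of constraints and the observations that Constraint~\ref{constraint2:subsample} is automatic by construction of $b^*_\ell$, that Constraint~\ref{constraint2:funnel} is Corollary~\ref{cor:warmstart}, and that Constraints~\ref{constraint2:many_inliers}, \ref{constraint2:bound}, \ref{constraint2:measurements_noise} are standard concentration all line up with the paper (Lemmas~\ref{lem:bstar_conc}, \ref{lem:many_ab}, Corollary~\ref{cor:totalnoise_feasible}, Lemmas~\ref{lem:triplesum_dumb}, \ref{lem:feasible_corrupt_measurement_noise}, \ref{lem:normbound}).

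Two places where your route genuinely diverges from the paper. First, for Constraints~\ref{constraint2:totalnoise} and \ref{constraint2:measurements_noise_corrupted} you propose a Rosenthal-type $k$-th moment bound; the paper instead factors out the random index set by H\"older's inequality (which is what produces the $\eta^{1-2/k}$) and then controls $\frac{1}{T}\sum\|\cdot\|^k$ as a low-degree polynomial of Gaussians via hypercontractivity (Lemma~\ref{lem:polyconc}). These are close cousins and both land on the same $\eta^{1-2/k}mk$ shape, but the paper's H\"older decoupling is cleaner because it separates the $a^*_i$ randomness from the Gaussian randomness instead of handling both at once. Second, and more substantively, for Constraint~\ref{constraint2:many_ab}: the paper defines $b^*_\ell$ at the \emph{same} confidence $1-\delta_1/2$ used in Constraint~\ref{constraint2:subsample}, and Lemma~\ref{lem:many_ab} passes from $\Pr[b^*_\ell=0]\le\delta_1/2$ directly to $\E{Z_\ell}\le\eta\delta_1/2$ as if $(1-b^*_\ell)$ and the per-window corruption count were independent. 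They are not — the event $\{b^*_\ell=0\}$ is positively correlated with having more corruptions in that window — so that step needs a small additional argument (e.g.\ truncating $N_\ell$ at $2\eta t$ and paying a Chernoff tail $t\exp(-\Omega(\eta t))$, which is negligible for the paper's choice of $t$). Your Cauchy--Schwarz fix, taking a smaller $\delta_1'=\mathrm{poly}(\delta_1,\eta)$ in the definition of $b^*_\ell$, is a valid way to handle the same dependence issue; it costs only a constant factor in $\log(d/\delta_1')$, hence in $t$, so the ``delicate point'' you flag is real but not damaging. The one thing to be careful about, which you correctly identify, is to keep the threshold in the definition of $b^*_\ell$ matched to the threshold in Constraint~\ref{constraint2:subsample} so the psd inequality remains automatic on $\{b^*_\ell=1\}$; the paper keeps both at $\delta_1$ and so does not have to track this, but also leaves the correlation gap unaddressed.

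Net assessment: correct in outline and in most of the individual constraint verifications, with a more careful (and somewhat more complicated) treatment of the $b^*_\ell$/corruption-count dependence than what appears in Lemma~\ref{lem:many_ab}, and a slightly different (Rosenthal vs.\ H\"older-plus-hypercontractivity) tool for the noise-moment constraints.
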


To prove this, we first collect a number of tail bounds. We introduce random variables $b^*_{\ell}$ which measure failure of concentration for the ground truth. Formally, we define
\[ b^*_{\ell} = \bone*{\sum^{\t - 1}_{j=0} (1 - a^*_{\ell t + j}) (A^j)^{\top} B^{\top} B A^j \preceq \eta\cdot \calO_{\t} + O\left(\rho^2\norm{B}^2 \sqrt{ \t \log(d/\concprob)}\right)\cdot \Id}. \]
Next we establish concentration for the sum of the $b^*_{\ell}$, showing that there are many ``good'' windows with high probability. Note that in the following Lemma, we require $\delta_1$ is at least size $\Omega((t/T) \log(1/\delta))$ for this concentration argument to work, but we actually will apply the Lemma with a larger value of  $\delta_1$ in our analysis. Informally, picking a relatively large $\delta_1$ means that concentration fails for more windows (which we can handle using our confidence band technique) but we get tighter concentration bounds for the remaining ``good'' windows. 

\begin{lemma}\label{lem:bstar_conc}
With probability at least $1 - \delta$,
\[ \sum_{\ell = 0}^{T/t-1} b^*_{\ell} \ge (1 - \concprob) (T/t) \]
provided $T = \Omega((t/\delta_1) \log(1/\delta))$.
\end{lemma}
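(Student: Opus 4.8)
The plan is a short independence-plus-Chernoff argument. Fix a window $\ell$ and consider the random matrix $M_\ell \triangleq \sum_{j=0}^{t-1}(1-a^*_{\ell t+j})(A^j)^{\top}B^{\top}BA^j$. Since each $a^*_{\ell t + j}$ is an independent $\mathrm{Ber}(1-\eta)$ coin, $\E{M_\ell} = \eta\,\calO_t$, and since $\norm{(A^j)^{\top}B^{\top}BA^j}\le \rho^2\norm{B}^2$ by uniform stability (Assumption~\ref{assume:uniform}), Lemma~\ref{lem:matrix_conc} (applied to the block of coins $a^*_{\ell t},\ldots,a^*_{(\ell+1)t-1}$, whose law is the same as $a^*_0,\ldots,a^*_{t-1}$) with failure parameter $\delta_1$ gives $\norm{M_\ell - \eta\calO_t} \le O(\rho^2\norm{B}^2\sqrt{t\log(d/\delta_1)})$ except with probability $\delta_1$; note $\norm{M_\ell - \eta\calO_t} = \norm{\sum_j a^*_{\ell t+j}(A^j)^{\top}B^{\top}BA^j - (1-\eta)\calO_t}$, which is exactly the quantity Lemma~\ref{lem:matrix_conc} controls. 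Taking the hidden constant in the PSD threshold defining $b^*_\ell$ to be a constant factor larger than the one coming from Lemma~\ref{lem:matrix_conc}, and using that $\norm{M_\ell - \eta\calO_t}\le c$ implies $M_\ell \preceq \eta\calO_t + c\,\Id$, we conclude $\Pr{b^*_\ell = 0}\le \delta_1/2$ for every $\ell$.

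The key structural observation is that $b^*_\ell$ is a deterministic function of the block $\brc{a^*_i : \ell t \le i < (\ell+1)t}$, and these blocks are disjoint across $\ell$; hence the indicators $\brc{1 - b^*_\ell}_{\ell=0}^{T/t-1}$ are mutually independent. Therefore $N \triangleq \sum_{\ell=0}^{T/t-1}(1-b^*_\ell)$ is a sum of independent Bernoulli random variables with $\E{N}\le (\delta_1/2)(T/t)$.

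To finish I would apply the multiplicative Chernoff bound to the upper tail of $N$: for any $a \ge \E{N}$, $\Pr{N\ge a}\le \exp\!\big(a - \E{N} - a\ln(a/\E{N})\big)$. Taking $a = \delta_1(T/t)$, so that $a \ge 2\,\E{N}$, the exponent is at most $(\tfrac12 - \ln 2)\,a < 0$: indeed $u \mapsto a - u - a\ln(a/u)$ is increasing on $(0, a/2]$ (its derivative $-1 + a/u$ is positive there) with value $(\tfrac12-\ln 2)a$ at $u = a/2$, so the bound holds uniformly over $\E{N}\le a/2$. Thus $\Pr{N\ge \delta_1(T/t)} \le \exp(-\Omega(\delta_1 (T/t))) \le \delta$ provided $T/t = \Omega(\log(1/\delta)/\delta_1)$, i.e. $T = \Omega((t/\delta_1)\log(1/\delta))$. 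On the complementary event $\sum_{\ell} b^*_\ell = T/t - N \ge (1-\delta_1)(T/t)$, as claimed.

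There is no real obstacle here; the only point needing a little care is the constant-factor slack ensuring $\E{N}\le (\delta_1/2)(T/t)$ strictly rather than merely $\le \delta_1(T/t)$, which is what leaves the Chernoff exponent with room to spare and produces the stated requirement $T = \Omega((t/\delta_1)\log(1/\delta))$.
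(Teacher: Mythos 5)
Your argument is correct and matches the paper's proof in all essentials: both establish $\Pr{b^*_\ell = 0} \le \delta_1/2$ via Lemma~\ref{lem:matrix_conc}, observe that the $b^*_\ell$ are independent because they depend on disjoint blocks of the i.i.d.\ coins $a^*_i$, and then apply a Chernoff-type tail bound; the paper invokes Bernstein's inequality where you invoke the multiplicative Chernoff bound, but both yield the same exponent $\exp(-\Omega(\delta_1 T/t))$ and hence the same requirement on $T$.
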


\begin{proof}
    First, observe that the $b^*_{\ell}$ are i.i.d. Bernoulli random variables, since the only randomness in their definition is the $a^*_j$ which are i.i.d. and the windows do not overlap. Next, observe that
    \begin{equation}
        \E{b^*_{\ell}} = \Pr*{\sum^{\t - 1}_{j=0} (1 - a^*_{\ell t + j}) (A^j)^{\top} B^{\top} B A^j \preceq \eta\cdot \calO_{\t} + O\left(O(\rho^2\norm{B}^2 \sqrt{ \t \log(d/\concprob)})\right)\cdot \Id} \ge 1 - \concprob/2 \label{eq:bexp}
    \end{equation} for any $\ell < T/t$
    by Lemma~\ref{lem:matrix_conc}. Therefore by Bernstein's inequality,
    \[ \Pr*{\sum_{\ell = 0}^{T/t - 1} b^*_{\ell} < (1 - \concprob) (T/t)} \le \exp\left(-\Omega\left(\frac{\concprob^2 (T/t)^2}{\concprob (T/t) + (1/3) \concprob (T/t)}\right)\right) = \exp\left(-\Omega(\concprob T/t)\right) \le \delta, \]
    provided that $\concprob = \Omega((t/T) \log(1/\delta))$.
\end{proof}

\begin{lemma}\label{lem:many_ab}
    With probability at least $1 - \delta$,
    \begin{equation}
        \frac{1}{T}\sum^{T-1}_{i=0} (1 - b^*_{\ell(i)})(1 - a^*_i) \le \eta\delta \label{eq:avab}
    \end{equation}
    provided $T = \Omega((t/\eta\concprob)\log(1/\delta))$.
\end{lemma}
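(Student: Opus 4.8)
The plan is to reorganize the sum over timesteps into a sum over the $T/t$ windows, which makes the summands independent, and then apply Bernstein's inequality. For a window $\ell$, let $N_\ell \triangleq \sum_{j=0}^{t-1}(1-a^*_{\ell t+j})$ be its number of corruptions and set $X_\ell \triangleq (1-b^*_\ell)N_\ell$, so that $\sum_{i=0}^{T-1}(1-b^*_{\ell(i)})(1-a^*_i) = \sum_{\ell=0}^{T/t-1}X_\ell$. Each $X_\ell$ depends only on $(a^*_{\ell t},\dots,a^*_{(\ell+1)t-1})$, so the $X_\ell$ are mutually independent, and clearly $0 \le X_\ell \le t$. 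It therefore suffices to show $\sum_\ell X_\ell \le \eta\delta_1 T$ with probability $1-\delta$.

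The heart of the argument is bounding $\E{X_\ell}$, where we must be careful to extract a factor of $\eta$: the crude bound $X_\ell \le t(1-b^*_\ell)$ together with $\Pr{b^*_\ell=0}\le\delta_1/2$ (from \eqref{eq:bexp}) only yields $\E{X_\ell}\le t\delta_1/2$, which is too weak by a factor of $\eta$. Instead, I would write $X_\ell = \sum_{j=0}^{t-1}(1-b^*_\ell)(1-a^*_{\ell t+j})$ and condition on the single coordinate $a^*_{\ell t+j}$: since $(1-a^*_{\ell t+j})$ is measurable with respect to this conditioning,
\[ \E{(1-b^*_\ell)(1-a^*_{\ell t+j})} = \Pr{a^*_{\ell t+j}=0}\cdot\Pr{b^*_\ell=0 \mid a^*_{\ell t+j}=0} = \eta\cdot\Pr{b^*_\ell=0 \mid a^*_{\ell t+j}=0}. \]
The claim is that $\Pr{b^*_\ell=0 \mid a^*_{\ell t+j}=0}\le\delta_1/2$, by the same matrix-concentration reasoning behind \eqref{eq:bexp}. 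Conditioned on $a^*_{\ell t+j}=0$, the observability sub-sum $\sum_{j'=0}^{t-1}(1-a^*_{\ell t+j'})(A^{j'})^\top B^\top B A^{j'}$ equals the fixed term $(A^j)^\top B^\top B A^j$ plus an independent sum of $t-1$ PSD matrices of operator norm at most $\rho^2\|B\|^2$ and mean $\preceq\eta\calO_t$; applying Matrix Hoeffding (Lemma~\ref{lem:matrixhoeffding}) to the latter sum, it is $\preceq\eta\calO_t + O(\rho^2\|B\|^2\sqrt{t\log(d/\delta_1)})\cdot\Id$ except with probability $\delta_1/2$, and since the fixed term satisfies $(A^j)^\top B^\top B A^j\preceq\rho^2\|B\|^2\,\Id$, which is negligible compared to the $\sqrt{t\log(d/\delta_1)}$-scale slack, we still have $b^*_\ell=1$ on this event (taking the constant in the $O(\cdot)$ defining $b^*_\ell$ to be a sufficiently large absolute constant). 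Summing over $j<t$ gives $\E{X_\ell}\le\eta\delta_1 t/2$, so $\mu\triangleq\E{\sum_\ell X_\ell}\le\eta\delta_1 T/2$.

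Finally, since $X_\ell^2\le tX_\ell$ pointwise we have $\sum_\ell\Var{X_\ell}\le t\mu\le t\eta\delta_1 T/2$, so Bernstein's inequality (see e.g.\ \cite{vershynin2018high}) gives, for any $u>0$,
\[ \Pr{\textstyle\sum_\ell X_\ell \ge \mu+u} \le \exp\!\left(-\frac{u^2/2}{t\mu + tu/3}\right). \]
Choosing $u=\eta\delta_1 T-\mu\ge\eta\delta_1 T/2$ and using $t\mu\le t\eta\delta_1 T/2$ and $tu\le t\eta\delta_1 T$, the exponent is $-\Omega(\eta\delta_1 T/t)$, which is at most $-\log(1/\delta)$ once $T=\Omega((t/\eta\delta_1)\log(1/\delta))$ — exactly the hypothesis. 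Hence $\sum_\ell X_\ell\le\eta\delta_1 T$ with probability at least $1-\delta$, establishing \eqref{eq:avab}.

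The step I expect to require the most care is the conditional matrix-concentration bound in the second paragraph: one has to verify that fixing one corrupted coordinate perturbs the window's observability sub-sum by only an $O(\rho^2\|B\|^2)$-operator-norm amount, which is swamped by the $O(\rho^2\|B\|^2\sqrt{t\log(d/\delta_1)})$ deviation already allowed in the definition of $b^*_\ell$ — so that conditioning does not meaningfully increase the failure probability of the concentration event. Everything else is a routine expectation computation plus a standard Bernstein bound.
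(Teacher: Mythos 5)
Your proof takes the same overall route as the paper's: regroup the sum over timesteps into independent per-window random variables, bound their mean and variance, and finish with Bernstein. The interesting difference is in the key step of extracting the factor of $\eta$ from $\E{(1-b^*_\ell)\sum_j(1-a^*_{\ell t + j})}$. The paper simply asserts $\E{Z_\ell}\le\eta\delta_1/2$ ``by \eqref{eq:bexp},'' but \eqref{eq:bexp} alone gives only $\Pr{b^*_\ell=0}\le\delta_1/2$, and $1-b^*_\ell$ is \emph{not} independent of $\sum_j(1-a^*_{\ell t + j})$ (both are functions of the same window's $a^*$'s, and in fact they are positively correlated, which is the wrong direction for a product bound). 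Your conditioning argument — fix one $a^*_{\ell t + j}=0$, observe that this shifts the PSD sum by a deterministic term of operator norm $\le\rho^2\|B\|^2$ which is dominated by the $O(\rho^2\|B\|^2\sqrt{t\log(d/\delta_1)})$ slack already present in the definition of $b^*_\ell$, and then rerun matrix Hoeffding on the remaining $t-1$ independent summands — is exactly what is needed to make the mean bound rigorous. Your cleaner variance bound $X_\ell^2\le tX_\ell$ also sidesteps the analogous issue in the paper's estimate $\E{Z_\ell^2}\le(\eta+1/t)\eta\delta_1/2$, which implicitly relies on the same independence. So this is the same strategy, but your version fills a genuine, if small, gap in the published argument.
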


\begin{proof}
    For any $\ell < T/t$, consider the random variable $Z_{\ell} \triangleq \frac{1}{t}\cdot (1-b^*_{\ell})\sum^{t-1}_{j=0} (1 - a^*_{\ell t + j})$. Note that $|Z_{\ell}| \le 1$ with probability 1, and $\E{Z_{\ell}} \le \eta\delta_1/2$ by \eqref{eq:bexp}. Furthermore \begin{equation}
        \Var{Z_{\ell}} \le \E{Z_{\ell}^2} \le \left(\eta + 1/t\right)\eta\delta_1/2.
    \end{equation}
    As the left-hand side of \eqref{eq:avab} is $Z\triangleq \frac{1}{T/t}\sum^{T/t-1}_{\ell = 0} Z_{\ell}$, by Bernstein's we have
    \begin{equation}
        \Pr*{\frac{1}{T/t}\sum^{T/t-1}_{\ell = 0} Z_{\ell} > \eta\delta} \le \exp\left(-\Omega\left(\frac{\eta^2\delta_1^2(T/t)}{\eta\delta_1 + \eta\delta_1(\eta + 1/t)} \right)\right) \le \exp\left(-\Omega(\eta\delta_1T/t)\right) \le \delta,
    \end{equation}
    provided $T = \Omega((t/\eta \concprob)\log(1/\delta))$.
\end{proof}

The next three lemmas will allow us to show that the ground truth trajectory $\brc{x^*_i}$ and noise $\brc{w^*_i, v^*_i}$ satisfy Constraints~\ref{constraint2:totalnoise} and \ref{constraint2:totalnoise_dumb}.

\begin{lemma}\label{lem:triplesum}
	For any $\delta > 0$, if $T \ge \t^2 \log^k(1/\delta)$ then 
	\begin{equation}
		\left(\frac{1}{T}\sum^{T/\t - 1}_{\ell = 0}\sum^{t-1}_{j = 0} \norm*{\sum^j_{i=1}BA^{j-i} w^*_{\ell\t+i}}^k\right)^{2/k} \le O\left(\t\alpha\sigma^2\rho^2 m k\right)
	\end{equation} with probability at least $1 - \delta$.
\end{lemma}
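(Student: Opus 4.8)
The plan is to reduce the statement to a concentration bound for a sum of i.i.d.\ \emph{blocks} of $k$-th powers of Gaussian norms, and to control the moments precisely enough that the exponent $k$ in the hypothesis $T\ge t^2\log^k(1/\delta)$ is exactly what is needed. First I would fix a window $\ell<T/t$ and an index $j<t$ and observe that $u_{\ell,j}\triangleq\sum_{i=1}^{j}BA^{j-i}w^*_{\ell t+i}$ is a centered Gaussian vector in $\R^m$ with covariance $\Sigma_j=\sigma^2\sum_{r=0}^{j-1}BA^r(A^r)^{\top}B^{\top}$, so $\operatorname{tr}(\Sigma_j)=\sigma^2\operatorname{tr}(\calO_j)\le\sigma^2\operatorname{tr}(\calO_t)$. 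Using Fact~\ref{fact:epochs} together with $\norm{B}^2\le\alpha s$ (which holds since $\calO_s\succeq B^{\top}B$), both $\operatorname{tr}(\Sigma_j)$ and $\norm{\Sigma_j}$ are at most $V\triangleq O(t\alpha\rho^2 m\sigma^2)$, the factor of $s$ being absorbed into the constant as in the lemma statement. Since raising both sides to the power $2/k$ turns $(kV)^{k/2}$ into $O(kV)=O(t\alpha\rho^2 m k\sigma^2)$ (and, crucially, turns any $(\text{const})^{k}$ prefactor into $O(1)$), it suffices to prove that $F\triangleq\frac1T\sum_{\ell<T/t,\ j<t}\norm{u_{\ell,j}}^k\le O\big((kV)^{k/2}\big)$ with probability at least $1-\delta$.

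Next I would record the moment estimates. Writing $u_{\ell,j}=\Sigma_j^{1/2}g$ with $g$ a standard Gaussian and using that $x\mapsto\norm{\Sigma_j^{1/2}x}$ is $\norm{\Sigma_j}^{1/2}$-Lipschitz, Gaussian concentration (Lemma~\ref{lem:normbound}) gives $\norm{u_{\ell,j}}_{L^q}\le\operatorname{tr}(\Sigma_j)^{1/2}+O(\sqrt{q\norm{\Sigma_j}})=O(\sqrt{qV})$ for every $q\ge1$, hence $\E{\norm{u_{\ell,j}}^{kp}}\le(CkpV)^{kp/2}$ for a universal constant $C$ and every $p\ge1$; in particular $\E{F}=\E{\norm{u_{\ell,j}}^k}\le(CkV)^{k/2}$. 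The key structural fact is that the windows index disjoint blocks of the independent vectors $\brc{w^*_i}$, so with $S_\ell\triangleq\sum_{j<t}\norm{u_{\ell,j}}^k$ one has $F=\frac1T\sum_{\ell}S_\ell$, a scaled sum of $T/t$ \emph{i.i.d.}\ nonnegative random variables, and by the triangle inequality in $L^p$, $\norm{S_\ell}_{L^2}\le t(CkV)^{k/2}$ and $\norm{S_\ell}_{L^p}\le t(CkpV)^{k/2}$.

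To conclude I would apply Rosenthal's inequality to $\sum_{\ell}(S_\ell-\E{S_\ell})$ (a sum of $n=T/t$ i.i.d.\ mean-zero terms), which yields $\norm{F-\E{F}}_{L^p}=O\big(p\,(CkV)^{k/2}\,(n^{-1/2}+p^{k/2}n^{-1+1/p})\big)$, so that $\norm{F}_{L^p}\le\E{F}+\norm{F-\E{F}}_{L^p}=O\big((kV)^{k/2}\big)$ as soon as $n\ge\Omega(p^2)$ and $n^{1-1/p}\ge\Omega(p^{1+k/2})$, i.e.\ $T/t\ge\Omega(\log^{1+k/2}(1/\delta))$ with $p=\Theta(\log(1/\delta))$ — which is comfortably implied by $T\ge t^2\log^k(1/\delta)$. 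Markov's inequality applied to this $p$-th moment then gives $\Pr{F>e\cdot O((kV)^{k/2})}\le e^{-p}\le\delta$, and taking the $2/k$ power proves the lemma. (Alternatively, for even $k$ one can regard $F$ as a degree-$k$ polynomial of $\brc{w^*_i}$, bound $\Var{F}=\tfrac1{tT}\Var{S_\ell}\le\tfrac tT(CkV)^k$ using independence across windows, and invoke Gaussian polynomial concentration, Lemma~\ref{lem:polyconc}; this goes through in essentially the same regime.)

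The main obstacle is precisely this concentration step: each summand $\norm{u_{\ell,j}}^k$ is only sub-Weibull with exponent $2/k<1$ — heavier-tailed than sub-exponential once $k>2$ — so no off-the-shelf scalar or matrix Bernstein bound yields a $1-\delta$ guarantee on its own. What makes it work is the combination of (i) the $T/t$ genuinely independent blocks and (ii) the polynomial growth of the moments, $\E{\norm{u_{\ell,j}}^{kp}}=O((kpV)^{kp/2})$; the within-window dependence among the $u_{\ell,j}$ is then just bookkeeping, absorbed into the block sums $S_\ell$. The only other point requiring some care is the elementary linear algebra behind $V=O(t\alpha\rho^2 m\sigma^2)$, where one uses $\norm{B}^2\lesssim\alpha s$ and that $u_{\ell,j}\in\R^m$, so that the dimension factor is $m$ rather than $d$.
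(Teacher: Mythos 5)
Your proposal is correct, and your computation of the moment bound and the i.i.d.\ block structure match the paper's setup. The paper's actual proof, however, takes the second, ``alternative'' route you sketch at the end: it bounds $\Var(Z)$ by $\tfrac{t^2}{T}\cdot O(t\alpha\sigma^2\rho^2 m k)^k$ (crudely, by Cauchy--Schwarz inside a window and independence across windows) and then applies Gaussian polynomial concentration (Lemma~\ref{lem:polyconc}) to $Z$ viewed as a degree-$k$ polynomial, which yields $Z\le\E Z\cdot(1+O(t\log^{k/2}(1/\delta)/\sqrt T))$ and hence the threshold $T\ge t^2\log^k(1/\delta)$.

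Your primary route via Rosenthal's inequality is a genuine alternative: it bounds $\|F-\E F\|_{L^p}$ using the explicit moment growth $\|u_{\ell,j}\|_{L^{kp}}\lesssim\sqrt{kpV}$ and the independence of the block sums $S_\ell$, then applies Markov at $p=\Theta(\log(1/\delta))$. Both routes exploit the same two facts --- polynomial moment growth of $\|u_{\ell,j}\|$ and independence across windows --- but through different machinery. Rosenthal makes the blocking explicit and, as you note, only requires $T/t\gtrsim\log^{1+k/2}(1/\delta)$, which is weaker than what the hypothesis $T\ge t^2\log^k(1/\delta)$ provides; the paper's polynomial-concentration route is shorter since the variance computation already encodes the independence structure, and it matches the stated threshold exactly. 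You also correctly flag that the operator-norm bound on the $m\times m$ covariance $\Sigma_j$ carries a factor of $s$ via $\|B\|^2\le\alpha s$, which is silently absorbed into the constant (the paper's wording about $\calN(0,\calO_{j-1})$ conflates $d\times d$ and $m\times m$ matrices here, but the final bound is the same once $s$ is absorbed).
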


\begin{proof}
	 We first bound the expectation \begin{align}
		\E*{\norm*{\sum^j_{i=1} BA^{j-i} w^*_{\ell\t + i}}^k} &= \sigma^k \cdot \E[g\sim\calN(0,\calO_{j-1})]{\norm{g}^k} \\
		&\le \sigma^k \cdot \E[g\sim\calN(0,\calO_{\t})]{\norm{g}^k}\\
		&\le (\t\alpha\sigma^2\rho^2)^{k/2}\cdot\E[g\sim\calN(0,\Id_m)]{\norm{g}^k} \\
		&\le O(\t\alpha\sigma^2\rho^2 m k)^{k/2} \label{eq:firstmoment},
	\end{align} where the second and third steps follow by the fact that $\calO_{j-1} \preceq \calO_{\t}\preceq \t\alpha\rho^2$ (by Fact~\ref{fact:epochs}) so that the norm of a sample from $\calN(0,\calO_{j-1})$ is stochastically dominated by that of one from $\calN(0,\t\alpha\sigma^2\rho^2\cdot \Id)$.

	Next for any $\ell$ we crudely upper bound the variance of $\sum^{\t-1}_{j=0}\norm{\sum^j_{i=1}BA^{j-i}w^*_{\ell\t+i}}^k$ by its second moment:
	\begin{equation}
		\E*{\left(\sum^{\t-1}_{j=0}\norm*{\sum^j_{i=1} BA^{j-i} w^*_{\ell\t + i}}^k\right)^2} \le \sigma^{2k}\t\sum^{\t-1}_{j=0}\E*{\norm*{\sum^j_{i=1}BA^{j-i}w^*_{\ell\t + i}}^{2k}} \le t^2 \cdot O(\t\alpha\sigma^2\rho^2 m k)^k, \label{eq:secondmoment}
	\end{equation} where in the last step we used \eqref{eq:firstmoment}.

	Putting \eqref{eq:firstmoment} and \eqref{eq:secondmoment}, we conclude that for the random variable $Z \triangleq \frac{1}{T}\sum_{\ell ,j}\norm*{\sum^j_{i=1} BA^{j-i} w^*_{\ell\t + i}}^k$, we have \begin{equation}
		\E{Z} \le O(\t\alpha\sigma^2\rho^2 dk)^{k/2} \qquad \Var{Z} \le \frac{\t^2}{T}\cdot O(\t\alpha\sigma^2 \rho^2 dk)^k,
	\end{equation}
	so by Lemma~\ref{lem:polyconc} we conclude that $Z \le (1 + \t\log^{k/2}(1/\delta)/\sqrt{T})\cdot O(\t\alpha \sigma^2 \rho^2 d k)^{k/2}$ with probability at least $1 - \delta$. The desired bound follows from the assumed bound on $T$.
\end{proof}

We now use Lemma~\ref{lem:triplesum} to verify that the ground truth satisfies Constraint~\ref{constraint2:totalnoise} with high probability.

\begin{corollary}\label{cor:totalnoise_feasible}
    For any $\delta > 0$, if $T \ge \Omega(\Max{\brc*{ \Omega(\log(2/\delta)/\eta)}}{\brc*{t^2\log^k(1/\delta)}})$, then
	\begin{equation}
		\frac{1}{T}\sum^{T/\t - 1}_{\ell = 0}\sum^{t-1}_{j = 0} a^*_{\ell t + j} \norm*{\sum^j_{i=1}BA^{j-i} w^*_{\ell\t+i}}^2 \le O\left(\eta^{1-2/k}\t\alpha\sigma^2\rho^2 m k\right)
	\end{equation} with probability at least $1 - \delta$.
\end{corollary}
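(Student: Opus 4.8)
The plan is to deduce this from the moment bound of Lemma~\ref{lem:triplesum} by a H\"older argument that exploits the fact that the corrupted timesteps form only an $\approx\eta$ fraction of $\{0,\ldots,T-1\}$; concretely, this is how one checks that the ground truth satisfies Constraint~\ref{constraint2:totalnoise}. Write $z_{\ell t + j} \triangleq \norm{\sum_{i=1}^j BA^{j-i}w^*_{\ell t + i}}^2 \ge 0$, so that the quantity to be controlled is $\frac1T\sum_{\ell < T/t}\sum_{j<t}(1 - a^*_{\ell t + j})\, z_{\ell t + j}$.

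I would first condition on two high-probability events. The first is the conclusion of Lemma~\ref{lem:triplesum}, applied with failure probability $\delta/2$: $\frac1T\sum_{\ell, j} z_{\ell t + j}^{k/2} \le O(t\alpha\sigma^2\rho^2 m k)^{k/2}$, where the hypothesis $T \ge t^2 \log^k(1/\delta)$ is used (note $z^{k/2} = \norm{\sum_i BA^{j-i}w^*}^k$). The second is the same Bernstein estimate used for Constraint~\ref{item:many_inliers} in Lemma~\ref{lem:feasible-gt}: since the $1 - a^*_i$ are i.i.d.\ $\mathrm{Ber}(\eta)$, for $T = \Omega(\log(2/\delta)/\eta)$ we have $\sum_{i=0}^{T-1}(1 - a^*_i) \le 1.01\,\eta T$ with probability at least $1 - \delta/2$. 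A union bound gives that both hold with probability at least $1 - \delta$; this is where the hypothesis $T \ge \Omega(\max\{\log(2/\delta)/\eta,\, t^2\log^k(1/\delta)\})$ comes from.

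On this event, I would apply H\"older's inequality with conjugate exponents $\tfrac{k}{k-2}$ and $\tfrac k2$ (the case $k=2$ is immediate: $\eta^{1-2/k}=1$, and Lemma~\ref{lem:triplesum} with $k=2$ together with $1 - a^*\le 1$ already gives the bound). Since $1 - a^*_{\ell t + j} \in \{0,1\}$, it equals its own $\tfrac{k}{k-2}$-th power, so
\[
\sum_{\ell, j}(1 - a^*_{\ell t + j})\, z_{\ell t + j} \;\le\; \Bigl(\sum_{\ell, j}(1 - a^*_{\ell t + j})\Bigr)^{1 - 2/k}\Bigl(\sum_{\ell, j} z_{\ell t + j}^{k/2}\Bigr)^{2/k}.
\]
Substituting $\sum_{\ell, j}(1 - a^*_{\ell t + j}) = \sum_i (1 - a^*_i) \le 1.01\,\eta T$ and $\sum_{\ell, j} z_{\ell t + j}^{k/2} \le T\cdot O(t\alpha\sigma^2\rho^2 m k)^{k/2}$ bounds the right-hand side by $(1.01\,\eta T)^{1-2/k}\cdot T^{2/k}\cdot O(t\alpha\sigma^2\rho^2 m k) = O(\eta^{1-2/k}\, t\alpha\sigma^2\rho^2 m k)\cdot T$; dividing by $T$ finishes the proof.

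The main thing to be careful about is the bookkeeping of the powers of $T$: the counting term contributes $T^{1-2/k}$ and rescaling Lemma~\ref{lem:triplesum} contributes $T^{2/k}$, which multiply to exactly $T$ and are cancelled by the leading $\tfrac1T$, leaving no residual dependence on the trajectory length. Everything else is routine --- splitting the $\delta$ budget across the two events, handling the degenerate $k=2$ exponent separately, and noting that the two thresholds on $T$ come respectively from the counting step and from Lemma~\ref{lem:triplesum}.
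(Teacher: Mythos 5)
Your proof is correct and is essentially the paper's own argument: H\"older's inequality with exponents $\tfrac{k}{k-2}$ and $\tfrac{k}{2}$, combined with Bernstein's inequality for $\sum_i(1-a^*_i)$ and the moment bound of Lemma~\ref{lem:triplesum}, with the two hypotheses on $T$ feeding exactly those two concentration steps. You have also correctly read the indicator as $1-a^*_{\ell t + j}$ (the statement as printed says $a^*_{\ell t + j}$, which is a typo, since the $\eta^{1-2/k}$ factor and the role of the corollary in verifying Constraint~\ref{constraint2:totalnoise} both require the corrupted-step indicator), and your extra care with the powers of $T$ and the degenerate case $k=2$ is fine.
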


\begin{proof}
    By H\"{o}lder's, we can bound the left-hand side by
    \begin{equation}
        \left(\frac{1}{T}\sum^{T/t-1}_{\ell=0}\sum^{t-1}_{j=0}a^*_{\ell t + j}\right)^{1 - 2/k}\left(\sum^{T/t-1}_{\ell = 0}\sum^{t-1}_{j = 0}\norm*{\sum^j_{i=1} BA^{j-i}w^*_{\ell t + i}}^k\right)^{2/k} \le O\left(\eta^{1-2/k}\t\alpha\sigma^2\rho^2 m k\right),
    \end{equation}
    where the last step follows by Bernstein's applied to $\brc{a_i}$ and by Lemma~\ref{lem:triplesum}.
\end{proof}

Using a proof similar to that of Lemma~\ref{lem:triplesum}, we now verify that the ground truth satisfies Constraint~\ref{constraint2:totalnoise_dumb} with high probability.

\begin{lemma}\label{lem:triplesum_dumb}
	For any $\delta > 0$, if $T \ge \log^2(1/\delta)$ then \begin{equation}
		\frac{1}{T}\sum^{T/\t - 1}_{\ell = 0} \norm*{\sum^t_{i=1}A^{t-i} w^*_{\ell\t+i}}^2 \le O\left(\sigma^2\rho^2 d\right) 
	\end{equation} with probability at least $1 - \delta$.
\end{lemma}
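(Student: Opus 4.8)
The plan is to follow the proof of Lemma~\ref{lem:triplesum} essentially verbatim, specialized to the ``$k=2$'' case with a single Gaussian summand per window instead of a sum over $j<t$. Fix a window index $\ell$: since $\brc{w^*_i}$ are i.i.d.\ $\calN(0,\sigma^2\Id)$, the vector $\sum^t_{i=1}A^{t-i}w^*_{\ell t+i}$ is a centered Gaussian with covariance $\Sigma \triangleq \sigma^2\sum^{t-1}_{j=0}A^j(A^j)^{\top}$. By uniform stability (Assumption~\ref{assume:uniform}), $\norm{A^j}\le\rho$ for all $j$, so $\Sigma \preceq \sigma^2 t\rho^2\cdot\Id$, and hence $\mathrm{tr}(\Sigma)\le\sigma^2 t\rho^2 d$ and $\mathrm{tr}(\Sigma^2)\le\norm{\Sigma}\cdot\mathrm{tr}(\Sigma)\le\sigma^4 t^2\rho^4 d$. (Note the relevant matrix here is $\Sigma$, which governs how the process noise accumulates in the \emph{state}, not the observability matrix $\calO_t$; this is why uniform stability rather than Assumption~\ref{assume:obs} is what we need, and why only an \emph{upper} bound on $\Sigma$ is required.) Writing $Z\triangleq\frac1T\sum^{T/t-1}_{\ell=0}\norm{\sum^t_{i=1}A^{t-i}w^*_{\ell t+i}}^2$ and using that there are $T/t$ windows, we get $\E{Z} = \frac1T\cdot\frac Tt\cdot\mathrm{tr}(\Sigma)\le\sigma^2\rho^2 d$.

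Next I would control the variance. The summands for distinct $\ell$ are independent because the windows do not overlap, so $\Var{Z} = \frac{1}{T^2}\cdot\frac Tt\cdot\Var{\norm{g}^2}$ for $g\sim\calN(0,\Sigma)$; using $\Var{\norm{g}^2} = 2\,\mathrm{tr}(\Sigma^2)\le 2\sigma^4 t^2\rho^4 d$ gives $\Var{Z}\le\frac{2t}{T}\sigma^4\rho^4 d\le\frac{2t}{T}(\sigma^2\rho^2 d)^2$. Since $Z$ is a degree-$2$ polynomial in the i.i.d.\ Gaussian vectors $\brc{w^*_i}$, Lemma~\ref{lem:polyconc} yields
\begin{equation*}
\Pr*{\abs{Z-\E{Z}} > u\cdot\sqrt{\Var{Z}}}\le\exp(-\Omega(u)),
\end{equation*}
and taking $u=\Theta(\log(1/\delta))$ gives $Z\le\sigma^2\rho^2 d\brk*{1 + O(\log(1/\delta))\sqrt{t/T}}$ with probability at least $1-\delta$. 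The assumed lower bound on $T$ (which in the regime of interest satisfies $\log^2(1/\delta) = O(T/t)$, since $t$ here is the polylogarithmic window-size parameter of Program~\ref{program:sosv2}) makes the bracketed factor $O(1)$, which is exactly the claimed bound $Z\le O(\sigma^2\rho^2 d)$.

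I expect no real obstacle here: the argument is routine Gaussian concentration, and the only point worth isolating is the observation above about the covariance matrix $\Sigma$ and the fact that we only invoke uniform stability for it rather than the full observability assumption. This lemma is used exactly as Lemma~\ref{lem:triplesum} (via Corollary~\ref{cor:totalnoise_feasible}) is: namely, to verify that the ground-truth trajectory and noise satisfy Constraint~\ref{constraint2:totalnoise_dumb} with high probability, which feeds into the feasibility claim of Lemma~\ref{lem:feasible2}.
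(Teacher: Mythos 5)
Your proof is correct and follows essentially the same route as the paper's: bound the per-window Gaussian covariance $\Sigma \preceq \sigma^2 t \rho^2 \Id$ via uniform stability, compute $\E{Z}$ and $\Var{Z}$, and close with the degree-$2$ Gaussian polynomial concentration of Lemma~\ref{lem:polyconc}. Your variance calculation $\Var{Z} \lesssim (t/T)\,\sigma^4\rho^4 d$ is in fact slightly more careful than the paper's (the paper's stated $\Var{Z} \le \frac{1}{T}\cdot O(\sigma^2\rho^2 d)^2$ drops a factor of $t$), and you correctly noted that this means the lemma as used really needs $T/t = \Omega(\log^2(1/\delta))$ rather than the stated $T \ge \log^2(1/\delta)$; this is harmless since the parent theorem already assumes $T \ge \Omega(t^2 \log^k(1/\delta))$.
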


\begin{proof}
	 Let $M \triangleq \sum^t_{i = 1} (A^{t - i})^{\top} A^{t - i}$. We first bound the expectation \begin{equation}
		\E*{\norm*{\sum^t_{i=1} A^{t-i} w^*_{\ell\t + i}}^2} = \sigma^2 \cdot \E[g\sim\calN(0,M)]{\norm{g}^2} \le \t\sigma^2\rho^2 d \label{eq:firstmoment2},
	\end{equation} where the second step follows by the fact that $\norm{M} \le \sum^t_{i = 1}\norm{(A^{t - i})^{\top} A^{t - i}} \le t\cdot\rho^2$
	so that the norm of a sample from $\calN(0,M)$ is dominated by that of one from $\calN(0,\t\rho^2\cdot \Id_d)$.

	Next for any $\ell$ we upper bound the variance of $\norm{\sum^b_{i=1}A^{b-i}w^*_{\ell\t+i}}^2$ by its second moment:
	\begin{equation}
		\E*{\norm*{\sum^t_{i=1} A^{t-i} w^*_{\ell\t + i}}^4} = \sigma^4 \cdot \E[g\sim\calN(0,M)]{\norm{g}^4} O(\t\sigma^2\rho^2 d)^2, \label{eq:secondmoment2}
	\end{equation} where in the last step we used \eqref{eq:firstmoment2}.

	Putting \eqref{eq:firstmoment2} and \eqref{eq:secondmoment2}, we conclude that for the random variable $Z \triangleq \frac{1}{T}\sum_{\ell}\norm*{\sum^t_{i=1} BA^{t-i} w^*_{\ell\t + i}}^2$, we have \begin{equation}
		\E{Z} \le O(\sigma^2\rho^2 d) \qquad \Var{Z} \le \frac{1}{T}\cdot O(\sigma^2 \rho^2 d)^2,
	\end{equation}
	so by Lemma~\ref{lem:polyconc} we conclude that $Z \le (1 + \log(1/\delta)/\sqrt{T})\cdot O(\sigma^2 \rho^2 d)$ with probability at least $1 - \delta$. The desired bound follows from the assumed bound on $T$.
\end{proof}

\begin{lemma}\label{lem:feasible_corrupt_measurement_noise}
    For any even $k > 0$, if $T \ge \Omega(\Max{\brc*{ \Omega(\log(2/\delta)/\eta)}}{\brc*{\log^k(1/\delta)}})$, then
    \begin{equation}
        \sum^{T-1}_{i = 0} (1 - a^*_i)\norm{v^*_i}^2 \le O(mk\tau^2\cdot \eta^{1-2/k}).
    \end{equation} with probability at least $1 - \delta$.
\end{lemma}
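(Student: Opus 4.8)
The plan is to argue exactly as in the proof of Corollary~\ref{cor:totalnoise_feasible}: apply H\"older's inequality to decouple the two sources of randomness — the corruption coins $\brc{a^*_i}$ and the Gaussian observation noise $\brc{v^*_i}$ — and then bound each resulting factor by a concentration inequality that does \emph{not} require $\delta$ to scale with $T$ (which is precisely the reason the bound is stated with $\eta^{1-2/k}$ rather than the tighter $\eta$). Concretely, taking H\"older exponents $p = k/(k-2)$ and $q = k/2$ and using that $1 - a^*_i \in \brc{0,1}$, I would write
\[
    \frac{1}{T}\sum^{T-1}_{i=0}(1-a^*_i)\norm{v^*_i}^2 \le \Big(\frac{1}{T}\sum^{T-1}_{i=0}(1-a^*_i)\Big)^{1-2/k}\Big(\frac{1}{T}\sum^{T-1}_{i=0}\norm{v^*_i}^k\Big)^{2/k},
\]
where in the second factor I have bounded $1 - a^*_i \le 1$ to drop the restriction to corrupted rounds; this is exactly the step that turns a factor of $\eta$ into $\eta^{1-2/k}$.

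For the first factor, the $a^*_i$ are i.i.d.\ $\mathrm{Ber}(1-\eta)$, so Bernstein's inequality (as already invoked for Constraint~\ref{constraint2:many_inliers}) gives $\frac{1}{T}\sum_i (1-a^*_i)\le 1.01\eta$ with probability $\ge 1-\delta/2$ once $T = \Omega(\log(2/\delta)/\eta)$. For the second factor, set $Z \triangleq \frac{1}{T}\sum_i\norm{v^*_i}^k$ and observe that, since $k$ is even, the exact $\chi^2$ moment formula gives $\E\norm{v^*_i}^k = \tau^k\prod_{j=0}^{k/2-1}(m+2j)\le O(m\tau^2 k)^{k/2}$ and likewise $\E\norm{v^*_i}^{2k}\le O(m\tau^2 k)^{k}$; by independence of the $v^*_i$ this yields $\E Z\le O(m\tau^2 k)^{k/2}$ and $\Var Z = \frac{1}{T^2}\sum_i\Var(\norm{v^*_i}^k)\le \frac{1}{T}\,O(m\tau^2 k)^{k}$. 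Since $Z$ is a degree-$k$ polynomial in the $Tm$-dimensional Gaussian $(v^*_0,\dots,v^*_{T-1})$, Lemma~\ref{lem:polyconc} applied with deviation on the order of $\log^{k/2}(2/\delta)$ gives $Z \le \big(1 + \log^{k/2}(2/\delta)/\sqrt{T}\big)\cdot O(m\tau^2 k)^{k/2}\le 2\,O(m\tau^2 k)^{k/2}$ with probability $\ge 1-\delta/2$, provided $T\ge \log^k(2/\delta)$.

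A union bound over these two events then gives, with probability $\ge 1-\delta$,
\[
    \frac{1}{T}\sum^{T-1}_{i=0}(1-a^*_i)\norm{v^*_i}^2 \le (1.01\eta)^{1-2/k}\cdot\big(2\,O(m\tau^2 k)^{k/2}\big)^{2/k} = O(mk\tau^2\cdot\eta^{1-2/k}),
\]
and collecting the two hypotheses on $T$ yields the requirement $T \ge \Omega(\max\brc{\log(2/\delta)/\eta,\ \log^k(2/\delta)})$ stated in the lemma. I do not anticipate any genuine difficulty, as this mirrors Corollary~\ref{cor:totalnoise_feasible} almost verbatim; the steps requiring (routine) care are choosing the H\"older exponents so that the power of $\eta$ is exactly $1-2/k$, the elementary even-order $\chi^2$ moment bound, and — the one point most easy to get wrong — the variance bound for $Z$, which crucially uses independence of the $v^*_i$ so that the cross terms vanish and $\Var Z$ decays like $1/T$ rather than being a constant; without this the polynomial concentration step would be useless.
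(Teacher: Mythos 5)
Your proposal is correct and follows the same route as the paper's proof: split $(1-a^*_i)\norm{v^*_i}^2$ via H\"older with exponents $(k/(k-2),\,k/2)$, control $\frac{1}{T}\sum_i(1-a^*_i)$ by Bernstein and $\frac{1}{T}\sum_i\norm{v^*_i}^k$ by the Gaussian-polynomial concentration of Lemma~\ref{lem:polyconc}. If anything, your treatment of $\Var Z$ is slightly more careful than the paper's terse ``$\Var Z \le \E Z^2$'' line, since you explicitly note that independence of the $v^*_i$ is what kills the cross terms and produces the crucial $1/T$ decay.
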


\begin{proof}
    By H\"{o}lder's, \begin{equation}
        \frac{1}{T} \sum^{T-1}_{i=0} (1 - a^*_i) \norm{v^*_i}^2 \le \left(\frac{1}{T}\sum^{T-1}_{i=0} (1-a^*_i)\right)^{1 - 2/k}\left(\frac{1}{T}\sum^{T-1}_{i=0} \norm{v^*_i}^k\right)^{2/k}. \label{eq:holdersav}
    \end{equation}
    By Bernstein's, we have that $\avgT (1 - a^*_i) \le 1.01\eta$. To control the random variable $Z\triangleq \avgT \norm{v^*_i}^k$, note that this is a degree-$k$ polynomial in an $mT$-dimensional Gaussian with covariance $\tau^2\cdot\Id$. We have $\E{Z} \le (mk\tau^2)^{k/2}$ and $\Var{Z}\le \E{Z^2} \le \frac{1}{T}(2mk\tau^2)^k$, so by Lemma~\ref{lem:polyconc},
    \begin{equation}
        \Pr*{Z > O(mk\tau^2)^{k/2}(1 + \log^{k/2}(1/\delta)/\sqrt{T})} \le \delta.
    \end{equation}
    By the assumed bound on $T$, we conclude that $Z \le O(mk\tau^2)^{k/2}$ with probability at least $1 - \delta$. The lemma follows by \eqref{eq:holdersav}.
\end{proof}

We can finally complete the proof that Program~\ref{program:sosv2} is feasible:

\begin{proof}[Proof of Lemma~\ref{lem:feasible2}]
    The new constraints that we need to verify are Constraints~\ref{constraint2:many_b}, \ref{constraint2:funnel}, \ref{constraint2:totalnoise}, \ref{constraint2:totalnoise_dumb}, \ref{constraint2:measurements_noise}, \ref{constraint2:subsample}. Constraints~\ref{constraint2:many_b} and \ref{constraint2:subsample} follow from Lemma~\ref{lem:bstar_conc}, while Constraint~\ref{constraint2:many_ab} follows from Lemma~\ref{lem:many_ab}. Constraint~\ref{constraint2:funnel} follows from Corollary~\ref{cor:warmstart}. Constraint~\ref{constraint2:totalnoise} follows Corollary~\ref{cor:totalnoise_feasible}. Constraint~\ref{constraint2:totalnoise_dumb} follows from 
    Lemma~\ref{lem:triplesum_dumb}. Constraint~\ref{constraint2:measurements_noise} follows from Lemma~\ref{lem:normbound}. Constraint~\ref{constraint2:measurements_noise_corrupted} follows from Lemma~\ref{lem:feasible_corrupt_measurement_noise}.
\end{proof}





\subsection{Outer Argument}

In this section we show, analogously to Lemma~\ref{lem:outer} in Section~\ref{sec:logT}, that in order to compete with $\mathsf{OPT}$, it suffices for our estimate of the trajectory to be sufficiently accurate on the first iterate of every window. The difference between the following lemma and Lemma~\ref{lem:outer} is that here, we do not assume that matrix concentration holds in every window, but we do exploit the fact that the algorithm from the previous section yields a confidence band around the true trajectory.

\begin{lemma} \label{lem:outer2}
    Let $\psE{\cdot}$ be the solution to Program~\ref{program:sosv2}, assuming it is feasible. Let $\wh{x}_i \triangleq \psE{x_i}$ and $\wh{w}_i \triangleq \psE{w_i}$ for every $0 \le i < T$. Provided the event of Lemma~\ref{lem:feasible2} holds, then
    \begin{align*}
        \MoveEqLeft\frac{1}{T}\sum^{T-1}_{i=0}\left(a^*_i \norm{B\wh{x}_i - y_i}^2/\tau^2 + \norm{\wh{w}_i}^2/\sigma^2\right) + \frac{\|\wh{x}_0\|^2}{R^2 T} - \OPT \\
        &\lesssim \tau^{-2}\left\{\eta \rho^2 \left(\alpha +\norm{B}^2\sqrt{\log(t/\delta_1)/t}\right)\cdot \frac{1}{T/t}\sum^{T/t - 1}_{\ell=0} \psE*{\norm{x_{\ell t} - x^*_{\ell t}}^2} \right.\\
        &\quad \left. +  k\eta^{1-2/k}\cdot \left(\t\alpha\sigma^2\rho^2 m  + m\tau^2\right) + \norm{B}^2\left(\epsgeo\eta\delta_1 + \rho^2R^2(d+\log(1/\delta))\cdot \left(\frac{\eta\delta_1}{T/t}\right)^{1/2}\right)\right\}
    \end{align*}
\end{lemma}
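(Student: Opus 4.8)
The plan is to mirror the two-step structure of Section~\ref{sec:logT}: first establish an analogue of Lemma~\ref{lem:kkm} reducing the excess risk to controlling a single quantity of the form $\psE*{\frac1T\sum_i(1-a_i)\norm{B(x_i - x^*_i)}^2}$, and then bound that quantity by splitting the sum over the ``good'' windows (where $b_{\ell(i)} = 1$) and the ``bad'' ones, using the $b_\ell$-gated subsampling Constraint~\ref{constraint2:subsample} on the former and the confidence band Constraint~\ref{constraint2:funnel} on the latter. For the reduction step, I would apply Lemma~\ref{lem:psE-CS} to pass to pseudoexpectations, then split $a^*_i = a^*_i a_i + a^*_i(1-a_i)$ and use that $y_i = Bx^*_i + v^*_i$ whenever $a^*_i = 1$ to get, in SoS, $a^*_i\norm{Bx_i - y_i}^2 \le a_i\norm{Bx_i-y_i}^2 + 2(1-a_i)\norm{B(x_i - x^*_i)}^2 + 2(1-a_i)\norm{v^*_i}^2$. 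The $a_i\norm{Bx_i-y_i}^2 + \norm{w_i}^2$ part is exactly the program objective \eqref{eq:obj2}, hence at most $\OPT$ by feasibility of the oracle Kalman smoother (Lemma~\ref{lem:feasible2}); $\|\wh x_0\|^2/(R^2T)$ is $O((d+\log(1/\delta))/T)$ by Constraint~\ref{constraint2:bound}; and $\frac{1}{T\tau^2}\sum_i\psE{1-a_i}\norm{v^*_i}^2$ is $O(\eta^{1-2/k}mk)$, obtained by applying H\"older's inequality to the honest numbers $\psE{1-a_i}\in[0,1]$ (using $\sum_i\psE{1-a_i}\le 1.01\eta T$ from Constraint~\ref{constraint2:many_inliers}) together with the concentration bound $\sum_i\norm{v^*_i}^k = O(mk\tau^2)^{k/2}T$ from the proof of Lemma~\ref{lem:feasible_corrupt_measurement_noise}. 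This leaves us to bound $E \triangleq \frac{1}{T\tau^2}\psE*{\sum_i(1-a_i)\norm{B(x_i - x^*_i)}^2}$.

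For $E$, I would write $1 = b_{\ell(i)} + (1 - b_{\ell(i)})$. On good windows, writing $i = \ell t + j$ and unrolling $B(x_i - x^*_i) = BA^j(x_{\ell t} - x^*_{\ell t}) + \sum_{s=1}^j BA^{j-s}(w_{\ell t+s} - w^*_{\ell t+s})$ via Fact~\ref{fact:unroll}, the contribution of the $BA^j(x_{\ell t}-x^*_{\ell t})$ term is handled exactly as in Lemma~\ref{lem:outer}, but invoking Constraint~\ref{constraint2:subsample} (gated by $b_\ell$, and discarding the $b_\ell \le 1$ factor after forming the quadratic form) and $\norm{\calO_t}\le t\alpha\rho^2$ from Fact~\ref{fact:epochs}; summing over $\ell$ this yields $\lesssim \eta\rho^2\brk*{\alpha + \norm{B}^2\sqrt{\log(d/\delta_1)/t}}\cdot\frac{1}{T/t}\sum_\ell\psE*{\norm{x_{\ell t}-x^*_{\ell t}}^2}$. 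The two process-noise pieces are each $\lesssim k\eta^{1-2/k}t\alpha\sigma^2\rho^2 m$: the one in the $w$ variables comes straight from Constraint~\ref{constraint2:totalnoise} (dropping $b_{\ell(i)}\le1$), and the one in $w^*$ from the same H\"older-in-$\psE{1-a_i}$ argument applied with $c_i = \norm{\sum_s BA^{j-s}w^*_{\ell t+s}}^2$, whose power sums are controlled by Lemma~\ref{lem:triplesum}. On bad windows, I would bound $\norm{B(x_i-x^*_i)}^2\le\norm{B}^2\norm{x_i-x^*_i}^2$ and then combine Constraint~\ref{constraint2:funnel} with Corollary~\ref{cor:warmstart} (so that $\norm{x'_i - x^*_i}^2 \lesssim \epsgeo + \rho^2R^2(d+\log(1/\delta))/2^{\ell(i)}$) to get $\norm{x_i - x^*_i}^2 \lesssim \epsgeo + \rho^2R^2(d+\log(1/\delta))/2^{\ell(i)}$ in SoS on the whole space. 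Hence the bad-window part of $E$ is at most $O(\tau^{-2}\norm{B}^2)$ times $\frac1T\sum_i(1-b_{\ell(i)})(1-a_i)\brk*{\epsgeo + \rho^2R^2(d+\log(1/\delta))/2^{\ell(i)}}$, where $\psE{(1-b_{\ell(i)})(1-a_i)}\in[0,1]$ are honest numbers summing to at most $\eta\delta_1 T$ by Constraint~\ref{constraint2:many_ab}. The $\epsgeo$ part is then $\le \epsgeo\eta\delta_1$, and for the decaying part I would use Cauchy--Schwarz, $\frac1T\sum_i(1-b_{\ell(i)})(1-a_i)2^{-\ell(i)}\le\frac1T\brk*{\sum_i(1-b_{\ell(i)})(1-a_i)}^{1/2}\brk*{\sum_i 2^{-2\ell(i)}}^{1/2}\lesssim(\eta\delta_1/(T/t))^{1/2}$, using Constraint~\ref{constraint2:many_ab} and the fact that the geometric sum over windows telescopes. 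Collecting the good- and bad-window estimates gives the claimed bound.

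I expect the reduction step and the good-window analysis to be routine: they are essentially Lemmas~\ref{lem:kkm} and \ref{lem:outer} with a H\"older inequality substituted for the pointwise union bound over time steps and with the averaged-noise constraints (Constraints~\ref{constraint2:totalnoise}, \ref{constraint2:measurements_noise_corrupted}) in place of the per-step bounds. The genuinely new and most delicate ingredient is the bad-window analysis: one must check that the $\polylog T$-wide confidence band from Corollary~\ref{cor:warmstart}, when weighted by the corruption mass inside bad windows (which Constraint~\ref{constraint2:many_ab} forces to be only $O(\eta\delta_1 T)$), contributes just $O(\epsgeo\eta\delta_1)$ together with a lower-order $R^2d$-term, and in particular that the Cauchy--Schwarz split is arranged to extract the $\sqrt{\eta\delta_1}$ factor on that term rather than losing a full $\eta\delta_1\cdot\polylog T$. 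Ensuring the bookkeeping of all these error contributions lines up exactly with the stated bound — especially the interplay between the two window scales $t$ and $t_{\mathsf{pre}}$ appearing in $\epsgeo$ and in the geometric weights $2^{-\ell(i)}$ — is where I would be most careful.
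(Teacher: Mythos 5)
Your proposal is correct and essentially reproduces the paper's own proof: you carry out the kkm-style reduction to $\psE*{\frac1T\sum_i(1-a_i)\norm{B(x_i-x^*_i)}^2}$ (this is exactly Lemma~\ref{lem:kkm2}), split the sum via $1 = b_{\ell(i)} + (1-b_{\ell(i)})$, handle the good windows by unrolling and applying the $b_\ell$-gated Constraint~\ref{constraint2:subsample} together with $\norm{\calO_t}\le t\alpha\rho^2$, and handle the bad windows by combining Constraint~\ref{constraint2:funnel} with Corollary~\ref{cor:warmstart} and Constraint~\ref{constraint2:many_ab}, finishing with Cauchy--Schwarz to extract the $\sqrt{\eta\delta_1/(T/t)}$ factor. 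The only cosmetic difference is that where you invoke H\"older's inequality on the honest coefficients $\psE{1-a_i}$ against the power sums of the ground-truth noise (Lemmas~\ref{lem:triplesum}, \ref{lem:feasible_corrupt_measurement_noise}), the paper packages the same estimates as feasibility facts (Corollary~\ref{cor:totalnoise_feasible}, Constraint~\ref{constraint2:measurements_noise_corrupted}) and cites them directly; these are interchangeable.
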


If we ignore the $\frac{\eta\delta_1}{T/t}$ term as it is vanishing in $T$, we see that the excess risk can be upper bounded in terms of the error on the first iterates of every window, some term that vanishes in $\eta$, and the quantity $\norm{B}^2\epsgeo\eta\delta_1$. Recall that $\epsgeo$ scales polylogarithmically in $T$, so because we will ultimately take $1/\delta_1$ polylogarithmic in $T$, the contribution of this term will be negligible.

To prove Lemma~\ref{lem:outer2}, we will use the following analogue of Lemma~\ref{lem:kkm}.

\begin{lemma}\label{lem:kkm2}
    Let $\psE{\cdot}$ be the solution to Program~\ref{program:sosv2}, assuming it is feasible. Let $\wh{x}_i \triangleq \psE{x_i}$ and $\wh{w}_i \triangleq \psE{w_i}$ for every $0\le i < T$. Provided the event of Lemma~\ref{lem:feasible2} holds, then
    \begin{multline}
        \frac{1}{T}\left(\sum_{i=0}^{T-1}\left(a^*_i \| B\wh{x}_i - y_i\|^2/\tau^2 +  \norm{\wh{w}_i}^2/\sigma^2\right) + \|\hat{x}_0\|^2/R^2\right) - \OPT \le \\
        \psE*{\frac{1}{T}\sum_{i=0}^{T-1} (1 - a_i)\| B(x_i - x^*_i)\|^2/\tau^2} + O(mk\cdot \eta^{1-2/k}). \label{eq:firstopt2}
    \end{multline}
\end{lemma}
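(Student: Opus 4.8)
\textbf{Proof proposal for Lemma~\ref{lem:kkm2}.}
The plan is to transcribe the proof of Lemma~\ref{lem:kkm}, the only substantive change being how we bound the contribution of the observation noise on corrupted rounds: there, Constraint~\ref{item:measurements_noise} bounded $\norm{v_i}^2$ \emph{pointwise} and cost a $\log T$ factor, whereas here we only have averaged control of the noise, so we instead exploit a $k$-th moment concentration bound for the ground-truth noise together with a truncation argument inside SoS. First, by Lemma~\ref{lem:psE-CS} applied to each iterate and step, $\norm{B\wh x_i - y_i}^2 \le \psE{\norm{Bx_i - y_i}^2}$, $\norm{\wh w_i}^2 \le \psE{\norm{w_i}^2}$, and $\norm{\wh x_0}^2 \le \psE{\norm{x_0}^2}$, so it suffices to bound the pseudoexpectation of $\frac1T\bigl(\sum_i (a^*_i\norm{Bx_i - y_i}^2/\tau^2 + \norm{w_i}^2/\sigma^2) + \norm{x_0}^2/R^2\bigr)$. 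Writing $a^*_i = a^*_i a_i + a^*_i(1-a_i)$ and using the degree-$2$ SoS facts $a^*_i a_i \le a_i$ and $a^*_i(1-a_i)\le 1-a_i$ (both immediate from $a^*_i\in\{0,1\}$ and $a_i = a_i^2$), along with the identity $y_i = Bx^*_i + v^*_i$ valid on rounds with $a^*_i = 1$, which there yields the degree-$2$ SoS proof $\norm{Bx_i - y_i}^2 = \norm{B(x_i - x^*_i) - v^*_i}^2 \le 2\norm{B(x_i - x^*_i)}^2 + 2\norm{v^*_i}^2$, we get
\[\textstyle\sum_i a^*_i\norm{Bx_i - y_i}^2 \le \sum_i a_i\norm{Bx_i - y_i}^2 + 2\sum_i (1-a_i)\norm{B(x_i - x^*_i)}^2 + 2\sum_i (1-a_i)\norm{v^*_i}^2\]
in degree-$4$ SoS.

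The crux is the bound $\frac1T\sum_i(1-a_i)\norm{v^*_i}^2 \le O(mk\tau^2\eta^{1-2/k})$. For any $\lambda > 0$ and any $i$, the scalar inequality $(1-a_i)\norm{v^*_i}^2 \le \lambda(1-a_i) + \lambda^{1-k/2}\norm{v^*_i}^k$ has a degree-$2$ SoS proof from $a_i^2 = a_i$: here $\norm{v^*_i}$ and $\lambda$ are constants, so if $\norm{v^*_i}^2 \le \lambda$ the first right-hand term already dominates (using $1-a_i = (1-a_i)^2 \ge 0$), while if $\norm{v^*_i}^2 > \lambda$ the numerical inequality $\norm{v^*_i}^2 \le \lambda^{1-k/2}\norm{v^*_i}^k$ together with $1 - a_i \le 1$ suffices. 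Summing over $i$, dividing by $T$, and invoking Constraint~\ref{constraint2:many_inliers} ($\frac1T\sum_i(1-a_i)\le 1.01\eta$) and the high-probability moment bound $\frac1T\sum_i\norm{v^*_i}^k \le O(mk\tau^2)^{k/2}$ established in the proof of Lemma~\ref{lem:feasible_corrupt_measurement_noise}, we get $\frac1T\sum_i(1-a_i)\norm{v^*_i}^2 \le 1.01\eta\lambda + \lambda^{1-k/2}O(mk\tau^2)^{k/2}$; taking $\lambda \asymp mk\tau^2\eta^{-2/k}$ balances the two terms, so $\frac1{\tau^2T}\sum_i a^*_i(1-a_i)\norm{v^*_i}^2 \le O(mk\eta^{1-2/k})$.

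To finish, add back $\frac1T\sum_i\norm{w_i}^2/\sigma^2$ and $\frac1{R^2T}\norm{x_0}^2$ and take pseudoexpectations. The sum $\psE{\frac1T\sum_i(a_i\norm{Bx_i - y_i}^2/\tau^2 + \norm{w_i}^2/\sigma^2)}$ is exactly the program objective~\eqref{eq:obj2}, which $\psE{\cdot}$ minimizes; since the oracle Kalman smoother is feasible by Lemma~\ref{lem:feasible2} and attains objective value $\OPT - \norm{\wh x^{\mathsf{KF}}_0}^2/(R^2T)\le \OPT$ in~\eqref{eq:obj2}, this pseudoexpectation is at most $\OPT$. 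The leftover term $\psE{\norm{x_0}^2}/(R^2T)$ is at most $(d + O(\log(1/\delta)))/T$ by Constraint~\ref{constraint2:bound}, which is lower order. Combining the three displays yields the stated bound, with the factor $2$ in front of $\psE{\frac1T\sum_i(1-a_i)\norm{B(x_i - x^*_i)}^2/\tau^2}$ being harmless exactly as in Lemma~\ref{lem:kkm} (it is absorbed by the downstream use of this lemma in Lemma~\ref{lem:outer2}).

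The step I expect to require the most care is the SoS truncation bound for $\frac1T\sum_i(1-a_i)\norm{v^*_i}^2$ in the second paragraph: it must be a degree-$2$ certificate so that, combined with the rest, the whole argument stays within the degree-$4$ proof system, and it is the one place where Program~\ref{program:sosv2}'s averaged noise constraints plus $k$-th moment concentration of the ground-truth observation noise genuinely replace the pointwise noise bounds of Program~\ref{program:sos}; the remainder is a routine adaptation of the proof of Lemma~\ref{lem:kkm}.
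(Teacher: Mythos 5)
Your proof is correct, and it actually takes a more careful route than the paper's own argument. The paper's proof of Lemma~\ref{lem:kkm2} says it is "essentially identical" to that of Lemma~\ref{lem:kkm} except that the bound in \eqref{eq:av} is replaced by an invocation of Constraint~\ref{constraint2:measurements_noise_corrupted}. But as you correctly note, on a round with $a^*_i = 1$ the identity $y_i = Bx^*_i + v^*_i$ forces the expansion of $a^*_i(1-a_i)\|Bx_i - y_i\|^2$ to produce the \emph{ground-truth} noise $v^*_i$, not the program variable $v_i$; Constraint~\ref{constraint2:measurements_noise_corrupted} bounds $\sum(1-a_i)\|v_i\|^2$ and therefore does not directly control the quantity that actually appears. (The same $v_i$-vs-$v^*_i$ slip is already present in the paper's proof of Lemma~\ref{lem:kkm}, but there it is harmless because the pointwise noise constraint and the ground-truth tail bound give the same numerical estimate; here they do not.) Your degree-$2$ SoS truncation inequality $(1-a_i)\|v^*_i\|^2 \le \lambda(1-a_i) + \lambda^{1-k/2}\|v^*_i\|^k$, combined with Constraint~\ref{constraint2:many_inliers} and the empirical $k$-th moment bound $\frac1T\sum_i\|v^*_i\|^k \le O(mk\tau^2)^{k/2}$ from the proof of Lemma~\ref{lem:feasible_corrupt_measurement_noise}, is exactly the right way to recover $O(mk\eta^{1-2/k})$ for the quantity with $v^*_i$, at the cost of enlarging the high-probability event in the lemma's hypothesis to include the moment bound (a constant-factor change in $\delta$). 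Your observations about the two other small discrepancies — the factor of $2$ in front of the main term, and the absence of $\|x_0\|^2/R^2$ in the objective \eqref{eq:obj2}, which forces an extra $O(d/T)$ term via Constraint~\ref{constraint2:bound} — are both accurate and both absorbed downstream in Lemma~\ref{lem:outer2} and the theorem, so they do not affect soundness. In short: same overall structure as the paper, but with a genuinely different and more rigorous handling of the corrupted-round noise term that supplies a missing SoS certificate.
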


\begin{proof}
    The proof is essentially identical to that of Lemma~\ref{lem:kkm}. The only difference is in the step where we carry out the bound in \eqref{eq:av}. Previously in the proof of Lemma~\ref{lem:kkm}, we used Constraints~\ref{item:many_inliers} and \ref{item:measurements_noise} from Program~\ref{program:sos}. Instead, in Program~\ref{program:sosv2} we have directly included the desired bound on $\sum^{T-1}_{i=0} (1-a_i)\norm{v_i}^2$ via Constraint~\ref{constraint2:measurements_noise_corrupted}.
\end{proof}

We now proceed with the proof of Lemma~\ref{lem:outer2}.

\begin{proof}[Proof of Lemma~\ref{lem:outer2}]
    Lemma~\ref{lem:kkm2} reduces upper bounding the excess risk achieved by $\brc{\wh{x}_i}$, $\brc{\wh{w}_i}$ to bounding the main term $\psE{\frac{1}{T}\sum^{T-1}_{i=0}(1 - a_i)\norm{B(x_i - x^*_i)}^2/\tau^2}$ in \eqref{eq:firstopt2}, which we do now. Writing $1 - a_i = b_{\ell(i)}\cdot (1 - a_i) + (1 - b_{\ell(i)})\cdot (1 - a_i)$, we have \begin{equation}
        \sum^{T-1}_{i=0} (1 - a_i)\norm{B(x_i - x^*_i)}^2 = \sum^{T-1}_{i=0} \brk*{b_{\ell(i)}(1 - a_i)\norm{B(x_i - x^*_i)}^2 + (1 - b_{\ell(i)})(1-a_i)\norm{B(x_i - x^*_i)}^2}. \label{eq:breakup_bs}
    \end{equation}
    We first control the latter sum on the right-hand side of \eqref{eq:breakup_bs} using Constraint~\ref{constraint2:funnel} and the confidence band guarantee of Corollary~\ref{cor:warmstart}.
    For every $0\le i < T$, we have
    \begin{equation}
        \norm{x_i - x^*_i}^2 \le 2\norm{x_i - x'_i}^2 + 2\norm{x'_i - x^*_i}^2 \lesssim \epsgeo + \rho^2 R^2(d + \log(1/\delta))/2^{\ell(i)},
    \end{equation}
    so
    \begin{multline}
        \sum^{T-1}_{i=0} (1 - b_{\ell(i)})(1-a_i)\norm{B(x_i - x^*_i)}^2 \le \\ \norm{B}^2\epsgeo\sum^{T-1}_{i=0}(1 - b_{\ell(i)})(1 - a_i) + \norm{B}^2\rho^2 R^2(d +\log(1/\delta))\sum^{T-1}_{i=0} (1 - b_{\ell(i)})(1 - a_i)/2^{\ell(i)/2}. \label{eq:abB}
    \end{multline}
    By Constraint~\ref{constraint2:many_ab}, we can bound the first term on the right-hand side by $\norm{B}^2\epsgeo\eta\delta_1 T$. For the second term, note that
    \begin{equation}
        \avgT (1 - b_{\ell(i)})(1 - a_i)/2^{\ell(i)/2} = (\eta\delta_1)^{1/2}\cdot \left(\avgT 2^{-\ell(i)}\right)^{1/2} \le \left(\frac{\eta\delta_1}{T/t}\right)^{1/2}.
    \end{equation}
    We may thus upper bound \eqref{eq:abB} by
    \begin{equation}
        \norm{B}^2\epsgeo \eta\delta_1 + \norm{B}^2\rho^2 R^2(d + \log(1/\delta))\cdot \left(\frac{\eta\delta_1}{T/t}\right)^{1/2}.
    \end{equation}
    
    We now turn to the first sum on the right-hand side of in \eqref{eq:breakup_bs}. For these terms, where $b_{\ell(i)} = 1$, we will eschew the guarantee of Corollary~\ref{cor:warmstart} in favor of a finer estimate.
    Using Fact~\ref{fact:unroll}, for any $i = \ell t + j$ we can write $B(x_i - x^*_i) = BA^j(x_{\ell t} - x^*_{\ell t}) + \sum^j_{s=1} BA^{j - s}(w_{\ell t + s} - w^*_{\ell t + s})$, so by triangle inequality we can upper bound the first sum on the right-hand side of \eqref{eq:breakup_bs} by \begin{align}
        3\sum_{\ell=0}^{T/t-1}b_{\ell} \sum_{j=0}^{t-1} (1 - a_{\ell t + j}) \left(\norm*{BA^j(x_{\ell t} - x^*_{\ell t})}^2 + \norm*{\sum^j_{s=0}BA^{j-s}w_{\ell t + s}}^2 + \norm*{\sum^j_{s=0}BA^{j-s}w^*_{\ell t +s}}^2\right) \label{eq:subsample_and_noise2}
    \end{align}
    
    We can control the two noise terms on the right by using $b_{\ell} \le 1$, Constraint~\ref{constraint2:totalnoise}, and Corollary~\ref{cor:totalnoise_feasible} to get 
    \begin{multline}
        \frac{1}{T}\sum^{T/t - 1}_{\ell = 0}b_{\ell} \sum^{t-1}_{j=0}(1 - a_{\ell t + j})\left(\norm*{\sum^j_{s=0}BA^{j-s}w_{\ell t + s}}^2 + \norm*{\sum^j_{s=0}BA^{j-s}w^*_{\ell t +s}}^2\right) \\
        \le O\left(\eta^{1-2/k}\cdot \t\alpha\sigma^2\rho^2 m k \right). \label{eq:noise_term}
    \end{multline}
    For the remaining sum in \eqref{eq:subsample_and_noise2}, we invoke Constraint~\ref{constraint2:subsample} and the bound on $\norm{\calO_t}$ in \eqref{eq:opnormOt} to get
    \begin{equation}
        \frac{1}{T}\sum^{T/t-1}_{\ell=0}b_{\ell}\sum^{t-1}_{j=0}(1 - a_{\ell t + j})\norm{BA^j(x_{\ell t} - x^*_{\ell t})}^2 \le \eta\left(\alpha\rho^2 + O\left( \rho^2\norm{B}^2\sqrt{\log(t/\delta_1)/t}\right)\right)\cdot \frac{1}{T/t}\sum^{T/t-1}_{\ell=0} b_{\ell} \norm{x_{\ell t} - x^*_{\ell t}}^2,
    \end{equation}
    from which the lemma follows by substituting this and \eqref{eq:noise_term} into \eqref{eq:subsample_and_noise2}.
\end{proof}

\subsection{Decay of Unobservable Subspace}
\label{sec:decay}

In this section we revisit the analysis from Section~\ref{sec:decay}. It turns out that essentially all of the steps from that section carry over to our setting once we incorporate the $b_{\ell}$ variables. As the proofs here are otherwise essentially identical to their counterparts in Section~\ref{sec:decay}, we will only state the relevant lemmas and defer the proofs to Appendix~\ref{app:defer}.

As in Section~\ref{sec:old_decay}, let $\Pi$ denote the projection to the \emph{observable subspace}, that is, to the subspace of $v\in\R^d$ for which $v^{\top} \calO_t v \ge \thres$ for $\thres \triangleq \frac{\kappa \t}{40000\rho^4}$.

We will use the following matrix concentration bound which is the analogue of Lemma~\ref{lem:anticonc}:

\begin{lemma}\label{lem:anticonc_2}
    Suppose $\eta < 0.49$ and $t\ge \wt{\Omega}\left(\kappa^{-2} \rho^{12} \norm{B}^4\log(d/\delta_1)\right)$. Then with probability at least $1 - \delta$ over the randomness of $\brc{a^*_i}$, we have that for all windows $0 \le \ell < T/t$, there is a degree-4 SoS proof of the psd inequality
    \begin{equation}
        b_\ell b^*_{\ell} \sum^{t-1}_{i = 0} a^*_{\ell t + i} a_{\ell t + i} \Pi (A^i)^{\top} B^{\top} B A^i \Pi \succeq b_\ell b^*_\ell \frac{1}{100}\Pi\calO_t \Pi. \label{eq:psdlb2}
    \end{equation}
    using the constraints of Program~\ref{program:sosv2}.
\end{lemma}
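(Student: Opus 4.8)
The plan is to follow the proof of Lemma~\ref{lem:anticonc} almost verbatim, with one structural change: the role played there by conditioning on the matrix-concentration event of Lemma~\ref{lem:matrix_conc} will now be played directly by the indicator $b^*_\ell$. Indeed, whenever $b^*_\ell = 1$ the \emph{deterministic} PSD bound
\[ \sum^{t-1}_{i=0} a^*_{\ell t+i}(A^i)^{\top}B^{\top}BA^i = \calO_t - \sum^{t-1}_{i=0}(1-a^*_{\ell t+i})(A^i)^{\top}B^{\top}BA^i \succeq (1-\eta)\calO_t - O\left(\rho^2\norm{B}^2\sqrt{t\log(d/\delta_1)}\right)\cdot\Id \]
holds by the very definition of $b^*_\ell$, so there is in fact no random event to union bound over --- the conclusion will hold for every realization of $\brc{a^*_i}$ (a fortiori with probability $1-\delta$). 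I would fix a window $\ell$ and split on the value of $b^*_\ell$: if $b^*_\ell = 0$, both sides of \eqref{eq:psdlb2} are identically $0$ and there is nothing to prove, so the interesting case is $b^*_\ell = 1$.

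In that case I would conjugate the displayed bound by $\Pi$, and then, exactly as in Lemma~\ref{lem:anticonc}, use Constraint~\ref{constraint2:a_boolean} to get the degree-$2$ SoS inequality $a^*_{\ell t+i}a_{\ell t+i} \ge a^*_{\ell t+i} - (1-a_{\ell t+i})$ (using $a^*_{\ell t+i}\le 1$ and $1-a_{\ell t+i}\ge 0$), which gives in degree-$2$ SoS
\[ \sum_i a^*_{\ell t+i}a_{\ell t+i}\Pi(A^i)^{\top}B^{\top}BA^i\Pi \succeq \sum_i a^*_{\ell t+i}\Pi(A^i)^{\top}B^{\top}BA^i\Pi - \sum_i (1-a_{\ell t+i})\Pi(A^i)^{\top}B^{\top}BA^i\Pi. \]
Next I would multiply through by $b_\ell$ --- legitimate since $b_\ell = b_\ell^2 \ge 0$ by Constraint~\ref{constraint2:b_boolean}, and this is precisely where the proof degree rises from $2$ to $4$ (a square $s^2$ of a degree-$1$ polynomial becomes $(b_\ell s)^2$ of degree $4$) --- bound the first term on the right using the deterministic lower bound, bound the second term using Constraint~\ref{constraint2:subsample} conjugated by $\Pi$, and use $b_\ell\Pi \preceq \thres^{-1}b_\ell\Pi\calO_t\Pi$ from the definition of $\Pi$. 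This yields a degree-$4$ SoS proof of
\[ b_\ell\sum_i a^*_{\ell t+i}a_{\ell t+i}\Pi(A^i)^{\top}B^{\top}BA^i\Pi \succeq b_\ell\left(1 - 2\eta - O\left(\thres^{-1}\rho^2\norm{B}^2\sqrt{t\log(d/\delta_1)}\right)\right)\Pi\calO_t\Pi. \]

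To finish: since $\eta < 0.49$ we have $1-2\eta > 1/50$, so it suffices to take $t$ large enough that $\thres^{-1}\rho^2\norm{B}^2\sqrt{t\log(d/\delta_1)}$ falls below a suitable absolute constant; recalling $\thres = \kappa t/(40000\rho^4)$, this is exactly the hypothesis $t \ge \wt{\Omega}(\kappa^{-2}\rho^{12}\norm{B}^4\log(d/\delta_1))$, after which the scalar prefactor exceeds $1/100$, and reinserting $b^*_\ell = 1$ on both sides recovers \eqref{eq:psdlb2}. I do not expect any genuine obstacle here, since the argument is a faithful adaptation of Lemma~\ref{lem:anticonc}; the only points to track carefully are that multiplying a PSD-SoS inequality by the nonnegative program variable $b_\ell$ preserves SoS-ness while doubling the degree (accounted for above), and that conjugating Constraint~\ref{constraint2:subsample} by the fixed projector $\Pi$ amounts to the linear substitution $u \mapsto \Pi u$ and hence keeps SoS proofs SoS.
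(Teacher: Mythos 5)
Your proposal is correct and matches the paper's own proof essentially step for step: multiply the $a^*a\ge a^* - (1-a)$ decomposition through by $b_\ell b^*_\ell$, control the $a^*$-term via the inequality that $b^*_\ell$ indicates, control the $(1-a)$-term via Constraint~\ref{constraint2:subsample}, and pass from $\Pi$ to $\Pi\calO_t\Pi$ using the spectral threshold defining $\Pi$. Your observation that the conclusion actually holds for every realization of $\brc{a^*_i}$ (because $b^*_\ell$ literally encodes the matrix-concentration event, so there is nothing left to union bound over) is a genuine simplification of the lemma's "with probability $1-\delta$" framing, which the paper carries over from Lemma~\ref{lem:anticonc} but never uses in its proof either; you also target the correct threshold of $1/100$ (the paper's text mentions $1/3$, which is too strong for $\eta$ near $0.49$, and appears to be a typo carried over from context rather than what is used).
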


\noindent The main result of this section, analogous to Lemma~\ref{lem:error_decay}, says that provided the event of Lemma~\ref{lem:anticonc} holds, our prediction error on the first iterate of every window decays exponentially over time:

\begin{lemma}\label{lem:error_decay2}
    Let pseudoexpectation $\psE{\cdot}$ be the solution to Program~\ref{program:sosv2}, assuming it is feasible. Provided the event of Lemma~\ref{lem:anticonc_2} holds, we have
    \[ \psE*{b_{\ell - 1} b^*_{\ell - 1} \|x_{\ell t} - x^*_{\ell t}\|^2} \le \psE*{b_{\ell - 1} b^*_{\ell - 1} \cdot \frac{1}{2}\norm{\Pi^{\perp} q}^2 + 4\gamma_{\ell} + 4\gamma^*_{\ell} + \frac{2}{15\rho^2\zeta}\sum^t_{i=1}(\epsilon_{\ell,i} + \epsilon^*_{\ell,i})}, \]
    where 
    \begin{equation}
        \gamma_{\ell} \triangleq \norm*{\sum^t_{i=1} A^{t-i} w_{\ell t + i}}^2, \qquad \gamma^*_{\ell} \triangleq \norm*{\sum^t_{i=1} A^{t-i} w^*_{\ell t + i}}^2,
    \end{equation}
    \begin{equation}
        \epsilon_{\ell,i} \triangleq \left(3\norm{v_{\ell t +i}}^2 + 6\norm{\sum^i_{s=1}BA^{i-s}w_{\ell t + s}}^2\right), \qquad \epsilon^*_{\ell,i} \triangleq \left(3\norm{v^*_{\ell t +i}}^2 + 6\norm{\sum^i_{s=1}BA^{i-s}w^*_{\ell t + s}}^2\right).
    \end{equation}
\end{lemma}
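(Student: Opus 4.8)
The plan is to mirror the proof of Lemma~\ref{lem:error_decay} from Section~\ref{sec:old_decay}, carrying the extra Boolean indicators $b_{\ell-1}$ and $b^*_{\ell-1}$ through every step so that each inequality is multiplied by $b_{\ell-1}b^*_{\ell-1}$ (this product is itself Boolean in SoS since squares of Boolean variables are themselves, and is dominated by each factor). First I would start from the SoS triangle inequality decomposition used in the proof of Lemma~\ref{lem:error_decay}, writing $x_{\ell t}-x^*_{\ell t} = \bigl[(x_{\ell t}-x^*_{\ell t})-A^t(x_{(\ell-1)t}-x^*_{(\ell-1)t})\bigr] + A^t\Pi q + A^t\Pi^\perp q$ for $q \triangleq x_{(\ell-1)t}-x^*_{(\ell-1)t}$, where by Fact~\ref{fact:unroll} the bracketed term equals $\sum^t_{i=1}A^{t-i}(w_{(\ell-1)t+i}-w^*_{(\ell-1)t+i})$, whose squared norm is $\le 2\gamma_{\ell-1} + 2\gamma^*_{\ell-1}$ (this is why those quantities appear on the right-hand side, rather than the crude $O(t^2\sigma^2(d+\log(T/\delta)))$ bound — here we only have the \emph{average} bounds of Constraints~\ref{constraint2:totalnoise_dumb} and Corollary~\ref{cor:totalnoise_feasible}, so we cannot afford a pointwise bound). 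Then I would invoke Lemma~\ref{lem:unobservable-decay} to handle $\|A^t\Pi^\perp q\|^2 \le \frac{1}{40000\rho^2}\|\Pi^\perp q\|^2$, which needs no modification since it is deterministic.

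Next, I would reprove the analogue of Lemma~\ref{lem:observable-case-analysis} with the $b$ variables inserted. The Case~1/Case~2 dichotomy on whether $\psE{b_{\ell-1}b^*_{\ell-1}\|\Pi q\|^2}$ is large or small relative to $\psE{b_{\ell-1}b^*_{\ell-1}\frac{1}{4000\rho^2\zeta}\sum_{i<t}\|BA^i\Pi q\|^2}$ goes through verbatim. In Case~1 (observable component decays) the averaging argument over $j<t/s$ is unchanged and yields $\psE{b_{\ell-1}b^*_{\ell-1}\|A^t\Pi q\|^2}\le\frac{1}{10}\psE{b_{\ell-1}b^*_{\ell-1}\|\Pi q\|^2}$; combined with the unobservable decay and the Pythagorean theorem this gives $\psE{b_{\ell-1}b^*_{\ell-1}\|x_{\ell t}-x^*_{\ell t}\|^2}\le\frac12\psE{b_{\ell-1}b^*_{\ell-1}\|q\|^2}+(\text{noise})$, which is even stronger than the claimed bound (in particular $\le\frac12\|\Pi^\perp q\|^2 + \frac12\|\Pi q\|^2 + \dots$, and $\|\Pi q\|^2$ can be absorbed or bounded by the stated RHS — I need to double-check the bookkeeping here so the final bound is literally of the stated form $\frac12\|\Pi^\perp q\|^2 + 4\gamma_\ell + 4\gamma^*_\ell + \frac{2}{15\rho^2\zeta}\sum(\epsilon_{\ell,i}+\epsilon^*_{\ell,i})$). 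In Case~2, I use Lemma~\ref{lem:anticonc_2} in place of Lemma~\ref{lem:anticonc} to get the lower bound $b_{\ell-1}b^*_{\ell-1}\sum a^*a\Pi(A^i)^\top B^\top BA^i\Pi \succeq b_{\ell-1}b^*_{\ell-1}\frac{1}{100}\Pi\calO_t\Pi$, and then the analogue of the Cauchy–Schwarz-type inequality of Lemma~\ref{lem:pi-piperp-ineq}: expanding $a^*_ia_i\|BA^i q\|^2 = a^*_ia_i\|(y_i-Bx^*_i)-(y_i-Bx_i)+(Bx^*_i - BA^ix^*_0)-(Bx_i - BA^ix_0)\|^2$ and using Fact~\ref{fact:sos_simple}, one gets $a^*_ia_i\|BA^i\Pi q\|^2 \le 4a^*_ia_i\|BA^i\Pi^\perp q\|^2 + \frac43(\epsilon_{\ell,i}+\epsilon^*_{\ell,i})$ — note the noise term is now the \emph{per-iterate} quantity $\epsilon_{\ell,i}+\epsilon^*_{\ell,i}$ rather than the union-bounded $E_{\mathsf{noise}}$, which is precisely the change that lets us avoid the $\log T$ loss. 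Summing over $i<t$, using \eqref{eqn:unobs-psd} for the $\Pi^\perp$ piece, and unpacking $\zeta = \kappa t/(40000\rho^4)$ gives the Case~2 bound with the $\frac{2}{15\rho^2\zeta}\sum_i(\epsilon_{\ell,i}+\epsilon^*_{\ell,i})$ term.

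The main obstacle I anticipate is purely bookkeeping: tracking the exact numerical constants ($4$, $\frac{2}{15}$, the factors of $\rho^2$ and $\zeta$) through the triangle inequalities, the casework, and the substitution of $\zeta$, so that the final inequality comes out in exactly the stated form rather than merely "up to constants" — in particular making sure the $\|\Pi q\|^2$ contribution from Case~1 and the $\|A^t\Pi q\|^2$-via-uniform-stability contribution from Case~2 both collapse into the single $\frac12\|\Pi^\perp q\|^2$ term (using, as in the original Lemma~\ref{lem:error_decay}, that in Case~2 we bound $\|A^t\Pi q\|^2\le\rho^2\|\Pi q\|^2$ and then feed in the Case~2 conclusion $\|\Pi q\|^2\le\frac{1}{10\rho^2}\|\Pi^\perp q\|^2 + \dots$). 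A secondary point requiring care is confirming that all the constraints invoked (Constraints~\ref{constraint2:subsample}, \ref{constraint2:totalnoise}, \ref{constraint2:totalnoise_dumb}, \ref{constraint2:measurements_noise_corrupted}) are genuinely available inside the $b_\ell b^*_\ell$-multiplied SoS derivation and that no constraint that was pointwise in Program~\ref{program:sos} is being silently used here; since the lemma statement already packages the relevant noise as the explicit variables $\gamma_\ell,\gamma^*_\ell,\epsilon_{\ell,i},\epsilon^*_{\ell,i}$, this should reduce to a direct transcription. Given all of this, I would relegate the detailed constant-chasing to Appendix~\ref{app:defer} exactly as the paper states it does, and keep the main text to the statement plus a one-line pointer.
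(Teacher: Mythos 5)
Your plan matches the paper's proof of Lemma~\ref{lem:error_decay2} in all essentials: same decomposition via Fact~\ref{fact:unroll}, same propagation of $b_{\ell-1}b^*_{\ell-1}$ through each SoS inequality, same two-case structure via the $b$-weighted analogue Lemma~\ref{lem:observable-case-analysis2}, and the same key observation that replacing $E_{\mathsf{noise}}$ by the per-iterate quantities $\epsilon_{\ell,i},\epsilon^*_{\ell,i}$ in the Lemma~\ref{lem:pi-piperp-ineq} analogue is exactly what removes the $\log T$ union-bound overhead. Two small remarks. First, your use of $\gamma_{\ell-1}$ for the noise accumulated between steps $(\ell-1)t$ and $\ell t$ is actually more faithful to the definitions than the paper's own notation, which writes $\gamma_\ell$ and $\epsilon_{\ell,i}$ throughout the appendix proof; this off-by-one is benign because it disappears when the bound is summed over $\ell$ in Lemma~\ref{lem:error-first-iterate-sos2}. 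Second, your flagged bookkeeping worry about Case~1 is well-founded: the paper's Case~1 derivation terminates at $\psE*{b_{\ell-1}b^*_{\ell-1}\cdot\tfrac{2}{5}\|q\|^2 + 4\gamma_\ell + 4\gamma^*_\ell}$, which does not literally collapse to the stated form $\tfrac{1}{2}\|\Pi^\perp q\|^2 + \cdots$ (that form is only reached in Case~2). This is harmless downstream, since Lemma~\ref{lem:error-first-iterate-sos2} first relaxes $\|\Pi^\perp q\|^2$ to $\|q\|^2$ and $\tfrac{2}{5}\le\tfrac{1}{2}$, but a cleaner statement of the lemma would have $\tfrac{1}{2}\|q\|^2$ in place of $\tfrac{1}{2}\|\Pi^\perp q\|^2$.
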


To prove this, we will use Lemma~\ref{lem:unobservable-decay} from Section~\ref{sec:logT} which is just a statement about vectors and the observable subspace. The other key ingredient, analogous to Lemma~\ref{lem:observable-case-analysis}, is the following:

\begin{lemma}\label{lem:observable-case-analysis2}
    Let pseudoexpectation $\psE{\cdot}$ be the solution to Program~\ref{program:sosv2}, assuming it is feasible. Provided the event of Lemma~\ref{lem:anticonc_2} holds, then at least one of the following holds for every window $0 \le \ell < T/t$ for $q \triangleq x_{\ell t} - x^*_{\ell t}$:
    \begin{enumerate}
        \item (Observable component decays) 
        \begin{equation}
            \psE*{b_{\ell} b^*_{\ell}\norm{A^{t}\Pi q}^2} 
            \le \frac{1}{10}\psE*{b_{\ell} b^*_{\ell} \| \Pi q) \|^2}.
        \end{equation}
        \item (Observable error bounded by unobservable error)
        \begin{equation}
            \psE*{b_{\ell} b^*_{\ell} \norm{\Pi q}^2} \le \psE*{\frac{1}{10\rho^2}b_{\ell} b^*_{\ell} \norm{\Pi^{\perp} q}^2 + \frac{1}{30\rho^2\zeta}\sum^t_{i=1}(\epsilon_{\ell,i} + \epsilon^*_{\ell,i})}.
        \end{equation}
    \end{enumerate}
\end{lemma}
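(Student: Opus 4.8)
The plan is to mirror the proof of Lemma~\ref{lem:observable-case-analysis} essentially line for line, carrying the nonnegative Boolean multiplier $b_{\ell}b^*_{\ell}$ through every inequality. Here $b^*_{\ell}\in\{0,1\}$ is a fixed scalar (so it does not raise SoS degrees) and $b_{\ell}^2=b_{\ell}$, so $b_{\ell}b^*_{\ell}$ acts as an indicator and $1-b_{\ell}b^*_{\ell}=(1-b^*_{\ell})+b^*_{\ell}(1-b_{\ell})$ is a sum of squares; this lets us multiply inequalities by $b_{\ell}b^*_{\ell}$ and, when $b_{\ell}b^*_{\ell}$ multiplies an SoS-nonnegative quantity, drop it. Fix a window $\ell$ and set $q\triangleq x_{\ell t}-x^*_{\ell t}$. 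Two psd facts drive the argument. First, sandwiching \eqref{eq:psdlb2} between $q$ and using $\Pi\calO_t\Pi=\sum_{i<t}\Pi(A^i)^{\top}B^{\top}BA^i\Pi$ gives, in SoS, $b_{\ell}b^*_{\ell}\sum_{i<t}\|BA^i\Pi q\|^2\le 100\,b_{\ell}b^*_{\ell}\sum_{i<t}a^*_{\ell t+i}a_{\ell t+i}\|BA^i\Pi q\|^2$. Second, since $a^*_i a_i\le 1$ in SoS and $\Pi^{\perp}\calO_t\Pi^{\perp}\preceq\zeta\,\Id$ by the definition of $\Pi$, we get $\sum_{i<t}a^*_{\ell t+i}a_{\ell t+i}\|BA^i\Pi^{\perp}q\|^2\le\zeta\|\Pi^{\perp}q\|^2$. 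I would then case on whether $\psE{b_{\ell}b^*_{\ell}\|\Pi q\|^2}\ge\psE{\tfrac{1}{4000\rho^2\zeta}\,b_{\ell}b^*_{\ell}\sum_{i<t}\|BA^i\Pi q\|^2}$ or not, exactly as in Lemma~\ref{lem:observable-case-analysis}.

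In the first case I would run the averaging argument verbatim: write $\sum_{i<t}\|BA^i\Pi q\|^2=\sum_{j<t/s}q^{\top}\Pi(A^{js})^{\top}\calO_s A^{js}\Pi q$ via the first part of Fact~\ref{fact:epochs}, lower bound $\calO_s\succeq\kappa s\,\Id$, and substitute $\zeta=\tfrac{\kappa t}{40000\rho^4}$, so that the case hypothesis rearranges to $\tfrac{1}{10\rho^2}\psE{b_{\ell}b^*_{\ell}\|\Pi q\|^2}\ge\tfrac{1}{t/s}\sum_{j<t/s}\psE{b_{\ell}b^*_{\ell}\|A^{js}\Pi q\|^2}$. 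Pigeonholing gives some $j^*<t/s$ with $\psE{b_{\ell}b^*_{\ell}\|A^{j^*s}\Pi q\|^2}\le\tfrac{1}{10\rho^2}\psE{b_{\ell}b^*_{\ell}\|\Pi q\|^2}$, and the degree-$2$ SoS fact $\|A^{t-j^*s}v\|^2\le\rho^2\|v\|^2$ from uniform stability upgrades this to $\psE{b_{\ell}b^*_{\ell}\|A^t\Pi q\|^2}\le\tfrac{1}{10}\psE{b_{\ell}b^*_{\ell}\|\Pi q\|^2}$, which is the first alternative.

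In the second case, the first psd fact gives $\psE{b_{\ell}b^*_{\ell}\|\Pi q\|^2}\le\tfrac{1}{40\rho^2\zeta}\psE{b_{\ell}b^*_{\ell}\sum_{i<t}a^*_{\ell t+i}a_{\ell t+i}\|BA^i\Pi q\|^2}$. The one genuine change from Lemma~\ref{lem:observable-case-analysis} is that Program~\ref{program:sosv2} lacks pointwise noise constraints, so in place of the bound $a^*_i a_i\|BA^i q\|^2\le E_{\mathsf{noise}}$ of Lemma~\ref{lem:pi-piperp-ineq} I would keep the noise symbolic: the identical degree-$4$ SoS manipulation --- decompose $BA^i q$ into the ground-truth measurement residual $v^*_{\ell t+i}$, the program measurement residual $v_{\ell t+i}$, and the accumulated process-noise difference $\sum_{s\le i}BA^{i-s}(w_{\ell t+s}-w^*_{\ell t+s})$ using Constraints~\ref{constraint2:dynamics} and \ref{constraint2:measurements} together with Fact~\ref{fact:unroll}, then expand by Cauchy-Schwarz --- but stopped just before invoking pointwise bounds, yields $a^*_{\ell t+i}a_{\ell t+i}\|BA^i q\|^2\le\epsilon_{\ell,i}+\epsilon^*_{\ell,i}$ (the quantities $\epsilon_{\ell,i},\epsilon^*_{\ell,i}$ are defined precisely so that this grouping closes). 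Applying Fact~\ref{fact:sos_simple} with $v_1=a^*_{\ell t+i}a_{\ell t+i}BA^i\Pi q$, $v_2=a^*_{\ell t+i}a_{\ell t+i}BA^i\Pi^{\perp}q$, and $\epsilon=\epsilon_{\ell,i}+\epsilon^*_{\ell,i}$ gives $a^*_{\ell t+i}a_{\ell t+i}\|BA^i\Pi q\|^2\le 4a^*_{\ell t+i}a_{\ell t+i}\|BA^i\Pi^{\perp}q\|^2+\tfrac43(\epsilon_{\ell,i}+\epsilon^*_{\ell,i})$. Summing over $i<t$, multiplying by $b_{\ell}b^*_{\ell}$, taking pseudoexpectations, applying the second psd fact, using $b_{\ell}b^*_{\ell}\le 1$ on the $\epsilon$-terms, and collecting constants ($4\cdot\tfrac1{40}=\tfrac1{10}$, $\tfrac43\cdot\tfrac1{40}=\tfrac1{30}$, with the $\zeta$ from the second psd fact cancelling) gives the second alternative.

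I do not expect a serious conceptual obstacle, since the structure copies Lemma~\ref{lem:observable-case-analysis}; the two points needing care are (i) checking that inserting $b_{\ell}b^*_{\ell}$ keeps every step within the degree-$4$ SoS budget --- which works precisely because $b^*_{\ell}$ is a fixed scalar and $b_{\ell}^2=b_{\ell}$ --- and (ii) the purely algebraic, concentration-free verification that $a^*_i a_i\|BA^i q\|^2\le\epsilon_{\ell,i}+\epsilon^*_{\ell,i}$ is an SoS consequence of the program constraints even without the pointwise noise bounds available in Section~\ref{sec:logT}; the randomness that used to enter through $E_{\mathsf{noise}}$ has simply been relocated into the symbolic quantities $\epsilon_{\ell,i},\epsilon^*_{\ell,i}$, which are controlled globally via Constraints~\ref{constraint2:totalnoise}-\ref{constraint2:measurements_noise_corrupted}.
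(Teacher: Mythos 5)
Your proof is correct and mirrors the paper's own argument essentially line for line: the same two psd facts derived from Lemma~\ref{lem:anticonc_2} and from $a^*_i a_i \le 1$ with $\Pi^{\perp}\calO_t\Pi^{\perp}\preceq\zeta\,\Id$, the same case split at threshold $\tfrac{1}{4000\rho^2\zeta}$, the same averaging-plus-pigeonhole argument in Case 1, and in Case 2 the same decomposition of $a^*_i a_i\|BA^i q\|^2$ into $\epsilon_{\ell,i}+\epsilon^*_{\ell,i}$ followed by Fact~\ref{fact:sos_simple} and the identity $4\cdot\tfrac1{40}=\tfrac1{10}$, $\tfrac43\cdot\tfrac1{40}=\tfrac1{30}$. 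Your observations that $b^*_\ell$ is a fixed scalar (so no degree cost) and that $b_\ell b^*_\ell \le 1$ lets the $\epsilon$-terms be dropped from the indicator are precisely the points the paper's deferred proof uses.
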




\subsection{Error on First Iterates in Each Window}
\label{sec:first_iters_loglogT}

Recall from Lemma~\ref{lem:outer2} that we need to upper bound the pseudoexpectation of the average error on the first iterate in each window. In this section we prove the following bound:

\begin{lemma}\label{lem:error-first-iterate-sos2}
Provided $T \ge \Omega(\Max{\brc*{(t/\eta\delta_1)\log(1/\delta)}}{\brc*{t^2\log^k(1/\delta)}})$, with probability at least $1 - \delta$ we have
    \begin{align*}
        \frac{1}{T/t}\sum^{T/t - 1}_{\ell = 0} \psE*{\norm{x_{\ell t} - x^*_{\ell t}}^2} \lesssim 
        \kappa^{-1} \t \alpha \sigma^2 \rho^4m + \kappa^{-1}\tau^2\rho^2 m + \t\sigma^2\rho^2 d + 
        \concprob\epsgeo +\frac{\rho^2 R^2(d + \log(1/\delta))}{T/t}.\label{eq:firstiters_2}
    \end{align*}
\end{lemma}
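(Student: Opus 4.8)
The plan is to prove Lemma~\ref{lem:error-first-iterate-sos2} by unrolling the per-window contraction of Lemma~\ref{lem:error_decay2}, using the confidence band of Corollary~\ref{cor:warmstart} (encoded as Constraint~\ref{constraint2:funnel}) to absorb the windows on which that contraction does not apply. Throughout I would condition on the intersection of the high-probability events of Lemma~\ref{lem:feasible2}, Lemma~\ref{lem:anticonc_2}, Lemma~\ref{lem:bstar_conc}, and the feasibility/concentration bounds they invoke (in particular Corollary~\ref{cor:totalnoise_feasible}, Lemma~\ref{lem:triplesum} with $k=2$, Lemma~\ref{lem:triplesum_dumb}, Lemma~\ref{lem:normbound}, and Lemma~\ref{lem:feasible_corrupt_measurement_noise}); this is where the lower bound on $T$ is used. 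Write $e_\ell \triangleq \psE{\norm{x_{\ell t}-x^*_{\ell t}}^2}$, and for $\ell\ge 1$ let $q \triangleq x_{(\ell-1)t}-x^*_{(\ell-1)t}$ and let $N_\ell$ denote the additive noise term $4\gamma_\ell + 4\gamma^*_\ell + \frac{2}{15\rho^2\zeta}\sum_{i=1}^t(\epsilon_{\ell,i}+\epsilon^*_{\ell,i})$ appearing on the right-hand side of Lemma~\ref{lem:error_decay2} (the off-by-one in the window index of the noise terms is immaterial since we will sum over all windows).

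First I would set up a one-step recursion for $e_\ell$. Split $1 = b_{\ell-1}b^*_{\ell-1} + (1-b_{\ell-1}b^*_{\ell-1})$ inside the pseudoexpectation. For the first part, Lemma~\ref{lem:error_decay2} together with the degree-$2$ SoS facts $\norm{q}^2 - \norm{\Pi^\perp q}^2 = \norm{\Pi q}^2 \ge 0$ and $0 \le b_{\ell-1}, b^*_{\ell-1}\le 1$ gives $\psE{b_{\ell-1}b^*_{\ell-1}\norm{x_{\ell t}-x^*_{\ell t}}^2} \le \tfrac12 e_{\ell-1} + \psE{N_\ell}$. For the second part, Constraint~\ref{constraint2:funnel} and the pointwise guarantee of Corollary~\ref{cor:warmstart} applied to the fixed vectors $\brc{x'_i}$ yield, via the SoS triangle inequality, an \emph{absolute} bound $\norm{x_{\ell t}-x^*_{\ell t}}^2 \le M_\ell$ with $M_\ell \triangleq O(\epsgeo + \rho^2 R^2(d+\log(1/\delta))/2^\ell)$ — in the sense that $M_\ell - \norm{x_{\ell t}-x^*_{\ell t}}^2$ admits an SoS proof from the program constraints. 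Since $1-b_{\ell-1}b^*_{\ell-1}$ is SoS-nonnegative (using $b_{\ell-1}^2 = b_{\ell-1}$), the product $(1-b_{\ell-1}b^*_{\ell-1})(M_\ell - \norm{x_{\ell t}-x^*_{\ell t}}^2)$ is SoS-nonnegative, so $\psE{(1-b_{\ell-1}b^*_{\ell-1})\norm{x_{\ell t}-x^*_{\ell t}}^2} \le M_\ell\,\psE{1-b_{\ell-1}b^*_{\ell-1}}$. Combining, $e_\ell \le \tfrac12 e_{\ell-1} + \psE{N_\ell} + M_\ell\,\psE{1-b_{\ell-1}b^*_{\ell-1}}$ for $\ell\ge 1$, with base case $e_0 \le 2R^2(d+O(\log(1/\delta)))$ from Constraint~\ref{constraint2:bound} and Gaussian concentration on $x^*_0$. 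Unrolling the geometric recursion and summing over $\ell$ gives $\tfrac{1}{T/t}\sum_\ell e_\ell \lesssim \tfrac{e_0}{T/t} + \tfrac{1}{T/t}\sum_\ell \psE{N_\ell} + \tfrac{1}{T/t}\sum_\ell M_\ell\,\psE{1-b_{\ell-1}b^*_{\ell-1}}$.

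It then remains to bound the three terms. For the restart term, use the SoS inequality $1-b_{\ell-1}b^*_{\ell-1}\le (1-b_{\ell-1})+(1-b^*_{\ell-1})$; then $\sum_\ell \psE{1-b_{\ell-1}}\le \delta_1(T/t)$ by Constraint~\ref{constraint2:many_b} and $\sum_\ell (1-b^*_{\ell-1})\le \delta_1(T/t)$ by Lemma~\ref{lem:bstar_conc}, so $\sum_\ell \psE{1-b_{\ell-1}b^*_{\ell-1}}\le 2\delta_1(T/t)$, whence the $\epsgeo$-part of $M_\ell$ contributes $O(\delta_1\epsgeo)$ and the geometrically decaying part contributes $O(\rho^2 R^2(d+\log(1/\delta))/(T/t))$, which also dominates $\tfrac{e_0}{T/t}$. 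For the noise term $\tfrac{1}{T/t}\sum_\ell\psE{N_\ell}$: the $\gamma$ and $\gamma^*$ pieces give $O(t\sigma^2\rho^2 d)$ via Constraint~\ref{constraint2:totalnoise_dumb} and Lemma~\ref{lem:triplesum_dumb}; and since $\zeta = \tfrac{\kappa t}{40000\rho^4}$, the prefactor $\tfrac{1}{\rho^2\zeta} = \Theta(\rho^2/(\kappa t))$ converts $\sum_i\psE{\norm{v_\cdot}^2}$ and $\sum_i\norm{v^*_\cdot}^2$ (Constraint~\ref{constraint2:measurements_noise}, Lemma~\ref{lem:normbound}, Constraint~\ref{constraint2:measurements_noise_corrupted}) into the $O(\kappa^{-1}\rho^2\tau^2 m)$ term, and $\sum_i\psE{\norm{\sum_{s\le i}BA^{i-s}w_\cdot}^2}$ and its starred counterpart (Constraint~\ref{constraint2:totalnoise}, Corollary~\ref{cor:totalnoise_feasible}, Lemma~\ref{lem:triplesum} with $k=2$, together with the confidence band $M_\cdot$ where needed to trade the program variables $w$ for $w^*$) into the $O(\kappa^{-1}t\alpha\sigma^2\rho^4 m)$ term. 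Summing the three contributions gives the claimed inequality.

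The main obstacle is exactly the restart step: because the $b_\ell$ are pseudodistribution variables we cannot literally condition on whether subsampling succeeded in a given window, so the contraction of Lemma~\ref{lem:error_decay2} and the confidence-band fallback must be combined uniformly inside one pseudoexpectation; the enabling trick is that Constraint~\ref{constraint2:funnel} supplies a genuine SoS-provable \emph{pointwise} bound $M_\ell$ on the window error (not merely a bound in expectation), which lets us pass from $\psE{(1-b_{\ell-1}b^*_{\ell-1})\norm{x_{\ell t}-x^*_{\ell t}}^2}$ to $M_\ell\,\psE{1-b_{\ell-1}b^*_{\ell-1}}$ by multiplying SoS-nonnegative polynomials. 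A secondary, more tedious difficulty is the accounting for $\sum_\ell\psE{N_\ell}$: Program~\ref{program:sosv2} deliberately controls the noise only on average (Constraints~\ref{constraint2:totalnoise}--\ref{constraint2:measurements_noise_corrupted}) rather than per-coordinate, precisely to avoid the $\log T$ union-bound loss of Section~\ref{sec:logT}, so each family of noise terms in $N_\ell$ must be matched to the correct average-noise constraint, and one must check that the pieces involving the program variables $\brc{w_i}$ are controlled using the confidence band without leaking a bare factor of $\epsgeo$ (only $\delta_1\epsgeo$ is affordable).
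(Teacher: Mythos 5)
Your proof follows essentially the same route as the paper's. Both split each window's error via the indicator $b_{\ell-1}b^*_{\ell-1}$ (the paper uses the slightly looser $1 \le b_{\ell-1}b^*_{\ell-1}+\tfrac32(1-b_{\ell-1})+\tfrac32(1-b^*_{\ell-1})$; yours is tighter but equivalent), apply Lemma~\ref{lem:error_decay2} on the ``good'' part, multiply the SoS-provable pointwise confidence-band bound against $\psE{1-b_{\ell-1}b^*_{\ell-1}}$ on the complement, and then bound the accumulated noise using Constraints~\ref{constraint2:totalnoise}--\ref{constraint2:measurements_noise_corrupted} together with Lemmas~\ref{lem:triplesum}, \ref{lem:triplesum_dumb}. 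The one cosmetic difference is that you explicitly unroll the recursion $e_\ell \le \tfrac12 e_{\ell-1} + \cdots$ and then swap the order of summation, whereas the paper instead sums the one-step inequality and rearranges (shifting indices) to subtract $\tfrac12$ of the average from itself; these produce the same constants. Your parenthetical about possibly needing the confidence band to ``trade $w$ for $w^*$'' in the $\epsilon$-accounting isn't in the paper: the paper bounds the program-variable pieces of the $\epsilon$'s directly from Constraint~\ref{constraint2:totalnoise} (and the $v$'s from Constraint~\ref{constraint2:measurements_noise}), and you are right to be wary, since a detour through the confidence band there would leak a bare $\epsgeo$ factor that the final bound cannot afford.
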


\begin{proof}

We will use a naive bound on the first summand $\frac{1}{T/t}\psE{\norm{x_0 - x^*_0}^2}$ by using the fact that $x^*_0 \sim\calN(0,R^2\cdot\Id_d)$ and proceed to upper bound the contribution of windows $1$ to $T/t-1$.

For these terms, we will use the elementary inequality
\begin{equation}
    1 \le b_{\ell - 1}b^*_{\ell-1} + \frac{3}{2}(1 - b_{\ell - 1}) + \frac{3}{2}(1 - b^*_{\ell - 1})
\end{equation} and upper bound the left hand side of our desired inequality by three sums.

For the first sum, we recall from Lemma~\ref{lem:error_decay2}
that for any $\ell > 1$,
\[ \psE*{b_{\ell - 1} b^*_{\ell - 1} \|x_{\ell t} - x^*_{\ell t}\|^2} \le \psE*{b_{\ell - 1} b^*_{\ell - 1} \cdot \frac{1}{2}\norm{\Pi^{\perp} q}^2 + 4\gamma_{\ell} + 4\gamma^*_{\ell} + \frac{2}{15\rho^2\zeta}\sum^t_{i=1}(\epsilon_{\ell,i} + \epsilon^*_{\ell,i})}, \]
where $q = x_{(\ell - 1)t} - x^*_{(\ell - 1)t}$, to get
\begin{align}
    \MoveEqLeft \psE*{\frac{1}{T/t}\sum^{T/t-1}_{\ell =1} b_{\ell - 1} b^*_{\ell - 1} \norm{x_{\ell t} - x^*_{\ell t}}^2} \\
    &\le \psE*{\frac{1}{T/t}\sum^{T/t-1}_{\ell =1} b_{\ell - 1} b^*_{\ell - 1} \cdot \frac{1}{2}\norm{\Pi^{\perp} q}^2 + \frac{1}{T/t}\sum^{T/t-1}_{\ell =1} \left(4\gamma_{\ell} + 4\gamma^*_{\ell} + \frac{2}{15\rho^2\zeta}\sum^t_{i=1}(\epsilon_{\ell,i} + \epsilon^*_{\ell,i})\right)} \\
    &\le  \psE*{\frac{1}{T/t}\sum^{T/t-1}_{\ell =2} \frac{1}{2}\norm{q}^2 + \frac{1}{T/t}\sum^{T/t-1}_{\ell =1} \left(4\gamma_{\ell} + 4\gamma^*_{\ell} + \frac{2}{15\rho^2\zeta}\sum^t_{i=1}(\epsilon_{\ell,i} + \epsilon^*_{\ell,i})\right)}.\label{eq:bb}
\end{align}
For the second sum, we have
\begin{align}
    \MoveEqLeft\psE*{\frac{1}{T/t}\sum^{T/t-1}_{\ell =1} (1 - b_{\ell - 1}) \norm{x_{\ell t} - x^*_{\ell t}}^2} \\
    &\le \psE*{\frac{2}{T/t}\sum^{T/t-1}_{\ell =1}(1 - b_{\ell - 1})\left(\norm{x_{\ell t} - x'_{\ell t}}^2 + \norm{x'_{\ell t} - x^*_{\ell t}}^2\right)} \\
    &\le \psE*{\frac{2}{T/t}\sum^{T/t-1}_{\ell =1}(1 - b_{\ell - 1})\cdot 2(\epsgeo + O(\rho^2R^2(d+\log(1/\delta))/2^{\ell(i)}))} \\
    &\lesssim \concprob \epsgeo + \frac{\rho^2 R^2(d + \log(1/\delta))}{T/t},\label{eq:oneminusb}
\end{align} where in the last step we used Constraint~\ref{constraint2:funnel} and Corollary~\ref{cor:warmstart}. We can bound the third sum in an identical fashion, as $\frac{1}{T/t}\sum^{T/t-1}_{\ell =1}(1 - b^*_{\ell - 1}) \le \concprob$ by Bernstein's. Specifically, we get 
\begin{equation}
    \psE*{\frac{1}{T/t}\sum^{T/t-1}_{\ell =1} (1 - b^*_{\ell - 1}) \norm{x_{\ell t} - x^*_{\ell t}}^2} \lesssim \concprob \epsgeo + \frac{\rho^2 R^2(d + \log(1/\delta))}{T/t} \label{eq:oneminusbstar}
\end{equation}

Putting \eqref{eq:bb}, \eqref{eq:oneminusb}, and \eqref{eq:oneminusbstar} together, we conclude that 
\begin{multline}
    \frac{1}{T/t}\sum^{T/t-1}_{\ell =1} \psE*{\norm{x_{\ell t} - x^*_{\ell t}}^2} \le O\left(\concprob \epsgeo + \frac{\rho^2 R^2(d + \log(1/\delta))}{T/t}\right) + \psE*{\frac{1}{T/t}\sum^{T/t-1}_{\ell =1} \frac{1}{2} \norm{x_{(\ell - 1)t} - x^*_{(\ell - 1)t}}^2} \\
        + \psE*{ \frac{1}{T/t}\sum^{T/t-1}_{\ell =0} \left(4\gamma_{\ell} + 4\gamma^*_{\ell} + \frac{2}{15\rho^2\zeta}\sum^t_{i=1}(\epsilon_{\ell,i} + \epsilon^*_{\ell,i})\right)},
\end{multline}
so rearranging we get
\begin{multline}
    \frac{1}{T/t}\sum^{T/t-1}_{\ell =1}\frac{1}{2}\norm{x_{\ell t} - x^*_{\ell t}}^2 \le \frac{1}{2}\cdot\frac{1}{T/t}\norm{x_0 - x^*_0}^2 +  O\left(\concprob \epsgeo + \frac{\rho^2 R^2(d + \log(1/\delta))}{T/t}\right) \\
    + \psE*{ \frac{1}{T/t}\sum^{T/t-1}_{\ell =0} \left(4\gamma_{\ell} + 4\gamma^*_{\ell} + \frac{2}{15\rho^2\zeta}\sum^t_{i=1}(\epsilon_{\ell,i} + \epsilon^*_{\ell,i})\right)} \label{eq:firstiters_intermed}
\end{multline}

It remains to control the $\epsilon$'s and $\gamma$'s. Taking $k = 2$ in Lemma~\ref{lem:triplesum}, we get
\[		\frac{1}{T}\sum^{T/\t - 1}_{\ell = 0}\sum^{t-1}_{b = 0} \norm*{\sum^b_{i=1}BA^{b-i} w^*_{\ell\t+i}}^2 \le O\left(\t\alpha\sigma^2\rho^2 m\right), \]
so we have from this, Constraint~\ref{constraint2:totalnoise}, Constraint~\ref{constraint2:measurements_noise}, and Lemma~\ref{lem:normbound} that
\begin{align}
    \MoveEqLeft\frac{1}{T} \sum_{\ell = 0}^{T/t - 1} \sum^t_{i = 1} (\epsilon_{\ell,i} + \epsilon^*_{\ell,i}) \\
    &=  \frac{1}{T} \sum_{\ell = 0}^{T/t - 1} \sum^t_{i = 1} \left(3\norm{v_{\ell t +i}}^2 + 6\norm*{\sum^i_{s=1}BA^{i-s}w_{\ell t + s}}^2 + 3\norm*{v^*_{\ell t +i}}^2 + 6\norm{\sum^i_{s=1}BA^{i-s}w^*_{\ell t + s}}^2\right) \\
    &\le O\left(\t\alpha\sigma^2\rho^2 m + \tau^2(m + \log(2/\delta)/T)\right)\label{eq:epsbound}
\end{align}
To bound the $\gamma$'s, we invoke Lemma~\ref{lem:triplesum_dumb} 
to get that
\begin{equation}
	\frac{1}{T}\sum^{T/\t - 1}_{\ell = 0} \gamma^*_{\ell} \le O\left(\sigma^2\rho^2 d \right), \label{eq:upsbound}
\end{equation}
and by Constraint~\ref{constraint2:totalnoise_dumb}, we get the same bound for $\frac{1}{T}\sum^{T/\t - 1}_{\ell = 0} \gamma_{\ell}$. 

Putting together \eqref{eq:epsbound} and \eqref{eq:upsbound} we get
\begin{multline}
\psE*{ \frac{1}{T/t}\sum^{T/t-1}_{\ell =0} \left(4\gamma_{\ell} + 4\gamma^*_{\ell} + \frac{2}{15\rho^2\zeta}\sum^t_{i=1}(\epsilon_{\ell,i} + \epsilon^*_{\ell,i})\right)} \\
\lesssim \frac{t}{\rho^2\zeta}\cdot(\t\alpha \sigma^2 \rho^2m + \tau^2(m + \log(2/\delta)/T)) + \t\sigma^2\rho^2 d \lesssim \kappa^{-1} \t \alpha \sigma^2 \rho^4m + \kappa^{-1}\tau^2\rho^2 m + \t\sigma^2\rho^2 d
\end{multline}
where we recalled the definition of $\thres \triangleq \frac{\kappa \t}{40000\rho^4}$ and used the assumed lower bound on $T$.

Combining with \eqref{eq:firstiters_intermed},  we obtain the claimed bound.
\end{proof}

\subsection{Putting Everything Together}

We can now complete the proof of Theorem~\ref{thm:main_TloglogT} by applying the bounds in Section~\ref{sec:first_iters_loglogT} to Lemma~\ref{lem:outer2}.
\begin{proof}[Proof of Theorem~\ref{thm:main_TloglogT}]
Let \begin{equation}
    t = \Max{s}{\wt{\Theta}\left(\kappa^{-2} \rho^{12} \norm{B}^4\log(d/\delta_1)\right)} \label{eq:loglogTtdef}
\end{equation} as in Lemma~\ref{lem:anticonc_2}.

First, recall from Lemma~\ref{lem:outer2} that
    \begin{align*}
        \MoveEqLeft\frac{1}{T}\sum^{T-1}_{i=0}\left(a^*_i \norm{B\wh{x}_i - y_i}^2/\tau^2 + \norm{\wh{w}_i}^2/\sigma^2\right) + \frac{\|\wh{x}_0\|^2}{R^2 T} - \OPT \\
        &\lesssim \tau^{-2}\left\{\eta \rho^2 \left(\alpha +\norm{B}^2\sqrt{\log(t/\delta_1)/t}\right)\cdot \frac{1}{T/t}\sum^{T/t - 1}_{\ell=0} \psE*{\norm{x_{\ell t} - x^*_{\ell t}}^2} \right.\\
        &\quad \left. +  k\eta^{1-2/k}\cdot \left(\t\alpha\sigma^2\rho^2 m  + m\tau^2\right) + \norm{B}^2\left(\epsgeo\eta\delta_1 + \rho^2R^2(d+\log(1/\delta))\cdot \left(\frac{\eta\delta_1}{T/t}\right)^{1/2}\right)\right\}
    \end{align*}
and recall from Lemma~\ref{lem:error-first-iterate-sos2} that
    \begin{equation*}
        \frac{1}{T/t}\sum^{T/t-1}_{\ell =0} \psE*{\norm{x_{\ell t} - x^*_{\ell t}}^2} \lesssim  \kappa^{-1} \t \alpha \sigma^2 \rho^4m + \kappa^{-1}\tau^2\rho^2 m + \t\sigma^2\rho^2 d + \concprob\epsgeo +\frac{\rho^2 R^2(d + \log(1/\delta))}{T/t}
    \end{equation*}
so combining the two bounds gives 
\begin{multline}
        \avgT\left(a^*_i \norm{B\wh{x}_i - y_i}^2/\tau^2 + \norm{\wh{w}_i}^2/\sigma^2\right) + \frac{\|\wh{x}_0\|^2}{R^2 T} - \OPT \\
        \lesssim \tau^{-2}\Bigg\{\eta \rho^2 \left(\alpha +\norm{B}^2\sqrt{\log(t/\delta_1)/t}\right)\cdot \Bigg[ \kappa^{-1} \t \alpha \sigma^2 \rho^4m + \kappa^{-1}\tau^2\rho^2 m + \t\sigma^2\rho^2 d \\
        +\concprob\epsgeo +\frac{\rho^2 R^2(d + \log(1/\delta))}{T/t}\Bigg] +  k\eta^{1-2/k}\cdot \left(\t\alpha\sigma^2\rho^2 m  + m\tau^2\right)  \\
        + \norm{B}^2\left(\epsgeo\eta\delta_1 + \rho^2R^2(d+\log(1/\delta))\cdot \left(\frac{\eta\delta_1}{T/t}\right)^{1/2}\right)\Bigg\} \label{eq:ugly}
\end{multline}

Note that for the value of $\t$ we chose in \eqref{eq:loglogTtdef}, we have
\begin{align} 
\log(t/\delta_1)/t 
&= \kappa^2 \rho^{-12} \norm{B}^{-4} \frac{\log(\kappa^{-2} \rho^{12} \norm{B}^4) + \log \log(d/\delta_1) + \log(1/\delta_1)}{\log(d/\delta_1)} \\
&\le \kappa^2 \rho^{-12} \norm{B}^{-4}(2 + \log(\kappa^{-2} \rho^{12} \norm{B}^4)/\log(d/\delta_1)) \le O(\kappa^2\rho^{-12}\norm{B}^{-4}).\label{eq:logtovertbound}
\end{align}

The requirement for Lemmas~\ref{lem:bstar_conc} and \ref{lem:many_ab} is that $\concprob =\Omega((t/\eta T) \log(1/\delta))$, which by our choice of $t$ translates to $\delta_1 = \Omega(\kappa^{-2} \rho^{12} \|B\|^4 \log(d/\delta_1)\log(1/\delta)/\eta T)$, or equivalently
\begin{equation}
    \delta_1/\log(d/\delta_1) = \Omega(\kappa^{-2} \rho^{12} \|B\|^4 \log(1/\delta)/\eta T) \label{eq:delta1def}
\end{equation} 
and then taking \begin{equation}
    \delta_1 = \Theta(\log(1/\delta)/\log^3 T)
\end{equation} satisfies this condition provided 
$T = \wt{\Omega}(\eta^{-1}\kappa^{-2}\rho^{12}\norm{B}^4\log d)$. Note that this choice of $\delta_1$ yields \begin{equation}
    \t = \Max{s}{\wt{\Theta}(\kappa^{-2} \rho^{12} \|B\|^4 \log(d\log(T)/\log(1/\delta)))}. \label{eq:finaltbound}
\end{equation} 
In particular, $\t$ is \emph{doubly logarithmic} in $T$.

Plugging in \eqref{eq:logtovertbound} and \eqref{eq:delta1def} into \eqref{eq:ugly} gives
\begin{align*}
        &\avgT\left(a^*_i \norm{B\wh{x}_i - y_i}^2/\tau^2 + \norm{\wh{w}_i}^2/\sigma^2\right) + \frac{\|\wh{x}_0\|^2}{R^2 T} - \OPT \\
        &\lesssim \tau^{-2}\left\{\eta \rho^2 \left(\alpha + \kappa \rho^{-6} \right)\cdot \left( \kappa^{-1} \t \alpha \sigma^2 \rho^4m + \kappa^{-1}\tau^2\rho^2 m + \t\sigma^2\rho^2 d + \frac{\log(1/\delta)}{\log^3 T} \epsgeo +\frac{\rho^2 R^2(d + \log(1/\delta))}{T/t}\right)\right.\\
        &\quad \left. +  k\eta^{1-2/k}\cdot \left(\t\alpha\sigma^2\rho^2 m  + m\tau^2\right) + \norm{B}^2\left(\epsgeo\eta\frac{\log(1/\delta)}{\log^3 T} + \rho^2R^2(d+\log(1/\delta))\cdot \left(\frac{\eta\delta_1}{T/t}\right)^{1/2}\right)\right\}.
\end{align*}
Recall from  \eqref{eq:eps_geo2} that
\begin{equation}
    \epsgeo = O\left(\frac{\rho^8 t_{\mathsf{pre}}}{\kappa}\cdot \brk*{\tau^2\left(m + \log(T/\delta)\right) + \sigma^2 \left(d +\log(T/\delta)\right)\cdot \rho^2 t_{\mathsf{pre}} \norm{B}^2}\right)
\end{equation}
and $t_{\mathsf{pre}} = \Max{s}{\wt{\Theta}(\kappa^{-2}\rho^{12}\norm{B}^4\log(dT/\delta))}$. Crucially, we see that the factor of $\log^3 T$ in $\epsgeo$ is canceled out by the $1/\log^3 T$ in our bound coming from the choice of $\delta_1$. We see that
\begin{multline}
    \avgT\left(a^*_i \norm{B\wh{x}_i - y_i}^2/\tau^2 + \norm{\wh{w}_i}^2/\sigma^2\right) + \frac{\|\wh{x}_0\|^2}{R^2 T} - \OPT \\
    \lesssim \poly(s,\alpha,1/\kappa,\rho,\|B\|,\log d,\log(1/\delta)) \brk*{ k\eta^{1-2/k}\left(m + d(\sigma^2/\tau^2)\log\log T\right) + \eta^{1/2}(R^2 d/\tau^2)T^{-0.49}}
\end{multline}
By taking $k = \floor{2\log(1/\eta)}$ so that $k\eta^{1-2/k} = 2e\eta\log(1/\eta)$, we obtain the claimed bound.
\end{proof}

\section{Online Filtering}
\label{sec:online}

The main technical results proved earlier in this paper are for the problem of robust \emph{offline}/noncausal filtering, where we are allowed to look at future observations when estimating the state at time $t$. Of course, in many settings we would also like the ability to make online predictions. 
In this section, we describe a two-stage filter which combines a first-stage filter based on offline filtering described above with a second stage of Kalman filtering. The main result of this section is that combining these two approaches gives good results for online robust filtering. The strategy is to use the first (offline) filter as a way to get a rough estimate of the state, which allows us to correct for large outliers in the sequence of observations. Given these corrected values, the success of the Kalman filter follows from classical stability results, which show that small deviations from the true response model do not cause the filter to explode. We note that the idea of using the stability of the Kalman filter as a tool in the analysis of robust filtering appeared in the work \cite{schick1994robust} referenced in the Introduction.

\subsection{Background: Stability of the Kalman Filter}
We start by stating the Kalman filter recursion in our notation. Recall from \eqref{eq:dynamics} and \eqref{eq:measure} that the uncorrupted version of our model is
\begin{align}
	x^*_t &= Ax^*_{t-1} + w^*_t  \\
	y^*_t &= Bx^*_t + v^*_t
\end{align}
where $w^*_t\sim\calN(0,\sigma^2 \cdot \Id_d), v^*_t\sim\calN(0,\tau^2 \cdot \Id_m)$. Given this notation, the Kalman filter equations are
\begin{align*}
    \hat{x}_{t | t } &= \hat{x}_{t | t - 1} + K_{t} \tilde y_{t} \\
    \hat{x}_{t | t - 1} &= A \hat x_{t - 1 | t - 1} \\
    \tilde y_{t} &= y^*_t - B \hat{x}_{t | t - 1} \\
    S_t &= B P_{t | t - 1}B^T + \tau^2 \Id_m \\    
    K_{t} &= P_{t | t - 1} B^T S_t^{-1} \\
    P_{t | t} &= (I - K_t B) P_{t | t - 1} \\
    P_{t | t - 1} &= A P_{t - 1 | t - 1} A^T + \sigma^2 \Id_d
\end{align*}
where $\hat{x}_{t | t}$ is the \emph{a posteriori} state estimate, $\hat{x}_{t | t - 1}$ is the \emph{a priori} state estimate, $\tilde y_t$ is the \emph{innovation}, $S_t$ the covariance of the innovation, $K_t$ is the optimal Kalman gain, $P_{t | t}$ is the \emph{a posteriori} state estimate covariance, and $P_{t | t - 1}$ is the \emph{a priori} state estimate covariance. 
\begin{theorem}[Exponential Asymptotic Stability of the Kalman Filter, \cite{moore1980coping}, Theorem 4.1 of \cite{schick1994robust} and \cite{schick1989robust}]\label{thm:kalman-stability}
Suppose that the matrices $A,B$ satisfy observability and $\sigma > 0$.
Then if
\[ \tilde x_{t | t} = (I - K_t B)A\tilde \tilde x_{t - 1 | t - 1}, \]
there exists $\lambda > 0$ and $\delta \in (0,1)$ depending on the system parameters such that
\[ \|\tilde x_{t | t}\| < \lambda \delta^t \|\tilde x_{0 | 0}\| \]
for any $t \ge 1$.
\end{theorem}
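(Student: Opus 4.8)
This is a classical fact about the Kalman filter, and the plan is to give the standard Lyapunov argument in ``information form,'' using the paper's standing assumptions of complete observability and uniform stability ($\norm{A^j}\le\rho$). Write $R := \tau^2\Id_m$, let $F_t := (I - K_tB)A$ be the error-transition matrix, and introduce the formal quantity $e_t := A\tilde x_{t-1|t-1}$ (the ``noiseless predicted error''), so that $\tilde x_{t|t} = (I - K_tB)e_t$, $e_{t+1} = A\tilde x_{t|t} = F_t e_t$, and $e_1 = A\tilde x_{0|0}$. Thus it suffices to bound $\norm{e_t}$ geometrically and then pay two factors of operator norm. The Lyapunov function will be $V_t(x) := x^{\top} P_{t|t-1}^{-1} x$, and the two ingredients are (i) uniform two-sided bounds on $P_{t|t-1}$ and (ii) a one-step contraction of $V_t$.

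\textbf{Step 1: uniform bounds on the predicted covariance.} The lower bound $P_{t|t-1} \succeq \sigma^2 \Id$ is immediate from $P_{t|t-1} = AP_{t-1|t-1}A^{\top} + \sigma^2\Id$ and $P_{t-1|t-1}\succeq0$. For the upper bound $P_{t|t-1}\preceq p_+\Id$ with $p_+$ a constant depending only on the system parameters, we use complete observability: for $t \ge s$, compare the Kalman error covariance $P_{t|t}$ (optimal among all estimators of $x_t$ given the prior and all past observations) with that of the suboptimal ``limited-memory'' estimator which discards the prior and all observations before time $t-s$, reconstructs $x_{t-s}$ from $y_{t-s},\dots,y_{t-1}$ via $\calO_s^{-1}\sum_{i=0}^{s-1}(A^i)^{\top} B^{\top} y_{t-s+i}$, and propagates it forward $s$ steps by $A^s$. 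Since $\calO_s \succeq \kappa s\,\Id$, $\norm{A^j}\le\rho$, and $S_t\succeq\tau^2\Id$, a direct computation bounds this estimator's covariance, hence $P_{t|t}$, by a constant, and then $P_{t|t-1}\preceq\rho^2 P_{t-1|t-1}+\sigma^2\Id$ is bounded as well; for $t<s$, unrolling the recursion with $P_{0|-1}=R^2\Id$ and $P_{t|t}\preceq P_{t|t-1}$ gives $P_{t|t-1}\preceq\rho^2(R^2+\sigma^2 s)\Id$. Taking $p_+$ to be the maximum of the two bounds handles all $t\ge0$. Along the way, $\norm{K_t}=\norm{P_{t|t-1}B^{\top} S_t^{-1}}\le p_+\norm{B}/\tau^2$, so $\norm{F_t}\le\rho(1+p_+\norm{B}^2/\tau^2)=:L$. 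I expect \emph{this step to be the main obstacle}: it is the only place complete observability enters, and it needs the (standard but slightly delicate) optimality-comparison argument, whereas everything else is clean algebra and can be cited from Moore--Anderson / Jazwinski / \cite{schick1994robust}.

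\textbf{Step 2: contraction.} The Joseph-form identity $P_{t|t} = (I-K_tB)P_{t|t-1}(I-K_tB)^{\top} + K_tRK_t^{\top}$ together with $P_{t+1|t}=AP_{t|t}A^{\top}+\sigma^2\Id$ yields $F_tP_{t|t-1}F_t^{\top} = P_{t+1|t} - AK_tRK_t^{\top} A^{\top} - \sigma^2\Id \preceq P_{t+1|t} - \sigma^2\Id$. Conjugating by $P_{t+1|t}^{-1/2}$ and using $P_{t+1|t}^{-1}\succeq p_+^{-1}\Id$ gives $P_{t+1|t}^{-1/2}F_tP_{t|t-1}F_t^{\top} P_{t+1|t}^{-1/2}\preceq (1-\sigma^2/p_+)\Id$; translating this norm bound back through conjugation by $P_{t|t-1}^{-1/2}$ gives $F_t^{\top} P_{t+1|t}^{-1}F_t\preceq(1-\sigma^2/p_+)P_{t|t-1}^{-1}$. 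Hence $V_{t+1}(e_{t+1}) = e_t^{\top} F_t^{\top} P_{t+1|t}^{-1}F_t e_t \le (1-\sigma^2/p_+)\,V_t(e_t)$ for every $t\ge1$. (If $p_+=\sigma^2$ the identity forces $F_t=0$ and the error vanishes after one step; otherwise $1-\sigma^2/p_+\in(0,1)$.)

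\textbf{Step 3: conclusion.} Iterating, $V_t(e_t)\le(1-\sigma^2/p_+)^{t-1}V_1(e_1)$ for $t\ge1$. Combining with $V_t(e_t)\ge p_+^{-1}\norm{e_t}^2$, $V_1(e_1)\le\sigma^{-2}\norm{e_1}^2$, $\norm{e_1}\le\rho\norm{\tilde x_{0|0}}$, and $\norm{\tilde x_{t|t}}\le L\norm{e_t}$, we obtain $\norm{\tilde x_{t|t}} \le L\rho\sqrt{p_+/\sigma^2}\,(1-\sigma^2/p_+)^{(t-1)/2}\,\norm{\tilde x_{0|0}}$. Setting $\delta := \sqrt{1-\sigma^2/p_+}\in(0,1)$ and $\lambda := L\rho\sqrt{p_+/\sigma^2}\,\delta^{-1}$ gives $\norm{\tilde x_{t|t}}<\lambda\delta^t\norm{\tilde x_{0|0}}$ for all $t\ge1$, as claimed.
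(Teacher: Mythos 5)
The paper does not prove this theorem; it is cited from Moore--Anderson and Schick--Mitter, with a one-line remark that it ``is shown using Lyapunov's method.'' Your proposal supplies the proof that the paper delegates to the references, and it is indeed the standard Lyapunov argument: a two-sided bound on $P_{t|t-1}$ (which is the one place complete observability is used), followed by a one-step contraction of the quadratic form $V_t(x)=x^{\top}P_{t|t-1}^{-1}x$ via the Joseph form and the extra $\sigma^2\Id$ in the prediction covariance. Modulo the caveats below, this is correct and essentially matches the classical argument the paper points to.

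Two concrete issues, both fixable. First, and most substantively, your $F_t$ is misdefined relative to how you use it. You set $F_t := (I-K_tB)A$ and claim $e_{t+1} = A\tilde x_{t|t} = F_te_t$, but $A\tilde x_{t|t} = A(I-K_tB)e_t$, so the correct transition is $G_t := A(I-K_tB)$, not $F_t$; these two operators are genuinely different unless $A$ and $K_tB$ commute. The same confusion infects Step~2: the Joseph-form algebra gives $G_tP_{t|t-1}G_t^{\top} = P_{t+1|t} - AK_tRK_t^{\top}A^{\top} - \sigma^2\Id$, with $G_t$, not $F_t$. Fortunately the downstream contraction is insensitive to the ordering: conjugating $G_tP_{t|t-1}G_t^{\top} \preceq P_{t+1|t}-\sigma^2\Id$ gives $G_t^{\top}P_{t+1|t}^{-1}G_t \preceq (1-\sigma^2/p_+)P_{t|t-1}^{-1}$, and since the recursion is in fact $e_{t+1}=G_te_t$, the chain $V_{t+1}(e_{t+1}) \le (1-\sigma^2/p_+)V_t(e_t)$ goes through unchanged. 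So the fix is purely to rename: define the transition operator as $A(I-K_tB)$ throughout. Second, you overload $R$ (once as $\tau^2\Id_m$, once as the prior scale $R^2\Id$), which is harmless here but would be confusing if this went in a paper. The place you flag as delicate --- the uniform upper bound $P_{t|t-1}\preceq p_+\Id$ via comparison with a limited-memory estimator --- is indeed where the real work lives; your sketch of the suboptimality argument is the standard one (Deyst--Price / Jazwinski) and is correct in spirit, though the covariance calculation needs to account carefully for the process noise accumulated inside the observation window as well as during the $s$-step forward propagation.

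Minor remarks: the factor $\rho$ in your $L$ is unnecessary when bounding $\|\tilde x_{t|t}\| = \|(I-K_tB)e_t\|$ (you only need $\|I-K_tB\|$, not $\|G_t\|$), but as an upper bound it does no harm; and the theorem's strict inequality is obtained by inflating $\lambda$ slightly, or by noting your bound is already strict once $t\ge1$.
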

This result is shown using Lyapunov's method and more precise dependence of $\lambda,\delta$ on system parameters is given in \cite{schick1989robust}, 
based on the argument in \cite{moore1980coping}. We note that in the more general case where the system noise covariance is rank-degenerate, the result needs an additional (standard) assumption called \emph{controllability}. 
The significance of $\tilde x_t$ can be seen from the equation
\[ \hat{x}_{t | t} = A \hat{x}_{t - 1 | t - 1} + K_t (y^*_t - B A\hat{x}_{t - 1 | t - 1}) = (I - K_t B) A\hat{x}_{t - 1 | t - 1} + K_t y^*_t = \tilde{x}_{t | t} + K_t y^*_t \]
which follows by expanding the Kalman filter equations; the Theorem shows that if we add a perturbation $\tilde x_{0 | 0}$ to the true value of $\hat{x}_{0 | 0}$, the effect of this perturbation disappears at an exponential rate. As a consequence we obtain the following result.
\begin{corollary}[\cite{schick1989robust,schick1994robust,moore1980coping}]\label{corr:stability}
Suppose that the assumptions of Theorem~\ref{thm:kalman-stability} are satisfied, and let $(y_t)_{t = 1}^T$ be an arbitrary sequence of random variables. Let $\hat{x}_{t | t}$ be defined by the Kalman filter equations above with arbitrary initialization, and let $\hat{ \hat{x}}_{t | t}$ 
be defined by the Kalman filter equations with input $y^*_t$ replaced by $y_t$ and the same initialization. Then for any $t$,
\[ \|\hat{\hat{x}}_{t | t} - \hat{x}_{t | t}\| \le \lambda \sum_{s = 1}^t \delta^{t - s} \|K_s(y_s - y^*_s)\|. \]
\end{corollary}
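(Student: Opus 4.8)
The plan is to reduce the corollary to the homogeneous stability estimate of Theorem~\ref{thm:kalman-stability} by writing down the linear recursion satisfied by the difference of the two filter outputs. The first observation is that the covariance recursion defining $P_{t|t-1}, P_{t|t}, S_t$ and the gain $K_t$ does not depend on the observations at all: it is driven only by $A, B, \sigma^2, \tau^2$ and the (common) initialization. Hence the filter producing $\hat x_{t|t}$ (on inputs $y^*_t$) and the filter producing $\hat{\hat x}_{t|t}$ (on inputs $y_t$) use exactly the same gain sequence $\{K_t\}$, and by expanding the Kalman recursion as in the identity displayed just before this corollary, both obey
\begin{align*}
\hat x_{t|t} &= (I - K_t B) A\, \hat x_{t-1|t-1} + K_t y^*_t, \\
\hat{\hat x}_{t|t} &= (I - K_t B) A\, \hat{\hat x}_{t-1|t-1} + K_t y_t.
\end{align*}

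Subtracting, the error $e_t \triangleq \hat{\hat x}_{t|t} - \hat x_{t|t}$ satisfies $e_t = (I - K_t B) A\, e_{t-1} + K_t(y_t - y^*_t)$, with $e_0 = 0$ since the two filters share an initialization. The next step is to unroll this linear recursion. Writing $\Phi(t,s) \triangleq (I-K_tB)A\,(I-K_{t-1}B)A\cdots(I-K_{s+1}B)A$ for $s<t$ and $\Phi(t,t)\triangleq I$ for the state-transition operator of the homogeneous system, one gets
\[ e_t = \sum_{s=1}^t \Phi(t,s)\, K_s (y_s - y^*_s), \]
and by the triangle inequality $\|e_t\| \le \sum_{s=1}^t \|\Phi(t,s)\|_{\mathrm{op}}\, \|K_s(y_s - y^*_s)\|$, so it remains only to control the operator norm of $\Phi(t,s)$.

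This is exactly where Theorem~\ref{thm:kalman-stability} enters: the recursion $\tilde x_{t|t} = (I - K_t B)A\,\tilde x_{t-1|t-1}$ there has transition operator $\Phi(t,0)$, and the theorem gives $\|\Phi(t,0)v\| \le \lambda\delta^t\|v\|$ for all $v$, i.e. $\|\Phi(t,0)\|_{\mathrm{op}}\le \lambda\delta^t$. Applying the same statement with the time origin translated to $s$ (running the Kalman filter from time $s$ with prior covariance $P_{s|s}$) yields $\|\Phi(t,s)\|_{\mathrm{op}}\le\lambda\delta^{t-s}$ for all $0\le s\le t$, the case $s=t$ being trivial once we note we may take $\lambda\ge 1$. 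Substituting this into the previous display gives $\|e_t\| \le \lambda\sum_{s=1}^t \delta^{t-s}\|K_s(y_s-y^*_s)\|$, which is the claim.

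The one point needing care --- and the main, mild, obstacle --- is the translation step: Theorem~\ref{thm:kalman-stability} is stated for the recursion started at time $0$, whereas $\Phi(t,s)$ corresponds to starting at time $s$ with the covariance $P_{s|s}$ produced by the Riccati recursion. Since $P_{s|s}$ is itself a legitimate state-estimate covariance of the same system and the Lyapunov argument underlying Theorem~\ref{thm:kalman-stability} (see \cite{moore1980coping,schick1989robust}) yields a bound uniform over such initializations, the estimate $\|\Phi(t,s)\|_{\mathrm{op}}\le\lambda\delta^{t-s}$ holds with the same constants $\lambda,\delta$; alternatively one may invoke the form of the exponential stability bound stated directly for the time-varying transition matrix $\Phi(t,s)$. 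Everything else is a routine unrolling of a linear recursion.
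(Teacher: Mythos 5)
Your argument is correct and is the standard derivation of this stability estimate; the paper itself gives no proof, merely citing \cite{schick1989robust,schick1994robust,moore1980coping}, and your proof matches the one found there. The three ingredients you isolate are exactly the ones that matter: the gain sequence $\{K_t\}$ is data-independent because the Riccati recursion does not involve the observations, so the difference $e_t = \hat{\hat x}_{t|t} - \hat x_{t|t}$ satisfies the forced linear recursion $e_t = (I-K_tB)A\,e_{t-1} + K_t(y_t - y^*_t)$ with $e_0 = 0$; variation of constants then reduces the claim to an operator-norm bound on $\Phi(t,s)$; and the bound $\|\Phi(t,s)\|\le\lambda\delta^{t-s}$ is precisely the uniform exponential stability of the time-varying closed-loop transition matrix established by the Lyapunov argument in the cited works. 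You correctly flag that Theorem~\ref{thm:kalman-stability} as written only gives the $s=0$ case, and your resolution (the Lyapunov function is uniform over the bounded family of covariances $P_{s|s}$, or equivalently the cited results state the bound directly for $\Phi(t,s)$) is the right one.
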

\subsection{Two-Stage Filter}

The following Lemma describes the two-stage filter we propose. Given access to a sequence of $y'_1,\ldots,y'_T$ which is close to the true (noisy) observations $y^*_1,\ldots,y^*_T$, the Kalman filter is applied to the sequence $y''_1,\ldots,y''_T$ which corresponds to $\tilde{y}_1,\ldots,\tilde{y}_T$ with all gross outliers removed. Note that the sequence $y'_1,\ldots,y'_T$ is allowed to dependence arbitrarily on $y_1,\ldots,y_T$, and so can be produced by an offline filter, while the final prediction for time $T + 1$ is produced online. The result has accuracy close to that of the Kalman filter on the clean data for predicting (in an online/causal fashion) the next state $x^*_{T + 1}$. The first Lemma below states this result for a single timestep, and the Theorem gives a version averaged over time steps.
\begin{lemma}\label{lem:filter1}
Let $\mathcal{F}_T$ is the filtration generated by all random variables from the generative model up to time $T$ (i.e. $a^*_t,y^*_t,x^*_t$).
Suppose that the collection of random variables $y'_1,\ldots,y'_T$ are measurable with respect to $\mathcal{F}_T$ 
Furthermore, suppose that there exists $r \ge 0$ such that
\begin{equation}\label{eqn:guaranteed-corruption}
 \max_{t = 1}^T |y^*_t - y'_t| \le r.    
\end{equation}
For $1 \le t \le T$, define $y''_t = \tilde{y}_t$ if $|\tilde{y}_t - y'_t| \le r$ and $y''_t = y'_t$ otherwise. Let $\hat{\hat{x}}_{T + 1 | T}$ denote the output of the Kalman filter applied to input $y''_t$ and $\hat{x}_{T + 1 | T}$ denote the output of the Kalman filter applied to the (unknown) ground truth $y^*_1,\ldots,y^*_{T}$, then
\[ \E{\|\hat{\hat{x}}_{T + 1 | T} - x^*_{T + 1}\|^2 \mid \mathcal{F}_T} = \E{\|\hat{x}_{T + 1 | T} - x^*_{T + 1}\|^2 \mid \mathcal{F}_T} + \|A\| \lambda r \sum_{s = 1}^T \|K_s\| \delta^{T - s} (1 - a^*_s).  \]
\end{lemma}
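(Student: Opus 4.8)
The plan is to reduce the statement to the deterministic stability estimate of Corollary~\ref{corr:stability} via a short case analysis on which observations survive the outlier-removal step, and then to compare the two conditional mean-squared errors by an elementary expansion. First I would check that the cleaned observations agree with $y^*$ off the corrupted set. On a clean round $s$ (i.e. $a^*_s = 1$) we have $\tilde y_s = y^*_s$, so $\|\tilde y_s - y'_s\| = \|y^*_s - y'_s\| \le r$ by \eqref{eqn:guaranteed-corruption}, and hence the thresholding rule keeps $y''_s = \tilde y_s = y^*_s$ \emph{exactly}. On a corrupted round ($a^*_s = 0$), either $\|\tilde y_s - y'_s\| \le r$ and $y''_s = \tilde y_s$, so that $\|y''_s - y^*_s\| \le \|\tilde y_s - y'_s\| + \|y'_s - y^*_s\| \le 2r$, or $\|\tilde y_s - y'_s\| > r$ and $y''_s = y'_s$, so that $\|y''_s - y^*_s\| = \|y'_s - y^*_s\| \le r$. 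In all cases $\|y''_s - y^*_s\| \le 2r\,(1 - a^*_s)$, where the factor of $2$ is harmless (it can be absorbed into $\lambda$, or removed by thresholding at $r/2$).

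Next I would propagate this perturbation through the filter. Applying Corollary~\ref{corr:stability} with the two Kalman recursions initialized identically — one driven by the cleaned sequence $y''_s$, the other by the ground truth $y^*_s$ — gives $\|\hat{\hat{x}}_{T\mid T} - \hat x_{T\mid T}\| \le \lambda \sum_{s=1}^T \delta^{T-s}\|K_s(y''_s - y^*_s)\| \le 2\lambda r \sum_{s=1}^T \delta^{T-s}\|K_s\|\,(1-a^*_s)$ by the previous step. Since $\hat x_{T+1\mid T} = A\hat x_{T\mid T}$ and $\hat{\hat{x}}_{T+1\mid T} = A\hat{\hat{x}}_{T\mid T}$, multiplying by $\|A\|$ bounds $e \triangleq \hat{\hat{x}}_{T+1\mid T} - \hat x_{T+1\mid T}$ in norm by $D \triangleq 2\|A\|\lambda r \sum_{s=1}^T \delta^{T-s}\|K_s\|\,(1-a^*_s)$, and $e$, like $\hat x_{T+1\mid T}$, is $\mathcal{F}_T$-measurable.

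Finally I would compare the conditional mean-squared errors. Writing $\hat{\hat{x}}_{T+1\mid T} - x^*_{T+1} = (\hat x_{T+1\mid T} - x^*_{T+1}) + e$ and conditioning on $\mathcal{F}_T$, under which $e$, $\hat x_{T+1\mid T}$ and $Ax^*_T$ are all fixed while $x^*_{T+1} - Ax^*_T = w^*_{T+1}$ is mean-zero and independent of $\mathcal{F}_T$, every cross term involving $w^*_{T+1}$ vanishes and we obtain
\[ \E{\|\hat{\hat{x}}_{T+1\mid T} - x^*_{T+1}\|^2 \mid \mathcal{F}_T} = \E{\|\hat x_{T+1\mid T} - x^*_{T+1}\|^2 \mid \mathcal{F}_T} + 2\iprod{e,\, \hat x_{T+1\mid T} - Ax^*_T} + \|e\|^2. \]
Bounding the residual by $\|e\|\bigl(2\|\hat x_{T+1\mid T} - Ax^*_T\| + \|e\|\bigr)$ via Cauchy--Schwarz and substituting $\|e\| \le D$ then gives an upper bound of the claimed shape; the one-step clean prediction error $\|\hat x_{T+1\mid T} - Ax^*_T\| = \|A(\hat x_{T\mid T} - x^*_T)\|$ entering here is exactly the quantity that is later averaged over time and controlled probabilistically in the proof of Theorem~\ref{thm:online_informal}, so in the present per-timestep statement it is folded into the right-hand error term.

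The main obstacle is this last step. The case analysis and the invocation of Corollary~\ref{corr:stability} are routine, as is the cancellation of the $w^*_{T+1}$ cross terms. But the comparison of squared errors is genuinely only an inequality, with a cross term that couples the (now-bounded) perturbation $e$ to the \emph{a priori} unbounded clean filtering error $\hat x_{T+1\mid T} - Ax^*_T$; the statement should therefore be read with ``$\le$'', and the bookkeeping must be arranged so that this coupling is charged to the right place — absorbed into the stated $\|A\|\lambda r\sum_s\|K_s\|\delta^{T-s}(1-a^*_s)$ term together with the time-averaged guarantee of Theorem~\ref{thm:online_informal} — rather than bounded naively timestep by timestep.
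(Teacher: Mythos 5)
Your first two steps match the paper's proof: you establish that $y''_s = y^*_s$ exactly on clean rounds and $\|y''_s - y^*_s\| \le 2r(1-a^*_s)$ in general, then apply Corollary~\ref{corr:stability} and unroll $\hat{\hat{x}}_{T+1|T} = A\hat{\hat{x}}_{T|T}$. Your factor of $2$ is actually a small correction of the paper: on a corrupted round where the threshold test passes ($\|\tilde y_s - y'_s\| \le r$, so $y''_s = \tilde y_s$ and not $y'_s$), the paper's intermediate step $\|K_s(y^*_s - y''_s)\| \le (1-a^*_s)\|K_s(y^*_s - y'_s)\|$ is not justified as written; your triangle-inequality bound is the careful version, and as you note the constant is harmless.

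Your third step correctly identifies a genuine gap in the lemma and its proof as printed, not in your argument. The paper's proof stops after bounding $\|\hat{\hat{x}}_{T+1|T} - \hat{x}_{T+1|T}\|$ and never derives the displayed relation, which is not dimensionally coherent as an equality (the left side and the first right-hand term are squared norms, the second right-hand term is not squared). Writing $e \triangleq \hat{\hat{x}}_{T+1|T} - \hat{x}_{T+1|T}$ and conditioning on $\mathcal{F}_T$---under which the conditional mean of $x^*_{T+1}$ is $Ax^*_T$, not $\hat x_{T+1|T}$, since $\mathcal{F}_T$ contains $x^*_T$---one gets
\[ \E{\|\hat{\hat{x}}_{T+1|T} - x^*_{T+1}\|^2 \mid \mathcal{F}_T} - \E{\|\hat{x}_{T+1|T} - x^*_{T+1}\|^2 \mid \mathcal{F}_T} = 2\langle e,\, \hat x_{T+1|T} - A x^*_T\rangle + \|e\|^2, \]
and the cross term, linear in the clean filter's one-step prediction error, does not vanish and is not reflected on the lemma's right-hand side. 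The statement should be read as an inequality, with $\|e\|$ controlled pointwise by the perturbation bound and the cross term charged to a separate estimate on the typical magnitude of $\hat x_{T+1|T} - Ax^*_T$ when averaging over $T$ in Theorem~\ref{thm:online}; your diagnosis of this as the real obstacle is exactly right, and the paper's proof, taken literally, does not close it.
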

\begin{proof}
Recall that $\hat{\hat{x}}_{T + 1 | T} = A \hat{\hat{x}}_{T | T}$ and likewise for $\hat{x}_T$. It follows that
\begin{align*} 
\|\hat{\hat{x}}_{T + 1 | T} - \hat{x}_{T + 1 |T}\| 
&\le \|A\|\|\hat{\hat{x}}_{T | T} - \hat{x}_{T | T}\| \\
&\le \|A\| \lambda \sum_{s = 1}^T \delta^{T - s} \|K_s(y^*_s - y''_s)\| \\
&\le \|A\| \lambda \sum_{s = 1}^T \delta^{T - s} (1 - a^*_s) \|K_s(y^*_s - y'_s)\| \\
&\le \|A\| \lambda \sum_{s = 1}^T \delta^{T - s} (1 - a^*_s) \|K_s\| r
\end{align*}
where the second inequality follows from Corollary~\ref{corr:stability} and the last inequality follows from the fact that $y''_t = y^*_t$ unless $\tilde y_t$ is guaranteed to be a corruption, i.e. $a^*_s = 0$, based on \eqref{eqn:guaranteed-corruption}.
\end{proof}
\begin{theorem}
Let $\delta > 0$ be arbitrary.
Let $\mathcal{F}_i$ is the filtration generated by all random variables from the generative model up to time $i$ (i.e. $a^*_t,y^*_t,x^*_t$).
Suppose that the collection of random variables $y'_{i1},\ldots,y'_{ii}$ are measurable with respect to $\mathcal{F}_i$. 
Suppose\footnote{Such an upper bound can be derived from the other assumptions we make, see e.g. \cite{schick1994robust,moore1980coping}.} that $\|K_i\| \le K$ for all $i$.
Furthermore, suppose that there exists $r \ge 0$ such that
\begin{equation}
 \max_{t = 1}^T |y^*_t - y'_t| \le r.    
\end{equation}
with probability at least $1 - \delta/2$. Then with $\hat{\hat{x}}_{i + 1 | i}$ defined as in Lemma~\ref{lem:filter1} (in terms of $y'_{i1},\ldots,y'_{ii}$), we have
\begin{align*} 
&\frac{1}{T}\sum_{i = 1}^T \E{\|\hat{\hat{x}}_{i + 1 | i} - x^*_{i + 1}\|^2 \mid \mathcal{F}_i}
-  \frac{1}{T} \sum_{i = 1}^T \E{\|\hat{x}_{i + 1 | i} - x^*_{i + 1}\|^2 \mid \mathcal{F}_i} \\
&\qquad \le \frac{\|A\| \lambda r K}{1 - \delta} (\eta + O(\sqrt{\eta \log(2/\delta)/T} + \log(2/\delta)/T)). 
\end{align*}
with probability at least $1-\delta$.
\end{theorem}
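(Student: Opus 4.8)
The plan is to average the per-timestep guarantee of Lemma~\ref{lem:filter1} over $i=1,\dots,T$, collapse the resulting double sum to the empirical corruption rate via a geometric-series estimate, and then control that rate with Bernstein's inequality. Concretely, I would first condition on the event $\mathcal{E}$ that \eqref{eqn:guaranteed-corruption} holds; by hypothesis $\mathcal{E}$ has probability at least $1-\delta/2$, and on $\mathcal{E}$ we may invoke Lemma~\ref{lem:filter1} with $T$ replaced by each $i\in\{1,\dots,T\}$ (the relevant quantities $y'_{i1},\dots,y'_{ii}$ and $a^*_1,\dots,a^*_i$ are all $\mathcal{F}_i$-measurable). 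Using also the uniform bound $\|K_s\|\le K$, this yields for every $i$
\[ \E{\|\hat{\hat{x}}_{i+1|i}-x^*_{i+1}\|^2\mid\mathcal{F}_i}-\E{\|\hat{x}_{i+1|i}-x^*_{i+1}\|^2\mid\mathcal{F}_i}\le \|A\|\lambda r K\sum_{s=1}^i\delta^{i-s}(1-a^*_s). \]

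The key step is then to sum over $i$ and exchange the order of summation:
\[ \frac{1}{T}\sum_{i=1}^T\sum_{s=1}^i\delta^{i-s}(1-a^*_s)=\frac{1}{T}\sum_{s=1}^T(1-a^*_s)\sum_{i=s}^T\delta^{i-s}\le\frac{1}{1-\delta}\cdot\frac{1}{T}\sum_{s=1}^T(1-a^*_s), \]
where $\delta\in(0,1)$ is the stability rate of Theorem~\ref{thm:kalman-stability} and we used $\sum_{k\ge0}\delta^k=1/(1-\delta)$. Combining the two displays, on $\mathcal{E}$ the left-hand side of the theorem is at most $\frac{\|A\|\lambda r K}{1-\delta}\cdot\frac{1}{T}\sum_{s=1}^T(1-a^*_s)$.

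Finally, $\frac{1}{T}\sum_{s=1}^T(1-a^*_s)$ is an average of i.i.d.\ $\mathrm{Ber}(\eta)$ variables with variance at most $\eta$, so Bernstein's inequality gives $\frac{1}{T}\sum_{s=1}^T(1-a^*_s)\le\eta+O(\sqrt{\eta\log(2/\delta)/T}+\log(2/\delta)/T)$ with probability at least $1-\delta/2$. A union bound over this event and $\mathcal{E}$ yields probability at least $1-\delta$, and the claimed inequality follows. I expect the only delicate point to be the conditioning bookkeeping: since $\mathcal{E}$ is $\mathcal{F}_T$-measurable rather than $\mathcal{F}_i$-measurable, one should either treat \eqref{eqn:guaranteed-corruption} as a deterministic side hypothesis (as Lemma~\ref{lem:filter1} itself does) or replace each $y'_{it}$ by a surrogate agreeing with it on $\mathcal{E}$ and with $y^*_t$ off $\mathcal{E}$, so that \eqref{eqn:guaranteed-corruption} holds with probability one; everything else is routine. (Note that $\delta$ plays two roles — the geometric decay rate in the denominator and the failure probability — which is a harmless notational collision inherited from Theorem~\ref{thm:kalman-stability}.)
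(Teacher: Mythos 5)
Your proof is correct and takes essentially the same route as the paper: apply Lemma~\ref{lem:filter1} at each timestep, swap the order of summation to collapse the double sum into a geometric series times $\sum_{s}(1-a^*_s)$, control that sum with Bernstein's inequality, and union bound with the confidence-band event. The extra remark about the $\mathcal{F}_T$- vs.\ $\mathcal{F}_i$-measurability of $\mathcal{E}$ is a reasonable bookkeeping observation that the paper glosses over, and your resolution (treating \eqref{eqn:guaranteed-corruption} as a deterministic side hypothesis, exactly as Lemma~\ref{lem:filter1} does) matches the paper's implicit handling.
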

\begin{proof}
Observe that
\begin{align*} 
\sum_{i = 1}^T \sum_{s = 1}^i \|K_i\| \delta^{i - s} (1 - a^*_s) 
&\le K \sum_{i = 1}^T \sum_{s = 1}^i \delta^{i - s} (1 - a^*_s) \\
&= K \sum_{s = 1}^T (1 - a^*_s) \sum_{i = s}^T \delta^{i - s} \\
&\le \frac{K}{1 - \delta} \sum_{s = 1}^T (1 - a^*_s)
\end{align*}
and
\[ \sum_{s = 1}^T (1 - a^*_s) \le \eta T + O(\sqrt{\eta T \log(2/\delta)} + \log(2/\delta)) \]
with probability at least $1 - \delta/2$ by Bernstein's inequality. Therefore the result follows from Lemma~\ref{lem:filter1} and the union bound.
\end{proof}
Now we can apply the two-stage approach by combining it with the SoS filter. 
In this case, $y'_{i1},\ldots,y'_{ii}$ are simply the output of the offline SoS run at time $i$. Because the key input to the previous Theorem is the confidence band assumption, we only need to run the first (simpler) SoS program in \textsc{SoSKalman}, although running the second step won't hurt. 
\begin{theorem}\label{thm:online}
Suppose as described above that $y'_{i1},\ldots,y'_{ii}$ are the output of rounding Program~\ref{program:sos} as in \textsc{SoSKalman}, and the assumptions of Corollary~\ref{cor:warmstart} are satisfied. Suppose (for simplicity) that the initialization radius $R = O(r''/\sqrt{d})$ where $r''$ is as defined below.
With probability at least $1 - \delta$, 
\begin{align*} 
&\frac{1}{T}\sum_{i = 1}^T \E{\|\hat{\hat{x}}_{i + 1 | i} - x^*_{i + 1}\|^2 \mid \mathcal{F}_i}
-  \frac{1}{T} \sum_{i = 1}^T \E{\|\hat{x}_{i + 1 | i} - x^*_{i + 1}\|^2 \mid \mathcal{F}_i} \\
&\qquad \le \frac{\|A\| \lambda r'' K}{1 - \delta} (\eta + O(\sqrt{\eta \log(2/\delta)/T} + \log(2/\delta)/T))
\end{align*}
where $r'' = O(\|B\|\rho \epsilon_{\mathsf{geo}}^{1/2})$ as in Corollary~\ref{cor:warmstart}.  
\end{theorem}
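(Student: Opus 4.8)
The plan is to obtain Theorem~\ref{thm:online} as a direct instantiation of the preceding (averaged two-stage filter) theorem, with the auxiliary sequence taken to be $y'_{ij} = B x'^{(i)}_j$, where $x'^{(i)}_j$ is the $j$-th iterate of the trajectory estimate returned by rounding Program~\ref{program:sos} run on $y_1,\ldots,y_i$, exactly as in the first stage of \textsc{SoSKalman}. Under this choice the random variables $y'_{i1},\ldots,y'_{ii}$ are functions of $y_1,\ldots,y_i$ and are thus available at time $i$, and the bound $\|K_i\|\le K$ is one of the stated hypotheses (it follows from observability and uniform stability via the Kalman-filter estimates underlying Theorem~\ref{thm:kalman-stability}, and enters the final bound only as a parameter, so no quantitative control of $K$ is needed here). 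The one substantive thing left to check is the confidence-band hypothesis of the preceding theorem, namely that $\max_{i\le T}\max_{t\le i}\|y^*_t - y'_{it}\| \le r''$ with probability at least $1-\delta/2$, where we are free to take the absolute constant in $r'' = O(\|B\|\rho\,\epsgeo^{1/2})$ as large as we like.

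For this I would bound, for each run $i$ and each $t\le i$,
\[ \|y^*_t - y'_{it}\| = \|B(x^*_t - x'^{(i)}_t) + v^*_t\| \le \|B\|\,\|x^*_t - x'^{(i)}_t\| + \|v^*_t\|, \]
and control the two terms separately. For the first, Corollary~\ref{cor:warmstart} applied to the time-$i$ run (on the events of Lemma~\ref{lem:feasible} and Lemma~\ref{lem:anticonc}) gives $\|x^*_t - x'^{(i)}_t\| \lesssim \rho\,2^{-\ell(t)/2}R(\sqrt d + \sqrt{\log(1/\delta')}) + O(\rho^4 E_{\mathsf{noise}}^{1/2} t_{\mathsf{pre}}^{1/2}/\kappa^{1/2})$. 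Multiplying by $\|B\|$, the ``noise-floor'' summand matches $r''$ after identifying, up to system-parameter factors, $\rho^4 E_{\mathsf{noise}}^{1/2} t_{\mathsf{pre}}^{1/2}/\kappa^{1/2}$ with $\epsgeo^{1/2}$ (compare \eqref{eq:Enoise_def} and \eqref{eq:eps_geo2}), while the ``initialization'' summand, which dominates only for $\ell(t)$ small, is brought down to $O(r'')$ by the hypothesis $R = O(r''/\sqrt d)$. For the second term, Lemma~\ref{lem:normbound} gives $\|v^*_t\|\lesssim \tau(\sqrt m + \sqrt{\log(T/\delta')})$ uniformly in $t$ with probability $1-\delta'$, and this is at most $E_{\mathsf{noise}}^{1/2} = O(r''/\|B\|)$. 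Union-bounding over $t\le i$, over the at most $T$ runs $i\le T$, and over the $O(T)$ associated failure events (with $\delta' = \Theta(\delta/T)$) then yields the confidence band with probability at least $1-\delta/2$; the extra $\log T$ this introduces is harmless since $\epsgeo$ already carries $\polylog(T/\delta)$ factors.

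With the two hypotheses verified, invoking the preceding theorem with $r = r''$ produces exactly the asserted inequality. I expect the only mildly delicate step to be the bookkeeping in the second paragraph: reconciling the two parametrizations of the confidence band (the $E_{\mathsf{noise}}$-based form established in Section~\ref{sec:logT} versus the $\epsgeo$-based form quoted in the theorem statement) and checking that the near-time-zero/initialization contribution is genuinely absorbed into $r''$ through the assumption on $R$. Everything else is a routine chaining of Corollary~\ref{cor:warmstart}, Lemma~\ref{lem:normbound}, and the preceding theorem via a union bound.
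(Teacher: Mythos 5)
Your proposal is correct and matches the paper's intended argument: the paper states Theorem~\ref{thm:online} without writing out a proof, expecting the reader to instantiate the preceding averaged two-stage filter theorem with $y'_{ij}=B\psE{x_j}$ from the time-$i$ run of Program~\ref{program:sos}, verify the confidence-band hypothesis via Corollary~\ref{cor:warmstart} (absorbing the initialization term using $R=O(r''/\sqrt d)$ and the observation noise via Lemma~\ref{lem:normbound}), and union-bound over runs, which is precisely what you do. The one thing you flag — reconciling the $E_{\mathsf{noise}}$-based confidence band of Corollary~\ref{cor:warmstart} with the $\epsgeo$-based $r''$ in the theorem statement — is indeed the only bookkeeping to check, and the identification $\epsgeo^{1/2}=\Theta(\rho^4 E_{\mathsf{noise}}^{1/2}t_{\mathsf{pre}}^{1/2}/\kappa^{1/2})$ follows directly from comparing \eqref{eq:Enoise_def} with \eqref{eq:eps_geo2}.
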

\begin{remark}
The above version of the result is stated in terms of conditional expectations, which means we integrate over the conditional randomness of $x^*_{i + 1}$ (i.e. over the Gaussian posterior) when evaluating the prediction. This is analogous to looking at pseudoregret vs. regret in bandits, see e.g. \cite{bubeck2012regret}.  By a standard martingale concentration argument, this can be converted into a high probability upper bound on $\frac{1}{T}\sum_{i = 1}^T \|\hat{\hat{x}}_{i + 1 | i} - x^*_{i + 1}\|^2
-  \frac{1}{T} \sum_{i = 1}^T \|\hat{x}_{i + 1 | i} - x^*_{i + 1}\|^2$ if desired. 
\end{remark}
Note that for simplicity, we assumed that the width $R$ of the prior in the initialization $x_0 \sim N(0,R^2 I_d)$ was not too large, to simplify the final bound. It is possible to modify the above in a straightforward way and get a more complex statement showing the dependence on $R$; just as in the SoS filter, the  the term involving $R$ will decay exponentially fast as time goes on. 

\bibliographystyle{alpha}
\bibliography{biblio}

\newcommand{\etalchar}[1]{$^{#1}$}
\begin{thebibliography}{CKMY20}

\bibitem[AM07]{anderson2007optimal}
Brian~DO Anderson and John~B Moore.
\newblock {\em Optimal control: linear quadratic methods}.
\newblock Courier Corporation, 2007.

\bibitem[AM12]{anderson2012optimal}
Brian~DO Anderson and John~B Moore.
\newblock {\em Optimal filtering}.
\newblock Courier Corporation, 2012.

\bibitem[BCB12]{bubeck2012regret}
S{\'e}bastien Bubeck and Nicolo Cesa-Bianchi.
\newblock Regret analysis of stochastic and nonstochastic multi-armed bandit
  problems.
\newblock {\em arXiv preprint arXiv:1204.5721}, 2012.

\bibitem[BDJ{\etalchar{+}}20]{bakshi2020robustly}
Ainesh Bakshi, Ilias Diakonikolas, He~Jia, Daniel~M. Kane, Pravesh~K. Kothari,
  and Santosh~S. Vempala.
\newblock Robustly learning mixtures of $k$ arbitrary gaussians, 2020.

\bibitem[BK20]{bakshi2020outlier}
Ainesh Bakshi and Pravesh Kothari.
\newblock Outlier-robust clustering of non-spherical mixtures.
\newblock {\em arXiv preprint arXiv:2005.02970}, 2020.

\bibitem[BKS14]{barak2014rounding}
Boaz Barak, Jonathan~A Kelner, and David Steurer.
\newblock Rounding sum-of-squares relaxations.
\newblock In {\em Proceedings of the forty-sixth annual ACM symposium on Theory
  of computing}, pages 31--40, 2014.

\bibitem[BMR21]{banks2021local}
Jess Banks, Sidhanth Mohanty, and Prasad Raghavendra.
\newblock Local statistics, semidefinite programming, and community detection.
\newblock In {\em Proceedings of the 2021 ACM-SIAM Symposium on Discrete
  Algorithms (SODA)}, pages 1298--1316. SIAM, 2021.

\bibitem[BP21]{bakshi2021robust}
Ainesh Bakshi and Adarsh Prasad.
\newblock Robust linear regression: Optimal rates in polynomial time.
\newblock In {\em Proceedings of the 53rd Annual ACM SIGACT Symposium on Theory
  of Computing}, pages 102--115, 2021.

\bibitem[BS14]{barak2014sum}
Boaz Barak and David Steurer.
\newblock Sum-of-squares proofs and the quest toward optimal algorithms.
\newblock {\em arXiv preprint arXiv:1404.5236}, 2014.

\bibitem[C{\etalchar{+}}20]{chinot2020erm}
Geoffrey Chinot et~al.
\newblock Erm and rerm are optimal estimators for regression problems when
  malicious outliers corrupt the labels.
\newblock {\em Electronic Journal of Statistics}, 14(2):3563--3605, 2020.

\bibitem[CAT{\etalchar{+}}20]{cherapanamjeri2020optimal}
Yeshwanth Cherapanamjeri, Efe Aras, Nilesh Tripuraneni, Michael~I Jordan,
  Nicolas Flammarion, and Peter~L Bartlett.
\newblock Optimal robust linear regression in nearly linear time.
\newblock {\em arXiv preprint arXiv:2007.08137}, 2020.

\bibitem[CKMY20]{chen2020online}
Sitan Chen, Frederic Koehler, Ankur Moitra, and Morris Yau.
\newblock Online and distribution-free robustness: Regression and contextual
  bandits with huber contamination.
\newblock {\em arXiv preprint arXiv:2010.04157}, 2020.

\bibitem[CSV17]{charikar2017learning}
Moses Charikar, Jacob Steinhardt, and Gregory Valiant.
\newblock Learning from untrusted data.
\newblock In {\em Proceedings of the 49th Annual ACM SIGACT Symposium on Theory
  of Computing}, pages 47--60, 2017.

\bibitem[DHKK20]{diakonikolas2020robustly}
Ilias Diakonikolas, Samuel~B Hopkins, Daniel Kane, and Sushrut Karmalkar.
\newblock Robustly learning any clusterable mixture of gaussians.
\newblock {\em arXiv preprint arXiv:2005.06417}, 2020.

\bibitem[DKK{\etalchar{+}}17]{diakonikolas2017being}
Ilias Diakonikolas, Gautam Kamath, Daniel~M Kane, Jerry Li, Ankur Moitra, and
  Alistair Stewart.
\newblock Being robust (in high dimensions) can be practical.
\newblock In {\em Proceedings of the 34th International Conference on Machine
  Learning-Volume 70}, pages 999--1008. JMLR. org, 2017.

\bibitem[DKK{\etalchar{+}}18]{diakonikolas2018robustly}
Ilias Diakonikolas, Gautam Kamath, Daniel~M Kane, Jerry Li, Ankur Moitra, and
  Alistair Stewart.
\newblock Robustly learning a gaussian: Getting optimal error, efficiently.
\newblock In {\em Proceedings of the Twenty-Ninth Annual ACM-SIAM Symposium on
  Discrete Algorithms}, pages 2683--2702. SIAM, 2018.

\bibitem[DKS18]{diakonikolas2018list}
Ilias Diakonikolas, Daniel~M Kane, and Alistair Stewart.
\newblock List-decodable robust mean estimation and learning mixtures of
  spherical gaussians.
\newblock In {\em Proceedings of the 50th Annual ACM SIGACT Symposium on Theory
  of Computing}, pages 1047--1060. ACM, 2018.

\bibitem[DT19]{dalalyan2019outlier}
Arnak Dalalyan and Philip Thompson.
\newblock Outlier-robust estimation of a sparse linear model using l1-penalized
  huber's m-estimator.
\newblock In {\em Advances in Neural Information Processing Systems}, pages
  13188--13198, 2019.

\bibitem[FA75]{fetzer1975observability}
Erwin~Enrique Fetzer and PM~Anderson.
\newblock Observability in the state estimation of power systems.
\newblock {\em IEEE transactions on power Apparatus and Systems},
  94(6):1981--1988, 1975.

\bibitem[GLS81]{GLS1981}
M.~Gr{\"o}tschel, L.~Lov{\'a}sz, and A.~Schrijver.
\newblock The ellipsoid method and its consequences in combinatorial
  optimization.
\newblock {\em Combinatorica}, 1(2):169--197, Jun 1981.

\bibitem[HG05]{henrion2005positive}
Didier Henrion and Andrea Garulli.
\newblock {\em Positive polynomials in control}, volume 312.
\newblock Springer Science \& Business Media, 2005.

\bibitem[HL18]{hopkins2018mixture}
Samuel~B Hopkins and Jerry Li.
\newblock Mixture models, robustness, and sum of squares proofs.
\newblock In {\em Proceedings of the 50th Annual ACM SIGACT Symposium on Theory
  of Computing}, pages 1021--1034. ACM, 2018.

\bibitem[Hop18]{hopkins2018statistical}
Samuel Hopkins.
\newblock {\em STATISTICAL INFERENCE AND THE SUM OF SQUARES METHOD}.
\newblock PhD thesis, Cornell University, 2018.

\bibitem[Hub64]{huber1964robust}
Peter~J Huber.
\newblock Robust estimation of a location parameter.
\newblock {\em The Annals of Mathematical Statistics}, pages 73--101, 1964.

\bibitem[Hub73]{huber1973robust}
Peter~J Huber.
\newblock Robust regression: Asymptotics, conjectures and monte carlo.
\newblock {\em The Annals of Statistics}, pages 799--821, 1973.

\bibitem[Kal59]{kalman1959general}
Rudolf Kalman.
\newblock On the general theory of control systems.
\newblock {\em IRE Transactions on Automatic Control}, 4(3):110--110, 1959.

\bibitem[Kar15]{karlgaard2015nonlinear}
Christopher~D Karlgaard.
\newblock Nonlinear regression huber--kalman filtering and fixed-interval
  smoothing.
\newblock {\em Journal of guidance, control, and dynamics}, 38(2):322--330,
  2015.

\bibitem[KB61]{kalman1961new}
Rudolf~E. K{\'a}lm{\'a}n and Richard~S. Bucy.
\newblock New results in linear filtering and prediction theory.
\newblock {\em Journal of Basic Engineering}, 83:95--108, 1961.

\bibitem[KKM18]{klivans2018efficient}
Adam Klivans, Pravesh~K Kothari, and Raghu Meka.
\newblock Efficient algorithms for outlier-robust regression.
\newblock In {\em Conference On Learning Theory}, pages 1420--1430, 2018.

\bibitem[KSH00]{kailath2000linear}
T.~Kailath, A.H. Sayed, and B.~Hassibi.
\newblock {\em Linear Estimation}.
\newblock Prentice-Hall information and system sciences series. Prentice Hall,
  2000.

\bibitem[KSS18]{kothari2018robust}
Pravesh~K Kothari, Jacob Steinhardt, and David Steurer.
\newblock Robust moment estimation and improved clustering via sum of squares.
\newblock In {\em Proceedings of the 50th Annual ACM SIGACT Symposium on Theory
  of Computing}, pages 1035--1046. ACM, 2018.

\bibitem[Las01]{Lasserre01}
Jean~B. Lasserre.
\newblock {\em New Positive Semidefinite Relaxations for Nonconvex Quadratic
  Programs}, pages 319--331.
\newblock Springer US, Boston, MA, 2001.

\bibitem[LM21]{liu2021settling}
Allen Liu and Ankur Moitra.
\newblock Settling the robust learnability of mixtures of gaussians.
\newblock In {\em Proceedings of the 53rd Annual ACM SIGACT Symposium on Theory
  of Computing}, pages 518--531, 2021.

\bibitem[LRV16]{lai2016agnostic}
Kevin~A Lai, Anup~B Rao, and Santosh Vempala.
\newblock Agnostic estimation of mean and covariance.
\newblock In {\em 2016 IEEE 57th Annual Symposium on Foundations of Computer
  Science (FOCS)}, pages 665--674. IEEE, 2016.

\bibitem[MA80]{moore1980coping}
JB~Moore and Brian~DO Anderson.
\newblock Coping with singular transition matrices in estimation and control
  stability theory.
\newblock {\em International Journal of Control}, 31(3):571--586, 1980.

\bibitem[MB10]{mattingley2010real}
John Mattingley and Stephen Boyd.
\newblock Real-time convex optimization in signal processing.
\newblock {\em IEEE Signal processing magazine}, 27(3):50--61, 2010.

\bibitem[Men18]{mendelson2018learning}
Shahar Mendelson.
\newblock Learning without concentration for general loss functions.
\newblock {\em Probability Theory and Related Fields}, 171(1):459--502, 2018.

\bibitem[MS08]{mo2008characterization}
Yilin Mo and Bruno Sinopoli.
\newblock A characterization of the critical value for kalman filtering with
  intermittent observations.
\newblock In {\em 2008 47th IEEE Conference on Decision and Control}, pages
  2692--2697. IEEE, 2008.

\bibitem[MS11]{mo2011kalman}
Yilin Mo and Bruno Sinopoli.
\newblock Kalman filtering with intermittent observations: Tail distribution
  and critical value.
\newblock {\em IEEE Transactions on Automatic Control}, 57(3):677--689, 2011.

\bibitem[MW72]{muller1972analysis}
PC~M{\"u}ller and HI~Weber.
\newblock Analysis and optimization of certain qualities of controllability and
  observability for linear dynamical systems.
\newblock {\em Automatica}, 8(3):237--246, 1972.

\bibitem[Nes00]{Nesterov00}
Yurii Nesterov.
\newblock {\em Squared Functional Systems and Optimization Problems}, pages
  405--440.
\newblock Springer US, Boston, MA, 2000.

\bibitem[O'D14]{o2014analysis}
Ryan O'Donnell.
\newblock {\em Analysis of boolean functions}.
\newblock Cambridge University Press, 2014.

\bibitem[OZ13]{o2013approximability}
Ryan O'Donnell and Yuan Zhou.
\newblock Approximability and proof complexity.
\newblock In {\em Proceedings of the twenty-fourth annual ACM-SIAM symposium on
  Discrete algorithms}, pages 1537--1556. SIAM, 2013.

\bibitem[Par00]{parrilo2000structured}
Pablo~A Parrilo.
\newblock {\em Structured semidefinite programs and semialgebraic geometry
  methods in robustness and optimization}.
\newblock PhD thesis, California Institute of Technology, 2000.

\bibitem[Poo80]{poor1980robust}
H~Poor.
\newblock On robust wiener filtering.
\newblock {\em IEEE Transactions on Automatic Control}, 25(3):531--536, 1980.

\bibitem[PPSP05]{prajna2005sostools}
Stephen Prajna, Antonis Papachristodoulou, Peter Seiler, and Pablo~A Parrilo.
\newblock Sostools and its control applications.
\newblock In {\em Positive polynomials in control}, pages 273--292. Springer,
  2005.

\bibitem[R{\"O}G13]{roth2013student}
Michael Roth, Emre {\"O}zkan, and Fredrik Gustafsson.
\newblock A student's t filter for heavy tailed process and measurement noise.
\newblock In {\em 2013 IEEE International Conference on Acoustics, Speech and
  Signal Processing}, pages 5770--5774. IEEE, 2013.

\bibitem[Sch89]{schick1989robust}
Irvin~C Schick.
\newblock {\em Robust recursive estimation of the state of a discrete-time
  stochastic linear dynamic system in the presence of heavy-tailed observation
  noise}.
\newblock PhD thesis, Massachusetts Institute of Technology, 1989.

\bibitem[Sho87]{Shor87}
N.Z. Shor.
\newblock Quadratic optimization problems.
\newblock {\em Soviet Journal of Computer and Systems Sciences}, 25, 11 1987.

\bibitem[SI01]{sornette2001kalman}
Didier Sornette and Kayo Ide.
\newblock The kalman--l{\'e}vy filter.
\newblock {\em Physica D: Nonlinear Phenomena}, 151(2-4):142--174, 2001.

\bibitem[SM94]{schick1994robust}
Irvin~C Schick and Sanjoy~K Mitter.
\newblock Robust recursive estimation in the presence of heavy-tailed
  observation noise.
\newblock {\em The Annals of Statistics}, pages 1045--1080, 1994.

\bibitem[SMT{\etalchar{+}}18]{simchowitz2018learning}
Max Simchowitz, Horia Mania, Stephen Tu, Michael~I Jordan, and Benjamin Recht.
\newblock Learning without mixing: Towards a sharp analysis of linear system
  identification.
\newblock In {\em Conference On Learning Theory}, pages 439--473. PMLR, 2018.

\bibitem[SSF{\etalchar{+}}04]{sinopoli2004kalman}
Bruno Sinopoli, Luca Schenato, Massimo Franceschetti, Kameshwar Poolla,
  Michael~I Jordan, and Shankar~S Sastry.
\newblock Kalman filtering with intermittent observations.
\newblock {\em IEEE transactions on Automatic Control}, 49(9):1453--1464, 2004.

\bibitem[SSH20]{simchowitz2020improper}
Max Simchowitz, Karan Singh, and Elad Hazan.
\newblock Improper learning for non-stochastic control.
\newblock In {\em Conference on Learning Theory}, pages 3320--3436. PMLR, 2020.

\bibitem[TP20]{tsiamis2020online}
Anastasios Tsiamis and George Pappas.
\newblock Online learning of the kalman filter with logarithmic regret.
\newblock {\em arXiv preprint arXiv:2002.05141}, 2020.

\bibitem[Tro12]{tropp}
Joel~A Tropp.
\newblock User-friendly tail bounds for sums of random matrices.
\newblock {\em Foundations of computational mathematics}, 12(4):389--434, 2012.

\bibitem[Ver18]{vershynin2018high}
Roman Vershynin.
\newblock {\em High-dimensional probability: An introduction with applications
  in data science}, volume~47.
\newblock Cambridge university press, 2018.

\end{thebibliography}

\appendix
\section{Stationary Case: Robust Wiener Filter}\label{apdx:wiener}
In this Appendix, we consider the case where the matrix $A$ in the linear dynamical system is \emph{stable}, i.e. has all eigenvalues strictly within the unit ball of radius one. Equivalently \cite{kailath2000linear} the linear dynamical system is \emph{exponentially stable}:
\begin{equation}\label{eqn:exp-stable}
\|A^t\| \le \rho e^{-\gamma t} 
\end{equation}
for some positive constants $\rho,\gamma$ dependent on $A$. 
In this case, the state of the dynamical system, stared at any initialization, will converge to a stationary measure $N(0, \Sigma)$
with
\[ \Sigma = \sigma^2(I + AA^T + (A^2)(A^2)^T + \cdots) = \sigma^2 \sum_{k = 0}^{\infty} (A^k)(A^k)^T \]
and the convergence of the infinite matrix sum follows from \eqref{eqn:exp-stable}. 

Because the system will converge to the stationary measure from any initialization, it makes sense (and is conventional) to focus on the case of a \emph{stationary} system where time ranges from $-\infty$ to $\infty$. In this case the Kalman filter simplifies to the \emph{Wiener filter}; we recall the relevant facts here. (The precise statement of Theorem 8.4.1 in \cite{kailath2000linear} is more detailed and explains how to solve for $K$ as the unique stabilizing solution of a Discrete-time Algebraic Ricatti Equation.) 
\begin{theorem}[Theorem 8.4.1 of \cite{kailath2000linear}]
In the time-invariant stationary state space model described above,
the minimal-squared error predictor  $\hat{x}_{t + 1}$ of the next state $x_{t + 1}$
satisfies the recursion
\begin{align*}
    \hat{x}_{t + 1} := \E{x_{t + 1} \mid (y_j)_{t = -\infty}^j} = A \hat{x}_t + K(y_t - B \hat{x}_t) = (A - KB) \hat{x_{t}} + K y_t
\end{align*}
for some gain matrix $K$ which is determined by the system parameters but does not depend on $t$, and such that $A - KB$ is stable (i.e. has all eigenvalues within the open unit disk). 
\end{theorem}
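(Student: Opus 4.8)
The plan is to re-derive the predictor recursion from scratch via the innovations (Gram--Schmidt) approach, and then specialize to the stationary regime. Because all variables are jointly Gaussian with mean zero, the conditional expectation $\hat x_{t+1} = \E{x_{t+1}\mid (y_j)_{j\le t}}$ is exactly the orthogonal projection of $x_{t+1}$ onto $\mathcal H_t := \overline{\mathrm{span}}\{y_j : j\le t\}$ inside $L^2$; this bi-infinite stationary object is well-defined precisely because $A$ is stable, so $\Sigma = \sigma^2\sum_{k\ge 0}(A^k)(A^k)^\top$ converges and $\mathrm{Cov}(x_t) = \Sigma < \infty$. Introduce the innovation $\tilde y_t := y_t - \E{y_t\mid\mathcal H_{t-1}} = y_t - B\hat x_t$, so that $\mathcal H_t = \mathcal H_{t-1}\oplus\mathrm{span}\{\tilde y_t\}$ is an orthogonal decomposition. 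Projecting $x_{t+1} = Ax_t + w^*_{t+1}$ onto each summand, and using $w^*_{t+1}\perp\mathcal H_{t-1}$ and $w^*_{t+1}\perp\tilde y_t$, yields
\[ \hat x_{t+1} = A\hat x_t + K_t\tilde y_t, \qquad K_t := \mathrm{Cov}(x_{t+1},\tilde y_t)\,\mathrm{Cov}(\tilde y_t)^{-1}. \]

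I would then evaluate the covariances in terms of the prediction error $\tilde x_t := x_t - \hat x_t$, with covariance $P_t$. Since $\tilde x_t$ is orthogonal to both $\hat x_t$ and $v^*_t$, and $w^*_{t+1}, v^*_t$ are orthogonal to $x_t$, one gets $\mathrm{Cov}(\tilde y_t) = BP_tB^\top + \tau^2 I$ and $\mathrm{Cov}(x_{t+1},\tilde y_t) = AP_tB^\top$, hence $K_t = AP_tB^\top(BP_tB^\top+\tau^2 I)^{-1}$. Subtracting the recursion from $x_{t+1} = Ax_t + w^*_{t+1}$ gives $\tilde x_{t+1} = (A-K_tB)\tilde x_t + w^*_{t+1} - K_tv^*_t$, and taking covariances produces the Riccati recursion $P_{t+1} = AP_tA^\top + \sigma^2 I - AP_tB^\top(BP_tB^\top+\tau^2 I)^{-1}BP_tA^\top$. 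At this point the time-varying Kalman predictor has been rebuilt.

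Next I would pass to the stationary limit to get a constant gain. The bi-infinite process $(x_t,y_t)$ is stationary, and the shift $y_j\mapsto y_{j+1}$ is a unitary carrying $\mathcal H_t$ onto $\mathcal H_{t+1}$ and $x_{t+1}$ to $x_{t+2}$, so it commutes with the projection; therefore it carries $\tilde x_t$ to $\tilde x_{t+1}$, which forces $P_t\equiv P$ to be a constant positive semidefinite matrix with $P\preceq\Sigma<\infty$ solving the discrete-time algebraic Riccati equation $P = APA^\top + \sigma^2 I - APB^\top(BPB^\top+\tau^2 I)^{-1}BPA^\top$. (Alternatively, run the Riccati recursion from time $-N$ with initialization $\mathrm{Cov}(x_{-N})$; under complete observability and nondegenerate noise, the classical convergence theorem gives $P_t\to P$, the stabilizing solution, as $N\to\infty$.) With $P$ constant, $K_t\equiv K := APB^\top(BPB^\top+\tau^2 I)^{-1}$ is constant, and since $\tilde y_t = y_t - B\hat x_t$ the recursion becomes $\hat x_{t+1} = A\hat x_t + K(y_t - B\hat x_t) = (A-KB)\hat x_t + Ky_t$, with $K$ depending only on the system parameters.

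The remaining, and I expect most delicate, point is the stability of $A-KB$. From $\tilde x_{t+1} = (A-KB)\tilde x_t + w^*_{t+1} - Kv^*_t$ with $P_t\equiv P$, taking covariances gives the Lyapunov identity $P = (A-KB)\,P\,(A-KB)^\top + Q$ where $Q := \sigma^2 I + \tau^2 KK^\top \succeq \sigma^2 I \succ 0$. If $A-KB$ had an eigenvalue $\mu$ with $|\mu|\ge 1$, choose a (possibly complex) $u\neq 0$ with $(A-KB)^\top u = \mu u$ and sandwich the identity between $u^*$ and $u$: this gives $u^*Pu = |\mu|^2 u^*Pu + u^*Qu$, i.e. $(1-|\mu|^2)u^*Pu = u^*Qu$, which is impossible because the left-hand side is $\le 0$ while $u^*Qu > 0$. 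Hence every eigenvalue of $A-KB$ lies strictly inside the unit disk. The two spots that need genuine care are (i) justifying the existence and constancy of $P$ over the bi-infinite horizon — done either by the shift-invariance argument or by the Riccati convergence theory, which is where complete observability (and the nondegenerate-noise assumption) enters — and (ii) the strict positivity $Q\succ 0$, which is exactly what $\sigma>0$ guarantees and which the Lyapunov argument cannot do without.
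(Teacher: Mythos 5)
The paper cites this theorem from Kailath's textbook without supplying a proof, so there is no internal argument to compare against; your job here is to reconstruct a textbook fact, and you have done so correctly and completely. Your derivation is the standard innovations (orthogonal-decomposition) approach: joint Gaussianity turns conditional expectation into $L^2$-projection, the innovation $\tilde y_t = y_t - B\hat x_t$ gives the orthogonal splitting $\mathcal H_t = \mathcal H_{t-1}\oplus\mathrm{span}\{\tilde y_t\}$, projecting $x_{t+1} = Ax_t + w^*_{t+1}$ onto the two pieces yields $\hat x_{t+1} = A\hat x_t + K_t\tilde y_t$ with $K_t = AP_tB^\top(BP_tB^\top+\tau^2 I)^{-1}$, and subtracting gives the error recursion and the Riccati update. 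The two delicate points you flag are exactly the right ones, and you handle both cleanly. In particular, your shift-unitary argument for the constancy of the error covariance $P$ is self-contained and, unlike the alternative Riccati-convergence route you mention in parentheses, does not invoke observability at all: strict stability of $A$ makes the stationary process well-defined with $P\preceq\Sigma<\infty$, and that is all the shift argument needs. The Lyapunov step then needs only $\sigma>0$: from $P = (A-KB)P(A-KB)^\top + Q$ with $Q = \sigma^2 I + \tau^2 KK^\top \succeq \sigma^2 I$, sandwiching with a left eigenvector of $A-KB$ gives $(1-|\mu|^2)\,u^*Pu = u^*Qu \ge \sigma^2\|u\|^2 > 0$, which rules out $|\mu|\ge 1$, and notably the inequality is already a contradiction at $|\mu|=1$ (left side vanishes while the right is strictly positive), so you get strict stability as claimed. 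No gaps.
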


Equivalently, we can write the Wiener filter as
\[ \hat{x}_{t} = \sum_{s = 1}^{\infty} (A - KB)^{s - 1} K y_{t - s}. \]

We now show that in this stationary setting, a truncated version of the Wiener has reasonable robustness guarantees. This contrasts with the much more difficult  setting which we focus on in the main text (uniformly but not strictly stable dynamics), where naively truncating the Kalman filter makes total error $\Omega(T^2)$, see Technical Overview.
Given parameters $h \ge 1, \tau \ge 0$ we formally define the truncated filter by
\[ \hat{x}_{t,h,\tau} := f_{\tau}\left(\sum_{s = 1}^{h} (A - KB)^{s - 1} K y_{t - s}\right) \]
where
\[ f_{\tau}(x) = x \cdot \bone{\|x\| \le \tau} \]
zeroes out its input if its input $x$ has norm larger than $\tau$. 
\begin{lemma}\label{lem:wiener-truncation}
With the above notation,
\begin{align*} 
\MoveEqLeft \sqrt{\E{\|\hat x_t - \hat x_{t,h,\tau}\|^2}} \\
&\le \|A - KB\|^h \left(\sum_{s = 1}^{\infty} \|A - K B\|^{s - 1}\right) \sqrt{\E{\|K y_0\|^2}} + \frac{1}{\tau} \left(\sum_{s = 1}^{\infty} \|A - K B\|^{s - 1}\right)^2 \sqrt{\E{\|K y_0\|^4}}.  
\end{align*}
\end{lemma}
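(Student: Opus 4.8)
The plan is to bound the error $\hat x_t - \hat x_{t,h,\tau}$ by inserting an intermediate quantity, namely the untruncated but windowed sum
\[
\hat x_{t,h} := \sum_{s=1}^{h} (A-KB)^{s-1} K y_{t-s},
\]
and then apply the triangle inequality in $L^2$:
\[
\sqrt{\E{\|\hat x_t - \hat x_{t,h,\tau}\|^2}} \le \sqrt{\E{\|\hat x_t - \hat x_{t,h}\|^2}} + \sqrt{\E{\|\hat x_{t,h} - \hat x_{t,h,\tau}\|^2}}.
\]
The first term is the \emph{truncation-in-time} error and the second is the \emph{thresholding} error; I would bound each separately.

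\textbf{Windowing error (first term).} Here $\hat x_t - \hat x_{t,h} = \sum_{s=h+1}^{\infty} (A-KB)^{s-1} K y_{t-s}$. I would pull the operator norm out of each summand, using $\|(A-KB)^{s-1} K y_{t-s}\| \le \|A-KB\|^{s-1} \|K y_{t-s}\|$, and then apply the triangle inequality in $L^2$ (Minkowski) together with stationarity, which gives $\sqrt{\E{\|K y_{t-s}\|^2}} = \sqrt{\E{\|K y_0\|^2}}$ for every $s$. This yields
\[
\sqrt{\E{\|\hat x_t - \hat x_{t,h}\|^2}} \le \Big(\sum_{s=h+1}^{\infty} \|A-KB\|^{s-1}\Big)\sqrt{\E{\|K y_0\|^2}} = \|A-KB\|^h \Big(\sum_{s=1}^{\infty}\|A-KB\|^{s-1}\Big)\sqrt{\E{\|K y_0\|^2}},
\]
after reindexing the geometric tail; note the series converges since $A-KB$ is stable. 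This matches the first term in the claimed bound.

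\textbf{Thresholding error (second term).} Write $u := \hat x_{t,h}$, so $\hat x_{t,h,\tau} = u\cdot\bone{\|u\|\le\tau}$ and $u - \hat x_{t,h,\tau} = u\cdot\bone{\|u\|>\tau}$. Then $\E{\|u-\hat x_{t,h,\tau}\|^2} = \E{\|u\|^2 \bone{\|u\|>\tau}} \le \tfrac{1}{\tau^2}\E{\|u\|^4}$, where the inequality is the trivial $\bone{\|u\|>\tau} \le \|u\|^2/\tau^2$. It remains to bound $\E{\|u\|^4}^{1/2}$. Using Minkowski's inequality in $L^4$ and stationarity as before,
\[
\E{\|u\|^4}^{1/4} \le \sum_{s=1}^{h} \|A-KB\|^{s-1} \E{\|K y_{t-s}\|^4}^{1/4} \le \Big(\sum_{s=1}^{\infty}\|A-KB\|^{s-1}\Big)\E{\|K y_0\|^4}^{1/4},
\]
so $\E{\|u\|^4}^{1/2} \le \big(\sum_{s=1}^{\infty}\|A-KB\|^{s-1}\big)^2 \E{\|K y_0\|^4}^{1/2}$, and hence $\sqrt{\E{\|u-\hat x_{t,h,\tau}\|^2}} \le \tfrac{1}{\tau}\big(\sum_{s=1}^{\infty}\|A-KB\|^{s-1}\big)^2 \sqrt{\E{\|K y_0\|^4}}$. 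Adding the two bounds gives the lemma.

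\textbf{Main obstacle.} I do not anticipate a genuine obstacle here — the argument is essentially two applications of Minkowski's inequality plus a Markov-type truncation step. The only points requiring mild care are (i) making sure that stationarity of the $y$ process is invoked correctly so that the per-term $L^p$ norms are all equal (which holds in the stationary state space model being considered), and (ii) the bookkeeping of the geometric series reindexing, so that the factor $\|A-KB\|^h$ is extracted cleanly in the first term. One should also note implicitly that $\E{\|K y_0\|^4}$ is finite, which holds since $y_0$ is Gaussian in the clean stationary model.
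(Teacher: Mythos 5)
Your proposal is correct and follows essentially the same approach as the paper: the same $L^2$ triangle inequality decomposition into windowing and thresholding errors, with the same final bounds. The only differences are cosmetic — you use Minkowski's inequality in $L^2$ and $L^4$ plus the direct bound $\bone{\|u\|>\tau}\le\|u\|^2/\tau^2$, while the paper reaches the same estimates via Cauchy--Schwarz/Hölder and a Cauchy--Schwarz-plus-Markov argument.
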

\begin{proof}
By the $L_2$ triangle inequality,
\begin{equation}\label{eqn:truncation-error}
\sqrt{\E{\|\hat x_t - \hat x_{t,h,\tau}\|^2}} \le \sqrt{\E{\|\hat x_t - \sum_{s = 1}^{h} (A - KB)^{s - 1} K y_{t - s}\|^2}} + \sqrt{\E{\|\hat x_{t,h,\tau} - \sum_{s = 1}^{h} (A - KB)^{s - 1} K y_{t - s}\|^2}}. 
\end{equation}
To bound the first term on the right hand side, observe that
\begin{align*} 
\E*{\|\hat x_t - \sum_{s = 1}^{h} (A - KB)^{s - 1} K y_{t - s}\|^2} 
&= \E*{\left\|\sum_{s = h + 1}^{\infty} (A - KB)^{s - 1} K y_{t - s}\right\|^2} \\
&\le \E*{\left(\sum_{s = h + 1}^{\infty} \|A - KB\|^{s - 1} \|K y_{t - s}\|\right)^2} \\
&\le \left(\sum_{s = h + 1}^{\infty} \|A - K B\|^{s - 1}\right) \E*{\sum_{s = h + 1}^{\infty} \|A - K B\|^{s - 1} \|K y_{t - s}\|^2} \\
&\le \|A - K B\|^{2h} \left(\sum_{s = 1}^{\infty} \|A - K B\|^{s - 1}\right)^2 \E{\|K y_0\|^2}
\end{align*}
where the second inequality is Cauchy-Schwarz and in the last step we used stationarity.
To upper bound the second term on the right hand side of \eqref{eqn:truncation-error}, observe
\begin{align*}
\MoveEqLeft \E*{\|\sum_{s = 1}^{h} (A - KB)^{s - 1} K y_{t - s}\|^2 \bone*{\|\sum_{s = 1}^{h} (A - KB)^{s - 1} K y_{t - s}\|^2 > \tau^2}} \\
&\le \sqrt{\E*{\|\sum_{s = 1}^{h} (A - KB)^{s - 1} K y_{t - s}\|^4} \Pr*{\|\sum_{s = 1}^{h} (A - KB)^{s - 1} K y_{t - s}\|^2 > \tau^2}} \\
&\le \frac{1}{\tau^2} \E*{\|\sum_{s = 1}^{h} (A - KB)^{s - 1} K y_{t - s}\|^4}
\end{align*}
where the first inequality is Cauchy-Schwarz and the second inequality is Markov's inequality. Observe by Holder's inequality that 
\begin{align*}
\|\sum_{s = 1}^{h} (A - KB)^{s - 1} K y_{t - s}\|^4
&\le \left(\sum_{s = 1}^h \|A - K B\|^{s - 1} \|K y_{t - s}\|\right)^4 \\
&\le \left(\sum_{s = 1}^h \|A - K B\|^{s - 1}\right)^3\sum_{s = 1}^h \|A - K B\|^{s - 1} \|K y_{t - s}\|^4
\end{align*}
so using stationarity and linearity of expectation
\[
\E*{\|\sum_{s = 1}^{h} (A - KB)^{s - 1} K y_{t - s}\|^4} \le \left(\sum_{s = 1}^{\infty} \|A - K B\|^{s - 1}\right)^4 \E{\|K y_0\|^4}.
\]
\end{proof}

\begin{theorem}
Suppose that $\eta$ fraction of responses are arbitrarily corrupted, let $\hat x_t$ denote the Wiener filter run on the uncorrupted responses and let $x'_{t,h,\tau}$ denote the truncated Wiener filter run on the corrupted responses. Then taking $h = \Theta_{A,K,B}(\log(1/\eta))$ and $\tau = \Theta_{A,K,B}([(1/\eta)\log(1/\eta)]^{1/3})$,
we have that
\[ \E{\|\hat{x}_0 - x'_{0,h,\tau}\|^2} \le C \eta^{1/3} \log(1/\eta)^{1/3} \]
where $C$ is a constant independent of $\eta$ but allowed to depend on all other system constants.
\end{theorem}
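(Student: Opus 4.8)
The plan is to route the comparison through the truncated filter run on the \emph{clean} data. Write $\hat{x}_{0,h,\tau}$ for the truncated Wiener filter $f_\tau\bigl(\sum_{s=1}^h (A-KB)^{s-1} K y^*_{-s}\bigr)$ evaluated on the uncorrupted observations, so that by the $L_2$ triangle inequality
\[
\sqrt{\E{\norm{\hat{x}_0 - x'_{0,h,\tau}}^2}} \le \sqrt{\E{\norm{\hat{x}_0 - \hat{x}_{0,h,\tau}}^2}} + \sqrt{\E{\norm{\hat{x}_{0,h,\tau} - x'_{0,h,\tau}}^2}}.
\]
The first term lives entirely in the clean model and is exactly what Lemma~\ref{lem:wiener-truncation} bounds: letting $\Lambda := \sum_{s\ge 1}\norm{A-KB}^{s-1}$, which is finite by the stability of $A-KB$, it is at most $\norm{A-KB}^h\,\Lambda\,\sqrt{\E{\norm{Ky_0}^2}} + \tfrac{\Lambda^2}{\tau}\sqrt{\E{\norm{Ky_0}^4}}$. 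Since $y_0 = Bx_0 + v_0$ with $x_0$ drawn from the stationary Gaussian $N(0,\Sigma)$ and $v_0$ the independent Gaussian observation noise, $Ky_0$ is a centered Gaussian whose covariance depends only on the system parameters, so $\sqrt{\E{\norm{Ky_0}^2}}$ and $\sqrt{\E{\norm{Ky_0}^4}}$ are finite constants absorbable into the final $C$.

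The second term is where robustness enters, and it is controlled by the clipping map alone. Both $\hat{x}_{0,h,\tau}$ and $x'_{0,h,\tau}$ equal $f_\tau$ applied to a weighted sum over the window $\{-1,\dots,-h\}$; these two sums \emph{coincide} on the event that none of the timesteps $-1,\dots,-h$ is corrupted, and in all cases $\norm{f_\tau(u)} \le \tau$. Hence $\norm{\hat{x}_{0,h,\tau} - x'_{0,h,\tau}} \le 2\tau\cdot\mathbf{1}\{a^*_{-s} = 0 \text{ for some } 1\le s\le h\}$ deterministically, and taking expectation over the independent $\mathrm{Ber}(1-\eta)$ corruption indicators gives $\E{\norm{\hat{x}_{0,h,\tau} - x'_{0,h,\tau}}^2} \le 4\tau^2\bigl(1-(1-\eta)^h\bigr) \le 4\tau^2 h\eta$, i.e.\ an $L_2$ contribution of at most $2\tau\sqrt{h\eta}$.

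Combining the two bounds, $\sqrt{\E{\norm{\hat{x}_0 - x'_{0,h,\tau}}^2}} \lesssim \norm{A-KB}^h + \tfrac1\tau + \tau\sqrt{h\eta}$, with constants depending only on the system. Taking $h = \Theta(\log(1/\eta))$ large enough (in terms of $\norm{A-KB}$) that $\norm{A-KB}^h \lesssim \eta$ kills the first term and leaves the truncation‑bias‑versus‑corruption tradeoff $\tfrac1\tau + \tau\sqrt{h\eta}$; with $h=\Theta(\log(1/\eta))$, the choice $\tau = \Theta\bigl(((1/\eta)\log(1/\eta))^{1/3}\bigr)$ from the statement makes the remaining terms $O(\eta^{1/3})$ and $O(\eta^{1/6}\polylog(1/\eta))$, so squaring gives $\E{\norm{\hat{x}_0 - x'_{0,h,\tau}}^2} \le C\,\eta^{1/3}\,\polylog(1/\eta)$ of the claimed form (optimizing $\tau=\Theta((\eta\log(1/\eta))^{-1/4})$ instead balances the last two terms and even improves the squared error to $O(\sqrt{\eta\log(1/\eta)})$). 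There is no deep obstacle here, and that is the point of this appendix: once $A$ is strictly stable the Wiener filter has essentially bounded memory, so a single clipped, finite‑memory estimator is automatically robust — the only thing to get right is the clipping level $\tau$, which must be large enough that clipping rarely distorts a clean window yet small enough to cap the damage from the rare ($\approx h\eta$‑probability) corrupted windows.
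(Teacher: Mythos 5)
Your proof is correct and uses essentially the same ingredients as the paper's argument: Lemma~\ref{lem:wiener-truncation} for the truncation bias on clean data, the union bound $\Pr(\text{some corruption in the length-}h\text{ window}) \le h\eta$ for the corruption contribution, and the deterministic bound $\|f_\tau(\cdot)\| \le \tau$ to cap the damage from any corruption that does land in the window. The only difference is in how the decomposition is framed: you route through the intermediate clean truncated filter $\hat{x}_{0,h,\tau}$ via the $L^2$ triangle inequality, whereas the paper conditions directly on the event that some corruption occurs in the window and bounds the two cases separately. These are equivalent, and your route is marginally cleaner, since bounding $\|\hat{x}_{0,h,\tau} - x'_{0,h,\tau}\| \le 2\tau$ pointwise avoids the paper's need to appeal to $\E\|\hat{x}_0\|^2 \le \E\|x_0\|^2$ when handling the corruption event. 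Your arithmetic observation is also correct: with the stated choices $h=\Theta(\log(1/\eta))$ and $\tau = \Theta((\eta^{-1}\log(1/\eta))^{1/3})$, the dominant term is $\eta h \tau^2 = \Theta\bigl(\eta^{1/3}\log^{5/3}(1/\eta)\bigr)$ rather than $\eta^{1/3}\log^{1/3}(1/\eta)$, and your remark that taking $\tau = \Theta\bigl((\eta\log(1/\eta))^{-1/4}\bigr)$ balances the last two terms and improves the squared error to $O(\sqrt{\eta\log(1/\eta)})$ is consistent with the paper's own remark that the dependence on $\eta$ was not optimized.
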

\begin{proof}
By the union bound, there is a probability of at most $\eta h$ that one of observations in the $h$ previous time steps is corrupted. Since the probability of corruption is independent of the process, the expected error conditional on a corruption occuring is at most $O(\tau^2 + E \|x_0\|^2)$. Conditional on no corruption in the last $h$ timesteps, the error is upper bounded by the previous Lemma~\ref{lem:wiener-truncation}. Therefore the error is upper bounded by
\begin{multline} O\Bigg(\eta h (\tau^2 + E \|x_0\|^2) + \|A - KB\|^h \left(\sum_{s = 1}^{\infty} \|A - K B\|^{s - 1}\right) \sqrt{\E{\|K y_0\|^2}} \\
+ \frac{1}{\tau} \left(\sum_{s = 1}^{\infty} \|A - K B\|^{s - 1}\right)^2 \sqrt{\E{\|K y_0\|^4}}\Bigg) 
\end{multline}
Note that provided $\eta h \tau^2 \to 0$, $h \to \infty$, and $1/\tau \to 0$ the right hand side goes to zero as $\eta \to 0$. Taking $h = \Theta_{A,K,B}(\log(1/\eta))$ and $\tau = \Theta_{A,K,B}([(1/\eta)\log(1/\eta)]^{1/3})$ gives the result. 
\end{proof}
We note that both $C$ and the dependence on $\eta$ can easily be further optimized, but we have not done this for simplicity's sake. Also, note that this argument indeed relied strongly on the fact that the matrix $A$ is (strictly) stable; when $A$ has eigenvalues on the unit circle, there is no constant threshold $\tau$ which will work in the argument, since the dynamical system generally won't stay near the origin. 

\section{Some Instructive Examples}\label{apdx:examples}

In this section we present several examples that illustrate important subtleties in the model we consider, and which drove many of the insights underlying our analysis.

\subsection{Noisily Observed Simple Random Walk}
\label{subsec:srw}
Let $m = d = 1$ so that $A$ and $B$ are scalars. Let $A = 1$, $B = 1$, and $\sigma^2 = \tau^2 = 1$ so that the trajectory $\brc{x_i}$ is given by a random walk over $\R$ with Gaussian increments. Note that this model is completely observable with $s = 1$ and uniformly stable with $\rho = 1$. Moreover, observe that the final iterate $x_{T-1}$ will typically satisfy $\|x_{T-1}\| \approx \sqrt{T}$. This illustrates two important phenomena: 

\paragraph{Oblivious outlier removal fails.} The most intuitively obvious way to deal with large corruptions is to perform some kind of simple outlier removal, like deleting all observations $y_t$ with $\|y_t\| \ge R$ for some threshold $R$, and then running a standard algorithm like Kalman filtering on whatever remains. In this example, $R$ would have to be at least $\Omega(\sqrt{T})$ in order to avoid removing a large portion of the ground truth observations. It can be straightforwardly checked, via the Kalman filter recursion, that an adversary making $\eta T$ corruptions of size $\Theta(\sqrt{T})$ can cause the Kalman filter to incur excess risk scaling as $\Omega(\eta T)$, which is much larger than our guarantee of $\wt{O}(\eta)\cdot\log\log T$. 

The fact that simple outlier removal procedures fail is the key difficulty with dealing with marginally stable dynamical systems like the above. In contrast, in Appendix~\ref{apdx:wiener} we show that if the system is \emph{strictly} stable (i.e. all eigenvalues of $A$ are bounded away from one) then a fairly simple truncated filter will actually have reasonable robustness guaranteees.

\paragraph{Impossibility of handling adversarial corruptions in adversarial locations.} 
Suppose that instead of the adversary controlling a random fraction of the timesteps, the adversary is allowed to choose which $\eta T$ timesteps it gets to corrupt. Now suppose that for the timesteps $T - 2\eta T$ up to $T - 1$, the adversary samples an independent random walk $x'_{T - 2\eta T},\ldots,x'_{T-1}$ with Gaussian increments conditioned on $x_{T - 2\eta T} = x'_{T - 2\eta T}$, chooses a random subset of $\eta T$ coordinates from $[T - 2\eta T, T]$ to corrupt, and for each corrupted step $i$ sets $y_i = x'_i + \calN(0,1)$ instead of $y^*_i = x_i + \calN(0,1)$. By construction, it is information-theoretically impossible to distinguish from its observations which of the following is the ground truth: the actual true trajectory $(x_0,\ldots,x_{T-1})$, or the ``fake'' trajectory $(x_0,\ldots,x_{T - 2\eta T}, x'_{T - 2 \eta T + 1},\ldots, x'_{T-1})$. Note that any choice of estimate $\hat{x}_i$ will be distance at least $|x_i - x'_i|/2$ from one of these two trajectories.

By linearity of expectation, the algorithm will pay mean squared error at least $$\frac{1}{T}\sum_{i = T - 2\eta T}^T \frac{1}{2} \mathbb{E} \left(\frac{x_i - x'_i}{2}\right)^2 = \Omega(\eta\cdot \eta T) = \Omega(\eta^2 T)$$ to estimate the trajectory. This is again much larger than the $\wt{O}(\eta)\cdot\log\log T$ guarantee we give in the setting with randomly located corruptions.
Based on this example, we see that it is information-theoretically necessary for the corruption to be \emph{spread out randomly}, and otherwise it is not possible to obtain strong estimation guarantees. It also demonstrates that the assumption $\eta < 1/2$ is required, because the attack above succeeds even with access to random locations when $\eta = 1/2$.

\subsection{Coordinate Cycling Dynamics}
\label{subsec:switching}

The definition of complete observability ensures that with $s$ consecutive uncorrupted observations $y_0,\ldots,y_{s-1}$, we can estimate $x_0$ nontrivially well. Here we give an example showing that if some of these observations are corrupted, one needs to go out to $\omega(s)$ time steps to be able to estimate $x_0$ nontrivially well.

Consider a dynamics where $A(x_1,\ldots,x_d) = (x_d,x_1,\ldots,x_2)$, i.e. $A$ cyclically permutes the coordinates of the state, and the observations are given by the 1-dimensional projection $B(x_1,\ldots,x_d) = x_1$. It is easy to see that this model is observable after $s = d$ steps.

However, in the presence of corruptions with corruption fraction $\eta > 1/d$, we generally cannot estimate the state from just $s = d$ observations. This is because if $\eta > 1/d$, there is a significant probability that one of the first $s$ observations was corrupted, and so we do not have an accurate estimate of $x_0$ in the coordinate corresponding to that timestep. In fact, for any coordinate $j\in[d]$, the distribution over the earliest cycle of $d$ timesteps in which we get a clean observation of $j$ is geometric with parameter $1 - \eta$, so if $\eta$ is a constant, we need to make $\Theta(d\log d)$ observations in order to guarantee that we make a clean observation of every coordinate at least once with probability $2/3$.

\subsection{Subspace That Is Hard to Robustly Observe}
\label{sec:3dexample}

We consider a variant of the previous example which illustrates some additional properties relevant to our analysis. Let
\[ A = \begin{pmatrix}
0 & 1 & 0\\
1 & 0 & 0\\
1 & 1 & 1/2
\end{pmatrix}\]
and define $B:\R^3\to\R^2$ by $B(x_1,x_2,x_3) = (x_1,x_3)$. Observe that
$BA^k (x_1,x_2,x_3)$ is either $(x_1, (1 + 1/2 + \cdots)(x_1 + x_2) + x_3/2^k)$ or $(x_2, (1 + 1/2 + \cdots)(x_1 + x_2) + x_3/2^k)$ depending on the parity of $k$. In particular, observe that $\|BA^k e_3\|^2 = 1/2^{2k}$ which shrinks exponentially fast with $k$, so $\sum_{k = 0}^{\infty} \|BA^k e_3\|^2 = \Theta(1)$ and the sum is dominated by its first few terms. The significance of this is if $x^*_0 = (x^*_{01},x^*_{02},x^*_{03})$ 
and the first few observations $\tilde{y}_0,\tilde{y}_1,\tilde{y}_2,...$ are corrupted (which can happen with decent probability if $\eta$ is a constant), then almost all of the information about $x^*_{03}$ is lost, even though this is system is observable with parameter $s=2$.

Without delving into more details, we think of the $z$ coordinate as an unobservable subspace i.e a subspace of the state space that can't be estimated once a few timesteps are corrupted.

\paragraph{Unobservable Subspace Can Contain Unbounded Error.}
\label{subsec:unobservableunbounded}
Using the same dynamics $A$ and observation $B$, we have established that starting from $(x_1,x_2,x_3)$ that if the first few observations are corrupted then measurement of $x_3$ is lost at an exponential rate.  This would not be a problem if we anticipated the $z$ coordinate to converge quickly to zero so that over a long trajectory it suffices to estimate the $x$ and $y$ coordinates alone.  However, this is not true.  If $(x_{1}, x_{2}, x_{3}) = (R,R,R)$ the $z$ coordinate can have size scaling with $R$ over the entire length of the trajectory.  That is to say, the state vector can have an unbounded component in the unobservable subspace.      

\paragraph{Estimating Unobservable Error of the Present from Observable Error of the Past.}
\label{subsec:past2present}

Of course, this does not appear to be an intractable problem for the specific dynamics $A$ and measurement $B$ that we have chosen here. We can give up on estimating the state in the $z$ coordinate directly.  Instead, if we can estimate the $x$ and $y$ coordinates at time $t$ we can build a good estimate of the $z$ coordinate at the next timestep $t+1$.  In this manner, it is possible to estimate the state in the unobservable subspace, $z$, using estimates of the observable subspace $(x,y)$ of the past.  

At first glance, the property that enables us to estimate the state of the unobservable subspace of the present from the observable subspace of the past seems to impose significant constraints on the dynamics $A$ and measurement $B$.  Surprisingly, this is not the case.  In fact, as our analysis in Sections~\ref{sec:old_decay} and \ref{sec:decay} show, this analysis can be carried out for \emph{any} linear dynamical system that is uniformly stable and completely observable.




\subsection{Lower Bound When \texorpdfstring{$\tau\to 0$}{tau->0}}
\label{sec:sigbytau}
Recall that our guarantee in Theorem~\ref{thm:sos_informal} has a term depending on the ratio $\sigma^2/\tau^2$, suggesting that the excess risk achieved by our estimator blows up as $\tau\to 0$ when the other parameters are fixed. This turns out to be unavoidable. Consider the dynamics given by $A(x,y) = (0,x)$ with $B = \Id$. We may observe conflicting information about the first coordinate of initial state $x_0$ from the first two observations $y_0$ and $y_1$ (and no information about the first coordinate of $x_0$ from subsequent observations). We thus have no way of knowing which of the two observations was corrupted, and so an error of size $\Omega(R^2/\tau^2)$ is unavoidable. For a similar reason, the guarantee needs to blow up if $\tau \to 0$ with $\sigma,\eta > 0$ fixed.

\subsection{Dimension Dependence is Necessary}
\label{app:ddepend}

In this section we describe a simple example showing that the dependence of the excess risk on the ambient dimension $d$ in our main guarantee is unavoidable.

\begin{lemma}
    Consider the linear dynamical system where $A = 0$, $B = \Id$, and $\sigma^2 = \tau^2 = R^2 = 1$. If the corrupted observations are given by drawing an independent sample from $\calN(0,2\Id)$, then for any algorithm that takes as input the corrupted observations $\brc{y_i}$ and outputs a trajectory $\brc{\wh{x}_i}$, we have
    \begin{equation}
        \E{L(\wh{x})}\ge \E{\mathsf{OPT}} + \Omega(\eta d),
    \end{equation}
    where the expectation is over the randomness of the algorithm, which observations were corrupted, and the process and observation noise.
\end{lemma}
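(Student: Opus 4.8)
The plan is to prove the lower bound directly via a symmetrization/indistinguishability argument that exploits the fact that when $A = 0$, the problem completely decouples across timesteps and becomes $T$ independent copies of a one-shot robust estimation problem in $\R^d$.

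First I would observe that with $A = 0$, we have $x^*_i = w^*_i \sim \calN(0, \Id)$ for all $i \geq 1$, and $x^*_0 \sim \calN(0, \Id)$, and $y^*_i = x^*_i + v^*_i$ with $v^*_i \sim \calN(0,\Id)$. Since $A = 0$, the loss $L(\wh x)$ splits as a sum over $i$ of terms depending only on $\wh x_i$ (each $\wh w_i = \wh x_i$ since $A\wh x_{i-1} = 0$), namely $\frac{1}{T}\sum_i (a^*_i \|\wh x_i - y_i\|^2 + \|\wh x_i\|^2)$ up to the $i = 0$ term which is identical in form. On a clean timestep, $\OPT$'s contribution is minimized by the posterior mean $\wh x_i = y_i/2$, giving expected contribution $\E\|x^*_i - y_i/2\|^2 + \E\|y_i/2\|^2$, which is $\Theta(d)$ with the constant being exactly that of the Gaussian posterior; on a corrupted timestep the oracle simply sets $\wh x_i = 0$, contributing $0$. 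So $\E[\OPT] = (1-\eta)\cdot c \cdot d + o(1)$ for an explicit constant $c$.

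Next I would lower bound $\E[L(\wh x)]$ for an arbitrary algorithm. Conditioning on the timestep $i$ being corrupted (probability $\eta$), the algorithm's observation $y_i$ is an independent $\calN(0, 2\Id)$ sample which is statistically independent of $x^*_i \sim \calN(0,\Id)$ --- the observation carries literally zero information about $x^*_i$. Hence on a corrupted timestep the algorithm's contribution to $L$ is at least $\min_{\wh x_i} \E[a^*_i\|\wh x_i - y_i\|^2 + \|\wh x_i\|^2 \mid \text{corrupted}]$ --- but wait, on a corrupted timestep $a^*_i = 0$, so only the $\|\wh x_i\|^2$ term appears, which the algorithm would set to $0$. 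That gives no gap. The real point must be more subtle: the algorithm does \emph{not know} which timesteps are corrupted. So the correct argument is: the joint law of $(y_i, x^*_i, a^*_i)$ is such that conditioned on $y_i$, the algorithm cannot tell a clean timestep (where $x^*_i$ is correlated with $y_i$) from a corrupted one (where $x^*_i \perp y_i$). Concretely, I would compare a clean timestep with observation $y_i = x^*_i + v^*_i$ against a corrupted timestep with observation $y_i \sim \calN(0,2\Id)$: both marginal laws of $y_i$ are $\calN(0, 2\Id)$, so they are \emph{identically distributed}. Thus any estimator $\wh x_i(y_i)$ behaves the same way on both, but its expected loss $a^*_i\|\wh x_i - y_i\|^2$ (which is paid only when $a^*_i = 1$) is governed by the conditional law of $x^*_i$ given $y_i$ in the clean case. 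Averaging, on the fraction $1-\eta$ of clean timesteps the algorithm pays at least the Bayes risk $c\cdot d$ — and here is the catch: the optimal single-timestep estimator for the \emph{clean} subproblem is $y_i/2$, but the algorithm must commit to one estimator that is simultaneously used, and since corrupted and clean $y_i$ are indistinguishable, it cannot do better than the clean Bayes estimator on clean points without being penalized. Actually $y_i/2$ IS achievable. So the gap must come from the $\|\wh x_i\|^2$ regularization term on corrupted timesteps: the oracle pays $0$ there, but any estimator achieving near-Bayes-optimal loss on clean timesteps must output something of norm $\Theta(\sqrt d)$ on corrupted timesteps too (since it can't distinguish them), paying $\E\|y_i/2\|^2 = \Theta(d)$ extra per corrupted timestep, i.e. $\Omega(\eta d)$ total.

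To make this rigorous, the main obstacle is handling the fact that the algorithm sees the whole sequence at once and could in principle use cross-timestep information; but since $A = 0$ makes everything independent across $i$ and the corruption indicators are i.i.d., I would argue via a data-processing / Fubini argument that the per-coordinate, per-timestep problems are genuinely independent, so it suffices to prove the one-shot bound: for $x \sim \calN(0,1)$, $v\sim\calN(0,1)$, $a \sim \mathrm{Ber}(1-\eta)$, $y = a(x+v) + (1-a)z$ with $z \sim \calN(0,2)$, we have $\inf_{\wh x(\cdot)} \E[a(\wh x(y) - y)^2 + \wh x(y)^2] \geq \E[\OPT_1] + \Omega(\eta)$ where $\OPT_1$ is the corresponding oracle. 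This reduces to a clean convexity computation: fixing $y$, the objective in $\wh x(y)$ is $\E[a \mid y](\wh x - y)^2 + \wh x^2$ in expectation over the posterior, minimized at $\wh x^\star(y) = \frac{\E[a\mid y]\, y}{\E[a\mid y] + 1}$, and one computes the resulting value minus the oracle value $(1-\eta)\cdot \frac{1}{3}$ (using $\mathrm{Var}(x \mid y = x+v) = 1/2$ in the clean case and noting $\E[x \mid y]$ differs from $\wh x^\star(y)$) to extract an $\Omega(\eta)$ surplus; multiplying by $d$ coordinates and summing over $T$ timesteps (then dividing by $T$) gives $\Omega(\eta d)$. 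The one genuinely delicate point is verifying the constant is bounded away from zero uniformly in $\eta \le 1/2$, which follows because $\E[a \mid y] \le 1$ forces $\wh x^\star(y)$ to be a strict shrinkage of $y/2$, and the oracle's ability to set $\wh x = 0$ on corrupted steps is exactly what the algorithm forgoes.
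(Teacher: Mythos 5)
Your proposal lands on essentially the same argument as the paper: exploit that clean and corrupted observations have the identical marginal law $\calN(0,2\Id)$ so $a^*_i$ is independent of the entire observation sequence, reduce via decoupling to a one-shot, per-coordinate shrinkage problem, and compare the algorithm's forced estimator $\wh x^\star(y)=\tfrac{(1-\eta)y}{2-\eta}$ against the oracle's $y/2$ (clean) or $0$ (corrupted). Two small cleanup notes: the per-coordinate oracle value you quote as $(1-\eta)\cdot\tfrac13$ should be $(1-\eta)\cdot\tfrac{\E[y^2]}{2}=(1-\eta)$ (the $\tfrac13$ coefficient only appears at $i=0$ where the regularizer is doubled, contributing $o(1)$), and the references to $\mathrm{Var}(x\mid y)$ and $\E[x\mid y]$ are red herrings — the loss is $a(\wh x - y)^2 + \wh x^2$, which is deterministic given $y$ and $\wh x$, so the posterior of $x^*$ never enters; the gap is simply $\tfrac{1-\eta}{2-\eta}-\tfrac{1-\eta}{2}=\Theta(\eta)$ per coordinate per step.
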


\begin{proof}
    In this case, every observation $y_i$ is just a fresh draw from $\calN(0,2\Id)$ obtained either from sampling $x^*_i\sim\calN(0,\Id)$ and adding $v^*_i\sim\calN(0,\Id)$, or from sampling independently of the trajectory. Note that $L(\wh{x})$ can be expressed as
    \begin{equation}
        L(\wh{x}) = \frac{1}{T}\left(\sum^{T-1}_{i=0}\left(a^*_i\norm{\wh{x}_i - y_i}^2 + \norm{\wh{x}_i}^2\right) + \norm{\wh{x}_0}^2\right).
    \end{equation}
    We first compute what $\mathsf{OPT}$ would be, i.e. what value one could achieve if the indices $i$ for which $a^*_i = 1$ were known. In that case we would take
    \begin{equation} 
        \wh{x}_i = \begin{cases}
            y_i/3 & a^*_i = 1 \ \text{and} \ i = 0 \\
            y_i/2 & a^*_i = 1 \ \text{and} \ i > 0 \\
            0 & a^*_i = 0
        \end{cases}
    \end{equation} and conclude that in expectation over the randomness of which observations were corrupted,
    \begin{equation}
        \E{\mathsf{OPT}} = \E*{\frac{1}{T}\left( \frac{1}{6}a^*_0\norm{y_0}^2 + \sum^{T-1}_{i=0}\frac{1}{2}a^*_i\norm{y_i}^2\right)} = \frac{1-\eta}{6T}\norm{y_0}^2 + \frac{1-\eta}{2}\cdot \avgT\norm{y_i}^2. \label{eq:eOPT}
    \end{equation}
    Now consider any algorithm that doesn't know the indices of the corrupted observations. From their perspective, $\brc{y_i}$ is an i.i.d. sequence of draws from $\calN(0,2\Id)$, so we may without loss of generality assume that it forms each $\wh{x}_i$ independently of all other timesteps. Suppose that in timestep $i$, it uses randomized algorithm $\calA_i:\R^d\to\R^d$ to take in observation $y_i$ and outputs $\wh{x}_i$, where $\calA_i$ is oblivious to $a^*_i$. As $\norm{\wh{x}_i - y_i}^2$ and $\norm{\wh{x}_i}^2$ are both minimized when $\wh{x}_i$ has no component orthogonal to $y_i$, we may assume without loss of generality that $\calA(y)_i = \zeta_i\cdot y_i$ for some random $\zeta_i$. Then the sequence $\brc{\calA(y_i)}$ incurs clean posterior negative log likelihood at least
    \begin{equation}
        \frac{1}{T}\sum^{T-1}_{i=0} \left(a^*_i(\zeta_i - 1)^2 + \zeta_i^2\right)\norm{y_i}^2, \label{eq:subopt}
    \end{equation}
    where we have lower bounded $\frac{1}{T}\norm{\wh{x}_0}^2$ by zero. Now because the $\zeta_i$'s are independent of the $a^*_i$'s, the expectation of \eqref{eq:subopt} over the randomness of the algorithm and the $a^*_i$'s is
    \begin{equation}
        \avgT \E[\zeta_i]{(1 - \eta)(\zeta_i - 1)^2 + \zeta_i^2}\cdot\norm{y_i}^2.
    \end{equation} The coefficient for each summand achieves its minimum of $\frac{1 - \eta}{2 - \eta}$ when $\zeta_i$ is deterministically $\frac{1 - \eta}{2-\eta}$, so any algorithm oblivious to the $a^*_i$'s incurs clean posterior negative log likelihood $L(\wh{x})$ at least $\frac{1-\eta}{2-\eta}\cdot\avgT \norm{y_i}^2$ in expectation. Contrasting this with \eqref{eq:eOPT} and noting that $\frac{1 - \eta}{2 - \eta} - \frac{1 - \eta}{2} = \Omega(\eta)$ for $0\le \eta < 1/2$, we conclude that in expectation over the randomness of the algorithm and the $a^*_i$'s,
    \begin{equation}
        \E{L(\wh{x})}\ge \E{\mathsf{OPT}} + \Omega(\eta)\avgT \norm{y_i}^2 - \frac{1 - \eta}{6T}\norm{y_0}^2.
    \end{equation}
    Taking a further expectation over the $y_i$'s concludes the proof.
\end{proof}

\section{Deferred Proofs from Section~\ref{sec:TloglogT}}
\label{app:defer}

In this section we collect deferred proofs from Section~\ref{sec:TloglogT}.

\subsection{Proof of Lemma~\ref{lem:anticonc_2}}

\begin{proof}
    Without loss of generality we can assume $\ell = 0$. By definition of $b^*_0$, we have \begin{equation}
        b_0 b^*_0 \sum^t_{i = 1} a^*_i \Pi (A^i)^{\top} B^{\top} B A^i \Pi \succeq b_0 b^*_0 \left((1 - \eta)\cdot \Pi\calO_t\Pi - O\left(\rho^2\norm{B}^2\sqrt{t\log(d/\delta_1)}\right)\cdot \Pi\right),
    \end{equation} and similarly by Constraint~\ref{item:subsample}, we have that \begin{equation}
        b_0 b^*_0\sum^t_{i = 1} (1 - a_i) \Pi (A^i)^{\top} B^{\top} B A^i \Pi \preceq b_0 b^*_0\left(\eta\cdot \Pi\calO_t \Pi + O\left(\rho^2\norm{B}^2\sqrt{t\log(t/\delta_1)}\right) \cdot \Pi\right).
    \end{equation} Writing $a^*_i a_i = a^*_i - a^*_i(1 - a_i) \ge a^*_i - (1 - a_i)$ and subtracting the above two equations gives
    \begin{equation}
        b_0 b^*_0 \sum^t_{i = 1} a^*_ia_i \Pi (A^i)^{\top} B^{\top} B A^i \Pi \succeq b_0 b^*_0 \left((1 - 2\eta)\cdot \Pi\calO_t\Pi - O\left(\rho^2\norm{B}^2\sqrt{t\log(t/\delta_1)}\right)\cdot \Pi\right).
    \end{equation}
    Recalling that $\Pi \preceq \thres^{-1} \cdot \Pi\calO_t\Pi$ by definition of $\Pi$, we would like $t$ to be large enough that \begin{equation}
        1 - 2\eta - O\left(\thres^{-1}\rho^2\norm{B}^2\sqrt{t\log(t/\delta_1)}\right) \ge 1/3.
    \end{equation} 
    As $\eta < 0.49$, it suffices for $\thres \ge O(\rho^2 \norm{B}^2\sqrt{t\log(t/\delta_1)})$. 
    Recalling that $\thres = \frac{\kappa \t}{40000\rho^4}$, it's sufficient that $t \ge \wt{\Omega}\left(\kappa^{-2} \rho^{12} \norm{B}^4 \log(d/\delta1)\right)$.
\end{proof}

\subsection{Proof of Lemma~\ref{lem:observable-case-analysis2}}

\begin{proof}
    Without loss of generality we can assume $\ell = 0$. In addition to the lower bound of \eqref{eq:psdlb2}, we also have a degree-2 SoS proof of the upper bound
    \begin{equation}\label{eqn:unobs-psd2}
        \sum_{i=1}^t a^*_i a_i \Pi^\perp (A^i)^T B^T B A^i \Pi^\perp \preceq  \Pi^\perp \mathcal{O}_t \Pi^\perp \preceq \thres \cdot I,
    \end{equation}
    where in the first step we used that $a_i a^*_i \le 1$ by Constraint~\ref{item:boolean} and in the second step we used the definition of $\Pi$.
    
    For convenience, define $q\triangleq x_0 - x^*_0$. We proceed by casework on whether there is a gap between $\psE*{(\Pi q)^{\top} \calO_t (\Pi q)}$ and $\psE{\thres\norm{\Pi q}^2}$:
    
    \noindent{\textbf{Case 1}}: $\psE*{b_0 b^*_0\| \Pi q \|^2} \ge \psE*{\frac{1}{4000\rho^2\thres}b_0 b^*_0\sum_{i=1}^t  \| BA^i \Pi q\|^2}$.
    
    The analysis for this case is very similar to the analysis in Lemma~\ref{lem:unobservable-decay}. We have 
    \begin{align*}
        \psE*{b_0b^*_0\| \Pi q \|^2} 
        &\geq \psE*{b_0b^*_0\frac{1}{4000\rho^2\thres}\sum_{i=1}^t  \| BA^i \Pi q\|^2} \\
        &= \psE*{b_0b^*_0\frac{1}{4000\rho^2\thres}\sum_{j = 1}^{t/s}  q^T\Pi {A^{js}}^T \calO_s A^{js} \Pi q} \\
        &\geq \psE*{b_0b^*_0\frac{10\rho^2 s}{t}\sum_{j = 1}^{t/s}  \|A^{js} \Pi q\|^2},
    \end{align*}
    where in the last step we used the definition of $\thres$ and the assumption that $\sigma_{\min}(\calO_s) \ge \kappa s$. Rearranging, we obtain 
    \begin{equation}
        \frac{1}{10\rho^2} \psE{b_0 b^*_0\| \Pi q \|^2} \geq  \frac{1}{t/s}\sum_{j=1}^{t/s} \psE*{b_0 b^*_0 \|A^{js} \Pi q\|^2}.
    \end{equation}
    Therefore, there exists some index $j\in[t/s]$ for which $\psE*{b_0 b^*_0\norm{A^{js}\Pi q}^2} \le \frac{1}{10\rho^2}\psE*{b_0 b^*_0 \norm{\Pi q}^2}$. By uniform stability, we obtain the first desired outcome in the lemma statement.

    \noindent{\textbf{Case 2}}: $ \psE*{b_0 b^*_0 \| \Pi q \|^2} \le \psE*{\frac{1}{4000\rho^2\thres} b_0 b^*_0\sum_{i=1}^t  \| BA^i \Pi q\|^2}$.

    We have
    \begin{equation} \label{eqn:fix-main2}
    \psE*{b_0b^*_0\| \Pi q \|^2} \leq \psE*{\frac{1}{4000\rho^2\thres}b_0 b^*_0\sum_{i=1}^t\| BA^i \Pi q\|^2} \leq \psE*{b_0b^*_0\frac{1}{40\rho^2\thres}\sum_{i=1}^t a^*_i a_i\| BA^i \Pi q\|^2}.
    \end{equation}
    where in the last step we invoked Lemma~\ref{lem:anticonc_2}.
    We would like to upper bound $b_0b^*_0\sum^t_{i=1} a^*_i a_i \norm{BA^i\Pi q}^2$ in terms of $b_0 b^*_0\sum^t_{i = 1} a^*_i a_i \norm{BA^i \Pi^{\perp} q}^2$. 
    For any $i \in[t]$, we have the following sequence of inequalities in degree-4 SoS \begin{align}
        a_i a^*_i \norm*{BA^i(\Pi + \Pi^{\perp})q}^2 &= a_i a^*_i \norm*{BA^iq}^2 \\
        &= a_i a^*_i\norm*{(y_i - Bx^*_i) - (y_i - Bx_i) + (Bx^*_i - BA^i x^*_0) - (Bx_i - BA^i x_0)}^2 \\
        &\le 3\norm{v^*_i}^2 + 3\norm{v_i}^2 + 3\norm*{\sum^i_{s=1} BA^{i - s} (w_s - w^*_s)}^2\\
        &\le \left(3\norm{v^*_i}^2 + 6\norm*{\sum^i_{s=1}BA^{i-s}w^*_s}^2\right) + \left(3\norm{v_i}^2 + 6\norm*{\sum^i_{s=1}BA^{i-s}w_s}^2\right)
    \end{align}
    so by applying Fact~\ref{fact:sos_simple} to $\epsilon = \epsilon_{0,i} + \epsilon^*_{0,i}$, $v_1 = a^*_i a_i BA^i \Pi q $, and $v_2 = a^*_i a_i BA^i \Pi^\perp q $, we have a degree-4 SoS proof of
    \begin{equation}
        a^*_i a_i\| BA^i \Pi q\|^2 \leq 4a^*_i a_i\| BA^i \Pi^\perp q\|^2 + \frac{4}{3}(\epsilon_{0,i} + \epsilon^*_{0,i}).
    \end{equation}
    Summing this inequality over $i\in[t]$ and taking pseudo-expectations, we get
    \[\psE*{\sum_{i=1}^t a^*_i a_i\| BA^i \Pi q\|^2} \le \psE*{4\sum_{i=1}^ta^*_i a_i\| BA^i \Pi^\perp q\|^2 + \frac{4}{3}\sum^t_{i=1}(\epsilon_{0,i} + \epsilon^*_{0,i})}. \]
    Substituting this back into the main bound \eqref{eqn:fix-main2}, we get
    \begin{align}
    \psE*{b_0 b^*_0 \| \Pi q \|^2} &\leq  \psE*{\frac{1}{10\rho^2\thres}b_0b^*_0 \sum_{i=1}^t a^*_i a_i\| BA^i \Pi^\perp q\|^2 + \frac{1}{30\rho^2\thres}b_0 b^*_0\sum^t_{i=1}(\epsilon_{0,i} +\epsilon^*_{0,i})} \\
    &\le \psE*{\frac{1}{10\rho^2}b_0b^*_0\norm{\Pi^{\perp} q}^2 + \frac{1}{30\rho^2\thres}\sum^t_{i=1} (\epsilon_{0,i} +\epsilon^*_{0,i})},
    \end{align} where in the last step we used \eqref{eqn:unobs-psd2} and also the fact that $b_0 b^*_0 \le 1$. Unpacking the definition of $\epsilon,\thres$, we arrive at the second desired bound.
\end{proof}

\subsection{Proof of Lemma~\ref{lem:error_decay2}}

\begin{proof}
    Define $q \triangleq x_{(\ell - 1)t} - x^*_{(\ell - 1)t}$. Recalling Fact~\ref{fact:unroll}, we have by SoS triangle inequality that:
    \begin{align*}
        \|x_{\ell t} - x^*_{\ell t}\|^2
        &\le 4 \|A^t\Pi q\|^2 + 4\|A^t\Pi^{\perp}  q\|^2 + 4\gamma_{\ell} + 4\gamma^*_{\ell} \\
        &\le 4 \|A^t\Pi q\|^2  + (1/10000) \|\Pi^{\perp}  q\|^2 + 4\gamma_{\ell} + 4\gamma^*_{\ell}
    \end{align*}
    where we used triangle inequality in the first inequality, SoS triangle inequality in the second inequality, and Lemma~\ref{lem:unobservable-decay} in the third inequality. 
    
    Based on Lemma~\ref{lem:observable-case-analysis2} applied to $\ell - 1$ we either have that:
    \begin{enumerate}
        \item (Observable component decays.) We have
    \[ \psE*{b_{\ell - 1}b^*_{\ell - 1}\norm{A^{t}\Pi q}^2} \leq \frac{1}{10} \psE*{b_{\ell - 1}b^*_{\ell - 1} \| \Pi q \|^2}. \]
        Then 
        \begin{align*}
            \MoveEqLeft \psE{b_{\ell - 1} b^*_{\ell - 1}\|x_{\ell t} - x^*_{\ell t}\|^2} \\
            &\le \psE*{b_{\ell - 1} b^*_{\ell - 1}\cdot\left(4 \|A^t\Pi q\|^2  + (1/10000) \|\Pi^{\perp} q\|^2 \right) + 4\gamma_{\ell} + 4\gamma^*_{\ell}} \\
            &\le  \psE*{b_{\ell - 1} b^*_{\ell - 1}\cdot \left(\frac{2}{5}\|\Pi q\|^2 + (1/10000) \|\Pi^{\perp} q\|^2\right) + 4\gamma_{\ell} + 4\gamma^*_{\ell}} \\
            &\le \psE*{b_{\ell - 1} b^*_{\ell - 1}\cdot \frac{2}{5}\|q\|^2 + 4\gamma_{\ell} + 4\gamma^*_{\ell}}
        \end{align*}
        where in the last step we used the Pythagorean Theorem.
        \item (Observable error bounded by unobservable error.) We have
        \[  \psE*{b_{\ell - 1}b^*_{\ell - 1} \norm{\Pi q}^2} \le \psE*{b_{\ell - 1}b^*_{\ell - 1}\cdot \frac{1}{10\rho^2}\norm{\Pi^{\perp} q}^2 + \frac{1}{30\rho^2\zeta}\sum^t_{i=1}(\epsilon_{\ell,i} + \epsilon^*_{\ell,i})} \]
        Then
        \begin{align*}
            \psE{b_{\ell-1} b^*_{\ell-1}\|x_{\ell t} - x^*_{\ell t}\|^2} 
            &\le \psE*{b_{\ell - 1} b^*_{\ell - 1} \left( 4\|A^t\Pi q\|^2  + (1/10000) \|\Pi^{\perp} q\|^2\right) + 4\gamma_{\ell} + 4\gamma^*_{\ell}} \\
            &\le \psE*{b_{\ell - 1} b^*_{\ell - 1}\left(4\rho^2\|\Pi q\|^2  + (1/10000) \|\Pi^{\perp} q\|^2\right) + 4\gamma_{\ell} + 4\gamma^*_{\ell}} \\
            &\le \psE*{b_{\ell - 1} b^*_{\ell - 1} \left(\frac{1}{2}\norm{\Pi^{\perp} q}^2 + 4\gamma_{\ell} + 4\gamma^*_{\ell} + \frac{2}{15\rho^2\zeta}\sum^t_{i=1}(\epsilon_{\ell,i} + \epsilon^*_{\ell,i})\right)}.
        \end{align*}
        Since we showed the desired conclusion in both cases, the proof is complete.
    \end{enumerate}
\end{proof}

\end{document}